\def\eqref#1{equation~\ref{#1}}
\def\1{\bm{1}}
\DeclareMathAlphabet{\mathsfit}{\encodingdefault}{\sfdefault}{m}{sl}
\SetMathAlphabet{\mathsfit}{bold}{\encodingdefault}{\sfdefault}{bx}{n}
\newtheorem{theorem}{Theorem}[section]
\newtheorem{proposition}[theorem]{Proposition}
\newtheorem{lemma}[theorem]{Lemma}
\newtheorem{corollary}[theorem]{Corollary}
\theoremstyle{definition}
\newtheorem{definition}[theorem]{Definition}
\newtheorem{assumption}[theorem]{Assumption}
\theoremstyle{remark}
\newtheorem{remark}[theorem]{Remark}
\newtheorem*{remark*}{Remark}
\title{qNBO: quasi-Newton Meets Bilevel Optimization}
\author{}
\begin{document}
\renewcommand{\thefootnote}{\fnsymbol{footnote}}
\vspace{-5em}
\maketitle
\vspace{-6em}
\textbf{Sheng Fang}\textsuperscript{1}\textsuperscript{2}, 
\textbf{Yong-Jin Liu}\textsuperscript{2}\textsuperscript{3}, 
\textbf{Wei Yao}\textsuperscript{4}\textsuperscript{5}, \textbf{Chengming Yu}\textsuperscript{4}\textsuperscript{6}, 
\textbf{Jin Zhang}\textsuperscript{4}\textsuperscript{5}\textsuperscript{7}\footnote{Correspondence to Jin Zhang (zhangj9@sustech.edu.cn) } \\
\textsuperscript{1}{Public Health School, Fujian Medical University},
\textsuperscript{2}{School of Mathematics and Statistics, Fuzhou University},
\textsuperscript{3}{Center for Applied Mathematics of Fujian Province},
\textsuperscript{4}{National Center for Applied Mathematics Shenzhen, SUSTech},
\textsuperscript{5}{Department of Mathematics, SUSTech},
\textsuperscript{6}{School of Science, BUPT},
\textsuperscript{7}{CETC Key Laboratory of Smart City Modeling Simulation and Intelligent Technology, The Smart City Research Institute of CETC} \\ 
    \texttt{shengfang1996@163.com}, \texttt{yjliu@fzu.edu.cn} \\
\texttt{yaow@sustech.edu.cn}, \texttt{2024010618@bupt.cn},
\texttt{zhangj9@sustech.edu.cn}\\


\maketitle

\begin{abstract}
Bilevel optimization, addressing challenges in hierarchical learning tasks, has gained significant interest in machine learning. The practical implementation of the gradient descent method to bilevel optimization encounters computational hurdles, notably the computation of the exact lower-level solution and the inverse Hessian of the lower-level objective. Although these two aspects are inherently connected, existing methods typically handle them separately by solving the lower-level problem and a linear system for the inverse Hessian-vector product. In this paper, we introduce a general framework to address these computational challenges in a coordinated manner. Specifically, we leverage quasi-Newton algorithms to accelerate the resolution of the lower-level problem while efficiently approximating the inverse Hessian-vector product. Furthermore, by exploiting the superlinear convergence properties of BFGS, we establish the non-asymptotic convergence analysis of the BFGS adaptation within our framework.
Numerical experiments demonstrate the comparable or superior performance of the proposed algorithms in real-world learning tasks, including hyperparameter optimization, data hyper-cleaning, and few-shot meta-learning.
\end{abstract}

\section{Introductions}
\label{intro}

Bilevel optimization (BLO), which addresses  challenges in hierarchical decision process, has gained significant interest in many real-world applications. Typical applications in machine learning include meta-learning \citep{franceschi2018,rajes2019meta}, hyperparameter optimization \citep{pedregosa2016,franceschi2017,okuno2021}, adversarial learning \citep{goodfellow2014,pfau2016}, neural architecture search \citep{nas19,nas20}, and reinforcement learning \citep{rl19,tthong}. In this study, we revisit the following BLO problem:
\begin{equation}\label{blp}
		\min_{x\in {\mathbb R}^m}\ \  \Phi(x):=F(x, y^*(x))\quad
		{\rm s.t.} \ \  y^*(x) = \underset{y\in\mathbb{R}^{n}} {\mathrm{arg\,min}} \ f(x, y),
\end{equation}
in which the upper-level (UL) objective function $F:{\mathbb R}^m\times {\mathbb R}^n\rightarrow {\mathbb R}$ is generally nonconvex, while the lower-level (LL) objective function $f:{\mathbb R}^m\times {\mathbb R}^n\rightarrow {\mathbb R}$ is strongly convex with respect to (\textit{w.r.t.}) the LL variable $y\in {\mathbb R}^n$.

The gradient of $\Phi(x)$, known as hypergradient, is crucial not only for applying gradient descent but also for developing accelerated gradient-based methods for BLO problems. 
Therefore, a fundamental question in solving  BLO problems is how to efficiently estimate the hypergradient. Assuming that $f$ is continuously twice differentiable, and by  applying the chain rule and utilizing the first-order optimality condition $\nabla_y f(x, y^*(x))=0$ of the LL optimization problem \citep{ghadimi2018approximation}, the hypergradient is given  by 
\begin{equation}\label{eq:hg}
	\begin{split}
		\nabla\Phi(x)&=\nabla_x F(x, y^*(x))-[\nabla^2_{x y}f(x, y^*(x))]^T [\nabla^2_{y y} f(x, y^*(x))]^{-1} \nabla_y F(x, y^*(x)).
	\end{split}
\end{equation}

Two main challenges in estimating the hypergradient are: ({\bf C1}) evaluating the LL solution $y^*(x)$; ({\bf C2}) estimating the Jacobian-inverse Hessian-vector product $[\nabla^2_{x y}f(x, y)]^T[\nabla^2_{y y} f(x, y)]^{-1} \nabla_y F(x, y)$, once a good proxy $y$ for the LL solution $y^*(x)$ is obtained. 

For (C1), the common approach is to perform a few additional gradient descent steps for the LL problem on the current estimate $y_k$, using it as a proxy for the LL solution. 
For (C2), two main approaches have been proposed in the literature. The first is to estimate the inverse Hessian using the (truncated) Neumann series \citep{ghadimi2018approximation,ji2021bilevel}. 
The second approach is to compute the inverse Hessian-vector product $[\nabla^2_{y y} f(x, y)]^{-1} \nabla_y F(x, y)$ by solving the linear system $[\nabla^2_{y y} f(x, y)] z = \nabla_y F(x, y)$ for $z$, and then calculating $[\nabla^2_{x y}f(x, y)]^T z$ \citep{pedregosa2016,arbel2022amortized,2022framework}. 
Clearly, the hypergradient approximation error depends on the errors in both (C1) and (C2). Most existing methods handle (C1) and (C2) separately, using different techniques.

A notable exception is the recent breakthrough by \citep{ramzi21shine}, which introduces a novel approach (named SHINE), specifically designed for deep equilibrium models (DEQs) \citep{bai2019deep,bai2020multiscale} and BLO problems where the UL objective function does not explicitly depend on the UL variable, \textit{i.e.}, $F(x,y)=\mathcal{L}(y)$. 
The novelty of SHINE lies in its approach to addressing (C1) and (C2) closely. 
The main idea is to use quasi-Newton (qN) matrices from the LL solution process to efficiently approximate the inverse Hessian in the direction needed for the hypergradient computation. 
SHINE provides three methods for approximating the hypergradient by incorporating a technique OPA with Broyden's method and BFGS. 
Note that in the OPA method from \cite{ramzi21shine}, the qN matrices derived from the LL resolution are influenced by additional updates. This can potentially lead to incorrect inversion, as noted in \cite{ramzi21shine}. To mitigate this issue, they employ a fallback strategy. 
In theory, SHINE demonstrates asymptotic convergence to the hypergradient under various conditions but does not guarantee convergence of the algorithmic iterates.

Therefore, inspired by SHINE, our focus is on improving hypergradient approximation and reducing barriers to solving BLO problems. 
In particular, our main research question is: 
\textit{Can we develop a method to enhance  hypergradient approximation for solving the bilevel optimization problem in (\ref{blp}) with a guaranteed convergence rate?}

\subsection{Main contributions}\label{sec:contri}

Our response to this question is affirmative, and our main contributions are summarized below.

\begin{itemize}
\item qNBO, a new algorithmic framework utilizing quasi-Newton techniques, is proposed for solving the BLO problem (\ref{blp}). 
Unlike SHINE, qNBO includes a subroutine  that applies quasi-Newton recursion schemes specifically tailored for the direction $\nabla_y F(x,y)$, avoiding incorrect inversion. 

\item We validate the effectiveness and efficiency of qNBO with two practical algorithms: qNBO (BFGS) and qNBO (SR1), corresponding to two prominent quasi-Newton  methods. 
The numerical results demonstrate qNBO’s comparable or superior performance compared to its closest competitors, SHINE \citep{ramzi21shine} and PZOBO \citep{sow2022convergence}, as well as other BLO algorithms, including two popular fully first-order methods: BOME \citep{bome} and ${\rm F^2 SA}$ \citep{kwon23fully}.

\item By leveraging the superlinear convergence rates of BFGS, we analyze the non-asymptotic convergence of BFGS adaptation within our framework, qNBO.
\end{itemize}

\subsection{Additional related work}
\label{relatedwork1}

\paragraph{Quasi-Newton methods.} 
Because of the low computation cost per iteration and fast convergence rate, quasi-Newton (qN) methods has been extensively studied  \citep{nocedal2006numerical}. 
The most common qN methods are the Broyden-Fletcher-Goldfarb-Shanno (BFGS) \citep{bfgs1,bfgs2,bfgs3,bfgs4,bfgs5}, its low-memory extension L-BFGS \citep{liu1989limited}, and the symmetric rank one method (SR1) \citep{sr1,sr11}. 
For BLO problems, \cite{pedregosa2016} first uses L-BFGS to solve the LL problem to a certain tolerance. Then a conjugate-gradient method is applied to solve the linear system $[\nabla^2_{y y} f(x, y)] z = \nabla_y F(x, y)$ through matrix-vector products. Finally, $[\nabla^2_{x y}f(x, y)]^T z$ is calculated. 

\paragraph{Hypergradient approximation methods.} 
Various bilevel methods have been proposed recently to approximate the inverse Hessian or omit some second-order derivative computations in the hypergradient. For example, FOMAML \citep{maml,fomaml2} skips calculating all second-order derivatives.
DARTS \citep{liu2018darts} solves the LL problem with just one gradient descent step. The Jacobian-Free method (JFB) \citep{fung2021fixed} approximates the inverse Hessian with the identity. 
\cite{giovannelli2021inexact} proposes practical low-rank bilevel methods (BSG1 and BSG-N-FD) that use first-order approximations for second-order derivatives through a finite-difference scheme or rank-1 approximations. 
Recently, several zeroth-order methods have been proposed to approximate the full hypergradient, such as ES-MAML \citep{esmaml} and HOZOG \citep{hozog}. Another zeroth-order method, PZOBO \citep{sow2022convergence}, approximates only part of the hypergradient by comparing two optimization paths.


There is another line of research for BLO problems, which does not explicitly use the hypergradient in (\ref{eq:hg}), see, \textit{e.g.}, \cite{liu23value,sow2022constrained,shen23penalty,bome,kwon23fully,kwon2024on}. 
These algorithms first use the value function of the lower-level problem to transform the bilevel problem into a single-level problem. Then, they apply the penalty function method or other techniques to solve the reformulated problem.
For instance, BOME \citep{bome} is a novel and fast gradient-based method that uses a modified dynamic barrier gradient descent on the value-function reformulation. ${\rm F^2 SA}$ \citep{kwon23fully} is a fully first-order method developed from a value function-based penalty formulation. It can be implemented in a single-loop manner. Additionally, it is shown in \cite{kwon23fully} that the update direction of ${\rm F^2 SA}$ has a global $\mathcal{O}(1/\lambda)$-approximability of the exact hypergradient, where $\lambda$ is the penalty parameter.

\section{Proposed framework}\label{pf2}

In this section, we introduce a general framework, qNBO, to enhance hypergradient approximation. It addresses the computational challenges (C1) and (C2) using quasi-Newton techniques. We begin by rewriting the hypergradient as:
\begin{equation*}
	\begin{split}
		\nabla\Phi(x)&=\nabla_x F(x, y^*(x))-[\nabla^2_{x y}f(x, y^*(x))]^T u^*(x, y^*(x)),
	\end{split}
\end{equation*}
where $u^*(x,y):=[\nabla^2_{y y} f(x, y)]^{-1} \nabla_y F(x, y)$. 
As in \cite{arbel2022amortized} and \cite{2022framework}, the proposed algorithms introduce an additional variable $u_k$ alongside $x_k$ and $y_k$. Naturally, qNBO consists of three components. The details of qNBO are presented in Algorithm \ref{alg:foa}.

	\begin{algorithm}[htb]
	\caption{ qNBO : quasi-Newton Meets Bilevel Optimization } 
	\label{alg:foa}
\raggedright {\bf Input:} $x_0, y_0$; initial matrix $H_0$; stepsize $\alpha>0$; iterates numbers $K,\{Q_k\}_{k=0}^{K-1}$

{\bf for} $k=0,1,\ldots, K-1$ {\bf do}
	\begin{algorithmic}
			\STATE  1. ${y_{k+1}} \leftarrow {\cal A} (x_k,y_k)$
   {\color{blue}\# update $y_{k+1}$ by a subroutine ${\cal A}$}
	\STATE 2.\ \ if $Q_k=1$:\\
         \ \ \ \ \ \ \ \ \ \ ${u_{k+1}}\leftarrow{\cal C}_{qN}\big(\nabla_y F(x_k,y_{k+1}), H_0, \{s_{t},g_t\}_{t=0}^{T-1}\big)$ 
         {\color{blue} \# share $\{s_{t},g_t\}_{t=0}^{T-1}$ with ${\cal A} (x_k,y_k)$ }
        \\
          \ \ \ \ \ else:\\
         \ \ \ \ \ \ \ \ \ \ ${u_{k+1}}\leftarrow {\cal B} (x_k,y_{k+1},H_0,\nabla_y F(x_k,y_{k+1}),Q_k) $ 
         {\color{blue} \# improve $u_{k+1}$ by a subroutine ${\cal B}$ }
	\STATE 3. $x_{k+1} \leftarrow x_k-\alpha \left(\nabla_x F(x_k, y_{k+1})-[\nabla^2_{x y}f(x_k, y_{k+1})]^Tu_{k+1}\right)$
	\end{algorithmic}
	{\bf end for}
\end{algorithm}

{\bf Part 1:} qNBO updates $y_k$ towards $y^*(x_k)$ using a qN-based subroutine ${\cal A} (x_k,y_k)$ in Algorithm~\ref{alg:yk}, starting from $y_k$. The key is a quasi-Newton recursion scheme ${\cal C}_{qN}$, which computes the matrix-vector product $H d$ by performing a sequence of inner products and vector summations involving $d$ and the pairs $\{s_i , g_i \}_{i=0}^{t-1}$. Here $H$ represents the inverse Hessian approximation of the LL objective, $d=\nabla_y f(x, y_t)$, $s_i=y_{i+1}-y_i$, and $g_i=\nabla_y f(x, y_{i+1})-\nabla_y f(x, y_i)$ in this subroutine. 
Two prominent quasi-Newton recursion schemes are provided in Appendix \ref{sec:atwoloop}.

{\bf Part 2:} To update $u_{k+1}$, we provide two options: $Q_k=1$ or $Q_k>1$. In the case of $Q_k=1$, qNBO updates $u_{k+1}$ similarly to SHINE. The pairs $\{s_i , g_i \}_{i=0}^{T-1}$ from ${\cal A} (x_k,y_k)$ are shared to approximate the inverse Hessian in the direction $\nabla_y F(x_k,y_{k+1})$. 
Unfortunately, incorrect inversion may occur because the pairs $\{s_i , g_i \}_{i=0}^{T-1}$ in ${\cal A} (x_k,y_k)$ are designed to satisfy the secant equation $H_{t+1} g_t = s_t$. 
To address this issue, qNBO adds a subroutine ${\cal B}$ in Algorithm~\ref{alg:uk}, which uses quasi-Newton recursion schemes for the direction $\nabla_y F(x_k,y_{k+1})$ when $Q_k>1$. The price to pay is that the pairs $\{s_i , g_i \}$ in ${\cal A}$ cannot be shared with  ${\cal B}$, increasing the computational cost. Thus, choosing between $Q_k=1$ and  $Q_k>1$ involves a trade-off between performance and computational cost.

{\bf Part 3:} After computing $y_{k+1}$ and $u_{k+1}$, qNBO updates $x_{k+1}$ by  
\begin{equation*}
    x_{k+1}=x_k-\alpha_k \left(\nabla_x F(x_k, y_{k+1})-[\nabla^2_{x y}f(x_k, y_{k+1})]^Tu_{k+1}\right),
\end{equation*}
where $\alpha_k$ is a stepsize, and ${\tilde{\nabla} \Phi(x_k)}:=\nabla_x F(x_k, y_{k+1})-[\nabla^2_{x y}f(x_k, y_{k+1})]^Tu_{k+1}$ is a hypergradient approximation. This update rule for $x_k$ is commonly used in gradient-based algorithms, , such as those in \cite{arbel2022amortized} and \cite{2022framework}.

\begin{algorithm}[htb]
	\caption{ ${\cal A} (x,y^0)$: gradient descent steps + qN steps for the LL problem} 
	\label{alg:yk}
\raggedright {\bf Input:} $x, y^0$; initial matrix $H_0$; stepsizes $\beta, \gamma>0$; iterates numbers $P, T$\\

	\begin{algorithmic}
			\STATE  1. {\bf for} \ \ $j=0,1,2,\ldots,P-1$\\
	\ \ \ \ \ \ \ \ \ \ $y^{j+1} \leftarrow y^{j}-\beta \nabla_y f(x,y^{j})$\\
	\ \ \ \  {\bf end for}\\
	\STATE  2. $y_{0} \leftarrow y^P$\\
	\ \ \ \ \ {\bf for} \ \ $t=0,\ldots,T-1$\\
	\ \ \ \ \ \ \ \  $y_{t+1} \leftarrow y_{t}-\gamma d_{t}$, \\ \ \ \ \ \ \ \ \  where $d_{t}\leftarrow{\cal C}_{qN}\big(\nabla_y f(x,y_{t}), H_0, \{s_{i},g_i\}_{i=0}^{t-1}\big)$ 
($t\geq 1$),  $d_0 \leftarrow H_0\nabla_y f(x,y_{0})$\\
	\ \ \ \ \ \ \ \ 
 $s_t \leftarrow y_{t+1}-y_{t}, $ 
 $g_t \leftarrow \nabla_y f(x, y_{t+1})-\nabla_y f(x, y_{t})$\\
	\ \ \ \ \ {\bf end for}\\
 \STATE Return $y_{T}$, $\{s_{t},g_t\}_{t=0}^{T-1}$.
	\end{algorithmic}
\end{algorithm}

\begin{algorithm}[htb]
\raggedright {\bf Input:} $x, y$; initial matrix $H_0$; vector $d$; stepsize $\xi_i>0$; iterates number $Q$\\
	\caption{ ${\cal B} (x,y,H_0,d,Q)$} 
	\label{alg:uk}

	\begin{algorithmic}
\STATE  1. 
 ${u_{0}} \leftarrow H_0 d$, $\tilde{s}_0 \leftarrow \xi_0 u_{0}$ and $ \tilde{g}_0 \leftarrow \nabla_y f(x, y+\tilde{s}_0)-\nabla_y f(x, y)$ \\
			\STATE  2. \ \  {\bf for} \ \ $i=1,2,\ldots,Q-1$\\
\ \  \ \ \ \ \ \ \ \ ${u_{i}}\leftarrow{\cal C}_{qN}\big(d, H_0, \{\tilde{s}_{j}, \tilde{g}_j\}_{j=0}^{i-1}\big)$ 
\\
	\ \ \ \ \ \ \ \  \  
 $\tilde{s}_i \leftarrow \xi_i u_{i}, \tilde{g}_i \leftarrow \nabla_y f(x, y+\tilde{s}_i)-\nabla_y f(x, y)$
 \\
	\ \ \ \ \ \ \ {\bf end for}\\
 \STATE Return $u_{Q-1}$.
\end{algorithmic}
\end{algorithm}

\paragraph{Two practical qNBO algorithms: qNBO~(BFGS) and qNBO~(SR1).}\label{sec:foa}

We describe two prominent quasi-Newton recursion schemes, ${\cal C}_{qN}$, for computing the inverse Hessian approximation-vector product $H_t d$. 
These schemes involve a sequence of inner products and vector summations with $d$ and pairs $\{s_i , g_i \}_{i=0}^{t-1}$. 
One is the BFGS two-loop recursion scheme, outlined in Algorithm \ref{alg:twobfgs}, corresponding to the BFGS updating formula \citep{bfgs1,bfgs2,bfgs3,bfgs4,bfgs5}: 
 \begin{equation}\label{eq:bfgs}
 H_{t+1} = \left(I - \rho_t s_t g_t^T\right) H_t \left(I - \rho_t g_t s_t^T\right) + \rho_t s_t s_t^T, \quad \rho_t = \frac{1}{g_t^T s_t}.
 \end{equation}
 The second algorithm is presented in Algorithm \ref{alg:sr1rec}, which corresponds to the symmetric-rank-one (SR1) updating formula \citep{sr1,sr11}: 
	\begin{equation}\label{eq:sr1}
     H_{t+1}=H_t+\frac{(s_t-H_t g_t)( s_t-H_t g_t)^T}{( s_t-H_t g_t)^T g_t}.
	\end{equation}

\paragraph{Implementation and improvement.}

Several details and enhancements are needed for an efficient implementation of qNBO.

First, the purpose of including a few gradient descent steps in subroutine \({\cal A}\) is to bring the iterators closer to a neighborhood of the LL solution, enabling superlinear convergence in subsequent quasi-Newton steps. A warm-start for $y^0$ is effective because $y^*(x_{k+1})$ remains close to $y^*(x_{k})$ when $x_{k+1}$ is near $x_{k}$. This is guaranteed by the Lipschitz continuity of $y^*(x)$. In practice, we conjecture that a few initial gradient descent steps are sufficient, although they are necessary for the theoretical analysis.

Second, because \(f(x,y)\) exhibits strong convexity \textit{w.r.t.} \(y\) in our context, the curvature condition \(s_t^T g_t > 0\), required for BFGS, is consistently satisfied. This allows the use of a fixed step size, eliminating the need for time-consuming line searches. 
Furthermore, as the solution approaches a region conducive to superlinear convergence, employing a few quasi-Newton steps is sufficient to achieve a satisfactory solution.

Third, in experiments, the initial matrix \(H_0\) is chosen as a scalar multiple of the identity matrix. This simplification ensures that the recursion algorithms, Algorithms \ref{alg:twobfgs} and \ref{alg:sr1rec},  involve only vector inner products, significantly reducing computational costs. In Algorithm~\ref{alg:uk}, the parameter $\xi_i$ is typically set to either $1$ or $\|u_{i}\|$ in most cases.

Fourth, qNBO is a flexible framework that can integrate other quasi-Newton methods, such as  limited memory BFGS (L-BFGS) \citep{liu1989limited}. It also supports a ``non-loop" implementation of L-BFGS by representing quasi-Newton matrices in outer-product form \citep{byrd1994repre}. 

Finally, qNBO consists of three parts, with the first two utilizing quasi-Newton recursion schemes. A stochastic adaptation involves replacing these schemes with stochastic methods (e.g., K-BFGS \citep{goldfarb2020practical}, Stochastic Block BFGS\citep{gower2016stochastic}) and using stochastic gradients in Part 3, aligning with \cite{2022framework}. Key challenges in implementing the stochastic adaptation include constructing effective unbiased or biased estimators in Part 3 using techniques like variance reduction and momentum, and analyzing the convergence rate and complexity of the proposed stochastic algorithms in a bilevel setting that leverages noisy second-order information. Addressing these theoretical issues may require breakthroughs beyond the scope of this work.

\section{Theoretical analysis}
In this section, we analyze the non-asymptotic convergence of the qNBO algorithm, as outlined in Algorithm \ref{alg:foa}, under standard assumptions commonly used in BLO literature \citep{ghadimi2018approximation,ji2021bilevel,chen2022single,2022framework,ji2022will}.

\subsection{Assumptions}\label{sec:ass}

\begin{assumption}\label{ass:F}
 Assume that the UL objective function $F$ satisfies the following properties:
\begin{itemize}[leftmargin=*, label={}]
    \item[](i) For all $x$, the gradients $\nabla_x F(x, y)$ and $\nabla_y F(x, y)$ are Lipschitz continuous \textit{w.r.t.} $y$, with Lipschitz constants $L_{F_x}$ and $L_{F_y}$, respectively.
    \item[](ii) For all $y$, $\nabla_y F(x, y)$ is Lipschitz continuous \textit{w.r.t.} $x$, with a Lipschitz constant $\bar{L}_{F_y}$.
    \item[](iii) There exists a constant $C_{F_y}$ such that $\|\nabla_y F(x, y)\| \leq C_{F_y}$ for all $x, y$.
\end{itemize}
\end{assumption}

\begin{assumption}\label{ass:f}
	Assume that the LL objective function $f$ has the following properties:
	\begin{itemize}[leftmargin=*, label={}]
		\item[](i) For all $x$ and $y$,  $f$ is continuously twice differentiable in $(x, y)$.
		\item[](ii) For all $x$,  $f(x, y)$ is strongly convex \textit{w.r.t.} $y$ with parameter $\mu>0$. Moreover,  $\nabla_y f(x, y)$ and $\nabla_{yy}^2 f(x, y)$ are Lipschitz continuous \textit{w.r.t.} $y$  with parameter $L$ and $L_{f_{yy}}$, respectively.
		 \item[](iii) For all $x$, $\nabla^2_{x y}f(x,y)$ is Lipschitz continuous \textit{w.r.t.} $y$ with constant $L_{f_{x y}}$.
		\item[](iv) For all $x, y$, we have $\|\nabla^2_{x y}f(x, y)\|\leq M_{f_{xy}}$ for some constant $M_{f_{xy}}$.
\item[](v) For all $y$, $\nabla^2_{x y}f(x,y)$ and $\nabla_{yy}^2 f(x, y)$ are Lipschitz continuous \textit{w.r.t.} $x$  with constants $\bar{L}_{f_{xy}}$ and $\bar{L}_{f_{yy}}$, respectively.
\end{itemize}
\end{assumption}

{\begin{assumption}\label{ass:phi}
There exists $\iota\in \mathbb{R}$ such that $\inf_x \Phi(x)\geq \iota$.
\end{assumption}}

\subsection{Convergence results}\label{sec:conv}

The evolving nature of inverse Hessian approximations through updating formulas in qNBO significantly complicates the analysis of non-asymptotic convergence. Additionally, various update formulas present different challenges, similar to studying the convergence rates of quasi-Newton methods. In this section, we focus on the non-asymptotic convergence of qNBO (BFGS), leveraging the superlinear convergence rates of BFGS. Some results can also be extended to qNBO (SR1). Comprehensive proofs of these results are provided in Appendix \ref{sec:proof}.



Let $L_{\Phi}$ be the Lipschitz constant of  $\nabla\Phi$ given in Lemma \ref{prelemma}. We first present the convergence results of qNBO (BFGS) for solving the bilevel problem (\ref{blp}), where the LL objective function is quadratic.

\begin{theorem}[quadratic case] \label{qfblprate}
 Suppose that $f$ in (\ref{blp}) takes the following quadratic form:
 \begin{equation}\label{qf1}
	f(x,y)=\frac{1}{2}y^T A y-y^T  x,
\end{equation}
where $\mu I\preceq A \preceq LI$. Assume that Assumption \ref{ass:F} and \ref{ass:phi} hold. 
Set $Q_k=k+1$ and \textcolor{black}{$H_0=LI$}. Let $\kappa:=L/\mu$, $t_b:=4n{\rm ln}\kappa$, $c_t:=2t_b^{\frac{T}{2}}$, and $\omega:=c_1(1+\frac{1}{\varepsilon})c_t^2 \kappa^3(\frac{1}{T})^{T}$, where $c_1$ is a positive constant given in Theorem \ref{agblprate1}. We can choose positive parameters $\alpha$, $\varepsilon$ and $T$ such that $\tau:=c_t^2 \kappa^3(\frac{1}{T})^{T}\big((1+\varepsilon)+(1+\frac{1}{\varepsilon})\alpha^2c_1\big)<1$ and $\alpha L_{\Phi}+\omega \alpha^2 \left( \frac{1}{2} + \alpha L_{\Phi} \right)\frac{1}{1-\tau}\leq \frac{1}{4}$. 
Then the iterates $x_k$ generated by  qNBO~(BFGS) satisfy:
 \begin{equation}
     \frac{1}{K}\sum_{k=0}^{K-1}\|\nabla{\Phi}(x_k)\|^2\leq  \frac{4(\Phi(x_0)-{\color{black}\inf_x \Phi(x)})}{\alpha K}+\frac{3\delta_0}{K(1-\tau)}+{\color{black}\frac{18nL C_{F_y}^2{\rm ln}K}{\mu^3 K}},
 \end{equation}
 with the initial error $\delta_0=3c_t^2 \kappa^3(\frac{1}{T})^{T}c_2\|y_0^*-y_{0}\|^2$, where $c_2$ is a constant.
\end{theorem}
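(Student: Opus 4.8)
The plan is to run the standard nonconvex descent argument on the smooth upper-level objective $\Phi$, but with the true hypergradient replaced by the approximation $\tilde{\nabla}\Phi(x_k)$, and then to control the cumulative approximation error by exploiting the superlinear convergence of BFGS on the two quadratic subproblems solved by subroutines $\mathcal{A}$ and $\mathcal{B}$. First I would specialize the formula (\ref{eq:hg}): for $f(x,y)=\tfrac12 y^TAy-y^Tx$ one has $y^*(x)=A^{-1}x$, $\nabla^2_{yy}f\equiv A$, $\nabla^2_{xy}f\equiv -I$, hence $u^*(x,y)=A^{-1}\nabla_yF(x,y)$, $\nabla\Phi(x)=\nabla_xF(x,y^*(x))+A^{-1}\nabla_yF(x,y^*(x))$ and $\tilde{\nabla}\Phi(x_k)=\nabla_xF(x_k,y_{k+1})+u_{k+1}$; in particular all third-order drift terms vanish and $y^*$ is $1/\mu$-Lipschitz. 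Writing $e_k:=\tilde{\nabla}\Phi(x_k)-\nabla\Phi(x_k)$ and invoking the $L_\Phi$-smoothness from Lemma \ref{prelemma}, the update $x_{k+1}=x_k-\alpha\tilde{\nabla}\Phi(x_k)$ gives, after a Young-inequality split of the cross term $\langle\nabla\Phi(x_k),e_k\rangle$, a one-step inequality of the form $\Phi(x_{k+1})\le\Phi(x_k)-\tfrac{\alpha}{2}(1-\alpha L_\Phi)\|\nabla\Phi(x_k)\|^2+\alpha(\tfrac12+\alpha L_\Phi)\|e_k\|^2$.

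Second I would bound $\|e_k\|^2$ by the two quantities it truly depends on: the lower-level error $\|y_{k+1}-y^*(x_k)\|^2$ and the linear-system error $\|u_{k+1}-u^*(x_k,y_{k+1})\|^2$, using Assumption \ref{ass:F}, $\|A^{-1}\|\le 1/\mu$ and $\|\nabla^2_{xy}f\|=1$. The lower-level error is governed by subroutine $\mathcal{A}$: since it is warm-started at $y_k$ and the Hessian is the constant $A$, the non-asymptotic superlinear rate of BFGS on a $\kappa$-conditioned quadratic contributes a per-outer-step factor of order $c_t^2\kappa^3(1/T)^T$ after its $T$ quasi-Newton steps, while the Lipschitz continuity of $y^*$ turns the drift $\|x_k-x_{k-1}\|=\alpha\|\tilde{\nabla}\Phi(x_{k-1})\|$ into the coupling term. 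Combining these with $\|\tilde{\nabla}\Phi\|^2\le 2\|\nabla\Phi\|^2+2\|e\|^2$ and re-absorbing the self-dependence of $e$ on the lower-level error, the squared error $\delta_k$ obeys $\delta_k\le\tau\,\delta_{k-1}+\omega\,\alpha^2\|\tilde{\nabla}\Phi(x_{k-1})\|^2$, with exactly the $\tau$ and $\omega$ of the statement (and $c_1$ inherited from Theorem \ref{agblprate1}). The hypothesis $\tau<1$ makes this contractive, so unrolling and summing yields $\sum_k\delta_k\le\frac{\delta_0}{1-\tau}+\frac{\omega}{1-\tau}\alpha^2\sum_k\|\tilde{\nabla}\Phi(x_k)\|^2$.

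Third I would bound the linear-system error from subroutine $\mathcal{B}$. With $Q_k=k+1$, this subroutine runs $k+1$ BFGS recursions against the fixed matrix $A$ in the direction $\nabla_yF(x_k,y_{k+1})$, where $\|\nabla_yF\|\le C_{F_y}$ and $\|u^*\|\le C_{F_y}/\mu$; I expect the error $\|u_{k+1}-u^*(x_k,y_{k+1})\|^2$ to admit a bound of order $\frac{nLC_{F_y}^2}{\mu^3}\cdot\frac{1}{k+1}$ that improves as the inner budget grows, so that summing over $k=0,\dots,K-1$ collapses to the logarithmic $\mathcal{O}\!\left(\frac{nLC_{F_y}^2}{\mu^3}\ln K\right)$ order producing the third term. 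I would then feed both error bounds into the one-step descent inequality, move the resulting $\alpha^2\sum_k\|\tilde{\nabla}\Phi(x_k)\|^2$ to the left via $\|\tilde{\nabla}\Phi\|^2\le 2\|\nabla\Phi\|^2+2\|e\|^2$, and absorb it using the second hypothesis $\alpha L_\Phi+\omega\alpha^2(\tfrac12+\alpha L_\Phi)\tfrac{1}{1-\tau}\le\tfrac14$, which guarantees a net coefficient of at least $\alpha/4$ in front of $\sum_k\|\nabla\Phi(x_k)\|^2$. Telescoping $\Phi(x_0)-\inf_x\Phi(x)$ and dividing by $K$ then delivers the three stated terms, with the $\delta_0$-term carrying the $1/(1-\tau)$ geometric factor from the initial lower-level error.

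The main obstacle is the simultaneous control of the coupled system: the hypergradient error $e_k$ enters both the descent and the lower-level error recursion through the displacement $\|x_{k+1}-x_k\|$, so neither can be closed in isolation, and the whole argument reduces to showing that this feedback loop stays contractive. The delicate part is selecting $\alpha$, $\varepsilon$, and $T$ so that $\tau<1$ and the step-size budget $\le\tfrac14$ hold at once while the superlinear factor $c_t^2\kappa^3(1/T)^T$ stays small, and then accounting carefully for the growing inner count $Q_k=k+1$ so that the summed linear-system error yields the clean $\ln K/K$ rate rather than a faster-growing contribution. The quadratic structure is precisely what makes this tractable, since the constant Hessian eliminates the third-order terms and furnishes the sharpest available BFGS superlinear constants $t_b=4n\ln\kappa$ and $c_t$.
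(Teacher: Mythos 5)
Your Steps 1 and 2 match the paper's argument: the same descent inequality under $L_\Phi$-smoothness, the same decomposition of the hypergradient error into the lower-level error $\|y_{k,T}-y_k^*\|^2$ and the linear-system error $\|u_{k,Q_k}-u_k^*\|^2$, the same use of the non-asymptotic superlinear rate of BFGS on quadratics (the factor $c_t^2\kappa^3(1/T)^T$) together with the Lipschitz continuity of $y^*$ to set up the contraction $\delta_k\le\tau\delta_{k-1}+\omega\alpha^2\|\nabla\Phi(x_{k-1})\|^2+\cdots$, and the same unrolling/double-sum exchange to absorb everything under the hypothesis $\alpha L_\Phi+\omega\alpha^2(\tfrac12+\alpha L_\Phi)\tfrac{1}{1-\tau}\le\tfrac14$.

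However, your Step 3 contains a genuine gap, and it sits exactly at the heart of the theorem. You write that you ``expect'' the error of subroutine $\mathcal{B}$ to satisfy $\|u_{k+1}-u^*(x_k,y_{k+1})\|^2\lesssim\frac{nLC_{F_y}^2}{\mu^3}\cdot\frac{1}{k+1}$, attributing this to ``superlinear convergence of BFGS on the two quadratic subproblems.'' This attribution is wrong, and no proof mechanism is supplied. Subroutine $\mathcal{B}$ does not run BFGS to minimize a function: it builds quasi-Newton matrices $H_{k,i}$ from steps $\tilde{s}_i=\xi_i u_{k,i}$ and evaluates $u_{k,i}=H_{k,i}d$ against the \emph{fixed} direction $d=\nabla_yF(x_k,y_{k+1})$, so no iterate-wise (super)linear convergence of $u_{k,i}$ toward $A^{-1}d$ is available, and in fact the last iterate need not satisfy any such bound. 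The paper's actual argument (Lemmas \ref{lemqfu} and \ref{lemqfurate1}) is a trace-potential pigeonhole: the potential $\sigma(A,B_i)=\langle A^{-1},B_i-A\rangle\ge 0$ is monotonically decreasing with per-step decrease at least $\frac{\langle(B_i-A)A^{-1}(B_i-A)s_i,s_i\rangle}{\langle B_is_i,s_i\rangle}$, and its initial value is at most $n(L/\mu-1)$; summing gives $\sum_{i=1}^{Q_k}\|A^{-1}\nabla_yF(x_k,y_{k+1})-u_{k,i}\|^2\le\frac{nLC_{F_y}^2}{\mu^3}$ \emph{uniformly in} $Q_k$, and only the \emph{best} iterate $\bar{u}_k=\arg\min_i\|A^{-1}\nabla_yF(x_k,y_{k+1})-u_{k,i}\|$ (which the analysis requires the algorithm to return, see (\ref{u*})) inherits the bound $\frac{nLC_{F_y}^2}{\mu^3 Q_k}$. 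This potential argument is also the sole source of the dimension factor $n$ and of the $1/Q_k$ decay that, after the harmonic sum $\sum_{k=0}^{K-1}\frac{1}{k+1}\le 1+\ln K$, yields the $\frac{\ln K}{K}$ term. Without this lemma (or an equivalent), your proof cannot be completed: the feedback-loop and step-size bookkeeping you describe are fine, but they have nothing to feed on for the $u$-error.
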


\begin{remark}
    For the quadratic case, quasi-Newton methods achieve global superlinear convergence \citep{ye2023sr1, Nesterov2021rates, newqn}, allowing  $P = 0$  in Algorithm \ref{alg:yk}.
\end{remark}

\begin{remark}
   The convergence rate of qNBO (SR1) for the quadratic case is similar to that in Theorem \ref{qfblprate}. However, for the general case, the qNBO (SR1) algorithm lacks convergence guarantees without specific corrections used to achieve numerical stability, as noted in \cite{ye2023sr1}.
\end{remark}

Next, we explore the case where the LL objective function in (\ref{blp}) takes a general form. While the convergence rate of qNBO (BFGS) resembles that of the previous quadratic case, it is much more challenging. The specific convergence rate for this general case is detailed in the following theorem.

\begin{theorem}[general case]
\label{gblprate}
Suppose that Assumptions \ref{ass:F}, \ref{ass:f} and \ref{ass:phi}  hold. Set $Q_k=k+1$. 
Choose the parameters $\beta$ and  $P$ such that $(1-\beta\mu)^P\|y_{k}-y_{k}^*\|\leq \frac{1}{300\sqrt{\mu}}$, 
and assume $H_0$ satisfies: $\|\nabla^2_{yy}f(x_k,y^*(x_k))^{-1/2}\big(H_0^{-1}-\nabla^2_{yy}f(x_k,y^*(x_k))\big)\nabla^2_{yy}f(x_k,y^*(x_k))^{-1/2}\|_F\leq\frac{1}{7}$. 
Define $\tau:=\kappa (\frac{1}{T})^{T}(1-\beta\mu)^P\big((1+\varepsilon)+(1+\frac{1}{\varepsilon})\alpha^2c_3\big)$ and $\omega:=c_3(1+\frac{1}{\varepsilon})\kappa (\frac{1}{T})^{T}(1-\beta\mu)^P$, with a constant $c_3$ given in Theorem \ref{thm36g}. 
We can choose positive parameters $\alpha$, $\varepsilon$ and $T$ such that $\tau<1$ and $\alpha L_{\Phi}+\omega \alpha^2 \left( \frac{1}{2} + \alpha L_{\Phi} \right)\frac{1}{1-\tau}\leq \frac{1}{4}$.  
Then the iterates $x_k$ generated by qNBO~(BFGS) satisfy:
 \begin{equation}
     \frac{1}{K}\sum_{k=0}^{K-1}\|\nabla{\Phi}(x_k)\|^2\leq  \frac{4(\Phi(x_0)-{\color{black}\inf_x \Phi(x)})}{\alpha K}+\frac{3\delta_0}{K(1-\tau)}+\frac{18nLM_{f_{xy}}^2 C_{F_y}^2{\rm ln}K}{\mu^3{\tilde{\xi}}  K},
 \end{equation}
 where $\delta_0=3\kappa (\frac{1}{T})^{T}(1-\beta\mu)^Pc_4\|y_0^*-y_{0}\|^2$ is the initial error with constant $c_4$. The constant $\tilde{\xi}$ is related to the property of $f$, as given in (\ref{tildexi}).
\end{theorem}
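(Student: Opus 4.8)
The plan is to combine an inexact-gradient descent lemma for the nonconvex upper-level objective $\Phi$ with a contractive recursion for the lower-level tracking error, where the contraction factor is exactly the quantity $\tau$. Since $\nabla\Phi$ is $L_\Phi$-Lipschitz by Lemma \ref{prelemma} and the update is $x_{k+1}=x_k-\alpha\tilde{\nabla}\Phi(x_k)$, I would start from
\begin{equation*}
\Phi(x_{k+1}) \le \Phi(x_k)-\alpha\langle\nabla\Phi(x_k),\tilde{\nabla}\Phi(x_k)\rangle+\tfrac{L_\Phi\alpha^2}{2}\|\tilde{\nabla}\Phi(x_k)\|^2,
\end{equation*}
and write $\tilde{\nabla}\Phi(x_k)=\nabla\Phi(x_k)+e_k$ for the hypergradient error $e_k$. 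Expanding the inner product, bounding the cross term by Young's inequality, and using $\|\tilde{\nabla}\Phi(x_k)\|^2\le 2\|\nabla\Phi(x_k)\|^2+2\|e_k\|^2$ yields
\begin{equation*}
\Phi(x_{k+1})\le\Phi(x_k)-\alpha\big(\tfrac12-\alpha L_\Phi\big)\|\nabla\Phi(x_k)\|^2+\alpha\big(\tfrac12+\alpha L_\Phi\big)\|e_k\|^2 .
\end{equation*}
The coefficient $\alpha(\tfrac12+\alpha L_\Phi)$ on $\|e_k\|^2$ is precisely the factor appearing in the hypothesis $\alpha L_\Phi+\omega\alpha^2(\tfrac12+\alpha L_\Phi)\tfrac{1}{1-\tau}\le\tfrac14$; once $\sum_k\|e_k\|^2$ is bounded, this hypothesis is exactly what guarantees that the net coefficient of $\|\nabla\Phi(x_k)\|^2$ after summation is at least $\alpha/4$, producing the leading term $\tfrac{4(\Phi(x_0)-\inf_x\Phi(x))}{\alpha K}$.

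The core of the argument is controlling $\|e_k\|^2$ through the tracking error $\delta_k:=\|y_k-y^*(x_k)\|^2$. Splitting $e_k$ via $u^*(x,y)=[\nabla^2_{yy}f(x,y)]^{-1}\nabla_yF(x,y)$, Assumptions \ref{ass:F} and \ref{ass:f} bound $\|e_k\|$ by a multiple of $\|y_{k+1}-y^*(x_k)\|$ (from evaluating at $y_{k+1}$ instead of $y^*(x_k)$) plus $M_{f_{xy}}\|u_{k+1}-u^*(x_k,y_{k+1})\|$ (from the inexact inverse-Hessian-vector product). For the first part, the $P$ gradient steps of subroutine ${\cal A}$ contribute the linear factor $(1-\beta\mu)^P$, which under the warm-start hypothesis $(1-\beta\mu)^P\|y_k-y^*(x_k)\|\le\tfrac{1}{300\sqrt\mu}$ and the Frobenius bound on $H_0$ places the iterate inside the region where the $T$ subsequent BFGS steps converge at the non-asymptotic superlinear rate, contributing the factor $(\tfrac1T)^T$; together with a norm-conversion factor $\kappa$ this gives $\|y_{k+1}-y^*(x_k)\|^2\lesssim\kappa(\tfrac1T)^T(1-\beta\mu)^P\,\delta_k$. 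Combining with the solution drift $\|y^*(x_{k+1})-y^*(x_k)\|\le L_{y^*}\alpha\|\tilde{\nabla}\Phi(x_k)\|$ and splitting the square with parameter $\varepsilon$, the terms $(1+\varepsilon)$ and $(1+\tfrac1\varepsilon)\alpha^2c_3$ combine into the recursion
\begin{equation*}
\delta_{k+1}\le\tau\,\delta_k+(1+\tfrac1\varepsilon)\,c\,\alpha^2\|\nabla\Phi(x_k)\|^2+(\text{$u$-error}_k),
\end{equation*}
with $\tau$ and $\omega$ exactly as defined and the constants $c_3,c_4$ supplied by Theorem \ref{thm36g}.

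Since $\tau<1$, unrolling this recursion as a geometric series gives $\sum_{k=0}^{K-1}\delta_k\le\tfrac{1}{1-\tau}\big(\delta_0+\text{const}\cdot\alpha^2\sum_k\|\nabla\Phi(x_k)\|^2+\sum_k(\text{$u$-error}_k)\big)$, and the same bound transfers to $\sum_k\|e_k\|^2$. The residual $u$-error at iteration $k$ is governed by the $Q_k=k+1$ quasi-Newton steps of subroutine ${\cal B}$ applied to the direction $\nabla_yF(x_k,y_{k+1})$; bounding it through the curvature constant $\tilde\xi$ of (\ref{tildexi}) and $\|u^*\|\le C_{F_y}/\mu$, its decay with the growing inner count $Q_k$ makes $\sum_{k=0}^{K-1}(\text{$u$-error}_k)$ scale like $\ln K$, yielding the final term $\tfrac{18nLM_{f_{xy}}^2C_{F_y}^2\ln K}{\mu^3\tilde\xi K}$. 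Substituting the bound on $\sum_k\|e_k\|^2$ into the telescoped descent inequality and dividing by $\alpha K/4$ then assembles the three stated terms, with $\delta_0=3\kappa(\tfrac1T)^T(1-\beta\mu)^Pc_4\|y_0^*-y_0\|^2$.

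I expect the main obstacle to be establishing the superlinear contraction of subroutine ${\cal A}$ uniformly over all outer iterations in the non-quadratic regime. Unlike the quadratic case, where the BFGS pairs are exact secant pairs of a single fixed Hessian, here the lower-level objective $f(x_k,\cdot)$ changes with $k$, so the non-asymptotic superlinear rate---naturally stated for one fixed strongly convex function---must be re-certified at every step. This forces an inductive argument that the warm-start condition $(1-\beta\mu)^P\|y_k-y^*(x_k)\|\le\tfrac{1}{300\sqrt\mu}$ is preserved across iterations, which in turn requires the Lipschitz continuity of $\nabla^2_{yy}f$ and $\nabla^2_{xy}f$ in both $x$ and $y$ (Assumptions \ref{ass:f}(ii),(iii),(v)) to control how the superlinear region and the accumulated quasi-Newton information drift as $x_k$ moves, all while keeping the coupled recursions for $\delta_k$, $\|e_k\|^2$, and $\|\nabla\Phi(x_k)\|^2$ from reinforcing one another.
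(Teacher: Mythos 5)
Your proposal is correct and follows essentially the same route as the paper's proof: the same smoothness-based descent inequality with coefficients $(\tfrac{\alpha}{2}-\alpha^2L_\Phi)$ and $(\tfrac{\alpha}{2}+\alpha^2L_\Phi)$, the same decomposition of the hypergradient error into a lower-level tracking error and a $u$-error, the same $\tau$-contractive recursion obtained from the non-asymptotic superlinear BFGS rate plus Young's inequality on the solution drift, geometric unrolling, and the $1/Q_k$ decay of the $u$-error (via the potential-function bound (\ref{tildexi}) and selecting the best inner iterate) yielding the $\ln K$ term. The only notable difference is cosmetic: the uniform-in-$k$ warm-start and $H_0$ conditions you worry about re-certifying inductively are simply imposed as hypotheses of the theorem, so no induction is needed.
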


The proof sketch of Theorem \ref{gblprate} can be found in Appendix \ref{sec:proofske}. The complete version of the parameter specifications and the proofs of Theorems \ref{qfblprate} and \ref{gblprate} are provided in Appendices \ref{sec:proofqfrate} and \ref{sec:proofgrate}, respectively.

\textbf{Discussion on convergence rate and complexity.}
   Choose \(T = \Theta(\ln \kappa)\) and the step size \(\alpha = \Theta(\kappa^{-3})\) such that \(\tau < 1\) and \(\alpha L_{\Phi} + \omega \alpha^2 \left( \frac{1}{2} + \alpha L_{\Phi} \right) \frac{1}{1 - \tau} \leq \frac{1}{4}\). 
   Under the same setting as Theorem \ref{gblprate}, we have \(\frac{1}{K} \sum_{k=0}^{K-1} \| \nabla \Phi(x_k) \|^2 = \mathcal{O} \left( \frac{\kappa^3}{K} + \frac{\kappa^3 \ln K}{K} \right)\). To achieve an \(\epsilon\)-stationary point, it is required that \(K = \tilde{\mathcal{O}}(\kappa^3 \epsilon^{-1})\), resulting in a gradient complexity of \(Gc(f, \epsilon) = \tilde{\mathcal{O}}(\kappa^6 \epsilon^{-2})\) and \(Gc(F, \epsilon) = \tilde{\mathcal{O}}(\kappa^3 \epsilon^{-1})\), as well as a Jacobian-vector product complexity of \(JV(\epsilon) = \tilde{\mathcal{O}}(\kappa^3 \epsilon^{-1})\). 
   
   The details for ensuring \(\tau < 1\) and \(\alpha L_{\Phi} + \omega \alpha^2 \left( \frac{1}{2} + \alpha L_{\Phi} \right) \frac{1}{1 - \tau} \leq \frac{1}{4}\), as well as the specific complexity analysis, can be found in Appendix \ref{sec:complexity}.

\textbf{Theoretical comparisons.}
 Our analysis provides a non-asymptotic convergence rate, superior to that of SHINE \citep{ramzi21shine}. It is established  in \cite{ji2022will} that the fastest deterministic convergence rate, $\frac{1}{K}\sum_{k=0}^{K-1}\Vert\nabla \Phi(x_k)\Vert^2 = {\cal O}(\frac{1}{K})$, is achievable. In comparison, BOME \citep{bome} reaches a convergence rate of ${\cal O}(K^{-1/4}) $, F$^2$SA \citep{kwon23fully} attains ${\cal O}(\frac{\ln K}{K^{2/3}})$, and SABA \citep {2022framework} achieves ${\cal O}(\frac{1}{K})$. To achieve an $\epsilon$-stationary point, the matrix-vector complexity for qNBO is $\tilde{\mathcal{O}}(\kappa^3 \epsilon^{-1})$, primarily due to the $[\nabla_{xy}^2 f(x, y)]^T u$ calculations. {\color{black} It is worth noting that the number of Jacobian-vector products and Hessian-vector products for AID-BIO \citep{ji2021bilevel} are of the orders $\mathcal{O}(\kappa^{3}\epsilon^{-1})$ and $\mathcal{O}(\kappa^{3.5}\epsilon^{-1})$, respectively}. Although the gradient complexity $Gc(f,\epsilon)$ for qNBO is higher than that in AID-BIO \citep{ji2021bilevel}, the computation of gradients is generally less complex than performing matrix-vector operations.

\section{Numerical experiment}\label{sec:exp}

In this section, we conduct numerical experiments to evaluate the performance of the qNBO algorithms in solving bilevel optimization problems. 
We first validate the theoretical convergence through experiments on a toy example, followed by an assessment of efficiency by comparing qNBO with its closest competitor, SHINE \citep{ramzi21shine}, as well as other bilevel optimization (BLO) algorithms such as 
{\color{black} AID-BIO (\cite{ji2021bilevel} with CG method), AID-TN (\cite{ji2021bilevel} with Truncated Neumann method), AMIGO (\cite{arbel2022amortized} with CG method), }
SABA \citep{2022framework} and BSG1 \citep{giovannelli2021inexact}. Additionally, we compare qNBO with two widely used fully first-order algorithms, BOME \citep{bome} and ${\rm F^2 SA}$ \citep{kwon23fully}, in real-world applications including hyperparameter optimization and data hyper-cleaning. Finally, we explore qNBO’s applicability to complex machine learning tasks by exclusively comparing it with PZOBO \citep{sow2022convergence} in a meta-learning experiment, where PZOBO is regarded as the leading algorithm for few-shot meta-learning. 
Details of all experimental specifications are provided in Appendix \ref{aexp}. Additionally, we perform an ablation study in Appendix \ref{abl}.

\begin{figure}[h]
   \vspace{-1mm}
    \begin{minipage}[t]{1\textwidth}
    \centering
    \caption*{}
    \includegraphics[width=12cm]{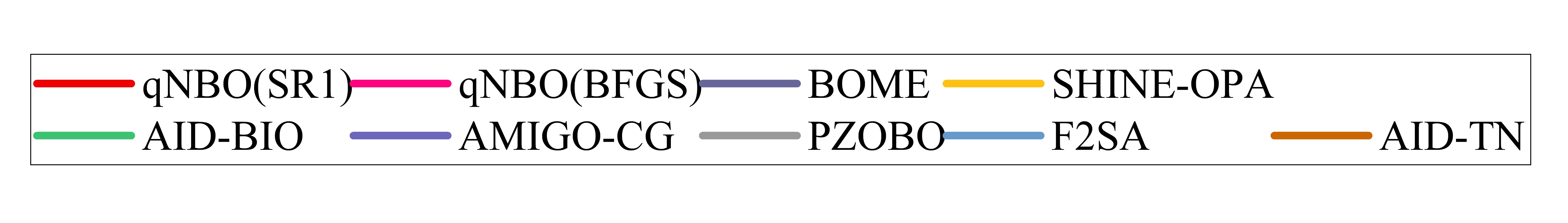}
    \end{minipage}
    \vspace{-3mm}
    \begin{minipage}[t]{0.4\columnwidth}
    \caption*{(a)}
    \vspace{-4mm}
\centering    \includegraphics[width=\textwidth]{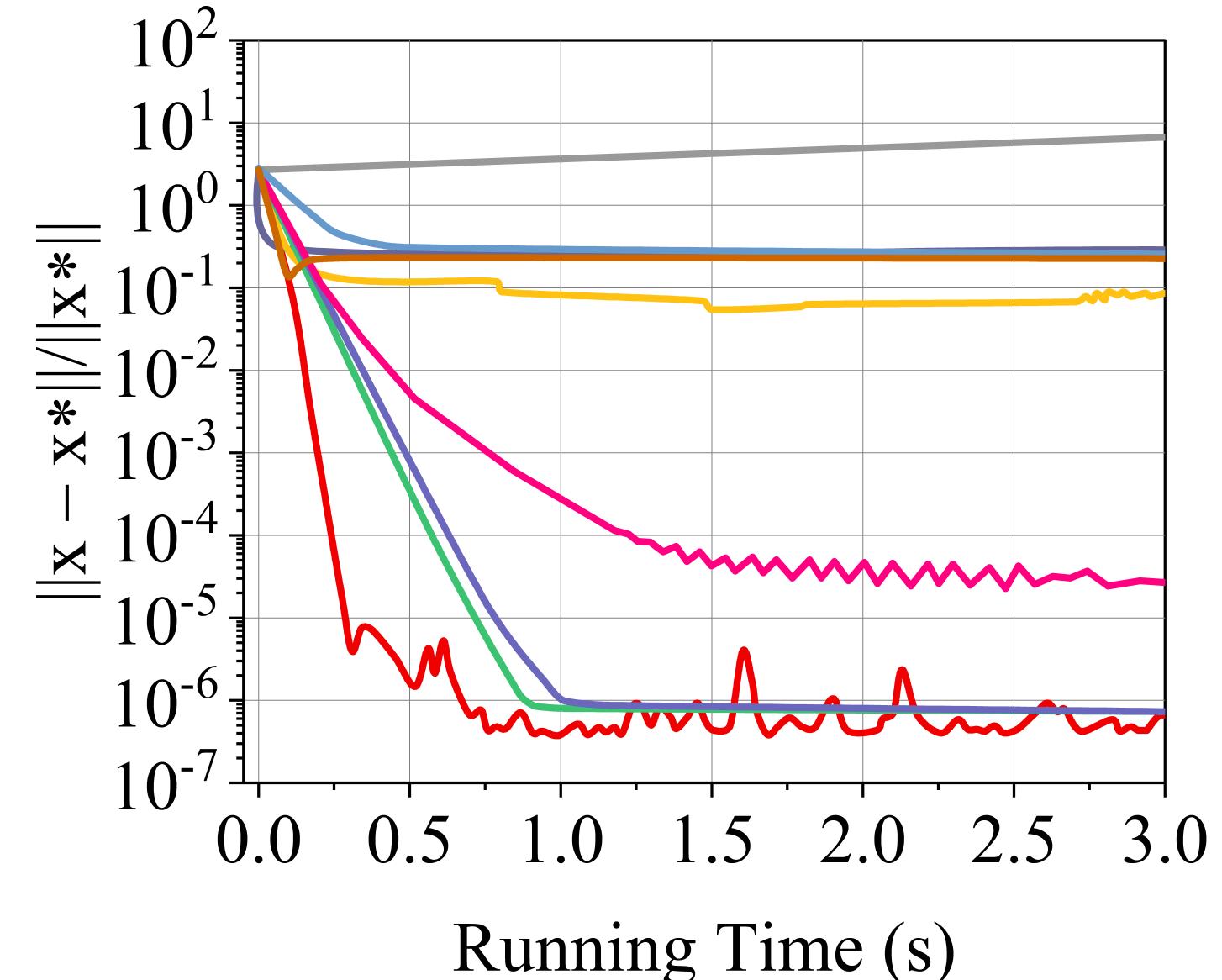}
    \end{minipage}
    \hspace{0.4in}
    \begin{minipage}[t]{0.4\columnwidth}
    \caption*{(b)}
    \vspace{-3mm}
\centering    \includegraphics[width=\textwidth]{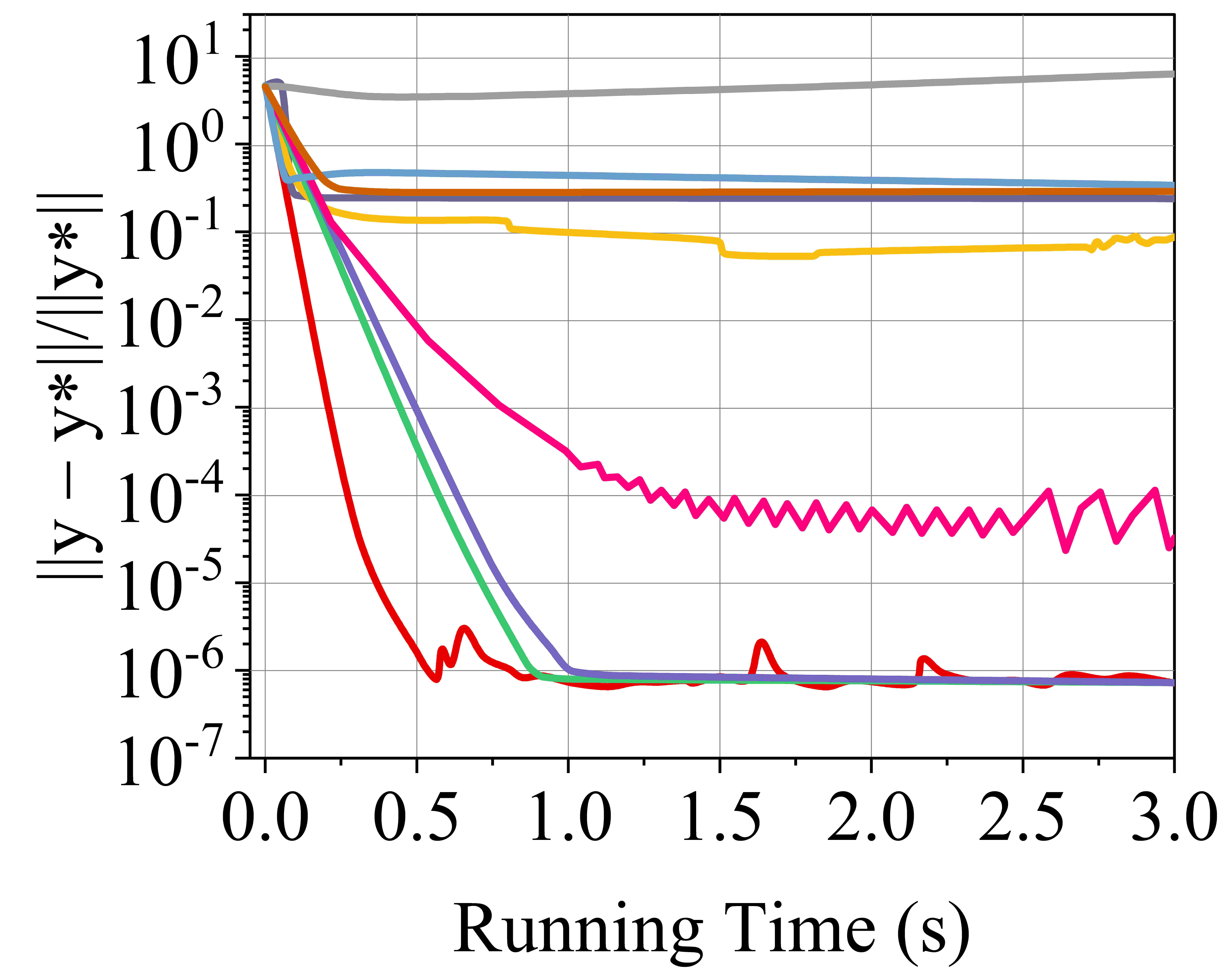}
    \end{minipage}

    \vspace{2mm}
    \begin{minipage}[t]{0.4\columnwidth}
    \caption*{(c)}
    \vspace{-4mm}
    \centering    \includegraphics[width=\textwidth]{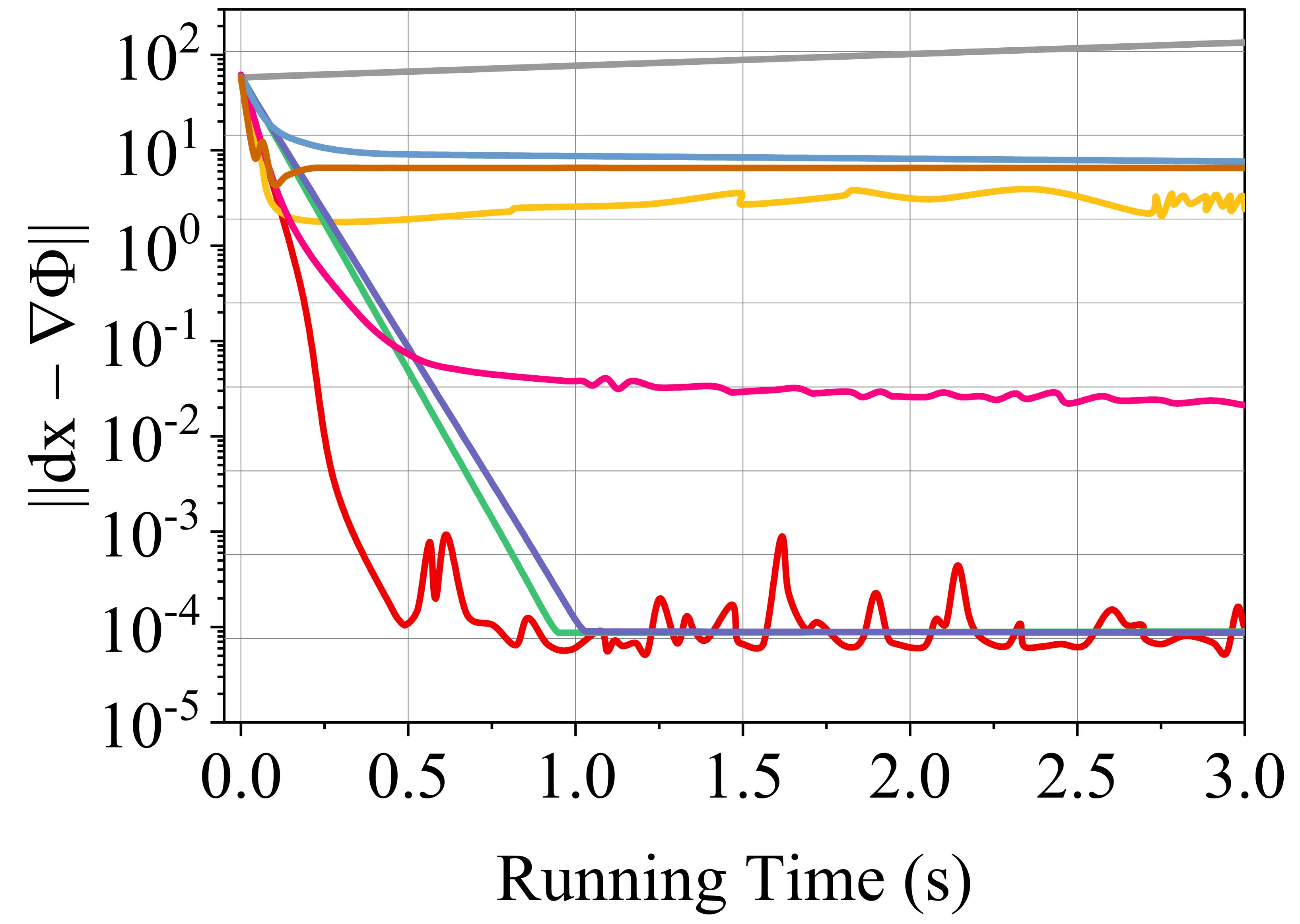}
    \end{minipage}
    \hspace{.3in}
    \begin{minipage}[t]{0.41\columnwidth}
    \caption*{(d)}
    \vspace{-4mm}
    \centering    \includegraphics[width=\textwidth]{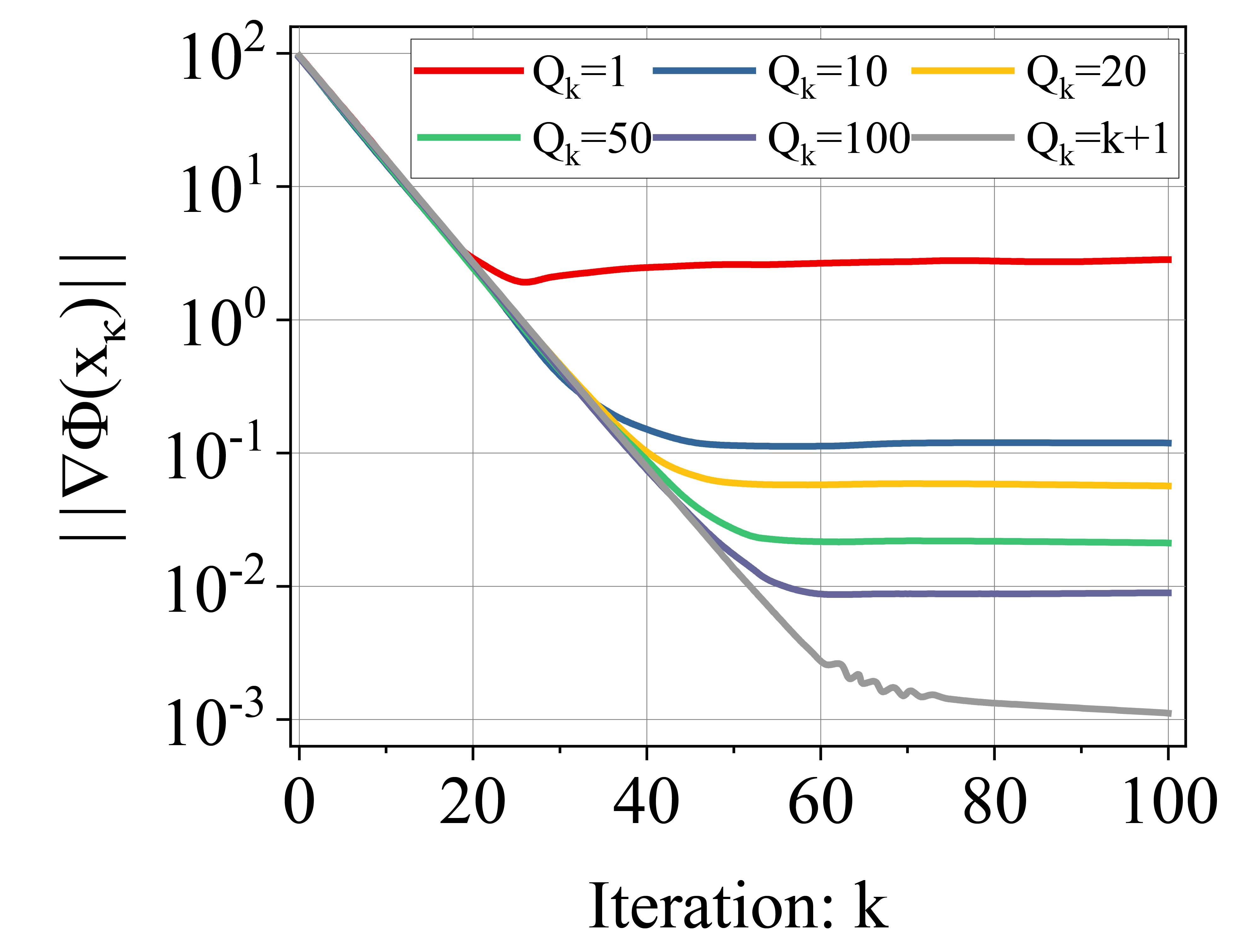}
    \end{minipage}
    \vspace{2mm}
  \caption{\textcolor{black}{Numerical results on toy example. (d) Testing results on the impact of the parameter $\{Q_k\}_{k=0}^{K-1}$ in qNBO~(BFGS).}}
  \label{fig:toy}
\end{figure}

\subsection{Toy example}
In this section, we consider a quadratic bilevel problem where both the UL  and LL objective functions are quadratic. Given $z_0 \in {\mathbb R}^n$ and the symmetric positive definite matrix $A \in {\mathbb R}^{n\times n}$, the problem is formulated as follows:
\begin{equation}\label{toyblp}
    \min_{x\in {\mathbb R}^n}\  \frac{1}{2}\|x-z_0\|^2+\frac{1}{2}y^*(x)^T A y^*(x)\ \ 
		{\rm s.t.} \ \ 
  y^*(x)=\underset{y\in\mathbb{R}^{n}}{\mathrm{arg\,min}}\ \frac{1}{2}y^T A  y-x^Ty.
\end{equation}
In the experiment, the vector $z_0$ and the matrix $A$ are randomly generated, with $n=1000$. The hypergradient is given by $\nabla\Phi(x)=(A^{-1}+I)x-z_0$, which yields the unique solution $(x^*, y^*)=\big((A^{-1}+I)^{-1}z_0, A^{-1}(A^{-1}+I)^{-1}z_0\big)$. To evaluate the performance of various methods in solving this problem, we analyze the hypergradient estimation errors and the distances between the iterates $(x_k, y_k)$ and the optimal solution $(x^*, y^*)$. The results, shown in Figure \ref{fig:toy}(a), indicate that  
{\color{black} AID-BIO, AMIGO-CG and qNBO~(SR1) achieve} smaller hypergradient errors and produce iterates closer to the optimal solutions compared to other {\color{black} methods}. Furthermore, we analyze the impact of parameter $\{Q_k\}_{k=0}^{K-1}$ in Algorithm \ref{alg:foa} on the performance of qNBO~(BFGS). As depicted in Figure \ref{fig:toy}(d) shows that the hypergradient $\nabla\Phi(x_k)$ generally decreases as $Q_k$ increases, with $\{Q_k\}_{k=0}^{K-1}=\{k+1\}_{k=0}^{K-1}$ performing the best, thereby supporting the claims of Theorem \ref{qfblprate}.

\begin{figure}[h]
   \vspace{-10mm}
  \begin{minipage}[t]{1\textwidth}
    \centering
    \caption*{}
    \includegraphics[width=10cm]{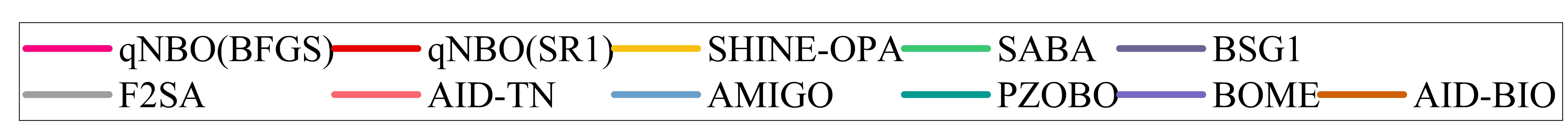}
    \end{minipage}\hfill
    \vspace{0mm} 
  \begin{minipage}[t]{0.4\columnwidth}
    \centering
\includegraphics[width=\textwidth]{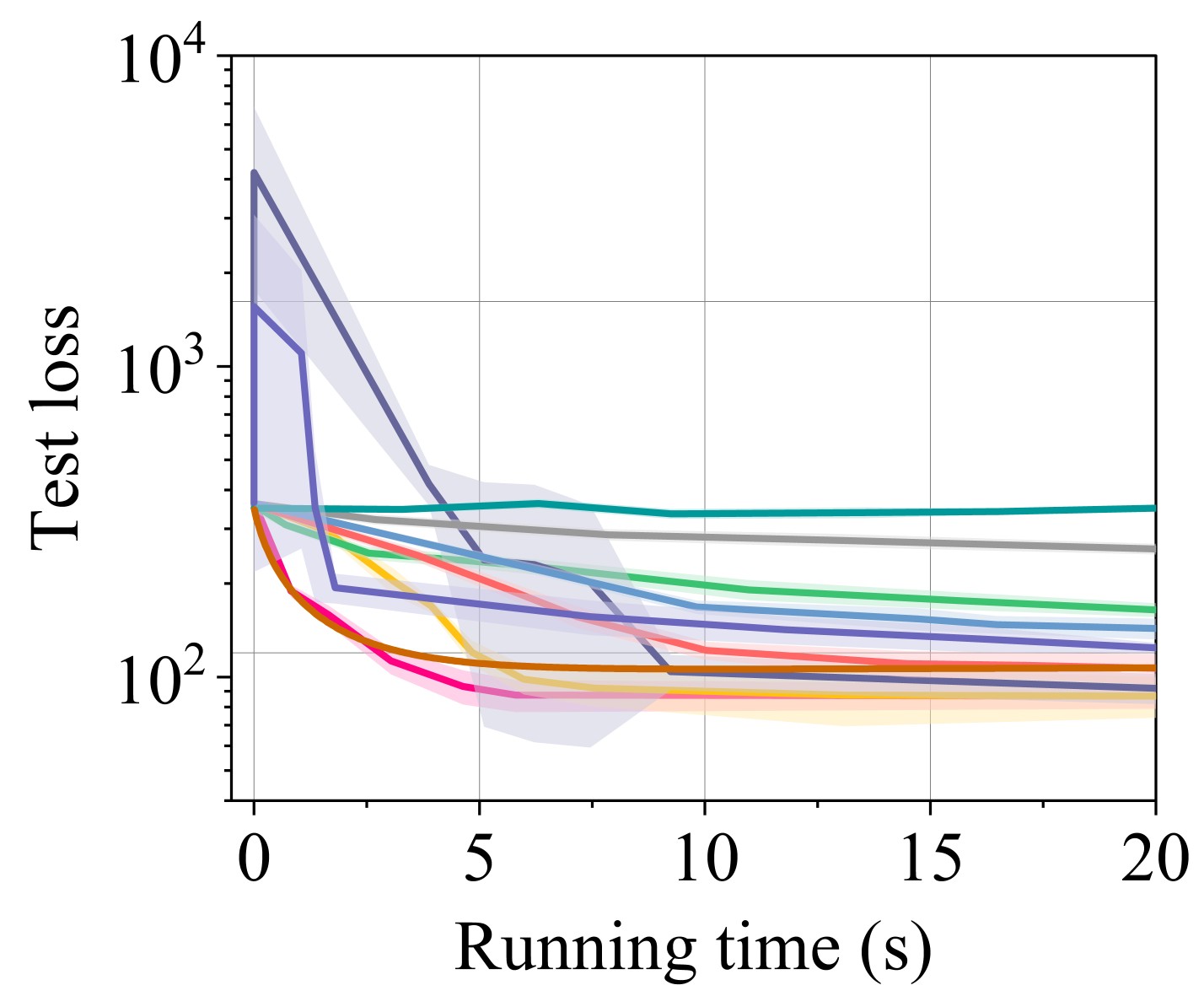}
    \caption*{}
  \end{minipage}%
  \hspace{.5in}
  \begin{minipage}[t]{0.395\columnwidth}
    \centering    \includegraphics[width=\textwidth]{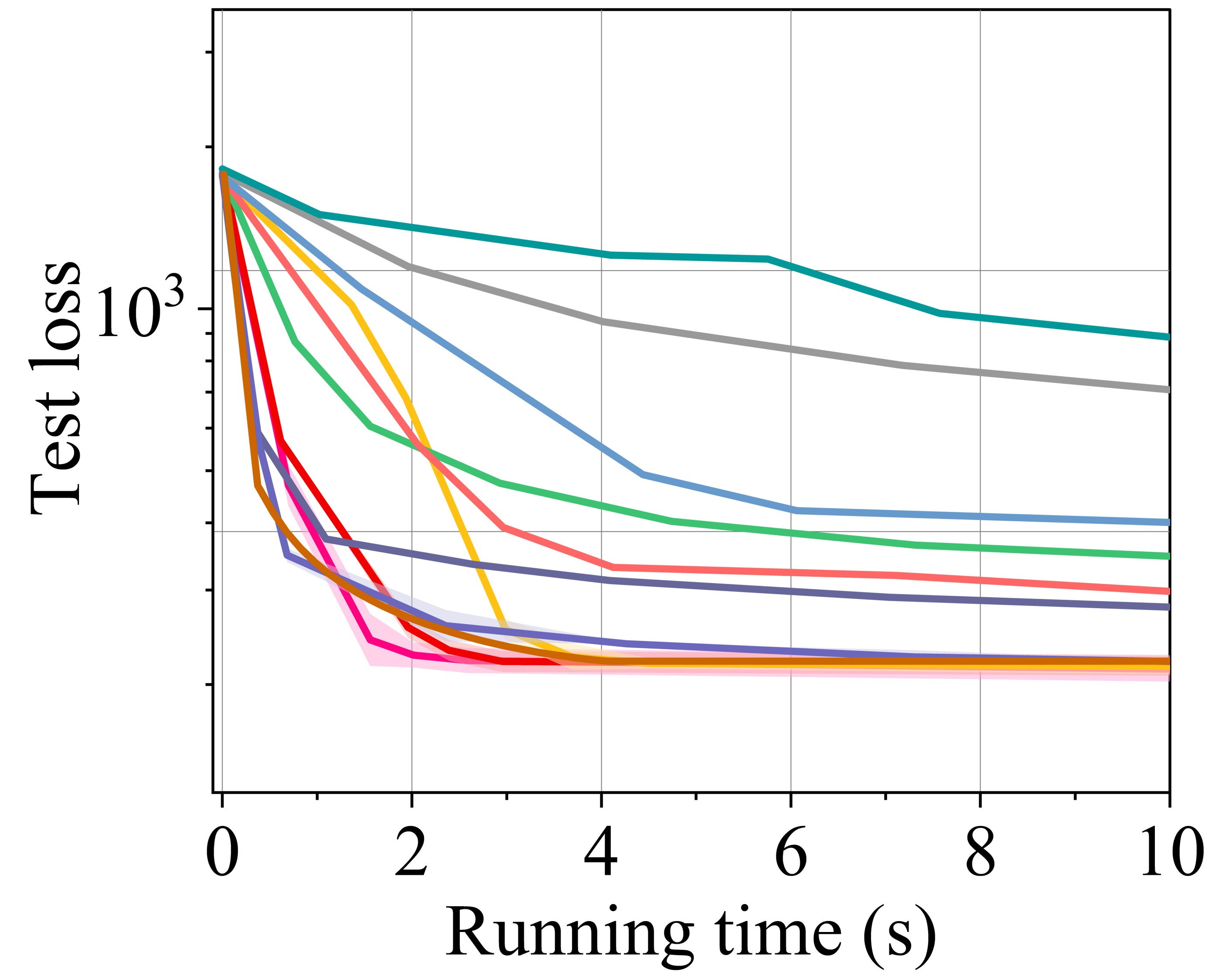}
    \caption*{}
  \end{minipage}
\vspace{-5mm}
 \caption{
{\color{black} Hyperparameter optimization experiments for \(l_2\)-regularized logistic regression on two datasets }(Left: \textbf{20News}; Right: \textbf{Real-sim}). 
 }
  \label{fig:logreg}
\end{figure}

\subsection{Hyperparameter optimization in logistic regression}
We perform hyperparameter optimization for $l_2$-regularized logistic regression on the 20News \citep{lang1995newsweeder} and Real-sim \citep{libsvm} datasets, formulated as a bilevel problem:
\begin{equation}\label{hr}
		\min_{x\in\mathbb{R}}\  \sum_{i=1}^{n^{\prime}}\ell(a_i^{\prime}, b_i^{\prime},y^{*}(x))\ \ 
		{\rm s.t.} \  \ y^{*}(x)=\underset{y\in\mathbb{R}^{n}}{\mathrm{arg\,min}}
\sum_{i=1}^{n}\ell(a_i, b_i, y)+\frac{{\rm exp}(x)}{2}\|y\|^2,	
\end{equation}
where $(a_i, b_i)\in {\cal D}_{train}$ and $(a_i^{\prime}, b_i^{\prime})\in {\cal D}_{val}$ are the training data and validation data respectively, and $\ell(a_i, b_i, y):={\rm log}\big(1+{\rm exp}\big(-b_{i}{a_{i}}^{T} y\big)\big)$. The LL variable $y$ is the model's parameter, while the UL variable $x$ refers to the regularization hyperparameter.

The performance of different methods on the unseen dataset 
${\cal D}_{test}$ is shown in Figure \ref{fig:logreg}, where the results over 10 runs are plotted for each method. 
{\color{black} Here AMIGO refers to the algorithm AMIGO in \cite{arbel2022amortized}  that employs gradient descent to solve the auxiliary quadratic problem.}
The results show that qNBO (BFGS) reaches its lowest loss faster than the other methods. Notably, the performance of qNBO (SR1) on the 20News dataset was omitted due to its ineffectiveness in this experiment, which resulted in oscillations. This issue arises from the numerical instability of SR1 when solving general functions without a correction strategy \citep{ye2023sr1}. 

\begin{figure}[h]
\vspace{-5mm}
\begin{minipage}[t]{1.0\textwidth}
    \centering
    \includegraphics[width=10cm]{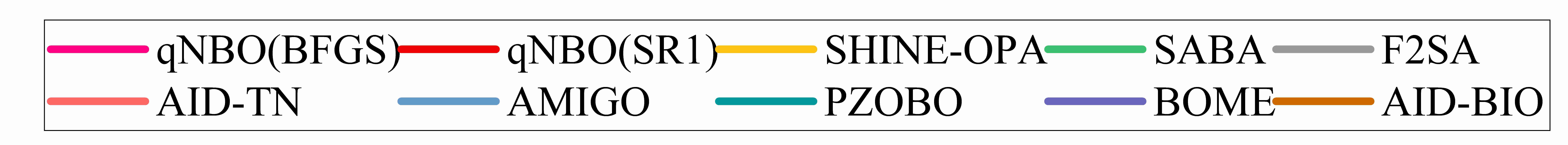}
\end{minipage}%
\vspace{-5mm}
\vskip 0.3in
 \begin{minipage}[t]{0.4\columnwidth}
    \centering
\includegraphics[width=\textwidth]{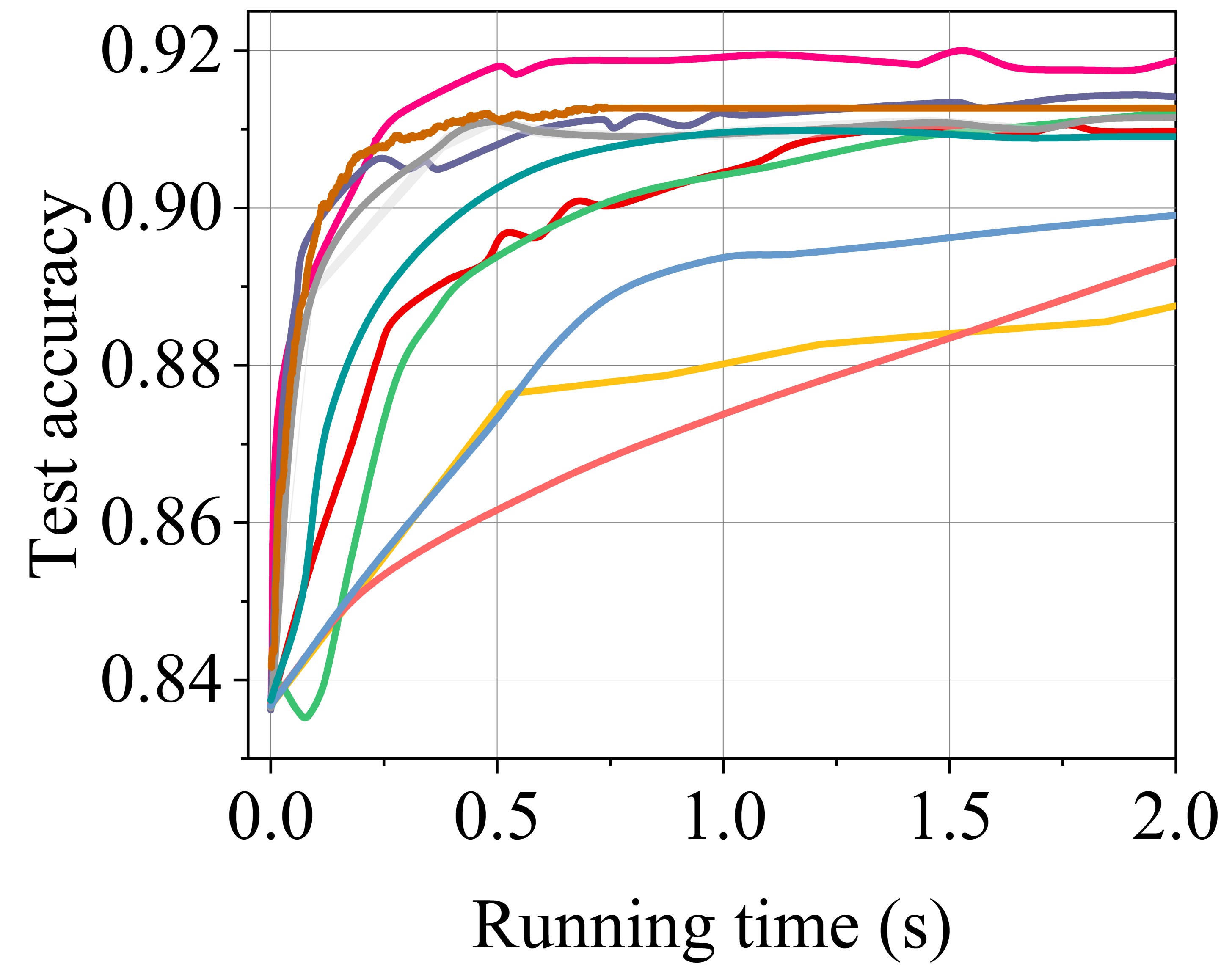}
 \end{minipage}%
\hspace{.5in}
 \begin{minipage}[t]{0.4\columnwidth}
    \centering
\includegraphics[width=\textwidth]{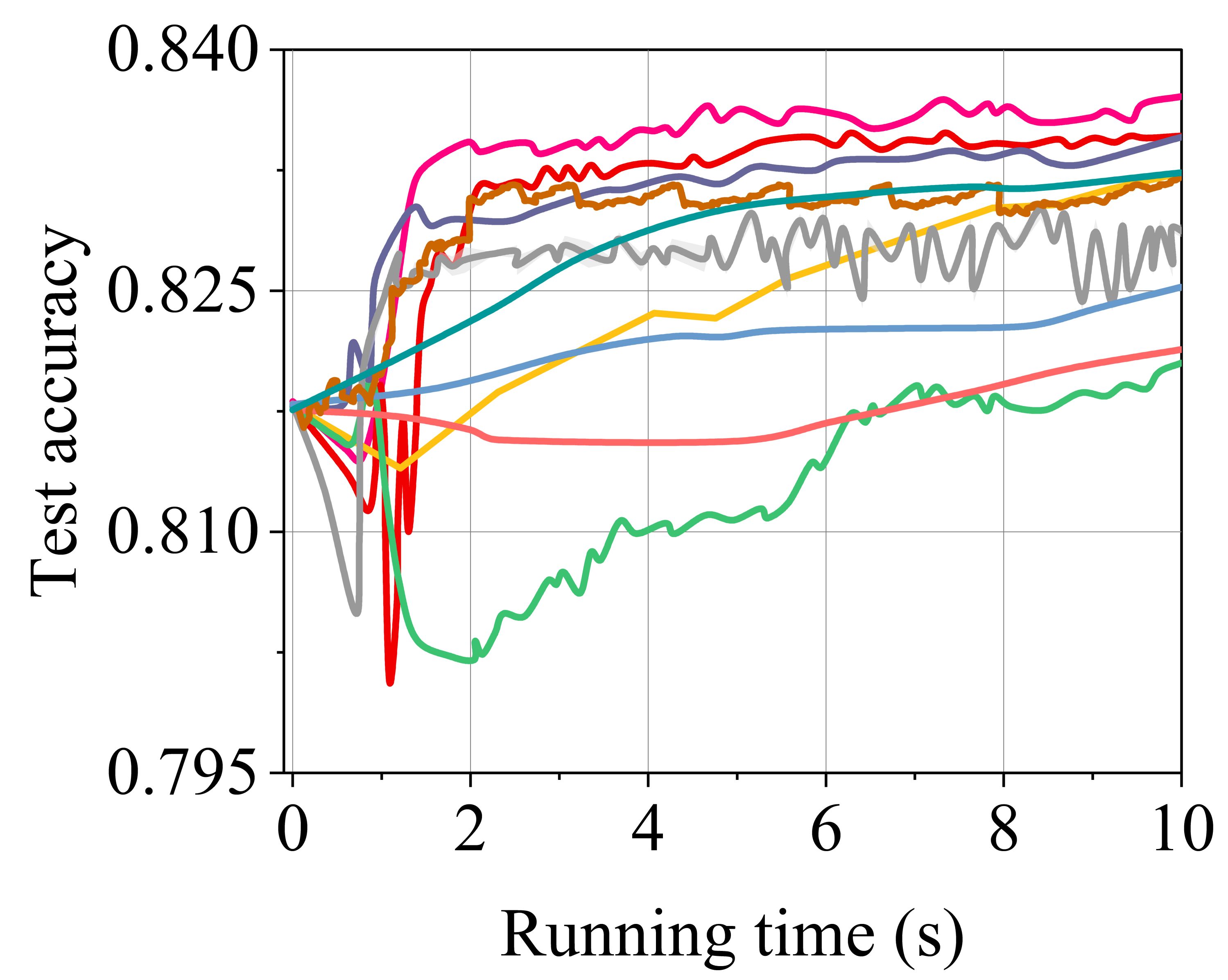}
 \end{minipage}%

\caption{{\color{black}Data hyper-cleaning results on two datasets.} (Left: \textbf{MNIST}; Right: \textbf{FashionMNIST}). 
}
\label{fig:dc}
\vskip -0.2in
\end{figure}

\subsection{Data hyper-cleaning}
This subsection focuses on data hyper-cleaning for the MNIST \citep{mnist} and FashionMNIST \citep{fashionmnist} to enhance model accuracy, using a noisy training set  ${\cal D}_{\rm train}:=\{a_i, b_i\}_{i=1}^m$ and a clean validation set ${\cal D}_{\rm val}$. The objective is to adjust the training data weights to improve performance on ${\cal D}_{\rm val}$. This task can be formalized as the bilevel problem:
	\begin{equation}\label{dc}
	    \min_{x} \ \  \ell^{\rm val}(y^{*}(x))\ \ 
	{\rm s.t.} \ y^{*}(x)=\underset{y}{\mathrm{arg\,min}}\{\ell^{\rm train}(x,y)+c\|y\|^2\},
	\end{equation} 	
where $\ell^{\rm val}$ is the validation loss on ${\cal D}_{\rm val}$ and $\ell^{\rm train}=\sum_{i=1}^{m}\sigma(x_i)\ell(a_i, b_i, y)$ is a weighted training loss with $\sigma(x)={\rm Clip}(x, [0,1])$ and $x\in {\mathbb R}^m$. In the experiment, both $\ell(a_i, b_i, y)$ and $\ell^{\rm val}$ are the cross entropy loss, with $c=0.001$. 

\begin{small}
    \begin{table}[h]
  \caption{
  Comparison of results for hyper-cleaning. We compare the time and F1 score of various algorithms in achieving specific test accuracies (91.50\% for MNIST and 83.00\% for FashionMNIST). Bold font indicates the \textbf{fastest time} to reach the target accuracy. If an algorithm fails to reach the required test accuracy, the time is recorded up to the highest accuracy it achieves.
  }
  \label{table:dc}
  \centering
\begin{tabular}{lcccccc}
\hline
 & \multicolumn{3}{c}{MNIST} & \multicolumn{3}{c}{FashionMNIST} \\
Method & Time (s) & Acc. (\%) & F1 score & Time (s) & Acc. (\%) & F1 score \\
\hline
qNBO~(BFGS) &  \textbf{ 0.42} &{91.54} & 95.34 &  \textbf{0.83} &{83.04} & 93.56 \\
qNBO~(SR1)&  {3.68} &{91.51} & 94.59 &  {1.53} &{83.02} & 94.09 \\
BOME & 6.31 &{91.50} & 94.94 & 3.59 & {83.00} & 93.48 \\
SABA & 3.35 &{91.44} & 94.79 & 44.29 &{82.79} & 88.81 \\
F$^2$SA & 8.06 & {91.46} & 93.31 & 8.72 &{82.98} & 86.55 \\
SHINE-OPA & 20.25 & {91.51} & 95.44 & 9.96 &{83.07} & 93.87 \\
PZOBO & 1.05 & {91.46} & 95.46 & 2.96 &{83.05} & 93.77 \\
\hline
\end{tabular}
\end{table}
\end{small}

As shown in Figure \ref{fig:dc}, qNBO~(BFGS) significantly outperforms other methods, achieving lower test loss and higher test accuracy more quickly. All results are averaged over 10 random trials. 
Table \ref{table:dc} illustrates that while qNBO, BOME, and SHINE-OPA are able to achieve the required accuracy, qNBO~(BFGS) does so in the shortest time. Notably, both F$^2$SA and SABA fail to reach the target accuracy on either dataset. For example, on the MNIST dataset, qNBO~(BFGS) requires less than one-tenth of the time taken by BOME, the second-fastest method. The exclusion of BSG1's performance from this experiment is due to its ineffectiveness in addressing these data hyper-cleaning problems.

\subsection{Meta-Learning}


In this subsection, we consider the few-shot meta-learning, which can be described as:
\begin{equation*}\label{ml} 
	   \min_{x} \ \ \frac{1}{m}\sum_{i=1}^{m}{\cal L}_{{\cal D}_i}(x, y_{i}^{*}(x))\ \ 
		{\rm s.t.} \ y^{*}(x)=\underset{y}{\mathrm{arg\,min}}\frac{1}{m}\sum_{i=1}^{m}{\cal L}_{{\cal S}_i}(x, y_i).
\end{equation*}

Due to the superior performance demonstrated by PZOBO compared to the baseline methods (MAML \citep{maml} and ANIL \citep{anil}), the comparison of qNBO~(BFGS) is exclusively conducted against PZOBO, excluding the aforementioned baseline methods.

\begin{figure}[h]
\vspace{-10mm}
 \begin{minipage}[t]{0.39\columnwidth}
    \centering
    \caption*{}
\includegraphics[width=\textwidth]{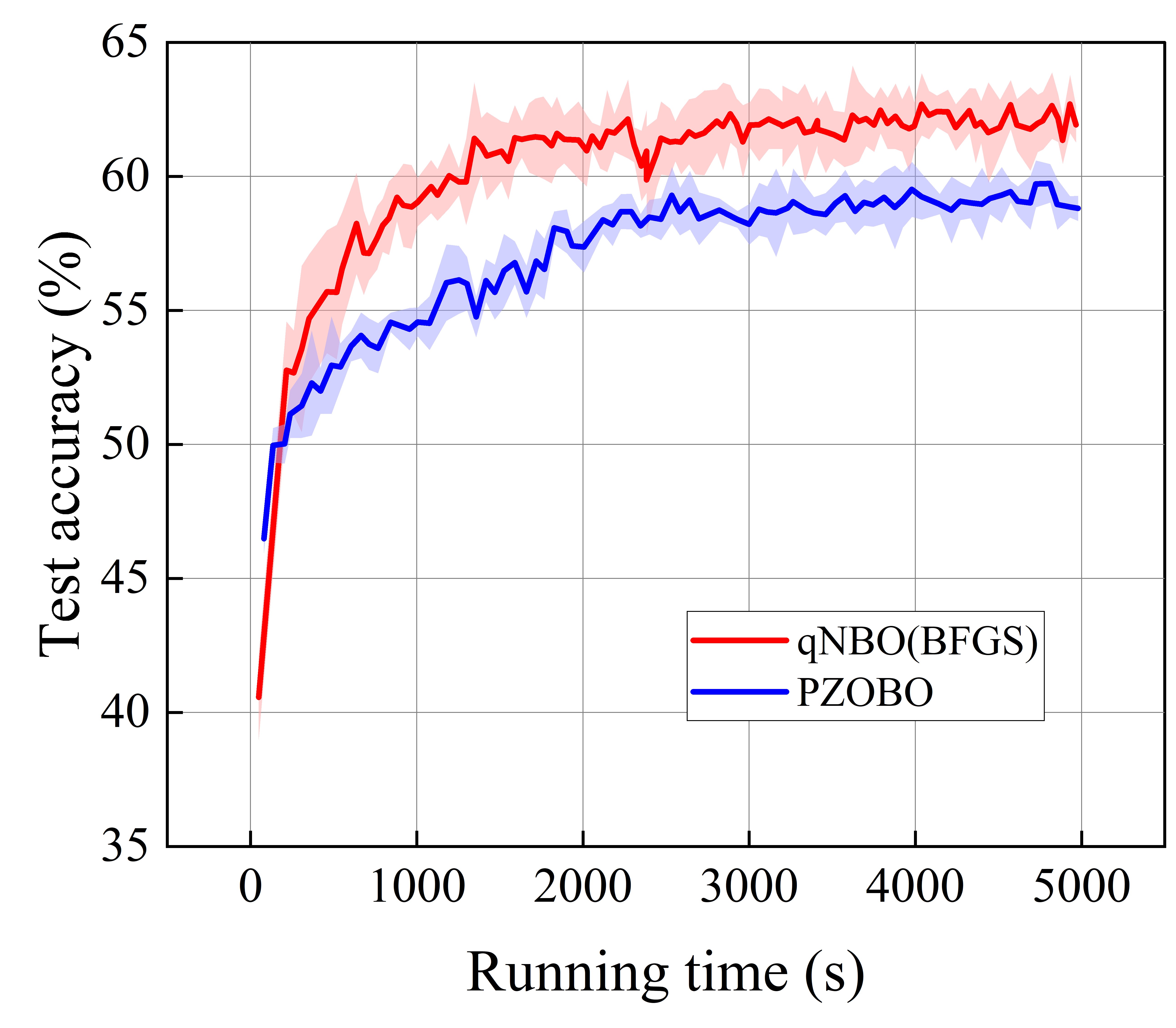}
\end{minipage}%
\hspace{0.5in}
\begin{minipage}[t]{0.4\columnwidth}
    \centering
    \caption*{}
    \vspace{1.5mm}
\includegraphics[width=\textwidth]{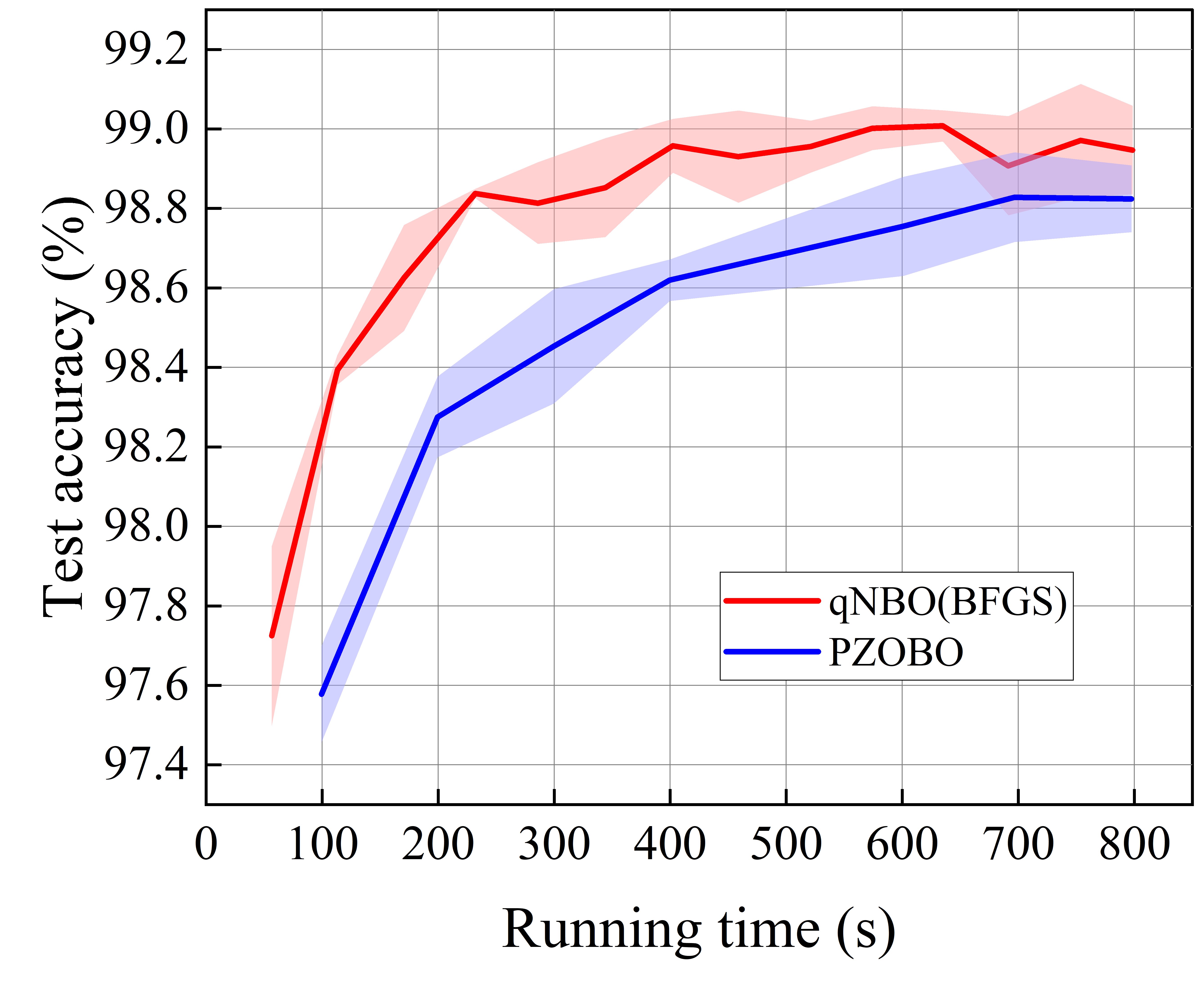}
\end{minipage}%
  \caption{5-way 5-shot experiments on two datasets. (Left: \textbf{miniImageNet}; Right: \textbf{Omniglot}.) 
}
  \label{fig:metalearn}
\end{figure}
 
	As shown in Figure \ref{fig:metalearn} and Table \ref{table:m},  qNBO~(BFGS) achieves superior accuracy compared to PZOBO on the miniImageNet \citep{miniI} and Omniglot \citep{omni} datasets. 
 Results are averaged over 5 runs, with all algorithms starting from the same initial point with a test accuracy of 20\%. For clarity, the graphs begin at the second data point, omitting the initial one. It is reported that qNBO~(BFGS) reaches peak test accuracies on the Omniglot dataset within 800 seconds, after which performance declines, likely due to overfitting or other factors. 
 Notably, on the miniImageNet dataset, qNBO~(BFGS) attains a test accuracy exceeding 60\%, while PZOBO fails to achieve comparable results. Furthermore, the results in Table \ref{table:m} show that qNBO~(BFGS) reaches higher test accuracy in significantly less time compared to PZOBO as the number of ways increases. We do not plot the curves for other methods like SABA, SHINE-OPA, and BOME, as they are difficult to converge under various hyperparameter configurations using their source codes.

\begin{table}[h]
\caption{Few-shot meta-learning on the Omniglot dataset: highest test accuracy and time required by each algorithm.}
\label{table:m}
\vskip -15pt
\begin{center}
\begin{small}
\begin{tabular}{lcccr}
\toprule
\multirow{2.5}{*}{5-shot}  & \multicolumn{2}{c}{PZOBO}  & \multicolumn{2}{c}{qNBO~(BFGS)}  \\  \cmidrule(l){2-3} \cmidrule(l){4-5} 
&Acc. (\%) &  Time (s) &Acc. (\%) &  Time (s) \\ \midrule
5-way   & \textbf{99.31} & 11124 & 99.14& \textbf{772} \\ 
20-way &97.94 & 17869 & \textbf{99.14}& \textbf{1648} \\
30-way  & 97.15 & 21043 & \textbf{99.05} & \textbf{2978}\\
\bottomrule
\end{tabular}
\end{small}
\end{center}
\vskip -0.1in
\end{table}

\section{Conclusion}\label{sec:conlim}

This paper introduces qNBO, a flexible algorithmic framework for improving hypergradient approximation. It leverages quasi-Newton techniques to accelerate the solution of the lower-level problem and efficiently approximates the inverse Hessian-vector product in hypergradient computation. Notably, qNBO includes a subroutine using quasi-Newton recursion schemes specifically tailored for the direction $\nabla_y F(x,y)$ to avoid incorrect inversion. Furthermore, in addition to the prominent BFGS and SR1 methods, qNBO can integrate other quasi-Newton methods such as limited-memory BFGS (L-BFGS) and single-loop implementations. Extensive numerical results verify the efficiency of the proposed algorithms.



\section*{ACKNOWLEDGMENTS}
Authors listed in alphabetical order. This work is supported by National Key R \& D Program of China (2023YFA1011400), National Natural Science Foundation of China (12222106, 12271097, 12326605, 62331014, 12371305), Guangdong Basic and Applied Basic Research Foundation (No. 2022B1515020082), the Key Program of National Science Foundation of Fujian Province of China (Grant No. 2023J02007), the Central Guidance on Local Science and Technology Development Fund of Fujian Province (Grant No.2023L3003), and the Fujian Alliance of Mathematics (Grant No. 2023SXLMMS01).

\bibliography{main}
\bibliographystyle{iclr2025_conference}

\newpage
\appendix

\section{Outline of appendix}
The appendix is organized as follows:
\begin{itemize}
\item Appendix \ref{sec:atwoloop} presents details of recursive algorithms for computing the inverse quasi-Newton matrix-vector product.
\item Appendix \ref{aexp} provides details of the numerical experiments from Section \ref{sec:exp}.
\item Appendix \ref{sec:proof} includes the proofs of the theorems from Section \ref{sec:conv}. 
\begin{itemize}
\item Appendix \ref{aqn} reviews some useful results of the BFGS method;
\item Appendix \ref{sec:proofske} provides the proof sketch of Theorem \ref{gblprate};
\item Appendix \ref{sec:proofqfrate} presents the proof of Theorem \ref{qfblprate};
\item Appendix \ref{sec:proofgrate} presents the proof of Theorem \ref{gblprate}.
\end{itemize}
\item Appendix \ref{sec:complexity} contains the theoretical discussion and complexity analysis of the proposed algorithms.
\end{itemize}

\section{Recursive procedure to compute the inverse Hessian approximation-vector product}\label{sec:atwoloop}

Due to the low-rank structure of the updates in (\ref{eq:bfgs}) and (\ref{eq:sr1}), for any vector \( d \), \( r = H_{t+1}d \) can be efficiently computed using the recursive methods detailed in Algorithm \ref{alg:twobfgs} for the BFGS update \citep{nocedal1980updating}, and Algorithm \ref{alg:sr1rec} for the SR1 update \citep{erway2017solving}, respectively. Note that Algorithms \ref{alg:twobfgs} and \ref{alg:sr1rec} involve only the computation of first-order information, provided \( H_0 \) is a scalar multiple of the identity matrix. Consequently, by avoiding the storage and computation of the full Hessian, computational costs can be significantly reduced.

\begin{algorithm}[htb]
\caption{\ ${\cal C}_b(d, H_0, \{s_i,g_i\}_{i=0}^{t-1})$: Two-loop recursion for computing $r=H_{t} d$ when $H_{t}$ is the inverse of the BFGS matrix.} \label{alg:twobfgs}

\begin{algorithmic}[1]

\STATE $q=d$;
    	
    \STATE {\bf for} $i=t-1, t-2, \ldots, 0$
    
    \ \ \ \ \ $\alpha_i=(s_i^Tq)/(g_i^Ts_i);$
    
    \ \ \ \ \ $q=q-\alpha_i g_i;$

{\bf end for}

    \STATE $r=H_0 q$;
    
     \STATE {\bf for} $i=0,  \ldots, t-1$
    
    \ \ \ \ \ $\beta= (g_i^Tr)/(g_i^Ts_i);$
    
    \ \ \ \ \ $r=r+(\alpha_i-\beta)s_i$;

 {\bf end for}
\end{algorithmic}
{\bf Return} $r=H_t d$.
\end{algorithm}

\begin{algorithm}[htb]
\caption{\ ${\cal C}_s(d, H_0, \{s_i,g_i\}_{i=0}^{t-1})$: 
 Computing $r=H_{t} d$ when $H_{t}$ is the inverse of an SR1 matrix.} \label{alg:sr1rec}

\begin{algorithmic}[1]
    	
    \STATE {\bf for} $i=0, \ldots, t-1$
    
    \ \ \ \ $p_i=s_i- H_0 g_i;$
        
     \STATE \ \ \ {\bf for} $j=0,  \ldots, i-1$
    
    \ \ \ \ \ \ \ \ $p_i=p_i-((p_j^T g_i)/(p_j^T g_j))p_j;$
    
  \ \ \ {\bf end for}

 \STATE {\bf end for}
\end{algorithmic}
{\bf Return} $r=H_0 d+\sum_{i=0}^{t-1}((p_i^T d)/(p_i^T g_i))p_i$.
\end{algorithm}


\section{Details on experiments}\label{aexp}
In this section, we additionally compare more algorithms, such as AMIGO (\cite{arbel2022amortized} with CG method),  AID-BIO (\cite{ji2021bilevel} with CG method) and AID-TN (\cite{ji2021bilevel} with Truncated {\color{black}} Neumann method). At the same time, we add meta-learning experiments comparing with the PZOBO algorithm. All experiments are conducted on a server equipped with two NVIDIA A40 GPUs, an Intel(R) Xeon(R) Gold 6326 CPU, and 256 GB of RAM.
\subsection{Details of the toy example}
In this experiment, the initial point for all algorithms is $(x_0, y_0)=(2\textbf{e}, 2\textbf{e})$ where $\textbf{e}$ denotes the vector of all ones. 

\begin{figure}[H]
   \vspace{-5mm}
    \begin{minipage}[t]{1\textwidth}
    \centering
    \caption*{}
    \includegraphics[width=10cm]{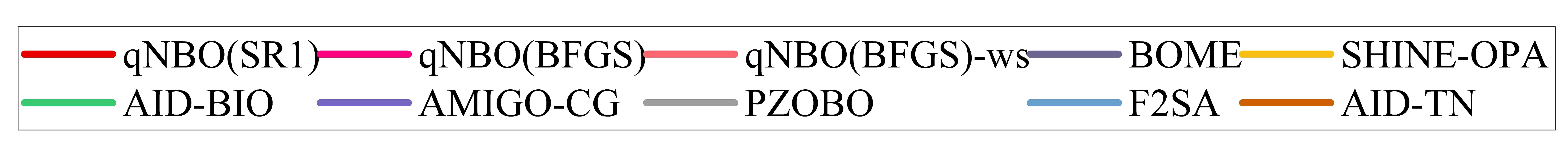}
    \end{minipage}
    \vspace{-3mm}

    \begin{minipage}[t]{0.4\columnwidth}
    \caption*{(a)}
    \vspace{-4mm}
    \centering    \includegraphics[width=\textwidth]{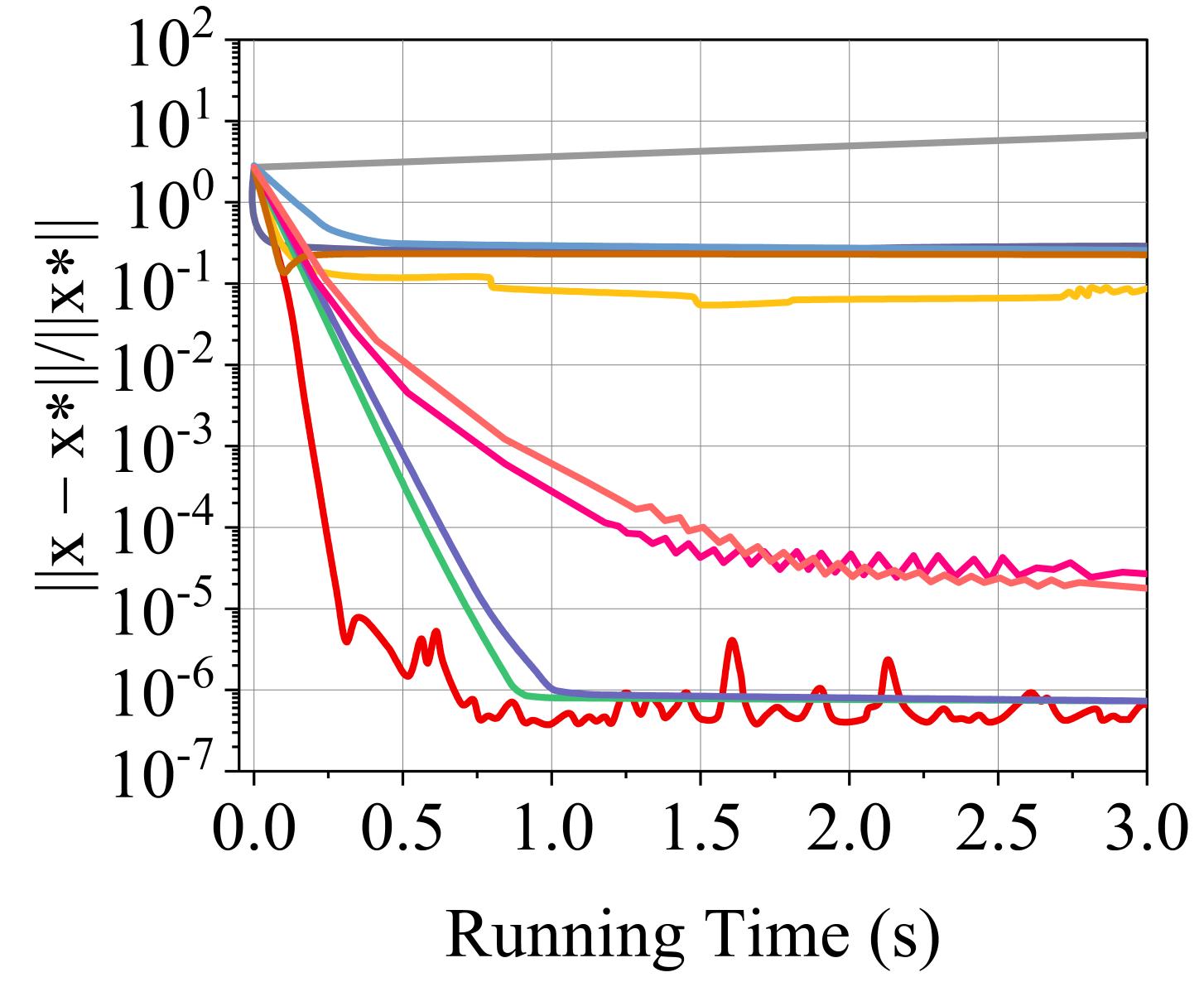}
    \end{minipage}
    \hspace{0.4in}
    \begin{minipage}[t]{0.4\columnwidth}
    \caption*{(b)}
    \vspace{-3mm}
    \centering    \includegraphics[width=\textwidth]{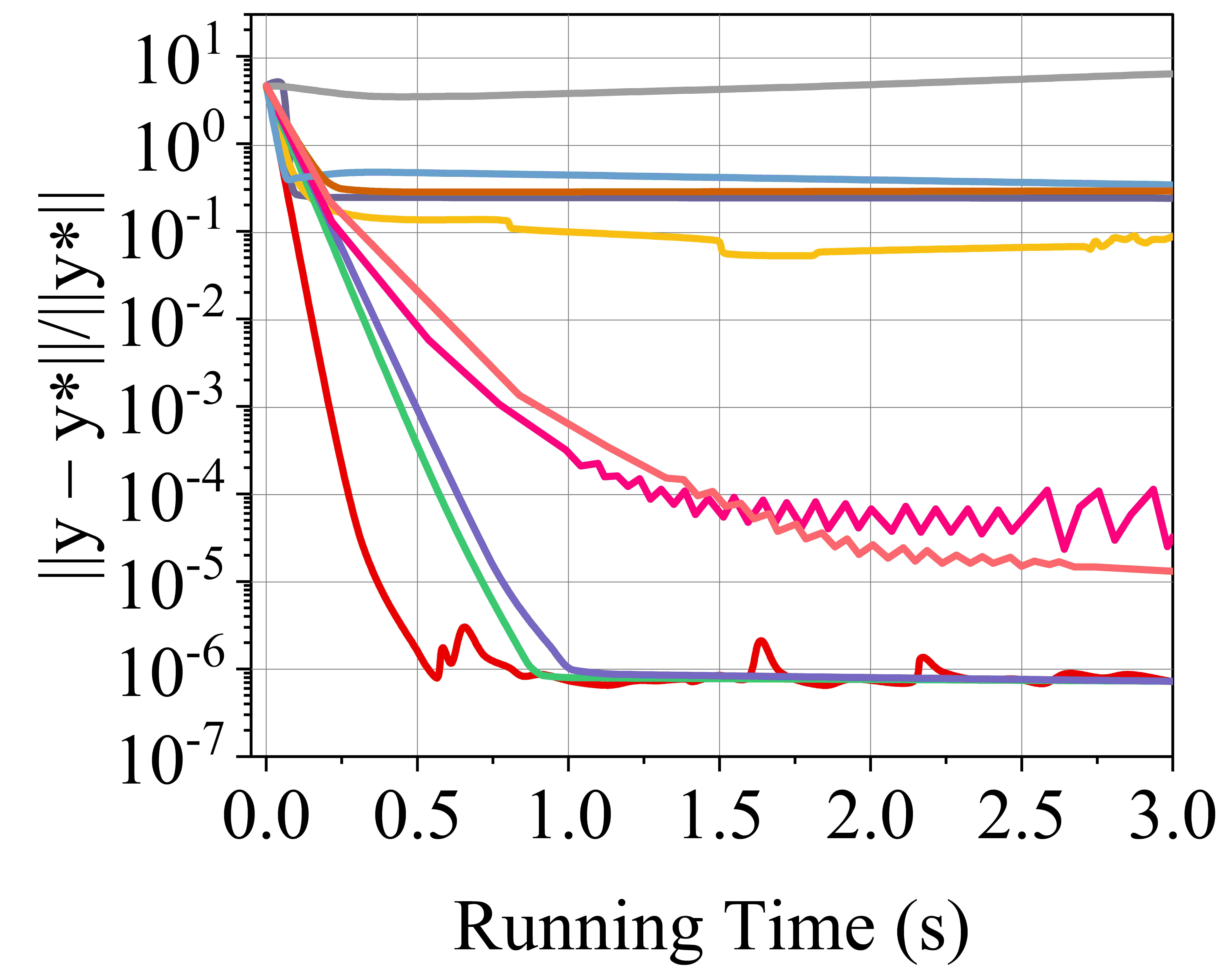}
    \end{minipage}
    
    \vspace{-5mm}
    \begin{minipage}[t]{1\columnwidth}
    \caption*{(c)}
    \vspace{-3mm}
    \centering    \includegraphics[width=0.4\textwidth]{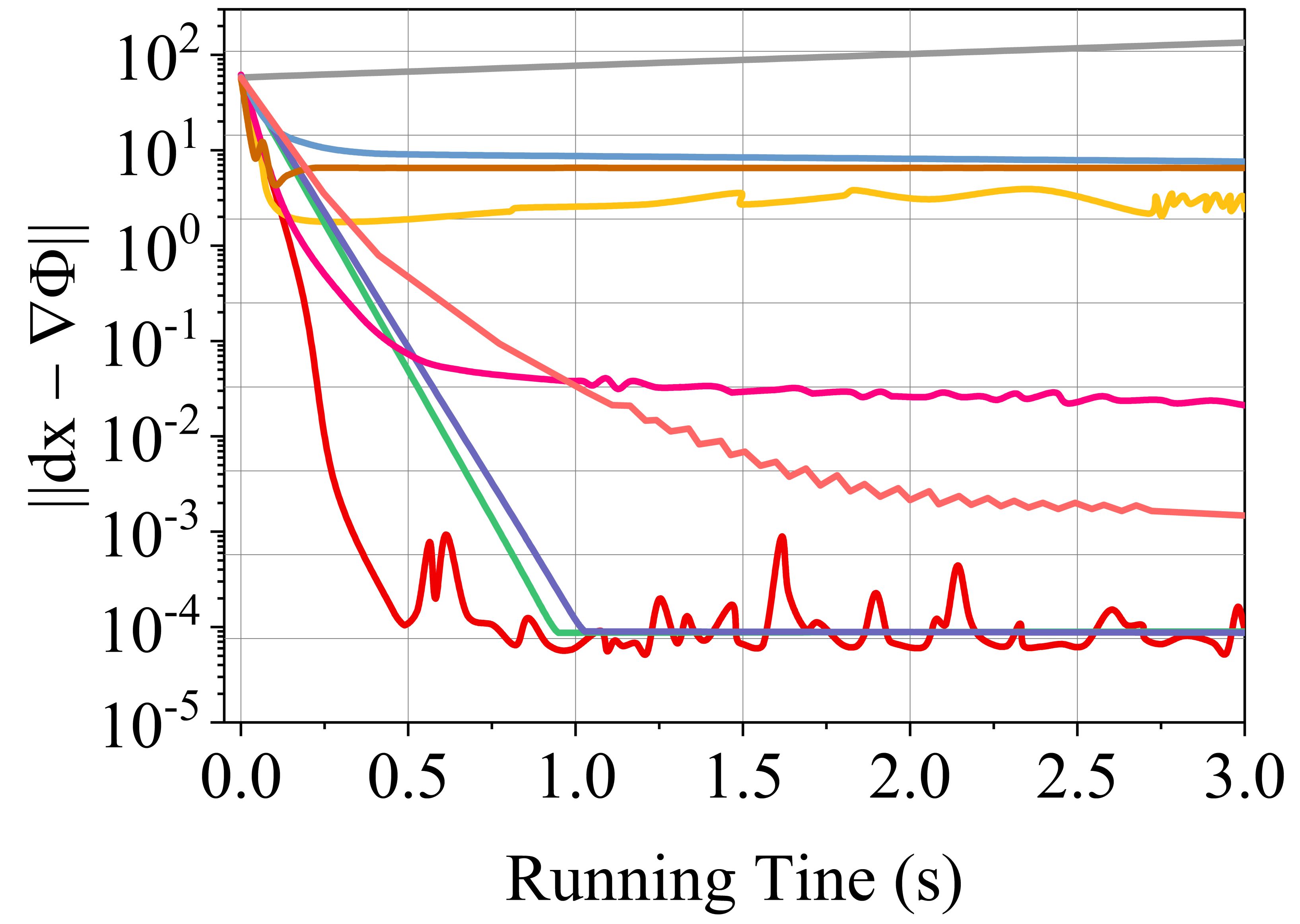}
    \end{minipage}
    \vspace{2mm}
  \caption{{\color{black}Numerical results on toy example: Comparison of qNBO (SR1), qNBO (BFGS) and qNBO (BFGS)-ws with other bilevel optimization methods in a toy experiment. Here, “ws” indicates that a warm-start strategy is applied for $u_k$, with $Q_k = \min(k+1, 60)$.}}
  \label{fig:toy-appendix}
\end{figure}

\begin{figure}[h]
   \vspace{-5mm}
    \begin{minipage}[t]{1\textwidth}
    \centering
    \caption*{}
    \includegraphics[width=5cm]{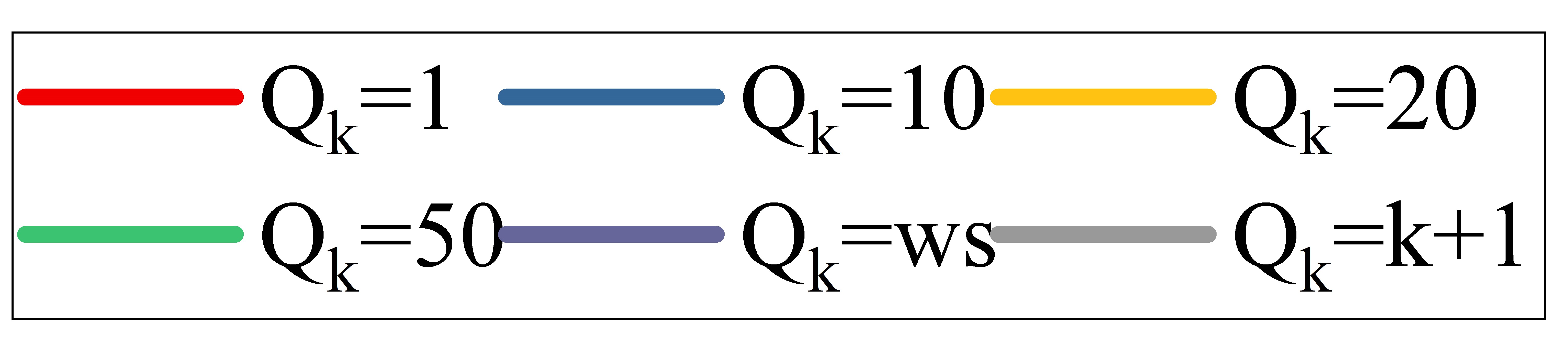}
    \end{minipage}
    \vspace{-3mm}

    \begin{minipage}[t]{0.4\columnwidth}
    \caption*{(a)}
    \vspace{-4mm}
    \centering    \includegraphics[width=\textwidth]{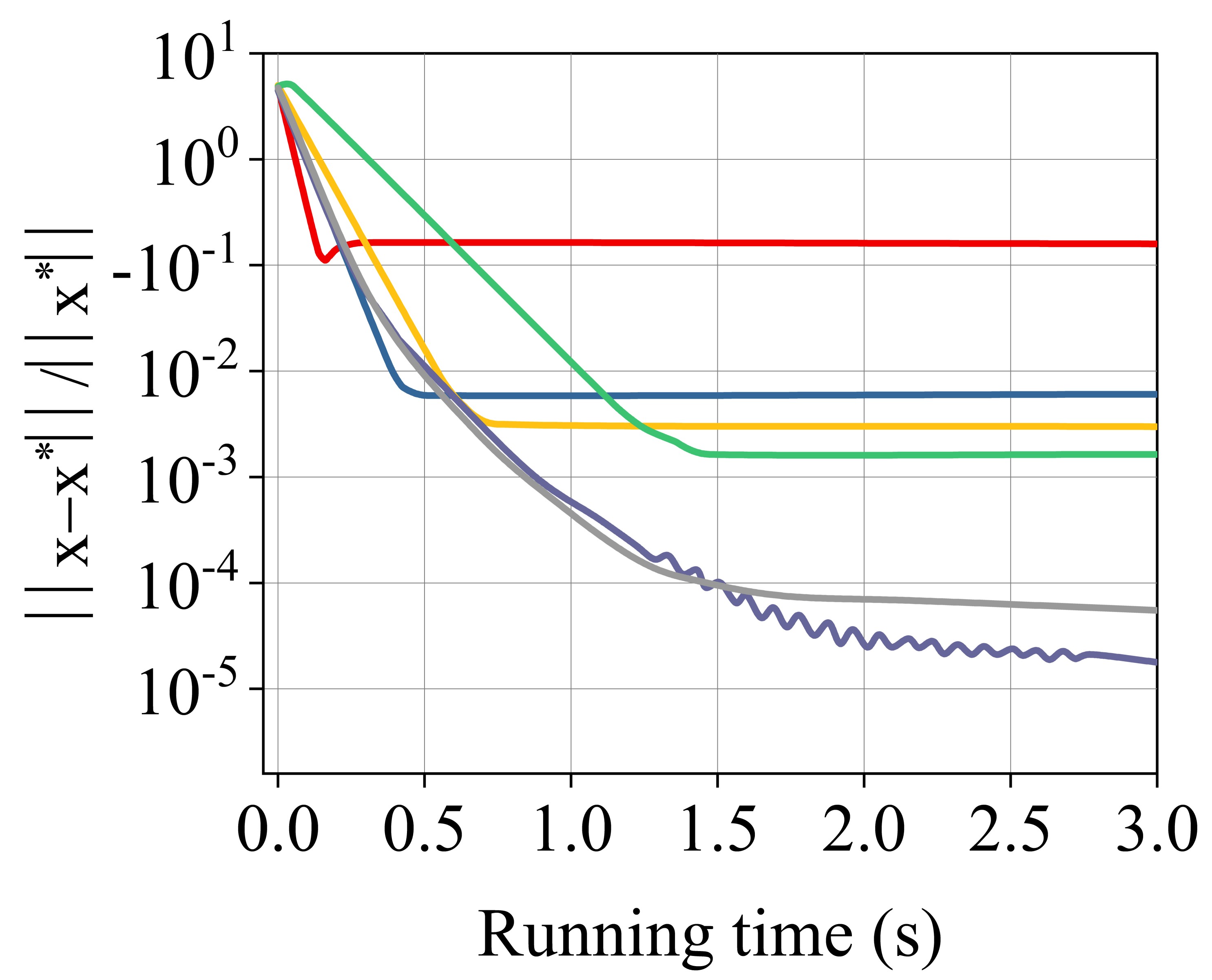}
    \end{minipage}
    \hspace{0.4in}
    \begin{minipage}[t]{0.4\columnwidth}
    \caption*{(b)}
    \vspace{-4mm}
    \centering    \includegraphics[width=\textwidth]{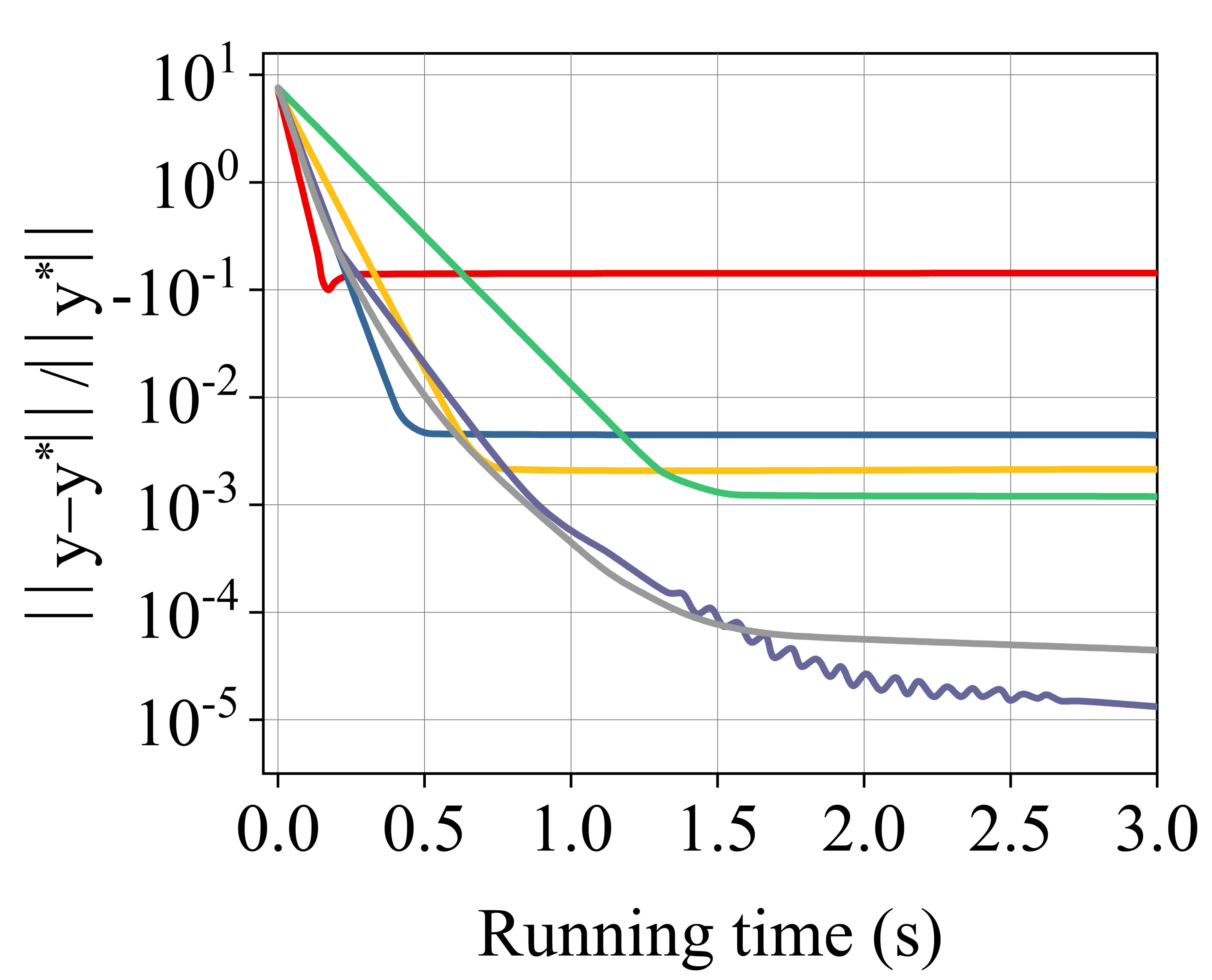}
    \end{minipage}
    
    \vspace{-5mm}
    \begin{minipage}[t]{1\columnwidth}
    \caption*{(c)}
    \vspace{-3mm}
    \centering    \includegraphics[width=0.4\textwidth]{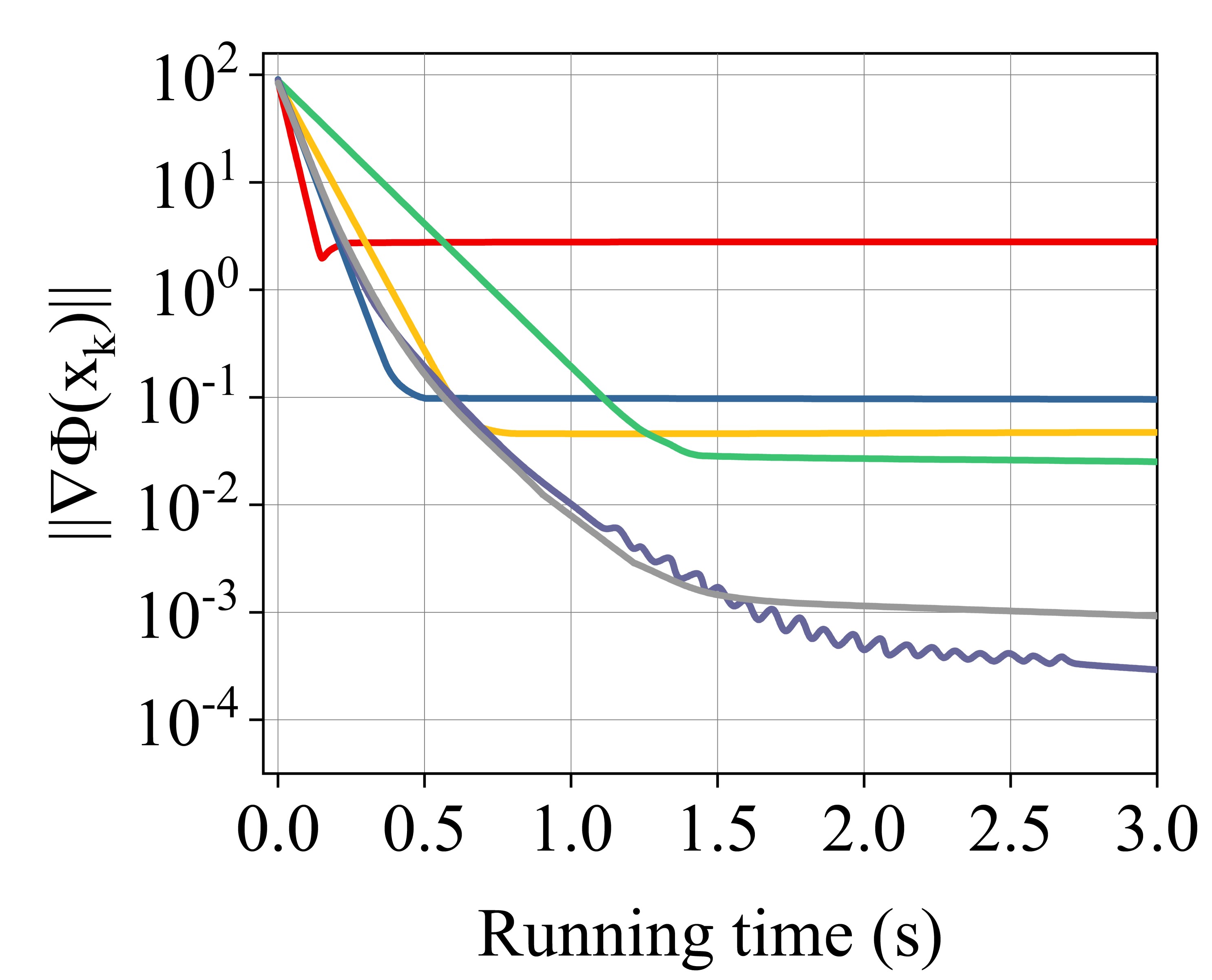}
    \end{minipage}
    \vspace{2mm}
  \caption{\color{black}Testing results on the impact of the parameter ${Q_k}$ in qNBO (BFGS) in the toy example. (The qNBO (BFGS) parameters are the same as those in Figure \ref{fig:toy}, except for $Q_k$. Additionally, $Q_k = ws$ means that $Q_k = \min(k+1, 60)$, where the warm start strategy is used for $u_k$.)}
  \label{fig:toy-appendix}
\end{figure}

\paragraph{BOME:} The maximum number of outer iterations is $K=5000$, the number of inner iterations is $T=100$, the inner step size is $\alpha=0.1$, the outer step size is $\xi=0.1$, and \[
\lambda_k = \max\big\{\frac{0.0001\hat{q}(x_k, y_k) - \langle\nabla F(x_k, y_k), \nabla \hat{q}(x_k, y_k)\rangle}{\|\nabla \hat{q}(x_k, y_k)\|^2}, 0\big\}.\]
\paragraph{F${^2}$SA:} The maximum number of outer iterations is $K=5000$, the number of inner iterations is $T=10$, the inner step sizes are $\gamma_k=\alpha_k=0.1$, the initial multiplier is $\lambda_0=0.1$, the multiplier increment is $\delta_k=0.001$, and the step size ratio is $\xi=1$.
\paragraph{SHINE-OPA:} The maximum number of outer iterations is $K=5000$, the maximum number of inner iterations is $T=100$, the inner stopping criterion is $\|\nabla_y f(x_k, y_{k+1})\| \leq \frac{1}{k+1}$, the inner step size is determined using strong Wolfe line search, the number of extra updates of upper-level information in the BFGS algorithm is 5 (i.e., every 5 steps of BFGS iteration includes an update with UL gradient $\nabla_y F$), the initial matrix is $H_0=I$, and the outer step size is $\alpha_k=0.1$.

{ \color{black}
\paragraph{AID-TN:} The maximum number of outer iterations is $K=5000$, the number of inner iterations is $T=1$, the TN iteration $P=1$, the inner step size is $\beta = 0.01$ and the outer step size is $\alpha = 0.2$.
\paragraph{AID-BIO/AMIGO-CG:} The maximum number of outer iterations is $K=5000$, the number of inner iterations is $T=1$, the CG iteration $P=1$, the inner step size is $\beta = 0.01$ and the outer step size is $\alpha = 0.2$.
\paragraph{PZOBO:}The maximum number of outer iterations is $K=5000$, the number of inner iterations is $Q = N =10$, the parameter $\mu = 100$, the inner step size is $\alpha = 0.01$ and the outer step size is $\beta = 0.01$.
\paragraph{qNBO (SR1):} The maximum number of outer iterations is $K=5000$, the number of inner iterations is $T=6$, the warm up iterations $P=9$, the number of iterations is $Q_k=25$, the inner step sizes are $\beta=0.1, \gamma=1$, the initial matrix is $H_0=I$, and the outer  step size is $\alpha=0.4$.
}
\paragraph{qNBO (BFGS):} The maximum number of outer iterations is $K=5000$, the number of inner iterations is $T=15$, the warm up iterations  $P=1$, the number of iterations $Q_k=k+1$, the inner  step sizes are $\beta=0.1, \gamma=1$, the initial matrix is $H_0=I$, and the outer step size is $\alpha=0.1$.

\subsection{Further specifications for logistic regression}

\subsubsection{Implementations and Hyperparameter settings}
This section introduces the specific parameters of different algorithms used in the logistic regression experiment, formulated as a bilevel problem:
\begin{equation}\label{hr}
		\min_{x\in\mathbb{R}}\ \sum_{i=1}^{n^{\prime}}\ell(a_i^{\prime}, b_i^{\prime},y^{*}(x))\ \ 
		{\rm s.t.} \  \ y^{*}(x)=\underset{y\in\mathbb{R}^{n}}{\mathrm{arg\,min}}
\sum_{i=1}^{n}\ell(a_i, b_i, y)+\frac{{\rm exp}(x)}{2}\|y\|^2,	
\end{equation}
where $(a_i, b_i)\in {\cal D}_{train}$ and $(a_i^{\prime}, b_i^{\prime})\in {\cal D}_{val}$ are the training data and validation data respectively, and $\ell(a_i, b_i, y):={\rm log}\big(1+{\rm exp}\big(-b_{i}{a_{i}}^{T} y\big)\big)$. The LL variable $y$ is the model's parameter, while the UL variable $x$ refers to the regularization hyperparameter.

In this task, we consider two dataset, 20news and Real-sim. The 20news dataset \citep{lang1995newsweeder} comprises a total of 18,846 samples with 130,107 features. It is divided into three subsets: the training set \( {\cal D}_{train} \) has 16961 samples, the validation set \( {\cal D}_{val} \) has 943 samples, and the test set \( {\cal D}_{test} \) has 942 samples. Similarly, the Real-sim dataset \citep{libsvm} contains 72,309 samples, each with 20,958 features. This dataset is also split into three parts: the training set \( {\cal D}_{train} \) has 65078 samples, the validation set \( {\cal D}_{val} \) has 3616 samples, and the test set \( {\cal D}_{test} \) has 3615 samples. For all algorithms, we set \( x_0 \) to 0 and \( y_0 \) to a random value. Unless otherwise stated, the batch size of algorithm is assumed to be 200. {\color{black}In following, AMIGO refers to the algorithm AMIGO (\cite{arbel2022amortized}) that employs stochastic gradient descent to solve the auxiliary quadratic problem.}


\paragraph{BOME:} The maximum number of outer iterations is $K=200$, the number of inner iterations is $T=10$, the inner step size is $\alpha=0.01$, the outer step sizes are $\xi_x=0.1$ and $\xi_y=0.01$. The update rule for $\lambda_k$ is:
\[
\lambda_k = \max\left\{\frac{0.5\hat{q}(x_k, y_k) - \langle\nabla F(x_k, y_k), \nabla \hat{q}(x_k, y_k)\rangle}{\|\nabla \hat{q}(x_k, y_k)\|^2}, 0\right\}.
\]

\paragraph{F${^2}$SA:} The maximum number of outer iterations is $K=2000$, the number of inner iterations is $T=10$, starting point $z_0=y_0$, inner step sizes $\gamma_k=\alpha_k=0.1$, the initial multiplier $\lambda_0=0.1$, multiplier increment $\delta_k=0.0001$, step size ratio $\xi=1$, and the batch size is 1000.

\paragraph{SABA:} The maximum number of outer iterations is $K=20000$, the initial point $v_0=\textbf{0}$, step sizes $\alpha_k=0.125$ and $\beta_k=\beta^v_k=0.125$, and the batch size is 32.

\paragraph{BSG1:} The maximum number of outer iterations is $K=300$, the number of inner iterations is $T=10$, inner step size $\beta_k=0.01$, outer step size $\alpha_k=0.01$.

\paragraph{SHINE-OPA:} The maximum number of outer iterations is $K=30$, maximum number of inner iterations is $T=1000$. The initial matrix $H_0=I$, for more details  see SHINE code.\footnote{\url{https://github.com/zaccharieramzi/hoag/tree/shine}}

\paragraph{qNBO (BFGS):} The maximum number of outer iterations is $K=50$, the number of inner iterations is $T=9$, warm-up iteration steps $P=1$, iteration steps $Q_k=1$, inner step sizes $\beta=0.0001/(j+1), \gamma=0.1$, outer step size selection strategy is the same as the SHINE-OPA algorithm. When the dataset is the 20news dataset, the initial matrix $H_0=0.1I$; when the dataset is the Real-sim dataset, $H_0=0.01I$.

\paragraph{qNBO (SR1):} The maximum number of outer iterations is $K=100$, the number of inner iterations is $T=7$, warm-up iteration steps $P=3$, iteration steps $Q_k=3$, inner  step sizes $\beta=0.0001/(j+1), \gamma=0.1$, initial matrix $H_0=0.1I$ and outer step size selection strategy is the same as the SHINE-OPA algorithm. 

{\color{black}\paragraph{AID-BIO/AMIGO-CG:} The maximum number of outer iterations is $K=1000$, the number of inner iterations is $T=1$, the CG iteration $P=1$, inner  step sizes $\beta=0.01 $ and outer step size $\alpha = 0.01$.

\paragraph{AMIGO:}  The maximum number of outer iterations is $K=1000$, the batch size is 32, the number of inner iterations is $T=1$, inner step sizes $\beta=0.01$ and outer step size $\alpha = 0.1$.

\paragraph{AID-TN:} The maximum number of outer iterations is $K=1000$, the number of inner iterations is $T=1$, the TN iteration $P=1$, inner  step sizes $\beta=0.01$ and outer step size $\alpha = 0.1$.

\paragraph{PZOBO:} The maximum number of outer iterations is $K=5000$, the number of inner iterations is $Q = N =10$, the parameter $\mu = 10$, the inner step size is $\alpha = 0.01$ and the outer step size is $\beta = 0.03$.
}

\subsection{Further specifications on Data Hyper-cleaning Experiments}
This subsection focuses on data hyper-cleaning to enhance model accuracy, using a noisy training set  ${\cal D}_{\rm train}:=\{a_i, b_i\}_{i=1}^m$ and a clean validation set ${\cal D}_{\rm val}$. The goal is to adjust training data weights to enhance performance on ${\cal D}_{\rm val}$. The task can be formalized as the bilevel problem:
	\begin{equation}\label{dc}
	    \min_{x} \ \  \ell^{\rm val}(y^{*}(x))\ \ 
	{\rm s.t.} \ y^{*}(x)=\underset{y}{\mathrm{arg\,min}}\{\ell^{\rm train}(x,y)+c\|y\|^2\},
	\end{equation} 	
where $\ell^{\rm val}$ is the validation loss on ${\cal D}_{\rm val}$ and $\ell^{\rm train}=\sum_{i=1}^{m}\sigma(x_i)\ell(a_i, b_i, y)$ is a weighted training loss with $\sigma(x)={\rm Clip}(x, [0,1])$ and $x\in {\mathbb R}^m$. In the experiment, both $\ell(a_i, b_i, y)$ and $\ell^{\rm val}$ are the cross entropy loss, with $c=0.001$.

Experiments are conducted using MNIST \citep{mnist} and FashionMNIST \citep{fashionmnist} datasets, with 50\% of the training data corrupted by randomly assigning them sampled labels. The data is divided into four parts: training set, validation sets 1 and 2, and the test set. The training set comprises 50000 samples, while the validation and test sets contain 5000 and 10000 samples, respectively. For each method, model training is conducted on the training set, with  the tuning of hyperparameter $x$ using validation set 1.
The LL variable $y=(W,b)$ denotes the parameters of the linear model with weight $W\in\mathbb{R}^{10\times 784}$ and bias $b\in\mathbb{R}^{10}$.

\begin{figure*}[h]

\begin{minipage}[t]{1.0\textwidth}
    \centering
    \includegraphics[width=10cm]{legend-data-appendxi.jpg}
\end{minipage}%
\vspace{-6mm}
\vskip 0.2in
\begin{minipage}[t]{0.4\textwidth}
    \centering
    \caption*{(a) Test accuracy vs Time}
    \includegraphics[width=\textwidth]{mnist-appendix.jpg}
\end{minipage}%
\hspace{.5in}
\begin{minipage}[t]{0.4\textwidth}
    \centering
    \caption*{(b) Test loss vs Time}
    \includegraphics[width=\textwidth]{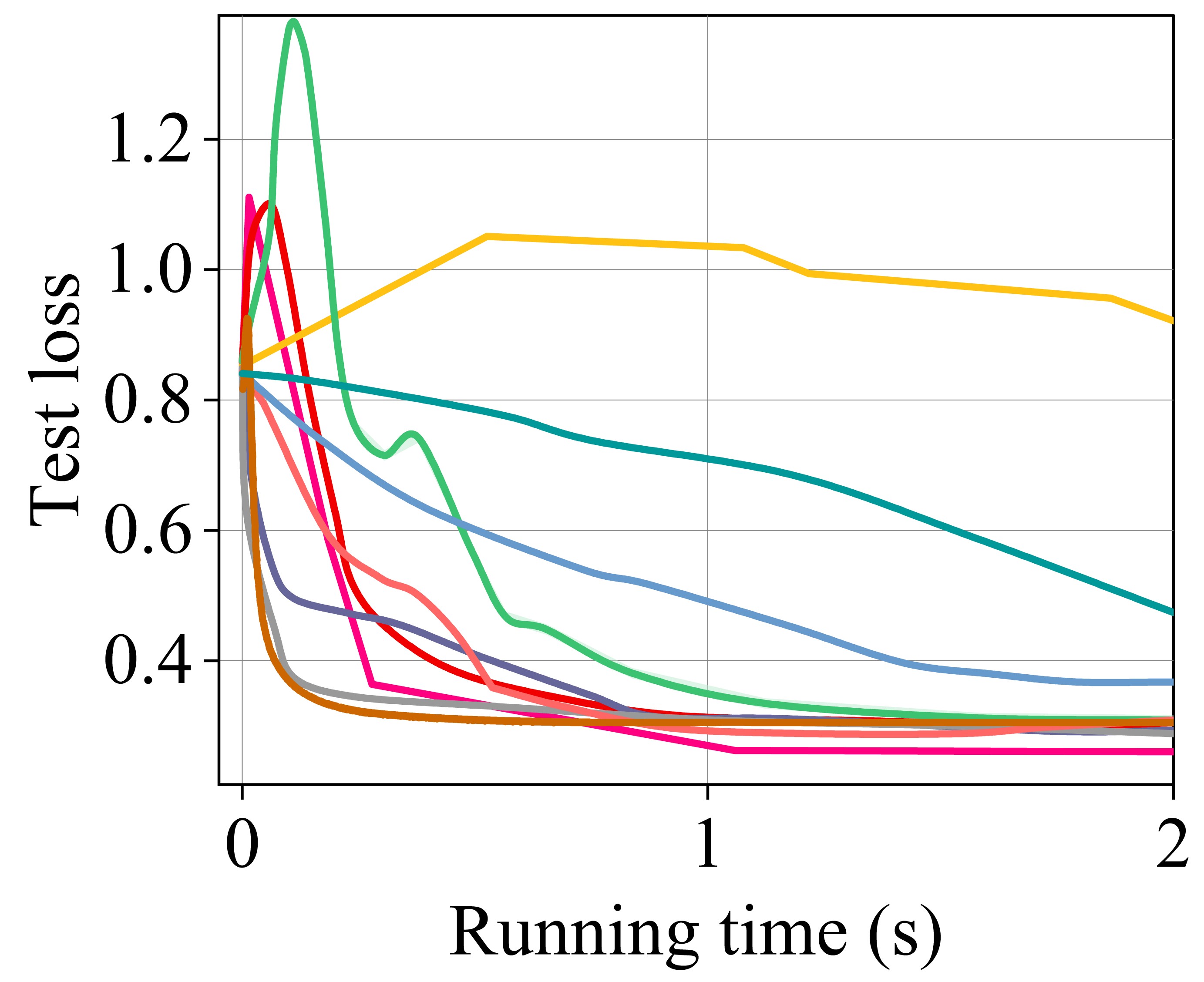}
\end{minipage}%

\vskip 0.2in  

\begin{minipage}[t]{0.4\textwidth}
    \centering
    \includegraphics[width=\textwidth]{fashion-appendix.jpg}
\end{minipage}%
\hspace{.5in}
\begin{minipage}[t]{0.4\textwidth}
    \centering
    \includegraphics[width=\textwidth]{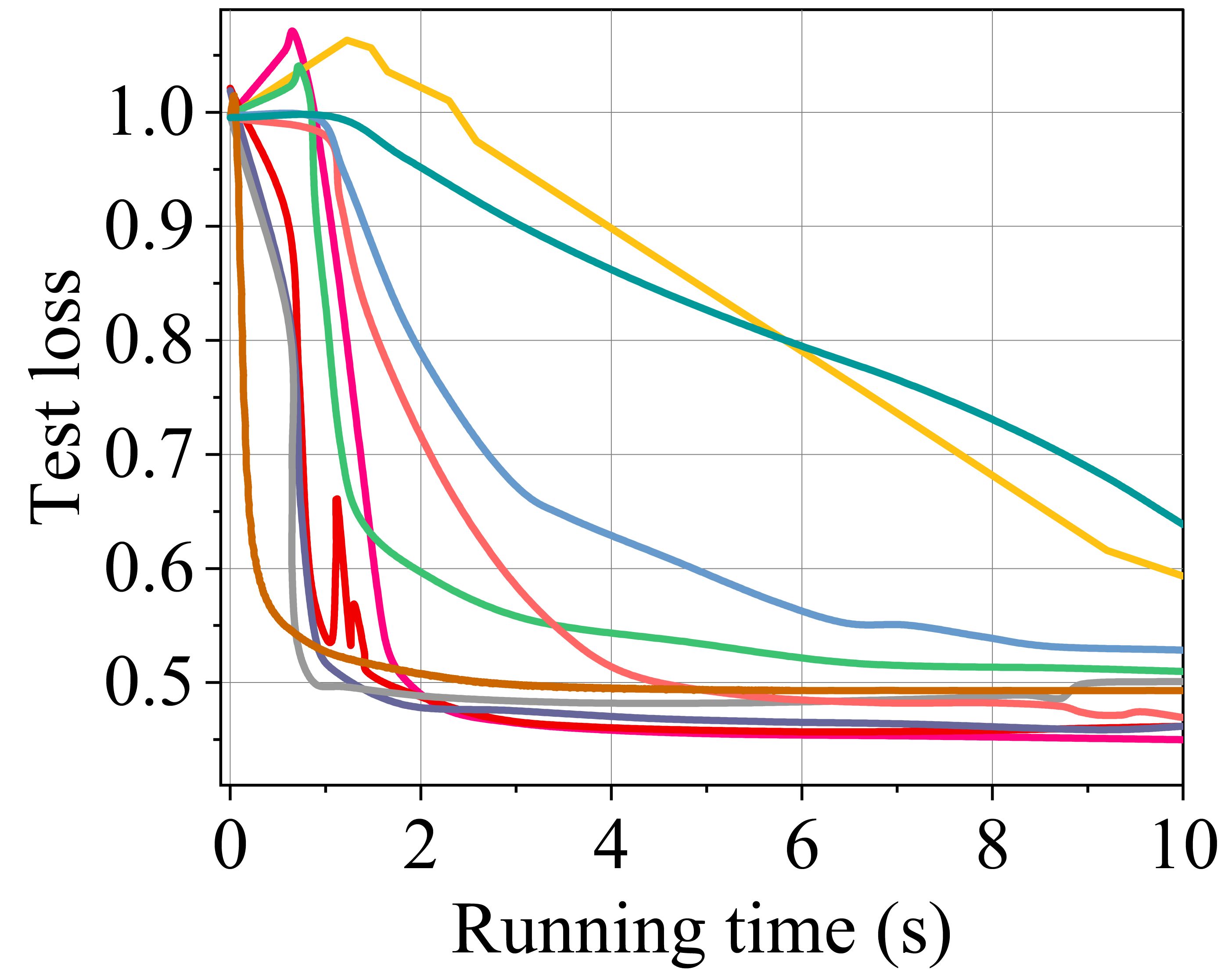}
\end{minipage}%

\caption{{\color{black}Data hyper-cleaning on two datasets.} (First row: \textbf{MNIST}; Second row: \textbf{FashionMNIST}. All results are averaged over 10 random trials. The exclusion of BSG1's performance in this experiment is due to its ineffectiveness in addressing these data hyper-cleaning problems.)}
\label{fig:dc-appedix}
\vskip -0.2in
\end{figure*}

\subsubsection{Implementations and Hyperparameter settings}
The initial point $y_0$ for all algorithms is obtained from a pretrained initialization model, and the initial weight vector $x_0 = 0.5\mathbf{e} \in \mathbb{R}^{50000}$. 
\paragraph{BOME:} The maximum number of outer iterations $K=10000$, the number of inner iterations $T=1$, the inner  step size $\alpha=0.01$, the outer  step sizes $\xi_x=100, \xi_y=0.01$, and \[\lambda_k = \max\left(\frac{0.1\hat{q}(x_k, y_k) - \langle\nabla F(x_k, y_k), \nabla \hat{q}(x_k, y_k)\rangle}{\|\nabla \hat{q}(x_k, y_k)\|^2}, 0\right).\] Details can be seen in the code.\footnote{\url{https://github.com/Cranial-XIX/BOME}}
\paragraph{F${^2}$SA:} The maximum number of outer iterations $K=7000$, the number of inner iterations $T=1$, the initial point $z_0=y_0$, the inner  step size $\gamma_k=\alpha_k=0.01$, the initial multiplier $\lambda_0=0.1$, the difference in multiplier $\delta_k=0.001$, the step size ratio $\xi=10000$, and the batch size is 2000.
\paragraph{SABA:} The maximum number of outer iterations $K=10000$, the initial point $v_0=\mathbf{0}$, the batch size is 2000. For the MNIST dataset, the  step sizes $\alpha_k=10, \beta_k=0.01, \beta^v_k=0.1$; otherwise, the  step sizes $\alpha_k=100, \beta_k=\beta^v_k=0.001$.
\paragraph{SHINE-OPA:} The maximum number of outer iterations $K=50$, the maximum number of inner iterations $T=1000$, the inner stopping criterion $\|\nabla_y f(x_k, y_{k+1})\|\leq 1/(100k)$, the inner  step size is determined using strong Wolfe line search, the number of extra updates in the BFGS algorithm is 5 (i.e., upper-level information is introduced in the BFGS iterations for every 5 steps), the initial matrix $H_0=I$, and the outer step size is 100.
\paragraph{qNBO (BFGS):} The number of iterations $Q_k=1$, the inner  step sizes $\beta=0.1, \gamma=0.1$, the outer  step size $\alpha=100$, the initial matrix $H_0=I$. For the MNIST dataset, the maximum number of outer iterations $K=5000$, the number of inner iterations $T=7$, the warm-up iteration count $P=3$; otherwise, the maximum number of outer iterations $K=600$, the number of inner iterations $T=47$, and the warm-up iteration count $P=3$.
\paragraph{qNBO (SR1):} The number of inner iterations $T=17$, the warm-up iteration count $P=3$, the number of iterations $Q_k=3$, the inner iteration step sizes $\beta=0.1, \gamma=0.1$, the outer iteration step size $\alpha=100$, the initial matrix $H_0=0.01I$. In addition, the inner iteration will terminate early if $\|\nabla_y f(x_k, y_{k+1})\|\leq 0.1$. For the MNIST dataset, the maximum number of outer iterations $K=5000$; otherwise, the maximum number of outer iterations $K=2000$.

{\color{black}\paragraph{AID-BIO/AMIGO-CG:} The maximum number of outer iterations is $K=1000$, the number of inner iterations is $T=1$, the CG iteration steps $P=1$, inner  step sizes $\beta=0.01 $ and outer step size $\alpha = 10$.

\paragraph{AMIGO:} The maximum number of outer iterations is $K=1000$, the batch size is 200, the number of inner iterations is $T=1$, inner  step sizes $\beta=0.01$ and outer step size $\alpha = 100$.

\paragraph{AID-TN:} The maximum number of outer iterations is $K=1000$, the number of inner iterations is $T=1$, the TN iteration steps $P=1$, inner  step sizes $\beta=0.01$ and outer step size $\alpha = 30$.

\paragraph{PZOBO:} The maximum number of outer iterations is $K=5000$, the number of inner iterations is $Q = N =10$, the parameter $\mu = 0.01$, the inner step size is $\alpha = 0.01$ and the outer step size is $\beta = 10$.
}

\subsection{Meta-Learning}


In this subsection, we consider the few-shot meta-learning, which can be described as:
	\begin{equation*}\label{ml} 
	        \min_{x} \ \ \frac{1}{m}\sum_{i=1}^{m}{\cal L}_{{\cal D}_i}(x, y_{i}^{*}(x))\ \ 
		{\rm s.t.} \ y^{*}(x)=\underset{y}{\mathrm{arg\,min}}\frac{1}{m}\sum_{i=1}^{m}{\cal L}_{{\cal S}_i}(x, y_i),
	\end{equation*}
	where ${\cal L}_{{\cal D}_i}(x, y_{i}^{*})=\frac{1}{|{\cal D}_i|}\sum_{\xi\in{\cal D}_i }{\cal L}(x,y_{i}^{*};\xi)$ is the validation loss  function and ${\cal L}_{{\cal S}_i}(x, y_i)=\frac{1}{|{\cal S}_i|}\sum_{\xi\in{\cal S}_i }({\cal L}(x,y_i;\xi)+{\cal R}(y_i))$ is the training loss with the classification loss ${\cal L}$  and the strongly-convex regularizer ${\cal R}(y_i)$. In the experiment, ${\cal L}$ is the cross-entropy function and ${\cal R}$ is the $\ell_2$ norm.  In our experimental setting, the task-specific parameters $y$ denote the weights of the last linear layer of a neural work and $x$ are the parameters of a 4-layer convolutional neural networks (CNN4). 

The few-shot meta-learning has $m$ tasks $\{{\cal T}_i, i=1, \cdots, m\}$ sampled over a distribution ${\cal P}_{\cal T}$. Each task ${\cal T}_i$ has a loss function ${\cal L}(x,y_i;\xi)$ with data sample $\xi$, the task-specific parameters $y_i$ and the parameters $x$ of an embedding model shared by all tasks. 

\begin{figure}[h]
\vspace{-10mm}
\begin{minipage}[t]{1\columnwidth}
    \centering
    \caption*{}
    \vspace{1.5mm}
\includegraphics[width=0.4\textwidth]{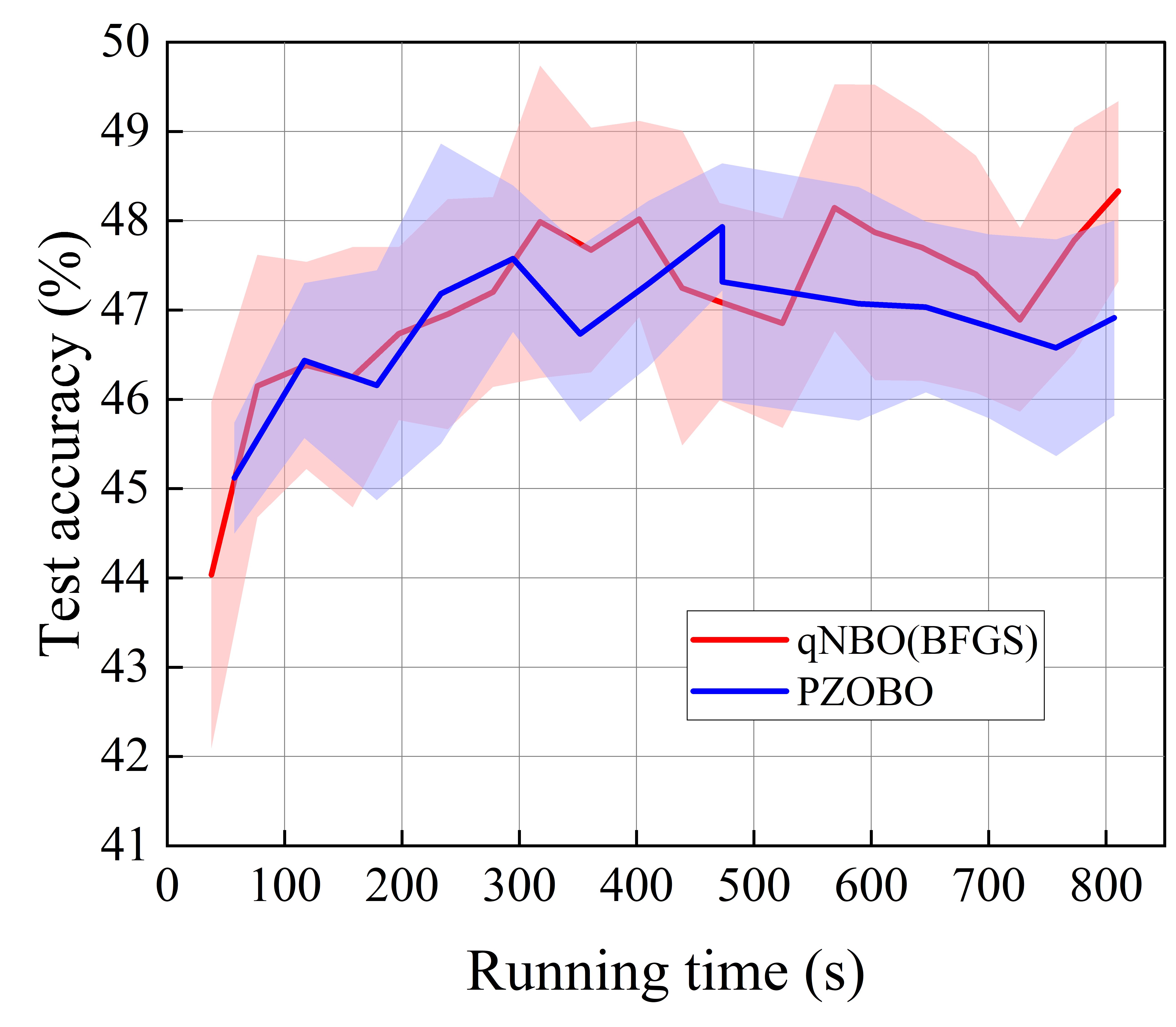}
\end{minipage}%
  \caption{5way-5shot on FC100 datasets.  Results are averaged over 5 runs, with all algorithms starting from the same initial point with a test accuracy of 20\%. For clarity, graphs begin at the second data point, omitting the initial one. We report that qNBO (BFGS) reaches peak test accuracies within 800 seconds, after which performance declines, likely due to overfitting or other factors.}
  \label{fig:metalearn-appendix}
\end{figure}

\subsubsection{Datasets}
	\noindent {\bf miniImageNet:}
The	miniImageNet dataset \citep{miniI}, derived from ImageNet \citep{imag}, is a large-scale benchmark for few-shot learning. The dataset comprises 100 classes, each encompassing 600 images of size 84 × 84. Following \cite{arnold}, we partition the classes into 64 classes for meta-training, 16 classes for meta-validation, and 20 classes for meta-testing. In the experiment, CNN4 has four convolutional blocks, in which each convolutional block contains a 3${\times}$3 convolution (padding=1), ReLU activation, 2${\times}$2 max pooling and batch normalization. Each convolutional layer has 32 filters. 

\noindent {\bf FC100:}
The FC100  dataset \citep{fc100}, generated from \cite{cifar}, consists of 100 classes with each class containing 600 images of size 32. Following \cite{fc100}, the classes are split into 60 classes for meta-training, 20 classes for meta-validation, and 20 classes for meta-testing. Each convolutional block of CNN4 comprises a 3${\times}$3 convolutional layer (with padding set to 1 and a stride of 2), subsequent batch normalization, ReLU activation, and 2${\times}$2 max pooling. Each convolutional layer  has 64  filters.

\noindent {\bf Omniglot:}
 Comprising 1623 character classes derived from 50 diverse alphabets, the Omniglot dataset \citep{omni} contains 20 samples within each class. The classes are divided into three parts: 1100 classes for meta-training, 100 classes for meta-validation, and 423 classes for meta-testing. The CNN4 network is identical to that used on the miniImageNet dataset but has 64  filters per layer.

\subsubsection{Experimental setup}
The meta learning experiment is carried out using the code available at the website.\footnote{\url{https://github.com/sowmaster/esjacobians}\label{fn:pzo}}
At each meta-iteration, a batch of 16 training tasks is sampled and the parameters are updated based on these tasks. The max outer steps $K$ is set to 6000 for both algorithms. The initial parameter $y_0$ is selected as 0.

\noindent {\bf PZOBO}: The parameters involved in the algorithm are the same as those in \cite{sow2022convergence}. 

\noindent {\bf qNBO (BFGS)}:  For all three datasets, the inner steps T  is set to 20, the hypergradient updates $Q_k$ is 3,  the step sizes $\beta$ and $\gamma$ are both specified as 0.1, and the  Hessian matrix  is initialized as $H_0=0.01I$. In addition, the inner iteration will terminate early if $\|A\|+\|b\|\leq {\rm tol}$, where $A$ and $b$ are the components of  the parameter $y$ denoting the weights of the final linear layer in a neural work. The specific values of tol and other parameters can be found in the code provided in the supplementary materials.

\begin{figure}[h]
\vspace{0mm}
 \begin{minipage}[t]{0.4\textwidth}
    \centering
    \caption*{}
    \vskip -0.1in
    \includegraphics[width=\textwidth]{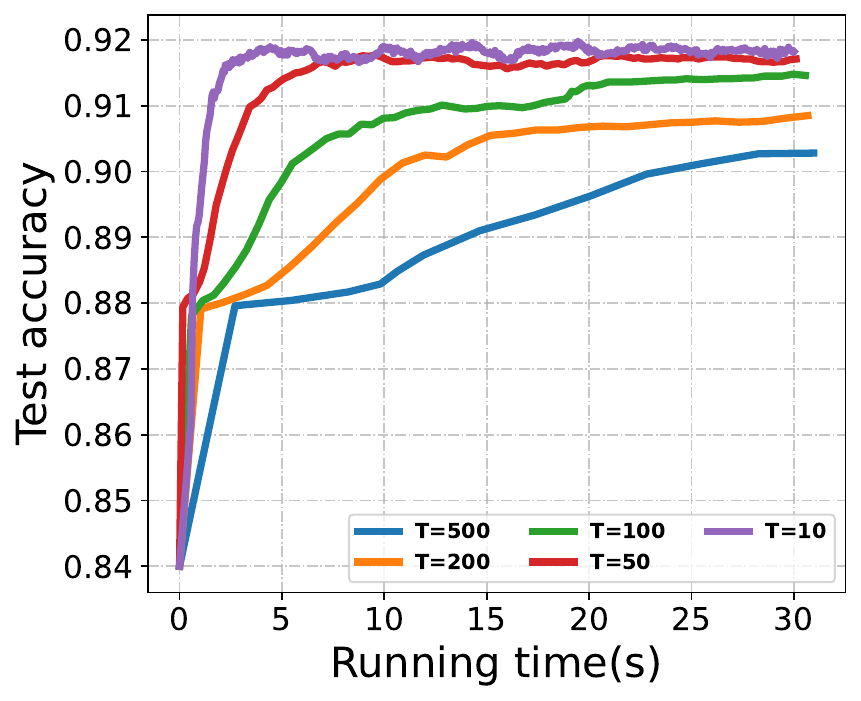}
\end{minipage}%
\hspace{0.5in}
 \begin{minipage}[t]{0.4\textwidth}
    \centering
    \caption*{{\textbf{}}}
    \vskip -0.1in
    \includegraphics[width=\textwidth]{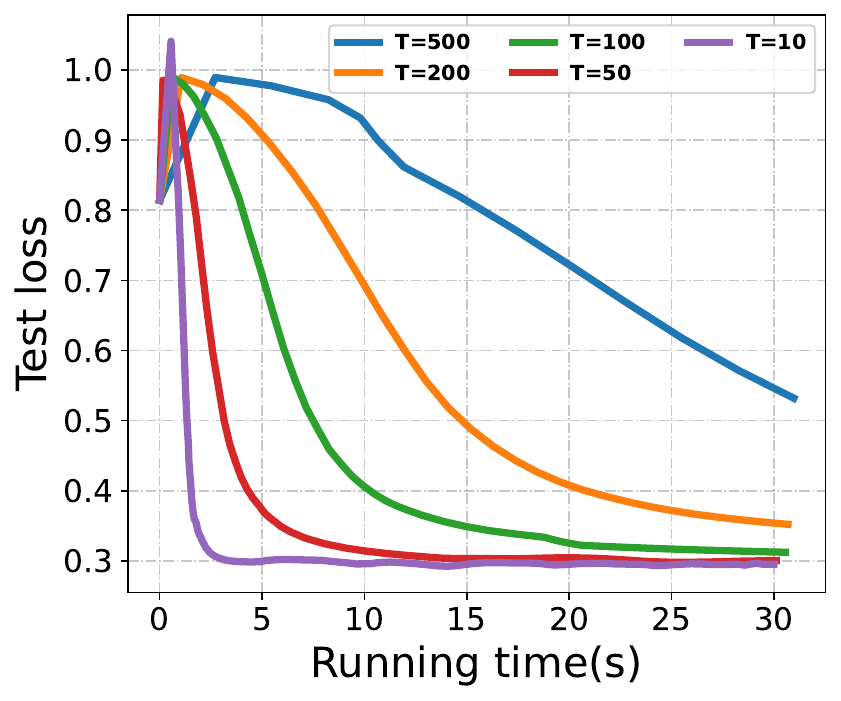}
\end{minipage}%

\begin{minipage}[t]{0.4\textwidth}
    \centering
    \caption*{{\textbf{}}}
    \vskip -0.1in
    \includegraphics[width=\textwidth]{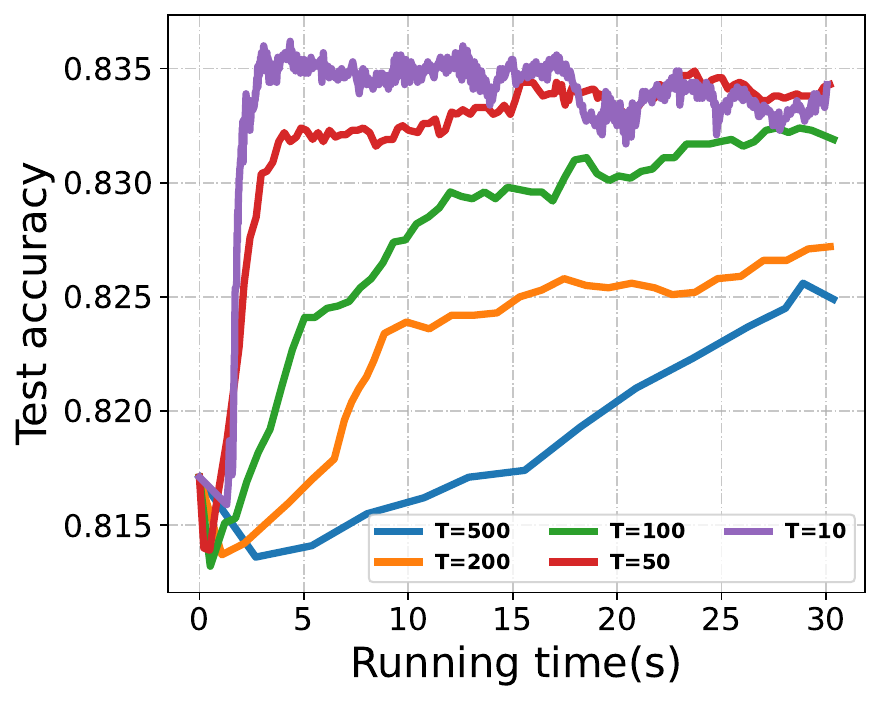}
\end{minipage}%
\hspace{0.5in}
\begin{minipage}[t]{0.4\textwidth}
    \centering
    \caption*{}
    \vskip -0.1in
    \includegraphics[width=\textwidth]{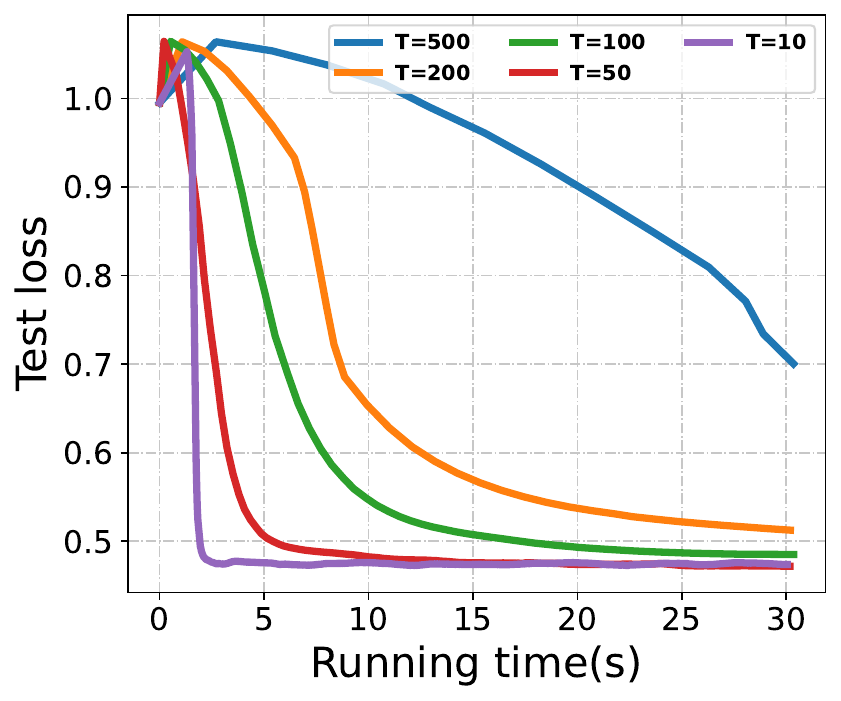}
\end{minipage}%

 \caption{Ablation study on the iteration number $T$ of qNBO (BFGS). (The first row illustrates the test accuracy and test loss for MNIST, and the second row shows these values for FashionMNIST.) }
  \label{fig:ablat}
\end{figure}

\subsection{Ablation study}\label{abl}

\subsubsection{Toy example}\label{abltoy}
In this subsection, we conduct an ablation study to assess the impact of the parameters \(Q_k\) on the performance of the qNBO (BFGS) algorithm in the toy experiment. As illustrated in Figure \ref{fig:toy-appendix}, the setting \(Q_k = k+1\) outperforms the others. Furthermore, employing the warm start strategy for \(u_k\) (denoted as \(Q_k = \text{ws}\)) enhances the performance of qNBO (BFGS), suggesting the potential benefits of this strategy.

\subsubsection{Data hyper-cleaning}\label{abldc}
In this subsection, we conduct an ablation study to assess the impact of the parameters \( T \) and \( Q_k \) within the qNBO (BFGS) algorithm on its performance in the \textcolor{black}{data hyper-cleaning experiment}. As illustrated in Figures \ref{fig:ablat} and \ref{fig:ablaq}, smaller values of $T$ and $Q_k$ lead to improved performance in terms of both accuracy and loss across the two datasets, thereby indicating the efficiency of qNBO (BFGS).

\begin{figure}[h]
\vspace{0mm}
 \begin{minipage}[t]{0.4\textwidth}
    \centering
    \caption*{}
    \vskip -0.1in
    \includegraphics[width=\textwidth]{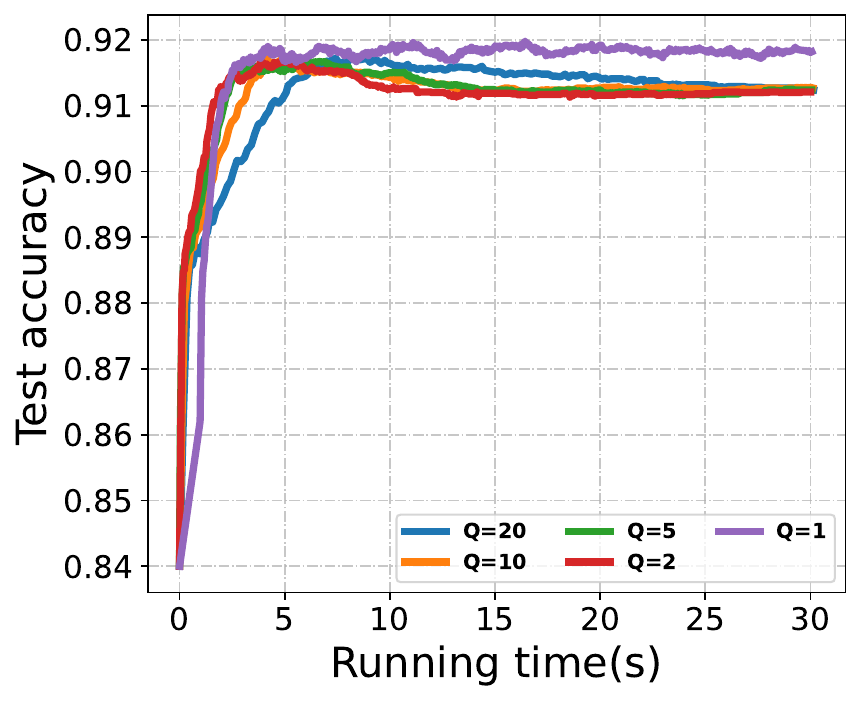}
\end{minipage}%
\hspace{0.5in}
 \begin{minipage}[t]{0.4\textwidth}
    \centering
    \caption*{{\textbf{}}}
    \vskip -0.1in
    \includegraphics[width=\textwidth]{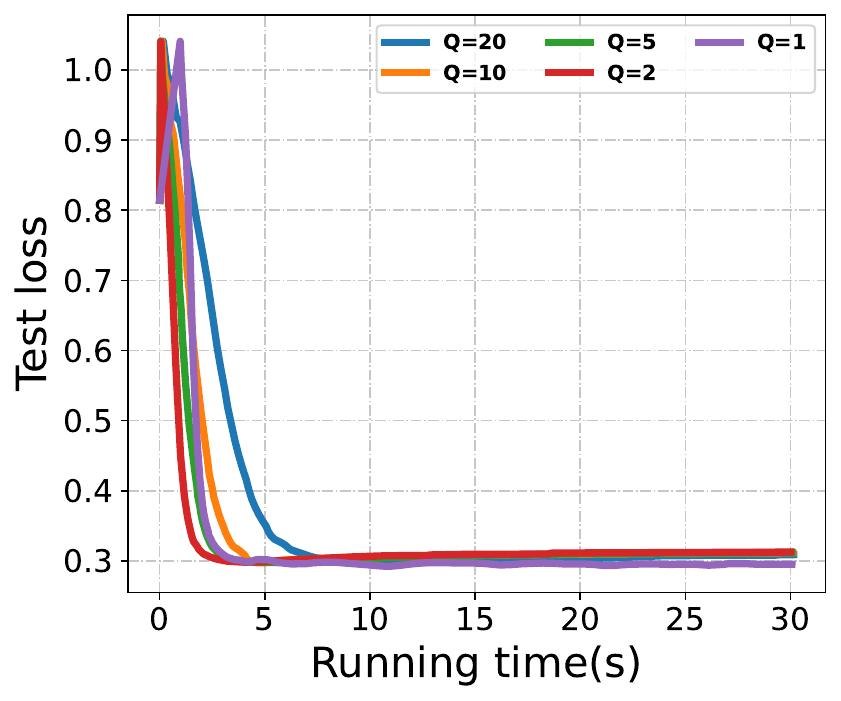}
\end{minipage}%

\begin{minipage}[t]{0.4\textwidth}
    \centering
    \caption*{{\textbf{}}}
    \vskip -0.1in
    \includegraphics[width=\textwidth]{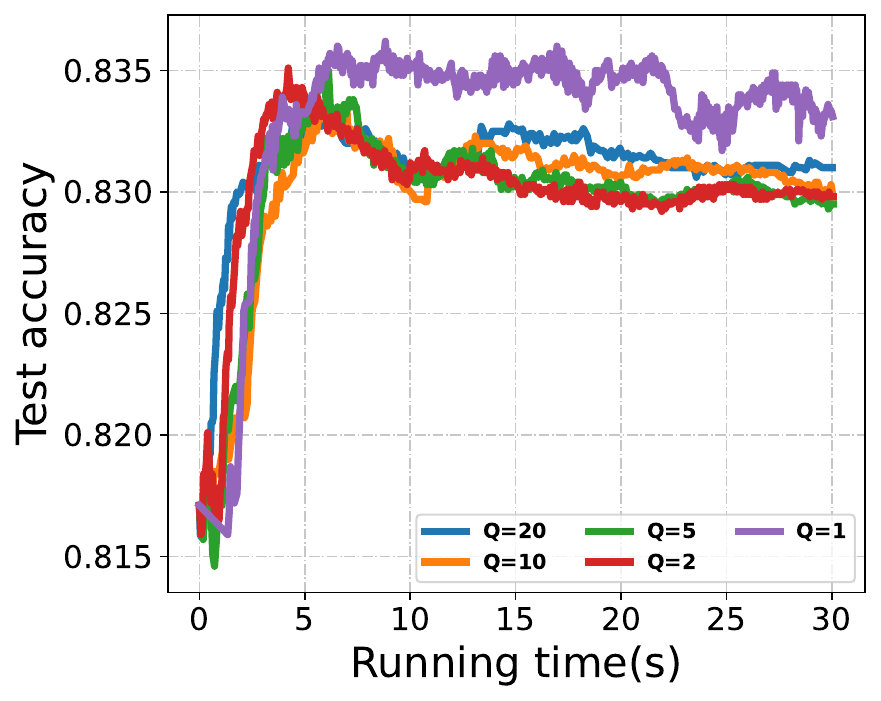}
\end{minipage}%
\hspace{0.5in}
\begin{minipage}[t]{0.4\textwidth}
    \centering
    \caption*{ \textbf{}}
    \vskip -0.1in
    \includegraphics[width=\textwidth]{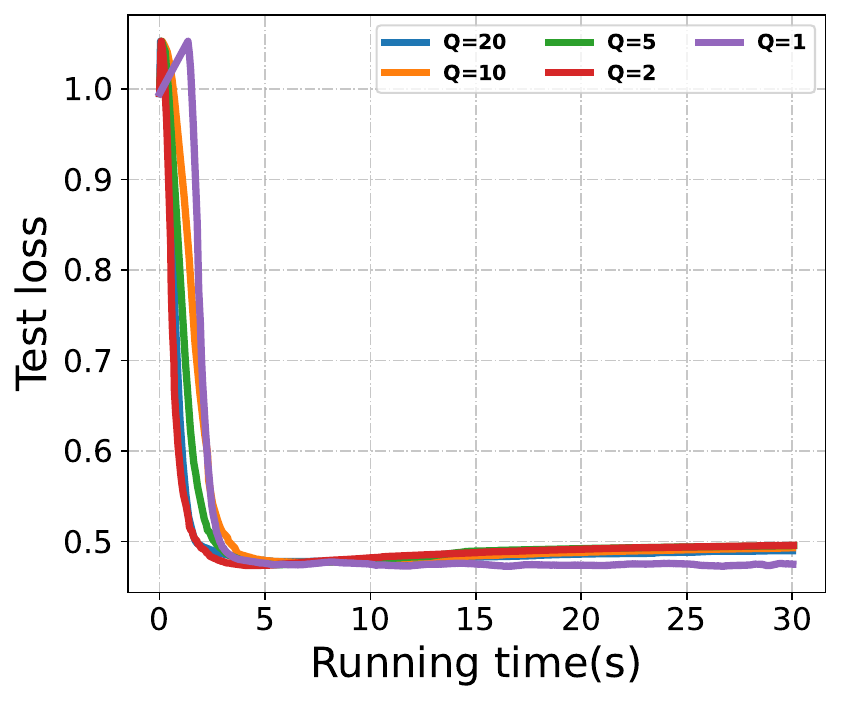}
\end{minipage}%

 \caption{Ablation study on the iteration number $Q$ of qNBO (BFGS). (The first row shows the test accuracy and test loss for the MNIST dataset, and the second row shows the same metrics for the FashionMNIST dataset.) }
  \label{fig:ablaq}
\end{figure}

\section{Proof of the results in Section \ref{sec:conv}}\label{sec:proof}

\subsection{Results of quasi-Newton method}\label{aqn}
In this subsection, we first summarize the convergence properties of BFGS method for solving the  problem:
\begin{equation}\label{pro}
 \min_{y\in\mathbb{R}^{n}} g(y).
 \end{equation}
 
 Besides, to derive the upper bound of the hypergradient estimation error, we  review some conclusions  of BFGS   update presented in \cite{jin2023non,Nesterov2021rates,newqn}.

\subsubsection{Convergence results of BFGS method}
\begin{assumption}\label{ass:g}
	Assume that $g$ has the following properties:
	\begin{itemize}
		\item[](i)\ $g(y)$ is strongly convex \textit{w.r.t.} $y$ with parameter $\mu>0$, i.e., $\mu I\preceq\nabla^2 g(y)$. Moreover,  $\nabla g(y)$ is Lipschitz continuous \textit{w.r.t.} y  with parameter $L>0$ (i.e., $\nabla^2 g(y)\preceq LI$ ).
		\item[](ii)\ The Hessian $\nabla^2 g(y)$  satisfies:
  \[
\nabla^2 g(y_1)-\nabla^2 g(y_2)\preceq M\|y_1-y_2\|_z \nabla^2 g(w),\forall y_1,y_2, z, w\in\mathbb{R}^{n},
\]	
where $\|y\|_z:=\langle \nabla^2 g(z)y, y\rangle^{1/2}$ and $M>0$.
\end{itemize}
\end{assumption}

\begin{lemma}\label{gc}({\rm \cite{newqn}, Global convergence})
If the function $g$ has the following quadratic form: 
\begin{equation}
	g(y)=\frac{1}{2}y^T A y-y^T  x,
\end{equation}
where $\mu I \preceq A \preceq LI$ such that Assumption \ref{ass:g} holds. If the BFGS method is used to solve the problem (\ref{pro}), and if  \textcolor{black}{$H_0=LI$} and the number of iterations $i \geq 4n \ln \frac{L}{\mu}$, then for all $i \geq 4n \ln \frac{L}{\mu}$, the following inequality holds:
\begin{equation}
    \|y_i-y^*\|\leq 2\kappa^{3/2}(\frac{t_b}{i})^{\frac{i}{2}}\|y_0-y^*\|,
\end{equation}
with $\kappa=\frac{L}{\mu}$ and $t_b=4n{\rm ln}\frac{L}{\mu}$.
\end{lemma}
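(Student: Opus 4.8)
The plan is to treat Lemma~\ref{gc} as the specialization to a quadratic objective of the sharp non-asymptotic superlinear bounds for BFGS established in \cite{newqn} (see also \cite{jin2023non,Nesterov2021rates}); the task is then to verify that the quadratic model meets the hypotheses of that analysis and to track how its constants collapse. For $g(y)=\tfrac12 y^TAy-y^Tx$ we have $\nabla g(y)=Ay-x$, $y^*=A^{-1}x$, and the crucial simplification that every gradient difference is exact: $g_t:=\nabla g(y_{t+1})-\nabla g(y_t)=As_t$ with $s_t=y_{t+1}-y_t$. Thus the secant pairs encode the action of the true Hessian $A$ exactly, and Assumption~\ref{ass:g}(ii) holds trivially with $M=0$ since $\nabla^2 g\equiv A$ is constant. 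Writing $B_t:=H_t^{-1}$ for the direct Hessian approximation, the initialization $B_0=LI$ satisfies $B_0\succeq A$, a relation preserved by the BFGS update in the quadratic case.

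The heart of the argument is a determinant telescoping. The direct BFGS update obeys $\det B_{t+1}=\det B_t\cdot\frac{g_t^Ts_t}{s_t^TB_ts_t}=\det B_t\cdot\frac{s_t^TAs_t}{s_t^TB_ts_t}$, and since $B_t\succeq A$ each ratio $r_t:=\frac{s_t^TB_ts_t}{s_t^TAs_t}\ge 1$. Telescoping gives $\prod_{t<i}r_t=\frac{\det B_0}{\det B_i}\le\frac{\det B_0}{\det A}=\det(A^{-1}B_0)\le\kappa^{\,n}$, hence $\sum_{t<i}\ln r_t\le n\ln\kappa$. The quantities $r_t-1$ measure how much $B_t$ overestimates $A$ along the current step and directly control the one-step contraction of BFGS with exact line search, so this bounded ``budget'' of Hessian-approximation error is what drives the superlinear rate. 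I would then establish, in the norm $\|v\|_A:=(v^TAv)^{1/2}$, a per-step estimate $\|y_{t+1}-y^*\|_A\le\theta_t\|y_t-y^*\|_A$ with $\theta_t^2\lesssim r_t-1$, and combine the steps by the arithmetic--geometric mean inequality: once the iterates enter the regime $r_t\to1$, distributing the error budget over $i$ steps yields $\prod_{t<i}\theta_t\le\big(\tfrac{t_b}{i}\big)^{i/2}$ with $t_b=4n\ln\kappa$. The threshold $i\ge 4n\ln\kappa$ appears precisely so that the average log-ratio $\tfrac1i\sum_{t<i}\ln r_t\le\tfrac{n\ln\kappa}{i}$ is small, making $r_t-1$ and $\ln r_t$ comparable; this is the burn-in phase separating the initial linear decrease from the asymptotic superlinear one. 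Converting back via $\tfrac1{\sqrt L}\|v\|_A\le\|v\|\le\tfrac1{\sqrt\mu}\|v\|_A$ introduces the condition-number prefactor, and tracking these factors alongside the constant from bounding $\theta_t^2$ by $r_t-1$ produces the stated $2\kappa^{3/2}$.

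The main obstacle is the passage in the previous paragraph: establishing the clean per-step inequality $\theta_t^2\lesssim r_t-1$ relating iterate contraction to the determinant-based ratios, and rigorously converting the \emph{sum} bound $\sum_{t<i}\ln r_t\le n\ln\kappa$ into \emph{uniform} control of the product $\prod_{t<i}\theta_t$ via AM--GM, since $r_t-1\ge\ln r_t$ runs in the unfavorable direction and must be tamed using the threshold $i\ge t_b$. Because the result is quoted from \cite{newqn}, it suffices in the paper to record that the quadratic instance satisfies all hypotheses (constant Hessian, $B_0\succeq A$) and to invoke that theorem; the sketch above explains why its constants specialize to $t_b=4n\ln\kappa$ and the $\kappa^{3/2}$ prefactor.
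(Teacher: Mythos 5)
The paper offers no proof of this lemma at all: it is imported verbatim from \cite{newqn}, so the ``official'' argument is precisely what you conclude with --- check that the quadratic instance satisfies the hypotheses (constant Hessian, hence $M=0$ in Assumption \ref{ass:g}(ii), and $H_0=LI$, i.e.\ $B_0=LI\succeq A$) and invoke the cited theorem. Your interior sketch of the determinant/potential telescoping plus AM--GM is consistent with that reference (and mirrors the machinery the paper itself uses in Lemmas \ref{lemqfu} and \ref{generalju} for related bounds), and the gaps you honestly flag --- the per-step inequality $\theta_t^2\lesssim r_t-1$ and the unfavorable direction of $r_t-1\ge\ln r_t$ --- live inside the cited result rather than in anything the paper needs to, or does, verify.
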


\textcolor{black}{
\begin{lemma}\label{lc}({\rm \cite{newqn}, local convergence})
Suppose that Assumption \ref{ass:g} on \( g \) holds. If $H_0=LI$ and the initial point \( y_0 \) satisfies:
\begin{equation}\label{eqlca}
\|y_0-y^*\|\leq K_1, K_1=\frac{2{\rm ln}\frac{3}{2}\sqrt{L}}{\frac{3}{2}^\frac{3}{2}M\mu}{\rm max}\{\frac{\mu}{2L},\frac{1}{K_0+9}\},
 \end{equation}
where $K_0=8n {\rm ln}{\frac{2L}{\mu}}$ and $i \geq K_0$, then for all $i \geq K_0$,  the iterate generated by the  BFGS algorithm has the following superlinear convergence rate:,
\begin{equation}
    \|y_i-y^*\|\leq 2\kappa^{3/2}(\frac{t_c}{i})^{\frac{i}{2}}\|y_0-y^*\|,
\end{equation}
with $t_c=\frac{9}{8}K_0$ and $\kappa=\frac{L}{\mu}$.
\end{lemma}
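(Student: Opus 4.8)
The plan is to treat this statement as a restatement of the local superlinear convergence result of \cite{newqn}, and to reconstruct its proof along the lines of the strong self-concordance analysis of BFGS. Write $B_i := H_i^{-1}$ for the Hessian approximation maintained by the BFGS recursion and let $A := \nabla^2 g(y^*)$ be the Hessian at the minimizer. The central object is the Bregman-type potential
\[
\sigma_i := \Tr\!\big(A^{-1/2} B_i A^{-1/2}\big) - \ln\det\!\big(A^{-1/2} B_i A^{-1/2}\big) - n ,
\]
which is nonnegative and vanishes exactly when $B_i = A$, and which controls the deviation of $B_i$ from $A$ in the Loewner order (so that once $\sigma_i$ is small, all approximate Hessians stay uniformly bounded away from $0$ and $\infty$). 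The first step is to record two facts: the exact BFGS update $B_i \mapsto B_{i+1}$ admits a decomposition of $\sigma_{i+1}$ in terms of $\sigma_i$ and a nonnegative alignment defect $\theta_i$ measuring how well the step captures the local curvature; and the initialization $H_0 = LI$ gives $\sigma_0 = \mathcal{O}(n\ln\kappa)$, which is the origin of the constant $t_c = \tfrac{9}{8}K_0 = 9n\ln\tfrac{2L}{\mu}$.

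Next I would address the moving Hessian, which is precisely what distinguishes this local result from the quadratic case of Lemma \ref{gc}. Since the relevant reference curvature is now $\nabla^2 g(y_i)$ rather than a fixed matrix, I would invoke Assumption \ref{ass:g}(ii), the bound $\nabla^2 g(y_1) - \nabla^2 g(y_2) \preceq M\|y_1 - y_2\|_z\, \nabla^2 g(w)$, to show that all Hessians along the trajectory are multiplicatively comparable: $\nabla^2 g(y_i) \approx A$ up to a factor $1 \pm \mathcal{O}(M r_i)$, where $r_i := \|y_i - y^*\|$. Feeding this comparison into the exact $\sigma$-decomposition produces a one-step estimate of the form
\[
\sigma_{i+1} \le \sigma_i - \tfrac{1}{2}\theta_i + c\,M r_i ,
\]
where the last term absorbs the drift of the reference Hessian, together with a local Newton-type contraction estimate $r_{i+1} \le \big(\sqrt{\theta_i} + c' M r_i\big) r_i$ linking the alignment defect to the actual decrease in the iterates.

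The remaining step is a telescoping-and-averaging argument. Summing the $\sigma$-inequality gives $\sum_{j<i}\theta_j \le 2\sigma_0 + 2cM \sum_{j<i} r_j$; assuming $y_0$ lies in the ball of radius $K_1$, an induction shows that $\sum_{j<i} r_j$ remains a bounded geometric-type sum, so that $\sum_{j<i}\theta_j = \mathcal{O}(\sigma_0) = \mathcal{O}(t_c)$. Applying the arithmetic–geometric mean inequality to the per-step ratios $r_{j+1}/r_j \le \sqrt{\theta_j} + c' M r_j$ then bounds their product over $j = 0,\dots,i-1$ by $\big(\tfrac{1}{i}\sum_{j<i}\theta_j\big)^{i/2} \lesssim (t_c/i)^{i/2}$, while converting between the $A$-weighted norm $\|\cdot\|_z$ and the Euclidean norm contributes the $\kappa^{3/2}$ prefactor. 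Together these yield exactly $r_i \le 2\kappa^{3/2}(t_c/i)^{i/2} r_0$ for all $i \ge K_0$.

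The main obstacle is the coupling between staying in the neighborhood and the rate actually holding: the one-step estimates are valid only while $r_i$ is small enough that the self-concordance bound keeps the Hessians comparable, yet proving $r_i$ stays small presupposes the contraction we are trying to establish. I would close this by a single induction that simultaneously maintains $r_i \le K_1$, the potential bound $\sigma_i = \mathcal{O}(t_c)$, and the running-sum control $\sum_{j<i} r_j = \mathcal{O}(1/M)$. The explicit radius $K_1$ in (\ref{eqlca}), scaling like $\tfrac{\sqrt{L}}{M\mu}\max\{\tfrac{\mu}{2L}, \tfrac{1}{K_0+9}\}$, is calibrated exactly so that the drift term $cM r_i$ never overwhelms the decrease $\tfrac{1}{2}\theta_i$, allowing the induction to close; and the threshold $i \ge K_0$ is what guarantees $(t_c/i)^{i/2} < 1$, ensuring the stated bound is genuinely contractive once the superlinear regime is entered.
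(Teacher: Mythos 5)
The paper never proves Lemma \ref{lc}: it is imported verbatim from \cite{newqn} as part of the background review in Appendix \ref{aqn}, so there is no internal proof for your argument to be compared against. Judged against the actual argument in the cited source, your reconstruction follows the same route that the source (and the paper's own later machinery) uses: the trace-minus-log-determinant potential is exactly the paper's $\psi$ in (\ref{pside}); the drift of the reference Hessian is controlled by strong self-concordance precisely as in Lemma \ref{lemhesj}, where the comparisons degrade by the multiplicative factors $\xi_i = e^{M\sum_{j<i} r_j}$; the potential is telescoped against alignment defects as in Lemma \ref{generalju}; an AM--GM step converts the summed defects into the $(t_c/i)^{i/2}$ product bound; and a closing induction keeps $\sum_{j} r_j = \mathcal{O}(1/M)$, which is exactly what the radius $K_1$ in (\ref{eqlca}) encodes. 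So the skeleton is the right one and matches the source's proof rather than offering a genuinely different route.

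Two caveats keep this a scaffold rather than a proof. First, your one-step estimates are schematic reparametrizations of the inequalities that actually hold: in the source the potential decreases by $\vartheta(\theta_i) \asymp \theta_i^2$ (compare (\ref{psire}) in the paper's Lemma \ref{generalju}) and the per-step contraction factor of the residual is $\theta_i$ itself, whereas you posit a decrease of $\tfrac{1}{2}\theta_i$ paired with a $\sqrt{\theta_i}$ contraction; these are mutually consistent only after silently redefining $\theta$, and none of the specific constants ($2\kappa^{3/2}$, $t_c = \tfrac{9}{8}K_0$, the exact form of $K_1$) is derived rather than asserted --- and those constants are where essentially all the work in \cite{newqn} lies. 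Second, the initialization: for your claim $\sigma_0 = \mathcal{O}(n\ln\kappa)$ and for the comparison machinery to apply, the matrix initialized at $LI$ must be the Hessian approximation $B_0 = H_0^{-1}$ (as in the paper's Lemma \ref{lemqfu}, and note Lemma \ref{lemqfurate1} accordingly uses $H_0 = (1/L)I$); the statement's ``$H_0 = LI$'' conflates the two, and your write-up adopts the correct reading without flagging the discrepancy, which a careful reconstruction should do explicitly.
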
}

\begin{lemma}
 {\rm(Theorem 3 of \cite{jin2023non}) } 
Suppose that Assumption \ref{ass:g} on \( g \) holds. If the initial point \( y_0 \) and the initial Hessian approximation matrix \( B_0 \) (with \( H_0 = B_0^{-1} \)) satisfy:
	\begin{equation}\label{yass}
		\begin{split}
 &\|\nabla^2g(y^*)^{1/2}(y_0-y^*)\|\leq \frac{\epsilon}{6},\\
 &\|\nabla ^2g(y^*)^{-1/2}\big(B_0-\nabla ^2g(y^*)\big)\nabla^2g(y^*)^{-1/2}\|_F\leq\delta,
		\end{split}
	\end{equation}
where $\epsilon, \delta\in (0,\frac{1}{2}), \rho\in(0,1),$
\begin{equation*}
\frac{(3+\epsilon)\epsilon}{(1-\epsilon)(1-\rho)}\leq \delta, {\rm and}\ \frac{\epsilon}{3}+2\delta\leq(1-2\delta)\rho,
\end{equation*}
 then the iterate generated by the  BFGS algorithm has the following superlinear convergence rate:
\begin{equation}
    \|y_i-y^*\|\leq \sqrt{\frac{L}{\mu}}\big(\frac{C_1 q\sqrt{i}+C_2}{i}\big)^i\|y_0-y^*\|,
    \end{equation}
where $C_1=2\sqrt{2}\delta (1+\rho)(1+\frac{\epsilon}{3})$, $C_2=\frac{(1+\rho)(1+\frac{\epsilon}{3})\epsilon}{3(1-\rho)}$ and $q=\sqrt{\frac{1+2\delta}{1-2\delta}}$.
	\end{lemma}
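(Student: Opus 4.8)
The statement is Theorem~3 of \cite{jin2023non}, the standard non-asymptotic superlinear rate for BFGS, and the plan is to prove it by tracking two scalar quantities jointly along the iterates: the scaled iterate distance $r_k := \|y_k-y^*\|_P$, where $P := \nabla^2 g(y^*)$ and $\|v\|_P := \langle Pv,v\rangle^{1/2}$, and the relative Hessian-approximation error $\delta_k := \|P^{-1/2}(B_k-P)P^{-1/2}\|_F$. In this notation the hypotheses (\ref{yass}) read $r_0 \le \epsilon/6$ and $\delta_0 \le \delta$, and the whole argument is an induction that keeps $\delta_k \le \delta$ while $r_k$ contracts. First I would record that $\delta_k \le \delta < \tfrac12$ forces the spectrum of $P^{-1/2}B_k P^{-1/2}$ into a $\delta$-neighborhood of $1$, so the $P$-metric and the $B_k$-metric are equivalent up to the factor $q=\sqrt{(1+2\delta)/(1-2\delta)}$; this is where $q$ ultimately enters the bound.

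Next I would establish a one-step contraction. Writing the BFGS step as $y_{k+1}-y^* = (I - B_k^{-1}\bar H_k)(y_k-y^*)$ with $\bar H_k := \int_0^1 \nabla^2 g(y^*+t(y_k-y^*))\,dt$, the deviation of $I - B_k^{-1}\bar H_k$ in the $P$-metric is governed by two pieces: the directional mismatch between $B_k$ and the Hessian along the current step, captured by a progress measure $\sigma_k$, and the Hessian-Lipschitz drift $\|\bar H_k - P\| = O(M r_k)$ coming from Assumption~\ref{ass:g}(ii). This yields $r_{k+1} \le \theta_k r_k$ with $\theta_k \lesssim q(\sqrt{\sigma_k} + \tfrac{M}{2} r_k)$, so the rate is controlled once $\sum_k \sigma_k$ and $\sum_k r_k$ are bounded.

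The hard part will be controlling the BFGS approximation and proving the aggregate smallness $\sum_k \sigma_k = O(\delta^2)$. I would adapt the weighted-Frobenius analysis of Rodomanov--Nesterov and Jin--Mokhtari: applying the congruence $P^{-1/2}(\cdot)P^{-1/2}$ to the update (\ref{eq:bfgs}) shows that the Frobenius measure is non-expansive up to a curvature correction, giving a per-step inequality $\delta_{k+1}^2 \le \delta_k^2 - \sigma_k + cM r_k$, where $\sigma_k \ge 0$ is exactly the directional progress appearing above and the last term penalizes forming the secant pair at the moving point $y_k$ rather than at $y^*$. Telescoping, together with the geometric decay of $r_k$, yields $\sum_{k=0}^{i-1}\sigma_k \le \delta_0^2 + O(\sum_k r_k) = O(\delta^2)$ and $\sum_{k=0}^{i-1} r_k = O(1)$. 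The two scalar conditions $\frac{(3+\epsilon)\epsilon}{(1-\epsilon)(1-\rho)}\le\delta$ and $\frac{\epsilon}{3}+2\delta\le(1-2\delta)\rho$ are precisely the inductive invariants keeping $\delta_k\in[0,\delta]$ and forcing $r_{k+1}\le\rho r_k$, which close the induction and validate these sums.

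Finally I would aggregate multiplicatively. From $r_i \le r_0\prod_{k=0}^{i-1}\theta_k$ with $\theta_k \ge 0$, the AM--GM inequality gives $\prod_{k=0}^{i-1}\theta_k \le (\tfrac1i\sum_{k=0}^{i-1}\theta_k)^i$. Cauchy--Schwarz on the superlinear part, $\sum_{k=0}^{i-1}\sqrt{\sigma_k} \le \sqrt{i}\,(\sum_k \sigma_k)^{1/2} = O(\delta\sqrt{i})$, produces the term $C_1 q\sqrt{i}$ with $C_1 = 2\sqrt{2}\,\delta(1+\rho)(1+\tfrac{\epsilon}{3})$, while the summable drift $\sum_k r_k$ produces the constant $C_2 = \frac{(1+\rho)(1+\frac{\epsilon}{3})\epsilon}{3(1-\rho)}$; both constants are read off from the telescoped bounds. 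Dividing by $i$ gives the factor $\frac{C_1 q\sqrt{i}+C_2}{i}$, and raising to the $i$-th power yields the superlinear rate in the $P$-metric. Converting back via $\|y_i-y^*\| \le \mu^{-1/2}\|y_i-y^*\|_P$ and $\|y_0-y^*\|_P \le L^{1/2}\|y_0-y^*\|$ supplies exactly the prefactor $\sqrt{L/\mu}$, completing the bound.
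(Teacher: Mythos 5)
The paper offers no proof of this lemma at all---it is imported verbatim as Theorem 3 of \cite{jin2023non}---so the only meaningful benchmark is that reference, and your proposal correctly reconstructs its strategy: a joint induction on the weighted distance $r_k=\|y_k-y^*\|_P$ and the Frobenius error $\delta_k$, a one-step contraction combining a Dennis--Mor\'e-type progress term with the $O(Mr_k)$ Hessian drift from strong self-concordance, telescoping of the Frobenius potential to bound $\sum_k\sigma_k$, and AM--GM plus Cauchy--Schwarz aggregation yielding the $\big(\frac{C_1q\sqrt{i}+C_2}{i}\big)^i$ rate with the $\sqrt{L/\mu}$ prefactor from switching norms. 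The one slip is your stated invariant $\delta_k\le\delta$: the accumulated drift allows $\delta_k$ to grow to roughly $2\delta$ (initial $\delta$ plus total drift bounded by $\delta$ via the condition on $\epsilon$), which is precisely why the metric-equivalence factor is $q=\sqrt{(1+2\delta)/(1-2\delta)}$ rather than $\sqrt{(1+\delta)/(1-\delta)}$; with that adjustment the induction closes as in the cited proof.
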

To be specific, the superlinear convergence rate of  BFGS algorithm can also be expressed in the following alternative form.
 \begin{lemma}\label{lemuserate}{\rm(Corollary 4 of \cite{jin2023non}) } 
   Suppose that Assumption \ref{ass:g} on $g$ holds. If the initial point $y_0$ and the initial BFGS matrix $B_0$ satisfy:
	\begin{equation}\label{yass}
		\begin{split}
 &\|\nabla^2 g(y^*)^{1/2}(y_0-y^*)\|\leq \frac{1}{300},\\
 &\|\nabla^2g(y^*)^{-1/2}\big(B_0-\nabla^2 g(y^*)\big)\nabla ^2g(y^*)^{-1/2}\|_F\leq\frac{1}{7},
		\end{split}
	\end{equation}
then the iterate solved by the BFGS algorithm  exhibits the following convergence rate:
\begin{equation}
    \|y_i-y^*\|\leq \sqrt{\frac{L}{\mu}}\big(\frac{1}{i}\big)^\frac{i}{2}\|y_0-y^*\|.
    \end{equation}
 \end{lemma}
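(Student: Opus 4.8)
The plan is to obtain this statement as an immediate specialization of the preceding lemma (Theorem~3 of \cite{jin2023non}), which already supplies the general rate $\|y_i-y^*\|\leq\sqrt{L/\mu}\big((C_1 q\sqrt{i}+C_2)/i\big)^i\|y_0-y^*\|$ for any triple $(\epsilon,\delta,\rho)$ meeting the admissibility conditions stated there. The first step is to match hypotheses: the two bounds assumed in the corollary coincide with those of the previous lemma under the choices $\epsilon/6=1/300$, i.e. $\epsilon=1/50$, and $\delta=1/7$. It then suffices to produce one $\rho\in(0,1)$ for which both $\frac{(3+\epsilon)\epsilon}{(1-\epsilon)(1-\rho)}\leq\delta$ and $\frac{\epsilon}{3}+2\delta\leq(1-2\delta)\rho$ hold, and to verify that the induced constants collapse the general rate into $(1/i)^{i/2}$.

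Second, I would fix $\rho=1/2$ and check the two admissibility inequalities directly: the first becomes $\frac{151}{1225}\approx0.123\leq 1/7$ and the second becomes $\frac{1}{150}+\frac{2}{7}\approx0.292\leq 5/14\approx0.357$, so both are satisfied (indeed any $\rho$ in roughly $[0.41,\,0.57]$ works). With these values the constants evaluate to $C_1=\frac{151\sqrt{2}}{350}\approx0.610$ and $q=\sqrt{9/5}\approx1.342$, whence $C_1 q\approx0.818<1$; moreover the factors $(1+\rho)$ and $3(1-\rho)$ cancel at $\rho=1/2$, giving $C_2=\frac{151}{7500}\approx0.0201$.

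Third, I would reduce the claimed bound to a single scalar inequality. Since all quantities are positive, $\big((C_1 q\sqrt{i}+C_2)/i\big)^i\leq(1/i)^{i/2}$ is equivalent, after taking $i$-th roots and multiplying through by $i$, to $C_1 q\sqrt{i}+C_2\leq\sqrt{i}$, i.e. $(1-C_1 q)\sqrt{i}\geq C_2$. As $1-C_1 q\approx0.182>0$ and $C_2/(1-C_1 q)\approx0.11<1\leq\sqrt{i}$, this holds for every $i\geq1$, which is exactly the asserted rate.

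The arithmetic is routine, so I do not expect a genuine obstacle; the only point requiring care is confirming that the admissible constants keep $C_1 q$ strictly below $1$ with enough margin that $C_2\leq(1-C_1 q)\sqrt{i}$ already holds at the first iterate $i=1$. Were the conservative choice $\rho=1/2$ ever to leave too little slack, I would instead select $\rho$ within the admissible interval so as to minimize $C_1 q$, but as the numbers above show, $\rho=1/2$ comfortably suffices.
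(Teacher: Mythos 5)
Your proposal is correct, and I verified the arithmetic: with $\epsilon=1/50$, $\delta=1/7$, $\rho=1/2$ the two admissibility conditions of the preceding lemma read $\frac{151}{1225}\leq\frac{1}{7}$ and $\frac{307}{1050}\leq\frac{5}{14}$, both true; the constants come out to $C_1=\frac{151\sqrt{2}}{350}$, $q=\frac{3}{\sqrt{5}}$, $C_2=\frac{151}{7500}$, so $C_1q\approx0.819<1$, and the reduction of $\big(\frac{C_1q\sqrt{i}+C_2}{i}\big)^i\leq\big(\frac{1}{i}\big)^{i/2}$ to $(1-C_1q)\sqrt{i}\geq C_2$ is legitimate (both sides positive, $i$-th powers monotone) and holds already at $i=1$. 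Note, however, that the paper itself offers no proof of this statement: it is imported verbatim as Corollary~4 of \cite{jin2023non}, just as the preceding lemma is imported as Theorem~3 of the same reference. So there is no in-paper argument to compare against; what your derivation does is reconstruct, inside the paper, precisely the specialization step that the cited reference performs to pass from its Theorem~3 to its Corollary~4. That buys self-containedness — the lemma chain used in Lemma \ref{lemy1} and Theorem \ref{thm36g} no longer rests on an unverified external corollary but only on the quoted general theorem — at the cost of a page of constant-checking that the authors chose to delegate to the citation.
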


\subsubsection{Properties of the BFGS updates}

\begin{lemma}\label{lemqfu}
If the function $g$ in the problem (\ref{pro}) has the following quadratic form: 
\begin{equation}\label{qf}
	g(y)=\frac{1}{2}y^T A y-y^T  x,
\end{equation}	
with $\mu I \preceq A \preceq LI$, then the BFGS matrix $B_i$ satisfies:
\begin{equation}
	\sum_{i=0}^{k-1}	\frac{(B_i s_i-As_i)^T A^{-1} (B_i s_i-As_i)}{s_i^T B_i s_i}\leq \frac{nL}{\mu}.
\end{equation}
\end{lemma}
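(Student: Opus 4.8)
The plan is to control the whole sum through the single scalar quantity $\Tr(A^{-1}B_i)$, exploiting the fact that on the quadratic (\ref{qf}) the curvature pair is exact: $g_i=\nabla g(y_{i+1})-\nabla g(y_i)=As_i$, so $g_i^Ts_i=s_i^TAs_i$ and $g_ig_i^T=As_is_i^TA$. Substituting this into the Hessian-side BFGS update $B_{i+1}=B_i-\frac{B_is_is_i^TB_i}{s_i^TB_is_i}+\frac{g_ig_i^T}{g_i^Ts_i}$ and applying $\Tr(A^{-1}\,\cdot\,)$, the two rank-one corrections collapse to scalars: the second contributes exactly $1$, and the first contributes $\frac{s_i^TB_iA^{-1}B_is_i}{s_i^TB_is_i}$. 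This gives the identity $\Tr(A^{-1}B_{i+1})=\Tr(A^{-1}B_i)-\frac{s_i^TB_iA^{-1}B_is_i}{s_i^TB_is_i}+1$.

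Next I would expand the numerator of the summand as $(B_is_i-As_i)^TA^{-1}(B_is_i-As_i)=s_i^TB_iA^{-1}B_is_i-2s_i^TB_is_i+s_i^TAs_i$, divide by $s_i^TB_is_i$, and feed in the identity from the first step. Writing $d_i:=\frac{s_i^TAs_i}{s_i^TB_is_i}$, the $i$-th term becomes $\big[\Tr(A^{-1}B_i)-\Tr(A^{-1}B_{i+1})\big]+(d_i-1)$. The bracketed part telescopes, so $\sum_{i=0}^{k-1}\frac{(B_is_i-As_i)^TA^{-1}(B_is_i-As_i)}{s_i^TB_is_i}=\Tr(A^{-1}B_0)-\Tr(A^{-1}B_k)+\sum_{i=0}^{k-1}(d_i-1)$.

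The crux is to bound the residual $\sum_i(d_i-1)$, and for this I would establish the monotonicity $A\preceq B_i$ for all $i$, starting from the initialization $B_0=LI$ (equivalently $H_0=L^{-1}I$), which satisfies $A\preceq LI=B_0$. This is cleanest on the inverse side using the update (\ref{eq:bfgs}) for $H_i=B_i^{-1}$: setting $\hat H_i:=A^{1/2}H_iA^{1/2}$, $\hat s_i:=A^{1/2}s_i$, and $P:=\hat s_i\hat s_i^T/\|\hat s_i\|^2$, a direct congruence computation (using $g_i=As_i$) turns (\ref{eq:bfgs}) into $\hat H_{i+1}=(I-P)\hat H_i(I-P)+P$, whence $I-\hat H_{i+1}=(I-P)(I-\hat H_i)(I-P)$ because $I-P$ is an idempotent projector. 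Since $\hat H_0=L^{-1}A\preceq I$, congruence propagates $\hat H_i\preceq I$ for every $i$, i.e. $B_i^{-1}\preceq A^{-1}$ and hence $A\preceq B_i$. This yields $d_i=\frac{s_i^TAs_i}{s_i^TB_is_i}\le 1$, so $\sum_i(d_i-1)\le 0$; together with $\Tr(A^{-1}B_k)\ge 0$ (as $B_k\succ0$, positive definiteness being preserved by the curvature condition $s_i^Tg_i=s_i^TAs_i>0$), the sum is at most $\Tr(A^{-1}B_0)=L\,\Tr(A^{-1})\le \frac{nL}{\mu}$, using $A\succeq\mu I$.

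I expect the monotonicity $A\preceq B_i$ (equivalently $d_i\le1$) to be the main obstacle: the trace identity and the telescoping are routine linear algebra, but the inequality $d_i\le1$ rests entirely on this ordering, which is not transparent on the $B$-side and is most cleanly obtained from the congruence form $I-\hat H_{i+1}=(I-P)(I-\hat H_i)(I-P)$ of the inverse update above. A secondary point to pin down is the initialization convention $B_0=LI$, which is precisely what makes both the base case $A\preceq B_0$ and the final constant $\Tr(A^{-1}B_0)\le nL/\mu$ go through.
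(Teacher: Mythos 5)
Your proof is correct, and at its core it is the same argument as the paper's: both track the scalar potential $\Tr(A^{-1}B_i)$ (the paper's $\sigma_i=\langle A^{-1},B_i-A\rangle$ differs from yours only by the constant $n$), both exploit the exact curvature relation $g_i=As_i$ to show the BFGS update decreases this potential by $\frac{s_i^TB_iA^{-1}B_is_i}{s_i^TB_is_i}-1$ per step, and both telescope against the initialization $B_0=LI$. Your exact per-term identity (telescoping difference plus residual $d_i-1$) is algebraically equivalent to the paper's per-term inequality: the paper bounds each summand by $\sigma_i-\sigma_{i+1}$ via the matrix inequality $(B_i-A)A^{-1}(B_i-A)\preceq B_i(A^{-1}-B_i^{-1})B_i$, and the slack in that inequality evaluated at $s_i$ is exactly $s_i^TB_is_i-s_i^TAs_i=(1-d_i)\,s_i^TB_is_i$, i.e., precisely your residual. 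Where you genuinely go beyond the paper is the ordering $A\preceq B_i$: the paper invokes it without proof (both in the displayed matrix inequality and, implicitly, in asserting $\sigma_i\geq 0$ so that $\sigma_0-\sigma_k\leq\sigma_0$), whereas you derive it self-containedly from the congruence identity $I-\hat H_{i+1}=(I-P)(I-\hat H_i)(I-P)$ for the inverse update, propagated from $\hat H_0=L^{-1}A\preceq I$. Your closing step is also marginally more robust, needing only $\Tr(A^{-1}B_k)\geq 0$, which holds for any positive definite $B_k$, rather than $\sigma_k\geq 0$. Finally, the caveat you flag is real: the lemma statement never specifies the initialization, yet the bound requires $B_0=LI$ (equivalently $H_0=L^{-1}I$); the paper's proof silently assumes this via $\sigma_0=\sigma(A,LI)$, and the surrounding text is not even consistent on the convention (Theorem \ref{qfblprate} writes $H_0=LI$ while Lemma \ref{lemqfurate1} writes $H_0=(1/L)I$).
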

\begin{proof}
Define $\sigma_i:=\sigma(A, B_i) = \langle A^{-1}, B_i - A \rangle {=}{\rm Tr}(A^{-1}(B_i- A)) \geq 0$, then
		 \begin{equation}
	 \begin{split}
	 \sigma(A, B_i) - \sigma(A, B_{i+1}) &=\langle A^{-1}, B_i -B_{i+1}\rangle\\
  &=\frac{\langle B_iA^{-1}B_is_i, s_i \rangle}{\langle B_is_i, s_i \rangle}-1\\
   &=\frac{\left\langle B_i \left(A^{-1} - B_i^{-1}\right)B_i s_i, s_i \right\rangle}{\left\langle B_i s_i, s_i \right\rangle}\\
	 &\geq \frac{\left\langle (B_i - A)A^{-1}(B_i - A)s_i, s_i \right\rangle}{\left\langle B_i s_i, s_i \right\rangle},
	 \end{split}
	 \end{equation}
	where the last inequality follows from the fact that
	 \begin{align*}
(B_i - A)A^{-1}(B_i - A) &= B_iA^{-1}B_i - 2B_i + A \nonumber \\
\overset{A\preceq B_{i}}{\preceq} \ B_iA^{-1}B_i - B_i &= B_i(A^{-1} - B_i^{-1})B_i. \nonumber
\end{align*}

Thus, it is derived that
\begin{equation*}
	\sigma_i-\sigma_{i+1}\geq \frac{\left\langle (B_i - A)A^{-1}(B_i - A)s_i, s_i \right\rangle}{\left\langle B_i s_i, s_i \right\rangle},\quad \forall 0\leq i\leq {k-1}.
\end{equation*}

Finally, summing the above inequality over $i$ yields:
\begin{align*}
\sum_{i=0}^{k-1} \frac{\left\langle (B_i - A)A^{-1}(B_i - A)s_i, s_i \right\rangle}{\left\langle B_i s_i, s_i \right\rangle} &\leq \sigma_0 - \sigma_q {\leq} \sigma_0 = \sigma(A, LI) {=} \langle A^{-1}, LI - A\rangle \\
&{\leq} \left\langle A^{-1}, \frac{L}{\mu}A - A \right\rangle{=} n \left( \frac{L}{\mu} - 1 \right) {\leq} \frac{nL}{\mu}.
\end{align*}
\end{proof}

\begin{definition}
    Define
    \begin{equation}\label{pside}
    \psi(A, G) \triangleq \langle A^{-1}, G - A \rangle - \ln {\rm Det}(A^{-1} G),
\end{equation}
where ${\rm Det}$ denotes the determinant of the matrix.
\end{definition}

\begin{definition}
Define
\[
\theta(A, B, u) := \left[ \frac{\langle (B - A)A^{-1}(B - A)u, u \rangle}{\langle BA^{-1}Bu, u \rangle} \right]^{1/2},
\]
and let \(\vartheta : (-1, +\infty) \rightarrow \mathbb{R}\) be the univarite function:
\[\vartheta(t) :=  t - \ln(1 + t) \geq 0.\]
\end{definition}

\begin{remark}
 On the interval $[0,+\infty)$, $\vartheta(t)$  satisfies:
 \begin{equation}\label{vart}
     \frac{t^2}{2(1+t)}\leq \vartheta(t)\leq \frac{t^2}{2+t}.
 \end{equation}
\end{remark}

\begin{lemma}\label{lemhesj} {\rm(\cite{newqn}, Lemma 5.2 ) } 
 Define $J_i := \int_0^1 \nabla^2 g(y_i + t s_i) \, dt$ and $y_{i+1}=y_i + s_i$. Then, $J_i s_i=\nabla g(y_{i+1})-\nabla g(y_i)$. If $B_0=L I$, then $\forall i\geq 0$, the {\rm BFGS} matrix $B_{i}$ satisfies:
\begin{equation} \label{hesb}
\frac{1}{\xi_i} \nabla^2 g(y_i) \preceq B_i \preceq \xi_i\frac{L}{\mu} \nabla^2 g(y_i),
\end{equation}
\begin{equation}\label{hesj}
\frac{1}{\xi_{i+1}} J_i \preceq B_i\preceq \xi_{i+1}\frac{L}{\mu} J_i,
\end{equation}
where $r_i := \left\| s_i \right\|_{y_i}, \quad \xi_i := e^{M \sum_{j=0}^{i-1} r_j} \quad (\geq 1)$ and the strongly self-concordant constant $M$ of $g$.
\end{lemma}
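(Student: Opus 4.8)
The plan is to establish the two displayed Loewner bounds by induction on $i$, after disposing of the secant identity as a warm-up. The identity $J_i s_i = \nabla g(y_{i+1}) - \nabla g(y_i)$ is immediate from the fundamental theorem of calculus applied to $t \mapsto \nabla g(y_i + t s_i)$: differentiating gives $\frac{d}{dt}\nabla g(y_i + t s_i) = \nabla^2 g(y_i + t s_i)\,s_i$, and integrating over $[0,1]$ yields $J_i s_i$. The substance of the lemma lies in \eqref{hesb} and \eqref{hesj}, for which the two ingredients are (i) a Hessian-drift estimate coming from the strong self-concordance in Assumption \ref{ass:g}(ii), and (ii) a monotonicity (bounded-deterioration) property of the BFGS update.

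First I would record the drift estimate. For points $y,z$ with $r = \|z - y\|_{y}$, integrating Assumption \ref{ass:g}(ii) along the segment joining them gives a two-sided comparison of the form $e^{-Mr}\nabla^2 g(y) \preceq \nabla^2 g(z) \preceq e^{Mr}\nabla^2 g(y)$. Applying this to the integrand $\nabla^2 g(y_i + t s_i)$ and averaging over $t\in[0,1]$ produces the corresponding comparison between the averaged Hessian $J_i$ and each endpoint Hessian, each within a factor $e^{Mr_i}$ with $r_i = \|s_i\|_{y_i}$. This is precisely the mechanism by which $\xi_i = e^{M\sum_{j<i} r_j}$ accumulates one factor $e^{Mr_i}$ per iteration.

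Second, the induction. The base case $i=0$ is immediate: with $\xi_0 = 1$ and $B_0 = LI$, \eqref{hesb} reduces to $\nabla^2 g(y_0) \preceq LI \preceq \tfrac{L}{\mu}\nabla^2 g(y_0)$, which is just $\mu I \preceq \nabla^2 g(y_0)\preceq LI$. For the inductive step I would (a) combine \eqref{hesb} at $i$ with the drift estimate relating $\nabla^2 g(y_i)$ to $J_i$ in order to rewrite the sandwich in terms of $J_i$, which is exactly \eqref{hesj}; (b) apply BFGS monotonicity to transport this two-sided bound from $B_i$ to $B_{i+1}$, still measured against $J_i$ (legitimate because the secant pair obeys $g_i = J_i s_i$, so $J_i$ is the effective target of the update); and (c) invoke the drift estimate once more, now between $J_i$ and $\nabla^2 g(y_{i+1})$, to return to \eqref{hesb} at $i+1$ with the multiplier advancing from $\xi_i$ to $\xi_{i+1}$.

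The hard part is step (b): showing that a single BFGS update with secant pair $(s_i, J_i s_i)$ carries a relative sandwich $c J_i \preceq B \preceq C J_i$ (with $c\le 1\le C$, which holds here since $\xi_i,\,L/\mu \ge 1$) into $c J_i \preceq B_{+} \preceq C J_i$. In the coordinates $W = J_i^{-1/2} B\, J_i^{-1/2}$ the update becomes the BFGS map toward the identity along $v = J_i^{1/2} s_i$, fixing the eigenvalue $1$ in direction $v$; one must argue that no eigenvalue is pushed outside $[c,C]$. The natural tools are the potential $\psi$ and the auxiliary quantities $\theta$, $\vartheta$ introduced above, through monotonicity of $\psi(J_i,\cdot)$ under the update, mirroring the trace argument of Lemma \ref{lemqfu}. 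A secondary subtlety is the bookkeeping of the self-concordance factors: one must check that routing through $J_i$ in (a)--(c) costs a net factor of only $e^{Mr_i}$ per step rather than two, which relies on $J_i$ lying on the segment $[y_i,y_{i+1}]$ so that the two partial drifts do not double count, leaving \eqref{hesj} as a deliberately loose intermediate estimate.
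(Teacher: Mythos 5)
First, a point of reference: the paper never proves this lemma; it is imported verbatim from \cite{newqn} (Lemma 5.2), so your attempt can only be judged against the standard argument for that result. Your skeleton is that standard argument: the secant identity via the fundamental theorem of calculus, the base case $\mu I \preceq \nabla^2 g(y_0) \preceq LI = B_0$, and the three-move induction (drift from $\nabla^2 g(y_i)$ to $J_i$, sandwich preservation under the BFGS update with target $J_i$, drift from $J_i$ to $\nabla^2 g(y_{i+1})$). However, the step you yourself identify as "the hard part" is left unproven, and the tool you propose for it would not close it. Step (b) --- that $c J_i \preceq B_i \preceq C J_i$ with $c \le 1 \le C$ implies $c J_i \preceq B_{i+1} \preceq C J_i$ --- is the heart of the lemma, and you suggest obtaining it from monotonicity of the potential $\psi(J_i,\cdot)$, mirroring the trace argument of Lemma \ref{lemqfu}. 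That route cannot work: $\psi$ controls the spectrum of $J_i^{-1/2} B_i J_i^{-1/2}$ only in aggregate, and its initial value $\psi(J_0, LI)$ is of order $n(L/\mu - 1)$, so anything extracted from it is dimension-dependent and far weaker than the claimed two-sided Loewner bound. What actually proves (b) is an elementary variational identity: for the BFGS update with secant pair $(s_i, J_i s_i)$,
\begin{equation*}
x^T B_{i+1} x \;=\; \min_{t\in\mathbb{R}}\,(x - t s_i)^T B_i (x - t s_i) \;+\; \frac{(x^T J_i s_i)^2}{s_i^T J_i s_i},
\end{equation*}
so substituting $B_i \preceq C J_i$ gives $x^T B_{i+1} x \le C\,x^T J_i x - (C-1)\frac{(x^T J_i s_i)^2}{s_i^T J_i s_i} \le C\,x^T J_i x$ using $C \ge 1$, and substituting $B_i \succeq c J_i$ gives $x^T B_{i+1} x \ge c\,x^T J_i x + (1-c)\frac{(x^T J_i s_i)^2}{s_i^T J_i s_i} \ge c\,x^T J_i x$ using $c \le 1$. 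Without this (or an equivalent), the induction has no engine.

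The second gap is the factor bookkeeping, which you flag but do not resolve. With the drift estimate as you state it in your second paragraph --- each endpoint Hessian within a factor $e^{M r_i}$ of $J_i$ --- the induction pays $e^{M r_i}$ twice per iteration (once entering $J_i$, once leaving it) and delivers $\xi_i = e^{2M\sum_{j<i} r_j}$, i.e., twice the claimed exponent. The fix is quantitative, not structural: integrating the pointwise self-concordance bound $\nabla^2 g(y_i + t s_i) \preceq (1 + t M r_i)\,\nabla^2 g(y_i)$ over $t \in [0,1]$ gives $J_i \preceq (1 + \tfrac{M r_i}{2})\nabla^2 g(y_i) \preceq e^{M r_i/2}\,\nabla^2 g(y_i)$, and the analogous computations for the remaining three comparisons (two of which produce factors such as $\frac{M r_i}{\ln(1+M r_i)} \le e^{M r_i/2}$) show that each half-drift costs $e^{M r_i /2}$, so the per-iteration total is exactly $e^{M r_i}$ and \eqref{hesj} holds a fortiori as the loose intermediate bound. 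Until both of these points are carried out, your proposal is a correct outline of the right proof rather than a proof.
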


\begin{lemma}\label{generalju}
 When $B_0=L I$, the {\rm BFGS} matrix $B_{i}$ satisfies (\ref{hesb}) and (\ref{hesj}). If $\bar{\xi}=\underset{i=0,\cdots, k-1}{\rm max}\xi_{i+1}\leq 2$ in (\ref{hesj}),
then  the following inequality holds:
\begin{equation}\label{tildexi}
    \tilde{\xi} \sum_{i=0}^{k-1} \theta_{i}^2\leq n \left( \frac{L}{\mu}  - 1\right) +  \sum_{i=0}^{k-1} \Delta_i,
\end{equation}
where $\tilde{\xi}=\frac{1}{2 \left({\bar{\xi}}^2 + {\bar{\xi}}  \right)}$, $\theta_i:=\theta(J_i, B_i,u_i)$, $\psi_i:=\psi(J_i,B_i)$, $\tilde{\psi}_{i+1}:=\psi(J_i,B_{i+1})$, and $\Delta_i:= \psi_{i+1} - \tilde{\psi}_{i+1}$.
\end{lemma}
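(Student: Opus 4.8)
The plan is to treat $\psi(J_i,B_i)$ as a Lyapunov potential and pair a one-step BFGS decrease estimate with a telescoping sum, with $\Delta_i$ serving precisely to absorb the drift of the reference matrix from $J_i$ to $J_{i+1}$. Writing the decrease at a \emph{fixed} reference $J_i$ as $\psi_i-\tilde\psi_{i+1}=(\psi_i-\psi_{i+1})+\Delta_i$ and summing over $i=0,\dots,k-1$ collapses to $\sum_i(\psi_i-\tilde\psi_{i+1})=\psi_0-\psi_k+\sum_i\Delta_i$, so the whole statement reduces to (a) a per-step bound $\psi_i-\tilde\psi_{i+1}\ge\tilde\xi\,\theta_i^2$ and (b) the endpoint estimates $\psi_0\le n(L/\mu-1)$ and $\psi_k\ge0$.

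For the one-step identity I would exploit that $B_{i+1}$ is the BFGS update of $B_i$ enforcing the secant equation $B_{i+1}s_i=g_i=J_is_i$ (using $J_is_i=\nabla g(y_{i+1})-\nabla g(y_i)$ from Lemma \ref{lemhesj}); here the direction in $\theta_i$ is this secant direction, $u_i=s_i$. Passing to the congruent coordinates $\hat B:=J_i^{-1/2}B_iJ_i^{-1/2}$ and $v:=J_i^{1/2}s_i$ turns the update into the $A=I$ step $\hat B_{+}=\hat B-\hat Bvv^T\hat B/(v^T\hat Bv)+vv^T/(v^Tv)$ and leaves $\psi(J_i,\cdot)=\Tr(\cdot)-n-\ln{\rm Det}(\cdot)$ invariant. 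Abbreviating $p=v^Tv$, $q=v^T\hat Bv$, $r=v^T\hat B^2v$, the trace part contributes $r/q-1$ and the BFGS determinant identity ${\rm Det}\,\hat B_{+}={\rm Det}\,\hat B\cdot(p/q)$ contributes $\ln(q/p)$, giving
\[
\psi_i-\tilde\psi_{i+1}=\frac{r}{q}-1-\ln\frac{q}{p}.
\]

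The hard part will be turning this exact expression into the clean lower bound $\tilde\xi\,\theta_i^2$. I would split it as
\[
\psi_i-\tilde\psi_{i+1}=\Big(\frac{r}{q}-\frac{q}{p}\Big)+\vartheta\Big(\frac{q}{p}-1\Big),
\]
where the first bracket equals $(r/q)\,(rp-q^2)/(rp)\ge0$ by Cauchy--Schwarz ($q^2\le pr$), and match it against the algebraic identity $\theta_i^2=(r-2q+p)/r=(rp-q^2)/(rp)+(q-p)^2/(pr)$ term by term. The first terms compare via $r/q\ge\tilde\xi$, and for the second I would invoke (\ref{vart}) together with $\vartheta(t)\ge t^2/2$ for $t\in(-1,0)$ to obtain $\vartheta(q/p-1)\ge\tilde\xi\,(q-p)^2/(pr)$. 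All three inequalities follow from the spectral bounds $1/\bar\xi\le\lambda(\hat B)\le(L/\mu)\bar\xi$ supplied by (\ref{hesj}) with $\bar\xi\le2$: these give the Rayleigh-quotient estimates $r/q\ge1/\bar\xi$ and $r/p\ge1/\bar\xi^2$, and since $\tilde\xi=1/(2(\bar\xi^2+\bar\xi))$ one checks $r/q\ge\tilde\xi$, $r/(2q)\ge\tilde\xi$ and $r/(2p)\ge\tilde\xi$, which is exactly what each matched pair needs; summing the two pieces yields $\psi_i-\tilde\psi_{i+1}\ge\tilde\xi\,\theta_i^2$.

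Finally I would close the telescoping: summation gives $\tilde\xi\sum_{i=0}^{k-1}\theta_i^2\le\psi_0-\psi_k+\sum_{i=0}^{k-1}\Delta_i$. Here $\psi_k\ge0$ since $\psi(A,G)=\sum_j(\lambda_j-1-\ln\lambda_j)\ge0$ for the positive eigenvalues $\lambda_j$ of $A^{-1/2}GA^{-1/2}$, while $\psi_0=\psi(J_0,LI)\le n(L/\mu-1)$ because $\mu I\preceq J_0\preceq LI$ forces $L\,\Tr(J_0^{-1})\le nL/\mu$ and $\ln{\rm Det}(LJ_0^{-1})\ge0$. Substituting both bounds delivers (\ref{tildexi}).
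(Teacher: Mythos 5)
Your proof is correct, and it establishes the crucial per-step inequality by a genuinely different route from the paper. Both arguments share the same outer skeleton — prove $\psi_i-\tilde\psi_{i+1}\ge\tilde\xi\,\theta_i^2$, telescope via $\psi_i-\tilde\psi_{i+1}=(\psi_i-\psi_{i+1})+\Delta_i$, and finish with $\psi_k\ge0$ and $\psi_0=\psi(J_0,LI)\le n\left(L/\mu-1\right)$ — but the paper obtains the per-step bound by citing Lemma 2.4 of \cite{Nesterov2021rates}, which gives $\psi_i-\tilde\psi_{i+1}\ge\vartheta\bigl(\theta_i/\xi_{i+1}\bigr)$, then proving $\theta_i^2\le1$ (this is exactly where the hypothesis $\bar\xi\le2$ is consumed, to ensure $2B_i\succeq J_i$), and finally invoking (\ref{vart}). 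You instead compute the decrease exactly in congruent coordinates, $\psi_i-\tilde\psi_{i+1}=\frac{r}{q}-1-\ln\frac{q}{p}$, split it as $\bigl(\frac{r}{q}-\frac{q}{p}\bigr)+\vartheta\bigl(\frac{q}{p}-1\bigr)$, and match the two pieces against the algebraic decomposition $\theta_i^2=\frac{rp-q^2}{rp}+\frac{(q-p)^2}{pr}$, using only Cauchy--Schwarz and the Rayleigh bound $\lambda_{\min}(\hat B)\ge 1/\bar\xi$ supplied by (\ref{hesj}); your reading $u_i\parallel s_i$ of the direction in $\theta_i$ is the intended one, since in Algorithm \ref{alg:uk} the step is a scalar multiple of $u_i$ and $\theta$ is $0$-homogeneous in its last argument. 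What your route buys: it is fully self-contained (no external quasi-Newton lemma), and it never actually uses $\bar\xi\le2$ — the three inequalities $r/q\ge\tilde\xi$, $r/(2q)\ge\tilde\xi$, $r/(2p)\ge\tilde\xi$ hold for every $\bar\xi\ge1$ with $\tilde\xi=\frac{1}{2(\bar\xi^2+\bar\xi)}$ — so your argument proves the lemma with the hypothesis $\bar\xi\le 2$ deleted; what the paper's route buys is brevity, outsourcing the hard step to a known result. Only one small detail should be supplied in a full write-up: the bound $\vartheta(t)\ge t^2/2$ for $t\in(-1,0)$ is not part of (\ref{vart}) (which is stated on $[0,+\infty)$ only), but it is elementary, since $\frac{d}{dt}\bigl(\vartheta(t)-t^2/2\bigr)=-\frac{t^2}{1+t}\le0$ on $(-1,0)$ and the difference vanishes at $t=0$.
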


\begin{proof}
Note that
\begin{equation}\label{Jx}
    \frac{1}{\xi_{i+1}} J_i \preceq B_i \preceq  \frac{\xi_{i+1} L}{\mu} J_i.
\end{equation}

 From Lemma 2.4 of \cite{Nesterov2021rates}, it can be further deduced that:
 \begin{equation}
         \psi_i - \tilde{\psi}_{i+1} \geq \vartheta \left(\frac{1}{\xi_{i+1}}\theta_i \right).
 \end{equation}

 Since $\bar{\xi}=\underset{i=0,\cdots, k-1}{\rm max}\xi_{i+1}\leq 2$, it follows from the definition of $\theta_i$ that:
 \begin{equation}
     \theta_i^2=\frac{\langle (B_i - J_i)J_i^{-1}(B_i - J_i)u_i, u_i \rangle}{\langle B_iJ_i^{-1}B_iu_i, u_i \rangle}=1-\frac{\langle (2B_i - J_i)u_i, u_i \rangle}{\langle B_iJ_i^{-1}B_iu_i, u_i \rangle}\stackrel{(\ref{Jx})}{\leq} 1.
 \end{equation}
  Then, it is derived that:
 \begin{equation}
\vartheta \left( \frac{1}{\xi_{i+1}} \theta_i \right) \stackrel{(\ref{vart})}{\geq} \frac{\frac{1}{\xi_{i+1}^2} \theta_{i}^2}{2 \left(1 + \frac{1}{\xi_{i+1}} \theta_i \right)}  \geq\frac{\frac{1}{\xi_{i+1}^2} }{2 \left(1 + \frac{1}{\xi_{i+1}}  \right)}\theta_i^2= \frac{1}{2 \left(\xi_{i+1}^2 + \xi_{i+1}  \right)}\theta_i^2.
\end{equation}

Thus, it holds that:
\begin{equation}\label{psire}
    \tilde{\xi}\theta_i^2\leq \psi_i - \tilde{\psi}_{i+1}=\psi_i - \psi_{i+1}+\Delta_i,\quad \forall i\in \{0,\ldots, k-1\},
\end{equation}
where $ \tilde{\xi}=\frac{1}{2 \left(\bar{\xi}^2 + \bar{\xi}  \right)}$ and
\begin{equation}
\Delta_i:= \psi_{i+1} - \tilde{\psi}_{i+1} = \langle J^{-1}_{i+1} - J^{-1}_i, B_{i+1} \rangle + \ln \text{Det}(J^{-1}_i, J_{i+1}).
\end{equation}

By summing equation (\ref{psire}) over $i$ and given that \(\psi_k \geq 0\), it follows that:
\begin{equation}
\begin{split}
    & \tilde{\xi} \sum_{i=0}^{k-1} \theta_{i}^2 \leq \psi_0 - \psi_k + \sum_{i=0}^{k-1} \Delta_i \leq \psi_0 + \sum_{i=0}^{k-1} \Delta_i \\
&= \psi(J_0, LI) + \sum_{i=0}^{k-1} \Delta_i\\
&= \langle J^{-1}_0, LI - J_0 \rangle - \ln \text{Det}(J^{-1}_0, LI) + \sum_{i=0}^{k-1} \Delta_i \\
&\leq \langle J^{-1}_0, LI - J_0 \rangle + \sum_{i=0}^{k-1} \Delta_i\\
&\leq \langle J^{-1}_0, \frac{L}{\mu}J_0 - J_0 \rangle + \sum_{i=0}^{k-1} \Delta_i\\
&= n \left( \frac{L}{\mu}  - 1\right) +  \sum_{i=0}^{k-1} \Delta_i.
\end{split}
\end{equation}

\end{proof}

\paragraph{Notations:}
In step 1 of Algorithm \ref{alg:foa}, \(y_k^0 = y_k\) establishes the initial value of \(y\) at the start of the \(k\)-th iteration. After \(P\) warm-up iterations, denoted as \(y_{k,0} = y_k^P\), the term \(y_{k,T}\) refers to the state of \(y\) following \(T\) further iterations with \(x_k\) fixed. In the second step of Algorithm \ref{alg:foa}, \(u_{k,Q_k}\) represents the state of \(u\) after \(Q_k\) iterations, with both \(x_k\) and \(y_{k+1}\) fixed.

\subsection{Proof Sketch of Theorem \ref{gblprate}}\label{sec:proofske}
The proofs of Theorem \ref{qfblprate} and \ref{gblprate} encompasses three critical steps: first, it splits the hypergradient approximation error into the T-step error of estimating the lower level solution $\|y_{k,T}-y_k^*\|^2$ and the $Q_k$-step error $ \|u_{k,Q_k}-u_k^*\|^2$; second, it establishes upper bounds for these errors based on previous iteration errors; finally, it combines the above results to substantiate the theorem’s convergence. Since the proofs of Theorem \ref{qfblprate} and \ref{gblprate} are similar, we only elaborate on the proof sketch of Theorem \ref{gblprate} below.

\paragraph{Step 1:} Decomposing the hypergradient estimation error.

First, the hypergradient estimation error at the $k$th iteration can be bounded by:
\begin{equation}\label{phierror}
 \|\tilde{\nabla} \Phi(x_k)-\nabla\Phi(x_k)\|^2
    \leq 3\big(L_{F_{x}}^2+\frac{L_{f_{xy}}^2C_{F_{y}}^2}{\mu^2}\big)\|y_{k,T}-y^*_k\|^2+3M_{f_{xy}}^2\|u_{k,Q_k}-u_k^*\|^2.
\end{equation}

\paragraph{Step 2:} Upper-bounding the error.

The T-step error of estimating the lower level solution $\|y_{k,T}-y_k^*\|^2$  is bounded by:
\begin{equation}\label{yerror}
    \begin{split}
        \|y_k^*-y_{k,T}\|^2 &\leq \tau\|y_{k-1}^*-y_{k-1,T}\|^2+2(1+\frac{1}{\varepsilon})\kappa (\frac{1}{T})^{T}(1-\beta\mu)^PL_y^2\alpha^2\|\nabla \Phi(x_{k-1})\|^2\\
    &+12(1+\frac{1}{\varepsilon})\kappa (\frac{1}{T})^{T}(1-\beta\mu)^PL_y^2\alpha^2\frac{nLM_{f_{xy}}^2{C^2_{F_y}}}{\mu^3{\tilde{\xi}} Q_{k-1}},\\
    \end{split}
\end{equation}
where $\tau=\kappa (\frac{1}{T})^{T}(1-\beta\mu)^P\big((1+\varepsilon)+6(1+\frac{1}{\varepsilon})L_y^2\alpha^2(L_{F_{x}}^2+\frac{L_{f_{xy}}^2C^2_{F_{y}}}{\mu^2}+\frac{4M_{f_{xy}}^2L_{F_y}^2}{\mu^2}+\frac{4M_{f_{xy}}^2C^2_{F_y}L^2_{f_{yy}}}{\mu^4})\big)$.

The \(Q_k\)-step error $ \|u_{k,Q_k}-u_k^*\|$ is bounded by:
\begin{equation}\label{uerror}
 \|u_{k,Q_k}-u_k^*\|^2\leq 2\frac{nL{C^2_{F_y}}}{\tilde{\xi}\mu^3 Q_k}+4\left(\frac{L_{F_y}^2}{\mu^2}+\frac{C^2_{F_y}L^2_{f_{yy}}}{\mu^4}\right)\|y_k^*-y_{k,T}\|^2.
 \end{equation}
\paragraph{Step 3:} Combining Step 1 and Step 2.

Combining  {(\ref{phierror})}, {(\ref{yerror})} and {(\ref{uerror})}, the upper bound of the hypergradient estimation error is derived as:
\begin{equation*}
     \begin{split}
\|\tilde{\nabla} \Phi(x_k)-\nabla\Phi(x_k)\|^2&\leq \delta_0 \tau^k+\omega \alpha^2\sum_{j=0}^{k-1}{\tau^j \|\nabla \Phi(x_{k-1-j})\|^2}\\
&+6\omega\alpha^2\frac{nLM_{f_{xy}}^2{C^2_{F_y}}}{\mu^3 {\tilde{\xi}}}\sum_{j=0}^{k-1}{\tau^j \frac{1}{Q_{k-1-j}}}+6\frac{nLM_{f_{xy}}^2{C^2_{F_y}}}{\mu^3{\tilde{\xi}} Q_k},
     \end{split}
 \end{equation*}
with \[\delta_0=3\kappa (\frac{1}{T})^{T}(1-\beta\mu)^P(L_{F_{x}}^2+\frac{L_{f_{xy}}^2C_{F_{y}}^2}{\mu^2}+\frac{4M_{f_{xy}}^2L_{F_y}^2}{\mu^2}+\frac{4M_{f_{xy}}^2C^2_{F_y}L^2_{f_{yy}}}{\mu^4})\|y_0^*-y_{0}\|^2\] and
\[\omega=6(L_{F_{x}}^2+\frac{L_{f_{xy}}^2C_{F_{y}}^2}{\mu^2}+\frac{4M_{f_{xy}}^2L_{F_y}^2}{\mu^2}+\frac{4M_{f_{xy}}^2C^2_{F_y}L^2_{f_{yy}}}{\mu^4})(1+\frac{1}{\varepsilon})\kappa (\frac{1}{T})^{T}(1-\beta\mu)^PL_y^2.\]

Due to the $L_{\Phi}$-smoothness of  $\Phi$, the final convergence result can be  proved.

To prove Theorems \ref{qfblprate} and \ref{gblprate}, we first present the following lemma.
\begin{lemma} {\rm(Lemma 2.2 of \cite{ghadimi2018approximation}) } \label{prelemma}
Under Assumptions \ref{ass:F} and \ref{ass:f}, we have:
\begin{itemize}
    \item For all \(x, y\),
    \[
    \| \bar{\nabla} F(x; y) - \bar{\nabla} F(x; y^*(x)) \| \leq C \| y^*(x) - y \|,
    \]
    where $\bar{\nabla} F(x; y)=\nabla_x F(x, y)-[\nabla^2_{x y}f(x, y)]^T [\nabla^2_{y y} f(x, y)]^{-1} \nabla_y F(x, y)$ and \(C = L_{F_x} + \frac{L_{F_y} M_{f_{xy}}}{\mu} + C_{F_y} \left( \frac{L_{f_{xy}}}{\mu} + \frac{L_{f_{yy}} M_{f_{xy}}}{\mu^2} \right)\).

\item \( y^*(x) \) is \( L_y \)-Lipschitz continuous in $x$:
\[
\| y^*(x_1) - y^*(x_2) \| \leq L_y \| x_1 - x_2 \|,
\]
where \( L_y = \frac{M_{f_{xy}}}{\mu} \).

\item \( \nabla \Phi \) is \( L_\Phi \)-Lipschitz continuous in $x$:
\[
\| \nabla \Phi(x_1) - \nabla \Phi(x_2) \| \leq L_\Phi \| x_1 - x_2 \|,
\]
where \( L_\Phi = \frac{\left(\bar{L}_{F_y} + C\right) M_{f_{xy}}}{\mu} + L_{F_x} + C_{F_y} \left( \frac{\bar{L}_{f_{xy}} C_{F_y}}{\mu} + \frac{\bar{L}_{f_{yy}} M_{f_{xy}}}{\mu^2} \right) \).
\end{itemize}
\end{lemma}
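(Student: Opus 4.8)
The plan is to treat all three assertions as Lipschitz/sensitivity estimates and reduce each to bookkeeping over products of matrices and vectors whose operator norms and Lipschitz moduli are supplied by Assumptions \ref{ass:F} and \ref{ass:f}. Three uniform bounds will be used repeatedly: $\|\nabla^2_{xy}f\|\le M_{f_{xy}}$ (Assumption \ref{ass:f}(iv)), $\|[\nabla^2_{yy}f]^{-1}\|\le 1/\mu$ (from $\mu$-strong convexity, Assumption \ref{ass:f}(ii)), and $\|\nabla_y F\|\le C_{F_y}$ (Assumption \ref{ass:F}(iii)). The single inversion estimate I would invoke throughout is the resolvent identity $H_1^{-1}-H_2^{-1}=H_1^{-1}(H_2-H_1)H_2^{-1}$, which together with the Hessian-Lipschitz bounds gives $\|[\nabla^2_{yy}f(x,y_1)]^{-1}-[\nabla^2_{yy}f(x,y_2)]^{-1}\|\le \frac{L_{f_{yy}}}{\mu^2}\|y_1-y_2\|$ and the analogous $x$-increment bound with $\bar L_{f_{yy}}$.

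For the first assertion I would write $\bar\nabla F(x;y)-\bar\nabla F(x;y^*(x))$ as $\big(\nabla_x F(x,y)-\nabla_x F(x,y^*(x))\big)$ minus the difference of the triple products $[\nabla^2_{xy}f]^T[\nabla^2_{yy}f]^{-1}\nabla_y F$ evaluated at $y$ and at $y^*(x)$. The first bracket is at most $L_{F_x}\|y-y^*(x)\|$. For the triple product I add and subtract two intermediates so that only one factor moves at a time; each term is then a product of two uniform bounds and one Lipschitz increment, producing the three contributions $\frac{C_{F_y}L_{f_{xy}}}{\mu}$, $\frac{C_{F_y}L_{f_{yy}}M_{f_{xy}}}{\mu^2}$ and $\frac{L_{F_y}M_{f_{xy}}}{\mu}$ (times $\|y-y^*(x)\|$), which sum with $L_{F_x}$ to the stated $C$. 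For the second assertion I would avoid the implicit function theorem and instead subtract the optimality conditions $\nabla_y f(x_1,y^*(x_1))=\nabla_y f(x_2,y^*(x_2))=0$ and pair the result with $y^*(x_1)-y^*(x_2)$: $\mu$-strong convexity lower-bounds the inner product by $\mu\|y^*(x_1)-y^*(x_2)\|^2$, while the same quantity equals $\langle \nabla_y f(x_2,y^*(x_2))-\nabla_y f(x_1,y^*(x_2)), y^*(x_1)-y^*(x_2)\rangle\le M_{f_{xy}}\|x_1-x_2\|\,\|y^*(x_1)-y^*(x_2)\|$, where the $x$-increment of $\nabla_y f$ is controlled by $\|\nabla^2_{xy}f\|\le M_{f_{xy}}$. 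Cancelling one factor gives $L_y=M_{f_{xy}}/\mu$.

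For the third assertion I would use $\nabla\Phi(x)=\bar\nabla F(x;y^*(x))$ and split $\nabla\Phi(x_1)-\nabla\Phi(x_2)$ into a pure $y$-variation $\bar\nabla F(x_1;y^*(x_1))-\bar\nabla F(x_1;y^*(x_2))$ and a pure $x$-variation $\bar\nabla F(x_1;y^*(x_2))-\bar\nabla F(x_2;y^*(x_2))$. The first is bounded by the first assertion and the Lipschitz continuity of $y^*$, contributing $C L_y\|x_1-x_2\|$, which is exactly the $\frac{C M_{f_{xy}}}{\mu}$ piece of $L_\Phi$. The second is handled by the same three-factor decomposition as before, but now differentiating in the $x$-slot, so the direct $\nabla_x F$ term and the constants $\bar L_{F_y}$, $\bar L_{f_{xy}}$ and $\bar L_{f_{yy}}$ (Assumptions \ref{ass:F}(ii) and \ref{ass:f}(v)) enter through the $x$-increments of the three factors. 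I expect the main obstacle to be the bookkeeping in this last step: one must simultaneously track the direct $x$-dependence of each factor, insert the resolvent bound with $\bar L_{f_{yy}}$ and the Jacobian bound $\|\nabla^2_{xy}f(x_1,y)-\nabla^2_{xy}f(x_2,y)\|\le \bar L_{f_{xy}}\|x_1-x_2\|$ in the correct slots, and combine these with the $CL_y$ contribution so that the collected constant assembles into the stated $L_\Phi$. Everything else is a routine triangle-inequality estimate once the uniform bounds are fixed.
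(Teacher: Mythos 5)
The paper never proves this lemma: it is imported verbatim as Lemma~2.2 of \cite{ghadimi2018approximation}, so the only thing to compare your sketch against is that reference's standard argument, which your first two bullets reproduce faithfully. Your telescoping of the triple product $[\nabla^2_{xy}f]^T[\nabla^2_{yy}f]^{-1}\nabla_y F$ one factor at a time, with the resolvent identity supplying the $\frac{L_{f_{yy}}}{\mu^2}$ increment for the inverse Hessian, yields exactly the stated $C$; and your proof of the second bullet (subtracting the two optimality conditions, invoking strong monotonicity of $\nabla_y f(x_1,\cdot)$, and controlling the $x$-increment of $\nabla_y f$ by $M_{f_{xy}}$) is precisely how the cited reference obtains $L_y = M_{f_{xy}}/\mu$, so these parts are correct.

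The third bullet is where your sketch has a genuine gap, albeit one inherited from the statement itself. Your pure $x$-variation term requires a bound on $\|\nabla_x F(x_1,y)-\nabla_x F(x_2,y)\|$ proportional to $\|x_1-x_2\|$, i.e.\ Lipschitz continuity of $\nabla_x F(\cdot,y)$ in $x$; no such condition appears in Assumption~\ref{ass:F}, which only gives Lipschitzness of $\nabla_x F$ in $y$ (constant $L_{F_x}$) and of $\nabla_y F$ in both variables. The appearance of $L_{F_x}$ in the stated $L_\Phi$ shows the lemma is silently using $L_{F_x}$ as a joint Lipschitz constant, and without that extra hypothesis the claim is false: take $f(x,y)=\tfrac12\|y\|^2$ (so $M_{f_{xy}}=0$ and $y^*(x)\equiv y_0$) and $F(x,y)=h(x)$ with $\nabla h$ non-Lipschitz; all of Assumptions~\ref{ass:F} and \ref{ass:f} hold, yet $\nabla\Phi=\nabla h$ is not Lipschitz. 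So where you write that ``the direct $\nabla_x F$ term \ldots{} enters through the $x$-increments,'' you must either add the missing assumption explicitly or flag that the lemma as stated cannot be derived from the listed hypotheses. Separately, your bookkeeping will produce the Jacobian contribution $\frac{C_{F_y}\bar L_{f_{xy}}}{\mu}$, whereas the paper's $L_\Phi$ contains $C_{F_y}\cdot\frac{\bar L_{f_{xy}}C_{F_y}}{\mu}$, i.e.\ an extra factor of $C_{F_y}$; this is a transcription error relative to the original constant in \cite{ghadimi2018approximation}, so you should not expect (or force) your collected constants to ``assemble into the stated $L_\Phi$'' exactly---they assemble into the correct, slightly different one.
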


\subsection{Proof  of Theorem \ref{qfblprate}}\label{sec:proofqfrate}

\begin{lemma}
Suppose that Assumptions \ref{ass:F} and \ref{ass:f} hold. The error between the approximate hypergradient $\tilde{\nabla} \Phi(x_k)$ and the true hypergradient in Algorithm \ref{alg:foa} can be bounded by:
    \begin{equation}\label{eqphi}
    \|\tilde{\nabla} \Phi(x_k)-\nabla\Phi(x_k)\|^2
    \leq 3\big(L_{F_{x}}^2+\frac{L_{f_{xy}}^2C_{F_{y}}^2}{\mu^2}\big)\|y_{k,T}-y^*_k\|^2+3M_{f_{xy}}^2\|u_{k,Q_k}-u_k^*\|^2.
\end{equation}
\end{lemma}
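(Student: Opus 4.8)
The plan is to write both $\nabla\Phi(x_k)$ and $\tilde{\nabla}\Phi(x_k)$ explicitly from their definitions, form the difference, and split it into three pieces that each isolate a single source of error. Recall $\nabla\Phi(x_k)=\nabla_x F(x_k,y_k^*)-[\nabla^2_{xy}f(x_k,y_k^*)]^T u_k^*$ and $\tilde{\nabla}\Phi(x_k)=\nabla_x F(x_k,y_{k,T})-[\nabla^2_{xy}f(x_k,y_{k,T})]^T u_{k,Q_k}$, where $u_k^*=[\nabla^2_{yy}f(x_k,y_k^*)]^{-1}\nabla_y F(x_k,y_k^*)$. Subtracting and inserting the cross term $[\nabla^2_{xy}f(x_k,y_{k,T})]^T u_k^*$, I would write the difference as $A-B-C$ with
\[
A=\nabla_x F(x_k,y_{k,T})-\nabla_x F(x_k,y_k^*),
\]
\[
B=\big([\nabla^2_{xy}f(x_k,y_{k,T})]^T-[\nabla^2_{xy}f(x_k,y_k^*)]^T\big)u_k^*,
\]
\[
C=[\nabla^2_{xy}f(x_k,y_{k,T})]^T(u_{k,Q_k}-u_k^*).
\]

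Next I would apply the elementary inequality $\|A-B-C\|^2\le 3(\|A\|^2+\|B\|^2+\|C\|^2)$, which accounts for the factor $3$ in the statement, and bound each term separately. For $A$, Assumption \ref{ass:F}(i) gives $\|A\|\le L_{F_x}\|y_{k,T}-y_k^*\|$. For $C$, the uniform operator-norm bound of Assumption \ref{ass:f}(iv) gives $\|C\|\le M_{f_{xy}}\|u_{k,Q_k}-u_k^*\|$. For $B$, the Lipschitz continuity of the mixed Hessian in Assumption \ref{ass:f}(iii) bounds the Jacobian difference by $L_{f_{xy}}\|y_{k,T}-y_k^*\|$, so it only remains to control $\|u_k^*\|$.

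The single nonroutine ingredient — and the one I expect to be the crux, although it is short — is the a priori bound $\|u_k^*\|\le C_{F_y}/\mu$. This follows by combining strong convexity of $f$ in $y$ (Assumption \ref{ass:f}(ii)), which yields $\|[\nabla^2_{yy}f(x_k,y_k^*)]^{-1}\|\le 1/\mu$, with the uniform bound $\|\nabla_y F(x_k,y_k^*)\|\le C_{F_y}$ from Assumption \ref{ass:F}(iii); hence $\|B\|\le (L_{f_{xy}}C_{F_y}/\mu)\|y_{k,T}-y_k^*\|$. Substituting the three estimates into $3(\|A\|^2+\|B\|^2+\|C\|^2)$ and grouping the two terms proportional to $\|y_{k,T}-y_k^*\|^2$ produces exactly the coefficient $3\big(L_{F_x}^2+L_{f_{xy}}^2C_{F_y}^2/\mu^2\big)$, together with the term $3M_{f_{xy}}^2\|u_{k,Q_k}-u_k^*\|^2$, which is precisely (\ref{eqphi}). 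No use of the particular BFGS/SR1 structure of $u_{k,Q_k}$ or $y_{k,T}$ is needed here, since this lemma treats them as arbitrary approximants; the two resulting error terms $\|y_{k,T}-y_k^*\|^2$ and $\|u_{k,Q_k}-u_k^*\|^2$ are then controlled in the subsequent steps of the convergence analysis.
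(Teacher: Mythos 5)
Your proposal is correct and follows essentially the same route as the paper's proof: the identical insertion of the cross term $[\nabla^2_{xy}f(x_k,y_{k,T})]^T u_k^*$, the same three-way splitting via $\|A-B-C\|^2\le 3(\|A\|^2+\|B\|^2+\|C\|^2)$, and the same assumption-by-assumption bounds, including the key estimate $\|u_k^*\|\le C_{F_y}/\mu$ from strong convexity and the bounded gradient of $F$. Nothing is missing; the argument matches the paper's term for term.
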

\begin{proof}
Let $u_k^*=[\nabla^2_{y y} f(x_k, y^*_k)]^{-1}\nabla_y F(x_k, y^*_k)$, then
\begin{equation*}
    \begin{split}
        &\|\tilde{\nabla} \Phi(x_k)-\nabla\Phi(x_k)\|^2\\
        =&\big\|\nabla_{x} F(x_k, y_{k,T})-[\nabla^2_{x y}f(x_k, y_{k,T})]^Tu_{k,Q_k}-\big(\nabla_{x} F(x_k, y^*_k)-[\nabla^2_{xy}f(x_k, y^*_k)]^Tu_k^*\big)\big\|^2\\
        =&\|\nabla_{x} F(x_k, y_{k,T})-\nabla_{x} F(x_k, y^*_k)-([\nabla^2_{x y}f(x_k, y_{k,T})]^Tu_{k,Q_k}-[\nabla^2_{xy}f(x_k, y^*_k)]^Tu_k^*)\|^2\\
        =&\|\nabla_{x} F(x_k, y_{k,T})-\nabla_{x} F(x_k, y^*_k)\\
        -&\big([\nabla^2_{x y}f(x_k, y_{k,T})]^Tu_{k,Q_k}-[\nabla^2_{x y}f(x_k, y_{k,T})]^Tu_k^*-([\nabla^2_{xy}f(x_k, y^*_k)]^Tu_k^*-[\nabla^2_{x y}f(x_k, y_{k,T})]^Tu_k^*)\big)\|^2\\
        =&\|\nabla_{x} F(x_k, y_{k,T})-\nabla_{x} F(x_k, y^*_k)\\
        -&[\nabla^2_{x y}f(x_k, y_{k,T})]^T(u_{k,Q_k}-u_k^*)-\big([\nabla^2_{x y}f(x_k, y_{k,T})]^T-[\nabla^2_{xy}f(x_k, y^*_k)]^T\big)u_k^*\|^2\\
        \leq&3\|\nabla_{x} F(x_k, y_{k,T})-\nabla_{x} F(x_k, y^*_k)\|^2+3\|\nabla^2_{x y}f(x_k, y_{k,T})\|^2\|u_{k,Q_k}-u_k^*\|^2\\
        &+3\|\nabla^2_{x y}f(x_k, y_{k,T})-\nabla^2_{xy}f(x_k, y^*_k)\|^2\|u_k^*\|^2.
    \end{split}
\end{equation*}

Based on Assumptions \ref{ass:F} and \ref{ass:f}, it can be derived that:
\begin{align*}
    &\|\tilde{\nabla} \Phi(x_k)-\nabla\Phi(x_k)\|^2\\
    \leq &3L_{F_{x}}^2 \|y_{k,T}-y^*_k\|^2+3M_{f_{xy}}^2\|u_{k,Q_k}-u_k^*\|^2+3\frac{L_{f_{xy}}^2C_{F_{y}}^2}{\mu^2}\|y_{k,T}-y^*_k\|^2\\
    =&3\big(L_{F_{x}}^2+\frac{L_{f_{xy}}^2C_{F_{y}}^2}{\mu^2}\big)\|y_{k,T}-y^*_k\|^2+3M_{f_{xy}}^2\|u_{k,Q_k}-u_k^*\|^2.
\end{align*}
\end{proof}

\begin{lemma}\label{lemqfurate1}
When the qNBO algorithm (Algorithm \ref{alg:foa}) is applied to solve the problem (\ref{blp}), if  the LL objective function $f$ takes the quadratic form (\ref{qf}) and $H_0 = (1/L)I$, it holds that:
\begin{equation*}
\sum_{i=1}^{Q_k}\|A^{-1}\nabla_y F(x_k, y_{k+1})-u_{k,i}\|\leq \frac{C_{F_y}}{\mu}\sqrt{\frac{nLQ_k}{\mu}},
\end{equation*}
with $Q_k>1$.
\end{lemma}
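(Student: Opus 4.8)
The plan is to exploit the fact that in the quadratic setting (\ref{qf}) the secant pairs generated inside subroutine $\mathcal{B}$ (Algorithm~\ref{alg:uk}) are \emph{exact}: since $\nabla_y f(x,y)=Ay-x$, every pair satisfies $\tilde{g}_i=\nabla_y f(x_k,y_{k+1}+\tilde{s}_i)-\nabla_y f(x_k,y_{k+1})=A\tilde{s}_i$. Writing $d=\nabla_y F(x_k,y_{k+1})$, letting $B_i=H_i^{-1}$ denote the BFGS (inverse) approximation built from $\{\tilde{s}_j,\tilde{g}_j\}_{j=0}^{i-1}$, and $u_{k,i}=H_i d$, the quantity to control is $\|A^{-1}d-u_{k,i}\|$. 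First I would record the algebraic identity $A^{-1}d-u_{k,i}=A^{-1}(B_i-A)u_{k,i}$ (using $B_i u_{k,i}=d$), which combined with $\tilde{s}_i=\xi_i u_{k,i}$ gives $A^{-1}d-u_{k,i}=\xi_i^{-1}A^{-1}(B_i\tilde{s}_i-A\tilde{s}_i)$. This is exactly the residual vector appearing in Lemma~\ref{lemqfu}.

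The second step is a uniform spectral bound $A\preceq B_i$ (equivalently $H_i\preceq A^{-1}\preceq \mu^{-1}I$) that holds throughout the recursion. I would prove it in the transformed coordinates $\hat{H}_i:=A^{1/2}H_iA^{1/2}$, in which the exact secant $\tilde{g}_i=A\tilde{s}_i$ becomes a secant to the identity, so the BFGS inverse update reads $\hat{H}_{i+1}=P_i^\perp \hat{H}_i P_i^\perp+P_i$, where $P_i$ is the orthogonal projector onto $A^{1/2}\tilde{s}_i$ and $P_i^\perp=I-P_i$. Since $\hat{H}_0=A^{1/2}(L^{-1}I)A^{1/2}=L^{-1}A\preceq I$, and $P_i^\perp\hat{H}_iP_i^\perp\preceq P_i^\perp$ whenever $\hat{H}_i\preceq I$, a one-line induction yields $\hat{H}_i\preceq I$, hence $A\preceq B_i$. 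This bound serves two purposes: it is the hypothesis that legitimizes Lemma~\ref{lemqfu} (whose proof uses $A\preceq B_i$), and, since $\|d\|\le C_{F_y}$ by Assumption~\ref{ass:F}(iii), it gives $\tilde{s}_i^T B_i\tilde{s}_i=\xi_i^2\, d^T H_i d\le \xi_i^2 C_{F_y}^2/\mu$.

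The third step assembles the pieces. Using $A^{-2}\preceq \mu^{-1}A^{-1}$ together with the identity of Step~1,
\[
\|A^{-1}d-u_{k,i}\|=\tfrac{1}{\xi_i}\|A^{-1}(B_i\tilde{s}_i-A\tilde{s}_i)\|\le \tfrac{1}{\xi_i\sqrt{\mu}}\Big((B_i\tilde{s}_i-A\tilde{s}_i)^TA^{-1}(B_i\tilde{s}_i-A\tilde{s}_i)\Big)^{1/2}.
\]
Factoring out $\tilde{s}_i^TB_i\tilde{s}_i$ and invoking the Step-2 bound on it, each term is at most $\frac{C_{F_y}}{\mu}\sqrt{a_i}$, where $a_i:=\frac{(B_i\tilde{s}_i-A\tilde{s}_i)^TA^{-1}(B_i\tilde{s}_i-A\tilde{s}_i)}{\tilde{s}_i^TB_i\tilde{s}_i}$. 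Summing over the $Q_k$ iterates and applying Cauchy--Schwarz yields $\sum_i\|A^{-1}d-u_{k,i}\|\le \frac{C_{F_y}}{\mu}\sqrt{Q_k}\big(\sum_i a_i\big)^{1/2}$, and Lemma~\ref{lemqfu}, applied with the steps $\tilde{s}_i$ (valid since $\tilde{g}_i=A\tilde{s}_i$ and $A\preceq B_i$), bounds $\sum_i a_i\le nL/\mu$, giving the claimed $\frac{C_{F_y}}{\mu}\sqrt{nLQ_k/\mu}$.

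The hard part will be the uniform bound $A\preceq B_i$ of Step~2: it is simultaneously the hypothesis needed to invoke Lemma~\ref{lemqfu} for the $\mathcal{B}$-recursion (whose steps $\tilde{s}_i$ are chosen along $u_{k,i}$ rather than by minimizing $f$) and the mechanism that converts $\tilde{s}_i^TB_i\tilde{s}_i$ into a clean $C_{F_y}^2/\mu$ bound via $\|d\|\le C_{F_y}$. Some care is also needed to reconcile the index convention of Algorithm~\ref{alg:uk} (iterates $u_0,\dots,u_{Q_k-1}$, hence $Q_k$ residual terms and $Q_k$ secant pairs) with the $\sum_{i=1}^{Q_k}$ in the statement, but this is purely bookkeeping.
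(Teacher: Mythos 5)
Your proof is correct and follows essentially the same route as the paper: both reduce the residuals $\|A^{-1}\nabla_y F(x_k,y_{k+1})-u_{k,i}\|$ to the weighted sum controlled by Lemma~\ref{lemqfu} (using $B_{k,i}u_{k,i}=d$ and the exact secants $\tilde{g}_i=A\tilde{s}_i$), bound the denominator via $\|d\|\le C_{F_y}$ and $B_{k,i}\succeq A\succeq\mu I$, and finish with Cauchy--Schwarz over the $Q_k$ terms. The only substantive difference is that you explicitly prove the key spectral bound $A\preceq B_{k,i}$ by the projector-induction argument in $A^{1/2}$-coordinates, whereas the paper simply asserts $A\preceq B_{k,i}\preceq \frac{L}{\mu}A$ without proof, so your write-up actually closes a gap the paper leaves implicit.
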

\begin{proof}
From Lemma \ref{lemqfu}, it follows that:
	\begin{equation*}
\begin{split}
	&\sum_{i=1}^{Q_k}	\frac{(A^{-1}\nabla_y F(x_k, y_{k+1})-u_{k,i})^T A (A^{-1}\nabla_y F(x_k, y_{k+1})-u_{k,i})}{u_{k,i}^T B_{k,i} u_{k,i}}\\
	=&\sum_{i=1}^{Q_k}	\frac{(A^{-1}\nabla_y F(x_k, y_{k+1})-u_{k,i})^T A (A^{-1}\nabla_y F(x_k, y_{k+1})-u_{k,i})}{\nabla_y F(x_k, y_{k+1})^T B_{k,i}^{-1}\nabla_y F(x_k, y_{k+1})}\\
	\leq  &\frac{nL}{\mu},
\end{split}
\end{equation*}
with $B_{k,i}=H_{k,i}^{-1}$.

Since $\mu I \preceq A\preceq LI$ and $A\preceq B_{k,i}\preceq\frac{L}{\mu}A$, we have:
\begin{equation*}
	\sum_{i=1}^{Q_k}	\frac{\mu \|A^{-1}\nabla_y F(x_k, y_{k+1})-u_{k,i}\|^2}{\frac{1}{\mu}\|\nabla_y F(x_k, y_{k+1})\|^2}\leq \frac{nL}{\mu}.
\end{equation*}

Since $\|\nabla_y F(x_k, y_{k+1})\|\leq C_{F_y}$, it can be further derived that:
\begin{equation}\label{eq:quad}
	\sum_{i=1}^{Q_k} \|A^{-1}\nabla_y F(x_k, y_{k+1})-u_{k,i}\|^2\leq \frac{nL{C^2_{F_y}}}{\mu^3}.
\end{equation}

Finally, by applying the Cauchy-Schwarz inequality, we can deduce that
\begin{equation}
\sum_{i=1}^{Q_k}\|A^{-1}\nabla_y F(x_k, y_{k+1})-u_{k,i}\|\leq \frac{C_{F_y}}{\mu}\sqrt{\frac{nLQ_k}{\mu}}.
\end{equation}
\end{proof}

\begin{lemma}\label{lemy1}
 Choose the parameters  $\beta$ and  $P$ such that $(1-\beta\mu)^P\|y_{k}-y_{k}^*\|\leq \frac{1}{300\sqrt{\mu}}$, and ensure $H_0$ satisfies: $\|\nabla^2_{yy}f(x_k,y^*(x_k))^{-1/2}\big(H_0^{-1}-\nabla^2_{yy}f(x_k,y^*(x_k))\big)\nabla^2_{yy}f(x_k,y^*(x_k))^{-1/2}\|_F\leq\frac{1}{7}$.  Then, under Assumptions \ref{ass:F} and \ref{ass:f}, it holds that
\begin{equation}\label{eqy1}
   \|y_k^*-y_{k,T}\|^2\leq (1+\varepsilon)\kappa (\frac{1}{T})^{T}(1-\beta\mu)^P\|y_{k-1,T}-y_{k-1}^*\|^2+(1+\frac{1}{\varepsilon})\kappa (\frac{1}{T})^{T}(1-\beta\mu)^PL_y^2\|x_{k}-x_{k-1}\|^2,
\end{equation}
    with a positive constant $\varepsilon$.
\end{lemma}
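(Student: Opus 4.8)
The plan is to chain together three estimates: a superlinear contraction coming from the $T$ BFGS steps inside $\mathcal{A}(x_k,y_k)$, a linear contraction coming from the $P$ warm-up gradient steps, and a triangle-inequality bound (using Lipschitz continuity of $y^*$) that rewrites the resulting distance at iteration $k$ in terms of the iteration $k-1$ quantities appearing in (\ref{eqy1}).

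First I would apply Lemma \ref{lemuserate} to the BFGS phase of $\mathcal{A}(x_k,y_k)$, which minimizes the $\mu$-strongly convex, $L$-smooth map $f(x_k,\cdot)$ with minimizer $y_k^*$. The BFGS iterates begin at $y_{k,0}$ with initial inverse-Hessian approximation $H_0$, so $B_0=H_0^{-1}$. The hypothesis on $H_0$ assumed in the lemma is precisely the second hypothesis of Lemma \ref{lemuserate} (the $\tfrac17$ Frobenius bound), while the warm-up assumption is designed to supply the first hypothesis $\|\nabla^2_{yy}f(x_k,y_k^*)^{1/2}(y_{k,0}-y_k^*)\|\le \tfrac{1}{300}$. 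Granting both, Lemma \ref{lemuserate} at index $T$ and squaring gives
\[
\|y_{k,T}-y_k^*\|^2 \le \kappa\left(\tfrac1T\right)^T\|y_{k,0}-y_k^*\|^2 .
\]

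Second I would control the warm-up phase. The $P$ iterations $y^{j+1}=y^j-\beta\nabla_y f(x_k,y^j)$ are gradient descent on $f(x_k,\cdot)$; for $\beta$ in the admissible range $\beta\le 2/(\mu+L)$ each step contracts the squared distance to $y_k^*$ by the factor $(1-\beta\mu)$ (via co-coercivity of the gradient), so $\|y_{k,0}-y_k^*\|^2\le (1-\beta\mu)^P\|y_k-y_k^*\|^2$. Substituting into the display above yields
\[
\|y_{k,T}-y_k^*\|^2 \le \kappa\left(\tfrac1T\right)^T(1-\beta\mu)^P\|y_k-y_k^*\|^2 .
\]
Third I would transfer the factor $\|y_k-y_k^*\|^2$ back to the previous iteration. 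Because $\mathcal{A}(x_{k-1},y_{k-1})$ returns $y_k$, we have $y_k=y_{k-1,T}$; writing $y_k^*=y^*(x_k)$, $y_{k-1}^*=y^*(x_{k-1})$ and combining the triangle inequality with the $L_y$-Lipschitz continuity of $y^*(\cdot)$ from Lemma \ref{prelemma} gives $\|y_k-y_k^*\| \le \|y_{k-1,T}-y_{k-1}^*\| + L_y\|x_k-x_{k-1}\|$. Squaring and applying Young's inequality $(a+b)^2\le (1+\varepsilon)a^2+(1+\tfrac1\varepsilon)b^2$, then multiplying through by $\kappa(\tfrac1T)^T(1-\beta\mu)^P$, produces exactly (\ref{eqy1}).

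I expect the main obstacle to be the first step: rigorously verifying that the warm-up iterate $y_{k,0}$ lands in the local superlinear region demanded by Lemma \ref{lemuserate}. The hypothesis there is phrased in the weighted norm $\|\nabla^2_{yy}f(x_k,y_k^*)^{1/2}(\cdot)\|$, so the Euclidean warm-up bound must be converted using the spectral estimates $\mu I\preceq\nabla^2_{yy}f(x_k,y_k^*)\preceq LI$, and the constants (the $\sqrt{\mu}$ in the warm-up condition versus the $\sqrt{L}$ produced by the spectral upper bound, together with the precise power of $(1-\beta\mu)$ relating the squared and non-squared contractions) must be tracked with care. Everything after that hypothesis check is a routine assembly of the contraction and Lipschitz estimates above.
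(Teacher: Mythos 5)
Your proposal follows essentially the same route as the paper's proof: apply Lemma \ref{lemuserate} to the BFGS phase to obtain the $\kappa\left(\tfrac{1}{T}\right)^{T}$ factor, use the $(1-\beta\mu)^{P}$ contraction of the warm-up gradient steps, and then combine $y_k=y_{k-1,T}$ with Young's inequality and the $L_y$-Lipschitz continuity of $y^*(\cdot)$ from Lemma \ref{prelemma}. The one step you flag as delicate --- converting the Euclidean warm-up bound into the weighted-norm hypothesis of Lemma \ref{lemuserate}, where the $\sqrt{\mu}$ in the assumption versus the $\sqrt{L}$ from the spectral bound must be reconciled --- is exactly the step the paper dispatches in a single sentence asserting the hypothesis holds, so your treatment is, if anything, more careful than the published argument.
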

\begin{proof}
 Under the setting of parameters $\beta, P$ and $H_0$, the condition (\ref{yass}) is satisfied. 
    Furthermore, based on Lemma \ref{lemuserate} and the fact that $y_{k,0}=y_{k}^P$ and $y_{k}=y_{k}^0$, it holds that:
    \begin{equation*}
        \|y_{k,T}-y_k^*\|^2\leq \kappa(\frac{1}{T})^{T}\|y_{k,0}-y_k^*\|^2\leq \kappa(\frac{1}{T})^{T}(1-\beta\mu)^P\|y_{k}-y_k^*\|^2,
    \end{equation*}
    where $\kappa=\frac{L}{\mu}$. Finally, since $y_{k}=y_{k-1,T}$, using Young's inequality yields:
    \begin{align*}
        \|y_{k,T}-y_k^*\|^2&\leq (1+\varepsilon)\kappa (\frac{1}{T})^{T}(1-\beta\mu)^P\|y_{k-1,T}-y_{k-1}^*\|^2+(1+\frac{1}{\varepsilon})\kappa (\frac{1}{T})^{T}(1-\beta\mu)^P\|y_{k-1}^*-y_{k}^*\|^2\\
        &\leq (1+\varepsilon)\kappa (\frac{1}{T})^{T}(1-\beta\mu)^P\|y_{k-1,T}-y_{k-1}^*\|^2+(1+\frac{1}{\varepsilon})\kappa (\frac{1}{T})^{T}(1-\beta\mu)^PL_y^2\|x_{k-1}-x_{k}\|^2,
    \end{align*}
    where $\varepsilon$ is a positive constant and the last inequality  follows from Lemma 2.2 in \cite{ghadimi2018approximation}.
\end{proof}

\begin{lemma}\label{tgc}
 If the LL function $f$ takes the quadratic form and $T\geq t_b$, under Assumptions \ref{ass:F} and \ref{ass:f}, it is derived that 
\begin{equation}\label{eqy}
   \|y_k^*-y_{k,T}\|^2\leq (1+\varepsilon)c_t^2 \kappa^3 (\frac{1}{T})^{T}\|y_{k-1,T}-y_{k-1}^*\|^2+(1+\frac{1}{\varepsilon})c_t^2\kappa^3 (\frac{1}{T})^{T}L_y^2\|x_{k}-x_{k-1}\|^2,
\end{equation}
    with $t_b=4n{\rm ln}\frac{L}{\mu}$ and a positive constant $\varepsilon$.
\end{lemma}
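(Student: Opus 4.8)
The plan is to mirror the proof of Lemma~\ref{lemy1}, but replacing the local superlinear rate of Lemma~\ref{lemuserate} with the global convergence bound for quadratic BFGS established in Lemma~\ref{gc}. Because Lemma~\ref{gc} guarantees the rate from the very first BFGS iterate, no gradient-descent warm-up is needed, so the factor $(1-\beta\mu)^P$ appearing in (\ref{eqy1}) drops out; this is consistent with the Remark following Theorem~\ref{qfblprate} that one may take $P=0$ in the quadratic case.

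First I would apply Lemma~\ref{gc} to the lower-level objective $f(x_k,\cdot)$, whose Hessian is the fixed matrix $A$ with $\mu I\preceq A\preceq LI$. Taking the BFGS starting point to be $y_{k,0}$, the minimizer to be $y_k^*$, and running $T\geq t_b=4n\ln\kappa$ iterations, the lemma gives $\|y_{k,T}-y_k^*\|\leq 2\kappa^{3/2}(t_b/T)^{T/2}\|y_{k,0}-y_k^*\|$. Squaring and using $c_t=2t_b^{T/2}$ (so $c_t^2=4t_b^T$), the leading constant becomes $4\kappa^3(t_b/T)^T=c_t^2\kappa^3(1/T)^T$, hence
\[
\|y_{k,T}-y_k^*\|^2\leq c_t^2\kappa^3\Big(\tfrac{1}{T}\Big)^T\|y_{k,0}-y_k^*\|^2.
\]
Since $P=0$ forces $y_{k,0}=y_k=y_{k-1,T}$, I would then apply Young's inequality to write $\|y_{k-1,T}-y_k^*\|^2\leq(1+\varepsilon)\|y_{k-1,T}-y_{k-1}^*\|^2+(1+\tfrac{1}{\varepsilon})\|y_{k-1}^*-y_k^*\|^2$, and bound the drift $\|y_{k-1}^*-y_k^*\|\leq L_y\|x_{k-1}-x_k\|$ via the $L_y$-Lipschitz continuity of $y^*(\cdot)$ from Lemma~\ref{prelemma}. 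Substituting both estimates yields exactly (\ref{eqy}).

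The argument is essentially bookkeeping on constants, and there is no genuine analytical obstacle. The only points requiring care are the constant identification $4\kappa^3(t_b/T)^T=c_t^2\kappa^3(1/T)^T$ and verifying that the hypotheses of Lemma~\ref{gc}---the spectral bounds on $A$, the prescribed initialization of $H_0$, and $T\geq t_b$---are met under the parameter choices of Theorem~\ref{qfblprate}.
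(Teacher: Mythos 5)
Your proposal is correct and follows essentially the same route as the paper's own proof: invoke Lemma~\ref{gc} to get $\|y_{k,T}-y_k^*\|^2\leq c_t^2\kappa^3(\tfrac{1}{T})^{T}\|y_{k,0}-y_k^*\|^2$, use $y_{k,0}=y_{k-1,T}$ (no warm-up in the quadratic case), then apply Young's inequality and the $L_y$-Lipschitz continuity of $y^*(\cdot)$ from Lemma~\ref{prelemma}. Your constant identification $4\kappa^3(t_b/T)^T=c_t^2\kappa^3(1/T)^T$ is exactly the bookkeeping the paper performs implicitly.
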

\begin{proof}
    From Lemma \ref{gc}, if $T\geq t_b$, it holds that:
    \begin{equation*}
        \|y_{k,T}-y_k^*\|^2\leq c_t^2 \kappa^{3}(\frac{1}{T})^{T}\|y_{k,0}-y_k^*\|^2,
    \end{equation*}
    where $c_t=2t_b^{\frac{T}{2}}$. Furthermore, since $y_{k,0}=y_{k-1,T}$, using Young's inequality yields:
    \begin{align*}
        \|y_{k,T}-y_k^*\|^2&\leq (1+\varepsilon)c_t^2 \kappa^{3} (\frac{1}{T})^{T}\|y_{k-1,T}-y_{k-1}^*\|^2+(1+\frac{1}{\varepsilon})c_t^2 \kappa^{3} (\frac{1}{T})^{T}\|y_{k-1}^*-y_{k}^*\|^2\\
        &\leq (1+\varepsilon)c_t^2 \kappa^{3} (\frac{1}{T})^{T}\|y_{k-1,T}-y_{k-1}^*\|^2+(1+\frac{1}{\varepsilon})c_t^2 \kappa^{3} (\frac{1}{T})^{T}L_y^2\|x_{k-1}-x_{k}\|^2,
    \end{align*}
    where $\varepsilon$ is a positive constant and the last inequality  follows from Lemma 2.2 in \cite{ghadimi2018approximation}.
\end{proof}

\textcolor{black}{
\begin{lemma}\label{lemylc}
 Choose the parameters  $\beta$, $P$ such that $(1-\beta\mu)^P\|y_{k}-y_{k}^*\|\leq K_1$, and $H_0=LI$, under Assumptions \ref{ass:F} and \ref{ass:f}, it is derived that 
\begin{equation}\label{eqylc}
   \|y_k^*-y_{k,T}\|^2\leq (1+\varepsilon)c_l^2 \kappa^{3}(1-\beta\mu)^P (\frac{1}{T})^{T}\|y_{k-1,T}-y_{k-1}^*\|^2+(1+\frac{1}{\varepsilon})c_l^2 \kappa^{3}(1-\beta\mu)^P (\frac{1}{T})^{T}L_y^2\|x_{k-1}-x_{k}\|^2,
\end{equation}
    with a positive constant $\varepsilon$, $T\geq K_0$, $K_0:=8n{\rm ln}\frac{2L}{\mu}$, $c_l=2t_c^{\frac{T}{2}}$.
\end{lemma}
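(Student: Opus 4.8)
The plan is to follow verbatim the template already used for Lemma~\ref{lemy1} and Lemma~\ref{tgc}, substituting the \emph{local} superlinear rate of Lemma~\ref{lc} for the rates invoked there. The argument has four stages: (i) certify that the warm-started iterate $y_{k,0}$ lies inside the local-convergence ball demanded by Lemma~\ref{lc}; (ii) invoke that lemma with iteration count $i=T$; (iii) propagate the warm-start contraction factor; and (iv) split the resulting bound via Young's inequality together with the Lipschitz continuity of $y^*(x)$.

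First I would record the effect of the $P$ gradient-descent warm-up steps in Algorithm~\ref{alg:yk}. Since $f(x_k,\cdot)$ is $\mu$-strongly convex and $L$-smooth, the chosen stepsize $\beta$ yields the contraction $\|y_{k,0}-y_k^*\|=\|y_k^P-y_k^*\|\leq(1-\beta\mu)^P\|y_k-y_k^*\|$. Hence the standing hypothesis $(1-\beta\mu)^P\|y_k-y_k^*\|\leq K_1$ guarantees $\|y_{k,0}-y_k^*\|\leq K_1$, so the initialization requirement (\ref{eqlca}) of Lemma~\ref{lc} holds; combined with $H_0=LI$ and $T\geq K_0$, Lemma~\ref{lc} applies to the $T$ quasi-Newton steps launched from $y_{k,0}$. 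Squaring the local rate and setting $i=T$ gives $\|y_{k,T}-y_k^*\|^2\leq 4\kappa^3(t_c/T)^T\|y_{k,0}-y_k^*\|^2=c_l^2\kappa^3(1/T)^T\|y_{k,0}-y_k^*\|^2$, where I use $c_l=2t_c^{T/2}$ so that $c_l^2=4t_c^T$. Folding in the warm-start contraction (and bounding $(1-\beta\mu)^{2P}\leq(1-\beta\mu)^P$ to keep a single power of $P$, matching the form of Lemma~\ref{lemy1}) produces $\|y_{k,T}-y_k^*\|^2\leq c_l^2\kappa^3(1/T)^T(1-\beta\mu)^P\|y_k-y_k^*\|^2$. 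Finally, using the relation $y_k=y_{k-1,T}$, Young's inequality with parameter $\varepsilon>0$, and the $L_y$-Lipschitz continuity of $y^*$ from Lemma~\ref{prelemma} (namely $\|y_{k-1}^*-y_k^*\|\leq L_y\|x_{k-1}-x_k\|$) separates the right-hand side into the two advertised terms.

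Since every step is a direct transcription of the global-rate argument in Lemma~\ref{tgc}, I do not expect a genuine obstacle. The only points requiring care are the first stage—verifying that the warm-start bound actually places $y_{k,0}$ within the ball of radius $K_1$ prescribed by Lemma~\ref{lc}, which is precisely what the hypothesis $(1-\beta\mu)^P\|y_k-y_k^*\|\leq K_1$ is engineered to supply—and the constant-matching $c_l^2=4t_c^T$ after squaring the rate. Everything else is bookkeeping identical to the two preceding lemmas.
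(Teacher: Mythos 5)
Your proposal is correct and follows essentially the same route as the paper's proof: invoke the local superlinear rate of Lemma \ref{lc} with $i=T$, fold in the warm-start contraction $\|y_{k,0}-y_k^*\|^2\leq(1-\beta\mu)^P\|y_k-y_k^*\|^2$, and finish with Young's inequality plus the $L_y$-Lipschitz continuity of $y^*(x)$ using $y_k=y_{k-1,T}$. The only difference is that you spell out two points the paper leaves implicit — that the hypothesis $(1-\beta\mu)^P\|y_k-y_k^*\|\leq K_1$ places $y_{k,0}$ in the ball required by (\ref{eqlca}), and that squaring the rate forces the bound $(1-\beta\mu)^{2P}\leq(1-\beta\mu)^P$ to recover a single power of $P$ — which is careful bookkeeping, not a different argument.
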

\begin{proof}
    From Lemma \ref{lc}, if $T\geq K_0$, it holds that:
    \begin{equation*}
        \|y_{k,T}-y_k^*\|^2\leq c_l^2 \kappa^{3}(\frac{1}{T})^{T}\|y_{k,0}-y_k^*\|^2\leq c_l^2 \kappa^{3}(\frac{1}{T})^{T}(1-\beta\mu)^P\|y_{k}-y_k^*\|^2,
    \end{equation*}
    where $c_l=2t_c^{\frac{T}{2}}$. Furthermore,  using Young's inequality yields:
    \begin{align*}
        \|y_{k,T}-y_k^*\|^2&\leq (1+\varepsilon)c_l^2 \kappa^{3}(1-\beta\mu)^P (\frac{1}{T})^{T}\|y_{k-1,T}-y_{k-1}^*\|^2+(1+\frac{1}{\varepsilon})c_l^2 \kappa^{3}(1-\beta\mu)^P (\frac{1}{T})^{T}\|y_{k-1}^*-y_{k}^*\|^2\\
        &\leq (1+\varepsilon)c_l^2 \kappa^{3}(1-\beta\mu)^P (\frac{1}{T})^{T}\|y_{k-1,T}-y_{k-1}^*\|^2+(1+\frac{1}{\varepsilon})c_l^2 \kappa^{3}(1-\beta\mu)^P (\frac{1}{T})^{T}L_y^2\|x_{k-1}-x_{k}\|^2,
    \end{align*}
    where $\varepsilon$ is a positive constant and the last inequality  follows from Lemma 2.2 in \cite{ghadimi2018approximation}.
\end{proof}}

\begin{lemma}
({\rm Error of $u_{k,Q_k}$}) Suppose that the lower level function $f$ has the quadratic form and Assumptions \ref{ass:F} and \ref{ass:f} hold. If $u_{k,Q_k}=\bar{u}_k$ and
     \begin{equation}\label{u*}
\bar{u}_k:=\underset{i}{\rm arg min}\|A^{-1}\nabla_y F(x_k, y_{k+1})-u_{k,i}\|,
\end{equation}
then for $Q_k>1$, the following inequality holds:
     \begin{equation}\label{quaequ}
         \|u_{k,Q_k}-u_k^*\|^2\leq 2\frac{nL{C^2_{F_y}}}{\mu^3 Q_k}+2\frac{L_{F_y}^2}{\mu^2}\|y_k^*-y_{k,T}\|^2.
     \end{equation}
\end{lemma}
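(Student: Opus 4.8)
The plan is to bound the target error by comparing both $u_{k,Q_k}$ and $u_k^*$ against the common reference vector $A^{-1}\nabla_y F(x_k, y_{k+1})$. Since in the quadratic case $\nabla^2_{yy}f(x_k,\cdot)=A$, we have $u_k^* = A^{-1}\nabla_y F(x_k, y_k^*)$, so a single application of Young's inequality splits the error into
\begin{equation*}
\|u_{k,Q_k}-u_k^*\|^2 \leq 2\|u_{k,Q_k}-A^{-1}\nabla_y F(x_k, y_{k+1})\|^2 + 2\|A^{-1}\nabla_y F(x_k, y_{k+1})-A^{-1}\nabla_y F(x_k, y_k^*)\|^2.
\end{equation*}
The first term measures how well the quasi-Newton recursion in subroutine ${\cal B}$ approximates $A^{-1}\nabla_y F(x_k, y_{k+1})$, while the second captures the mismatch between the proxy $y_{k+1}=y_{k,T}$ and the true solution $y_k^*$. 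I would handle the two independently.

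For the first term, the key observation is that $u_{k,Q_k}=\bar{u}_k$ is, by definition (\ref{u*}), the iterate minimizing $\|A^{-1}\nabla_y F(x_k, y_{k+1})-u_{k,i}\|$ over $i\in\{1,\ldots,Q_k\}$. Its squared error is therefore no larger than the \emph{average} of the squared errors over all iterates, and invoking the cumulative summation bound (\ref{eq:quad}) already established in the proof of Lemma \ref{lemqfurate1} gives
\begin{equation*}
\|u_{k,Q_k}-A^{-1}\nabla_y F(x_k, y_{k+1})\|^2 = \min_{i}\|A^{-1}\nabla_y F(x_k, y_{k+1})-u_{k,i}\|^2 \leq \frac{1}{Q_k}\sum_{i=1}^{Q_k}\|A^{-1}\nabla_y F(x_k, y_{k+1})-u_{k,i}\|^2 \leq \frac{nL C_{F_y}^2}{\mu^3 Q_k}.
\end{equation*}

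For the second term, I would use the spectral bound $A\succeq \mu I \Rightarrow \|A^{-1}v\|\leq \mu^{-1}\|v\|$ together with the $L_{F_y}$-Lipschitz continuity of $\nabla_y F(x_k,\cdot)$ from Assumption \ref{ass:F}(i) to obtain
\begin{equation*}
\|A^{-1}\nabla_y F(x_k, y_{k+1})-A^{-1}\nabla_y F(x_k, y_k^*)\|^2 \leq \frac{1}{\mu^2}\|\nabla_y F(x_k, y_{k+1})-\nabla_y F(x_k, y_k^*)\|^2 \leq \frac{L_{F_y}^2}{\mu^2}\|y_{k,T}-y_k^*\|^2.
\end{equation*}
Substituting the two bounds back into the decomposition yields exactly (\ref{quaequ}). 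I expect the only delicate point to be the passage from the minimizer to the average: it relies on the running sum controlled by Lemma \ref{lemqfurate1} (which in turn descends from the trace-potential telescoping of Lemma \ref{lemqfu}), rather than on any per-step contraction of the ${\cal B}$ iterates. Everything else is a routine combination of the spectral estimate on $A^{-1}$ and the Lipschitz assumptions.
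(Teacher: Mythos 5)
Your proposal is correct and follows essentially the same route as the paper's proof: decompose via Young's inequality around the reference vector $A^{-1}\nabla_y F(x_k,y_{k+1})$, bound the first term through the argmin property of $\bar{u}_k$ combined with the cumulative bound from Lemma \ref{lemqfurate1}, and bound the second term by the spectral estimate $\|A^{-1}\|\leq 1/\mu$ together with the $L_{F_y}$-Lipschitz continuity of $\nabla_y F$. The only difference is presentational: you make explicit the minimum-versus-average step via (\ref{eq:quad}), which the paper leaves implicit when invoking Lemma \ref{lemqfurate1}.
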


\begin{proof}
From Lemma \ref{lemqfurate1}, we have:
     \begin{equation}
        \|A^{-1}\nabla_y F(x_k, y_{k+1})-\bar{u}_k\|^2\leq \frac{nL{C^2_{F_y}}}{\mu^3 Q_k}.
     \end{equation}
Moreover, under Assumption \ref{ass:F} and given that $y_{k+1} = y_{k,T}$, it holds that:
     \begin{align*}
         \|u_{k,Q_k}-u_k^*\|^2&\leq 2\|\bar{u}_k-A^{-1}\nabla_y F(x_k, y_{k+1})\|^2+2\|A^{-1}\nabla_y F(x_k, y_{k+1})-A^{-1}\nabla_y F(x_k, y_k^*)\|^2\\
         &\leq2\frac{nL{C^2_{F_y}}}{\mu^3 Q_k}+2\frac{L_{F_y}^2}{\mu^2}\|y_k^*-y_{k,T}\|^2.
     \end{align*}
\end{proof}

\begin{theorem}\label{agblprate1}
 ({\rm Restatement of Theorem \ref{qfblprate} with full parameter specifications}) Suppose that the LL function $f$ in (\ref{blp}) takes the quadratic form:
 \begin{equation}\label{qf1}
	f(x,y)=\frac{1}{2}y^T A y-y^T  x,
\end{equation}
where $\mu I\preceq A \preceq LI$
 such that Assumption \ref{ass:f}  holds. Choose the stepsize $\alpha>0$, the positive constant $\varepsilon>0$,  \textcolor{black}{$H_0=LI$} and $T\geq t_b$ ($t_b=4n{\rm ln}\kappa$) such that
 \[
 \tau<1\quad {\rm and} \quad \alpha L_{\Phi}+\omega \alpha^2 \left( \frac{1}{2} + \alpha L_{\Phi} \right)\frac{1}{1-\tau}\leq \frac{1}{4},
 \] 
 where $\tau=c_t^2 \kappa^3(\frac{1}{T})^{T}\big((1+\varepsilon)+6(1+\frac{1}{\varepsilon})L_y^2\alpha^2(L_{F_{x}}^2+\frac{L_{f_{xy}}^2C_{F_{y}}^2}{\mu^2}+\frac{2M_{f_{xy}}^2L_{F_y}^2}{\mu^2})\big)$, $\omega=6(L_{F_{x}}^2+\frac{L_{f_{xy}}^2C_{F_{y}}^2}{\mu^2}+\frac{2M_{f_{xy}}^2L_{F_y}^2}{\mu^2})(1+\frac{1}{\varepsilon})c_t^2 \kappa^3(\frac{1}{T})^{T}L_y^2$, $\kappa=\frac{L}{\mu}$ and $c_t=2t_b^{\frac{T}{2}}$. Then, under Assumptions \ref{ass:F} and \ref{ass:phi} , the iterate generated by the qNBO~(BFGS) algorithm
 (Algorithm \ref{alg:foa}) has the following convergence rate:
 \begin{equation}
     \frac{1}{K}\sum_{k=0}^{K-1}\|\nabla{\Phi}(x_k)\|^2\leq  \frac{4(\Phi(x_0)-\Phi(x^*))}{\alpha K}+\frac{3\delta_0}{K(1-\tau)}+\frac{1}{K}\sum_{k=0}^{K-1}\frac{18nLM_{f_{xy}}^2 C_{F_y}^2}{\mu^3 Q_k},
 \end{equation}
 with the initial error $\delta_0=3c_t^2 \kappa^3(\frac{1}{T})^{T}(L_{F_{x}}^2+\frac{L_{f_{xy}}^2C_{F_{y}}^2}{\mu^2}+\frac{2M_{f_{xy}}^2L_{F_y}^2}{\mu^2})\|y_0^*-y_{0}\|^2$. Specifically, if $Q_k={k+1}$, we have:
 \begin{equation}\label{aqblpratek}
     \frac{1}{K}\sum_{k=0}^{K-1}\|\nabla{\Phi}(x_k)\|^2\leq  \frac{4(\Phi(x_0)-{\inf_x \Phi(x)})}{\alpha K}+\frac{3\delta_0}{K(1-\tau)}+\frac{18nLM_{f_{xy}}^2 C_{F_y}^2{\rm ln}K}{\mu^3 K}.
 \end{equation}
\end{theorem}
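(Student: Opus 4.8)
The plan is to follow the three-step template from the proof sketch, instantiated for the quadratic lower-level model. First I would invoke the error decomposition already recorded in \eqref{eqphi}, which splits the hypergradient approximation error $\|\tilde\nabla\Phi(x_k)-\nabla\Phi(x_k)\|^2$ into the lower-level tracking error $\|y_{k,T}-y_k^*\|^2$ and the linear-solve error $\|u_{k,Q_k}-u_k^*\|^2$. For the latter I would feed in the quadratic bound \eqref{quaequ}, so that after collecting terms the error $e_k:=\tilde\nabla\Phi(x_k)-\nabla\Phi(x_k)$ satisfies $\|e_k\|^2\le 3(L_{F_x}^2+\frac{L_{f_{xy}}^2C_{F_y}^2}{\mu^2}+\frac{2M_{f_{xy}}^2L_{F_y}^2}{\mu^2})\|y_{k,T}-y_k^*\|^2+\frac{6nLM_{f_{xy}}^2C_{F_y}^2}{\mu^3 Q_k}$. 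The lower-level error is then controlled by the superlinear BFGS rate of Lemma \ref{gc} packaged in Lemma \ref{tgc}, i.e.\ \eqref{eqy}, which after inserting the update rule $\|x_k-x_{k-1}\|^2\le 2\alpha^2\|\nabla\Phi(x_{k-1})\|^2+2\alpha^2\|e_{k-1}\|^2$ and the preceding expression for $\|e_{k-1}\|^2$ becomes a scalar recursion of the form $\|y_k^*-y_{k,T}\|^2\le\tau\|y_{k-1}^*-y_{k-1,T}\|^2+(\text{term in }\|\nabla\Phi(x_{k-1})\|^2)+(\text{term in }1/Q_{k-1})$, with the contraction factor $\tau$ exactly as defined in the statement.

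Next I would unroll this geometric recursion. Since the parameters are chosen so that $\tau<1$, iterating yields $\|e_k\|^2\le\delta_0\tau^k+\omega\alpha^2\sum_{j=0}^{k-1}\tau^j\|\nabla\Phi(x_{k-1-j})\|^2+(\text{weighted sum of }1/Q_{\cdot})$, where $\delta_0$ and $\omega$ are the combinations specified in the theorem; the smallness of $c_t^2\kappa^3(1/T)^T$ for $T=\Theta(\ln\kappa)$ is precisely what both forces $\tau<1$ and keeps $\omega$ and $\delta_0$ finite. The essential quantitative input here is Lemma \ref{lemqfurate1}, which bounds $\sum_{i=1}^{Q_k}\|A^{-1}\nabla_y F(x_k,y_{k+1})-u_{k,i}\|^2\le nLC_{F_y}^2/\mu^3$ via the trace/potential argument of Lemma \ref{lemqfu}; selecting the best iterate $\bar u_k$ converts this summable bound into the $1/Q_k$ decay that later produces the harmonic term.

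Finally I would close the loop with the descent lemma from the $L_\Phi$-smoothness of $\Phi$ (Lemma \ref{prelemma}). Writing $x_{k+1}-x_k=-\alpha(\nabla\Phi(x_k)+e_k)$ and applying Young's inequality gives $\Phi(x_{k+1})\le\Phi(x_k)-\alpha(\tfrac12-\alpha L_\Phi)\|\nabla\Phi(x_k)\|^2+\alpha(\tfrac12+\alpha L_\Phi)\|e_k\|^2$. Summing over $k=0,\dots,K-1$, substituting the unrolled bound for $\|e_k\|^2$, and interchanging the order of summation in the convolution $\sum_k\sum_j\tau^j\|\nabla\Phi(x_{k-1-j})\|^2\le\frac{1}{1-\tau}\sum_m\|\nabla\Phi(x_m)\|^2$ moves the feedback term to the left-hand side. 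The parameter condition $\alpha L_\Phi+\omega\alpha^2(\tfrac12+\alpha L_\Phi)\frac{1}{1-\tau}\le\tfrac14$ guarantees the net coefficient of $\sum_k\|\nabla\Phi(x_k)\|^2$ is at least $\alpha/4$; dividing by $\alpha K/4$, using $\inf_x\Phi\le\Phi(x_K)$, and bounding $\tfrac12+\alpha L_\Phi\le\tfrac34$ yields the stated estimate, with the specialization $Q_k=k+1$ and $\sum_{k=0}^{K-1}1/(k+1)\le 1+\ln K$ producing the $\ln K/K$ term.

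The main obstacle I anticipate is the self-referential coupling between the approximation error $e_k$ and the optimization trajectory: $\|e_k\|^2$ depends on the lower-level error, which in turn depends on $\|x_k-x_{k-1}\|^2$ and hence on both $\|e_{k-1}\|^2$ and $\|\nabla\Phi(x_{k-1})\|^2$. Taming this requires simultaneously verifying $\tau<1$, which rests entirely on the superlinear factor $(1/T)^T$ beating the conditioning blow-up $c_t^2\kappa^3$, and balancing the descent and error-feedback coefficients so the $\sum\|\nabla\Phi\|^2$ contributions cannot overwhelm the guaranteed decrease. Carefully tracking constants through the double summation, so that the harmonic term carries exactly the coefficient $18nLM_{f_{xy}}^2C_{F_y}^2/\mu^3$, is the most error-prone bookkeeping, though conceptually routine once the recursion is in place.
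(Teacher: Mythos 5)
Your proposal is correct and follows essentially the same route as the paper's own proof: the decomposition (\ref{eqphi}) combined with the quadratic bound (\ref{quaequ}) from Lemma \ref{lemqfurate1}, the contraction recursion built from Lemma \ref{tgc} together with the update rule for $x_k$, the geometric unrolling yielding the bound with $\delta_0$, $\omega$, and the weighted $1/Q_j$ sums, and finally the $L_\Phi$-smoothness descent step with the summation interchange $\sum_{k}\sum_{j}\tau^j\|\nabla\Phi(x_{k-1-j})\|^2\leq\frac{1}{1-\tau}\sum_{m}\|\nabla\Phi(x_m)\|^2$ and the stepsize condition to absorb the feedback term. The constants you track (including the coefficient $18nLM_{f_{xy}}^2C_{F_y}^2/\mu^3$ and the harmonic-sum treatment for $Q_k=k+1$) agree with the paper's derivation, so there is no gap.
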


\begin{proof}
Substituting the inequality (\ref{quaequ}) into (\ref{eqphi}) yields:
\begin{equation}\label{eqphi1}
\begin{split}
      \|\tilde{\nabla} \Phi(x_k)-\nabla\Phi(x_k)\|^2
    &\leq 3\big(L_{F_{x}}^2+\frac{L_{f_{xy}}^2C_{F_{y}}^2}{\mu^2}\big)\|y_{k,T}-y^*_k\|^2\\
    &+3M_{f_{xy}}^2\big(2\frac{nL{C^2_{F_y}}}{\mu^3 Q_k}+2\frac{L_{F_y}^2}{\mu^2}\|y_k^*-y_{k,T}\|^2\big)\\
    &\leq \big(3L_{F_{x}}^2+\frac{3L_{f_{xy}}^2C_{F_{y}}^2}{\mu^2}+\frac{6M_{f_{xy}}^2L_{F_y}^2}{\mu^2}\big)\|y_k^*-y_{k,T}\|^2\\
    &+6\frac{nLM_{f_{xy}}^2{C^2_{F_y}}}{\mu^3 Q_k}.
    \end{split}
\end{equation}

 Then, by plugging the inequality (\ref{eqphi1})
 into (\ref{eqy}), we obtain that:
\begin{equation}\label{eqyk}
    \begin{split}
        \|y_k^*-y_{k,T}\|^2&\leq (1+\varepsilon)c_t^2 \kappa^3 (\frac{1}{T})^{T}\|y_{k-1,T}-y_{k-1}^*\|^2+2(1+\frac{1}{\varepsilon})c_t^2 \kappa^3 (\frac{1}{T})^{T}L_y^2\alpha^2\|\nabla \Phi(x_{k-1})\|^2\\
    &+2(1+\frac{1}{\varepsilon})c_t^2 \kappa^3 (\frac{1}{T})^{T}L_y^2\alpha^2\|\tilde{\nabla} \Phi(x_{k-1})-\nabla\Phi(x_{k-1})\|^2\\
    &\leq \tau\|y_{k-1}^*-y_{k-1,T}\|^2+2(1+\frac{1}{\varepsilon})c_t^2 \kappa^3(\frac{1}{T})^{T}L_y^2\alpha^2\|\nabla \Phi(x_{k-1})\|^2\\
    &+12(1+\frac{1}{\varepsilon})c_t^2 \kappa^3(\frac{1}{T})^{T}L_y^2\alpha^2\frac{nLM_{f_{xy}}^2 C_{F_y}^2}{\mu^3 Q_{k-1}},\\
    \end{split}
\end{equation}
where $\tau=c_t^2 \kappa^3(\frac{1}{T})^{T}\big((1+\varepsilon)+6(1+\frac{1}{\varepsilon})L_y^2\alpha^2(L_{F_{x}}^2+\frac{L_{f_{xy}}^2C_{F_y}^2}{\mu^2}+\frac{2M_{f_{xy}}^2L_{F_y}^2}{\mu^2})\big)$.

By telescoping (\ref{eqyk}) over $k$, it follows that:
\begin{equation*}
    \begin{split}
        \|y_k^*-y_{k,T}\|^2&\leq \tau^k\|y_0^*-y_{0,T}\|^2+2(1+\frac{1}{\varepsilon})c_t^2 \kappa^3 (\frac{1}{T})^{T}L_y^2\alpha^2\sum_{j=0}^{k-1}{\tau^j \|\nabla \Phi(x_{k-1-j})\|^2}\\
        &+12(1+\frac{1}{\varepsilon})c_t^2 \kappa^3(\frac{1}{T})^{T}L_y^2\alpha^2\frac{nLM_{f_{xy}}^2C_{F_y}^2}{\mu^3}\sum_{j=0}^{k-1}{\tau^j \frac{1}{Q_{k-1-j}}}.
    \end{split}
\end{equation*}

Combining the inequality (\ref{eqphi1}) and $\|y_0^*-y_{0,T}\|^2\leq c_t^2 \kappa^3(\frac{1}{T})^{T}\|y_0^*-y_{0,0}\|^2,$ we can further derive that
 \begin{equation}\label{eqphid1}
     \begin{split}
\|\tilde{\nabla} \Phi(x_k)-\nabla\Phi(x_k)\|^2&\leq \delta_0 \tau^k+\omega \alpha^2\sum_{j=0}^{k-1}{\tau^j \|\nabla \Phi(x_{k-1-j})\|^2}\\
&+6\omega\alpha^2\frac{nLM_{f_{xy}}^2C_{F_y}^2}{\mu^3 }\sum_{j=0}^{k-1}{\tau^j \frac{1}{Q_{k-1-j}}}+6\frac{nLM_{f_{xy}}^2C_{F_y}^2}{\mu^3 Q_k},
     \end{split}
 \end{equation}
with $\delta_0=3c_t^2 \kappa^3(\frac{1}{T})^{T}(L_{F_{x}}^2+\frac{L_{f_{xy}}^2C_{F_{y}}^2}{\mu^2}+\frac{2M_{f_{xy}}^2L_{F_y}^2}{\mu^2})\|y_0^*-y_{0,0}\|^2$ and  $\omega=6(L_{F_{x}}^2+\frac{L_{f_{xy}}^2C_{F_{y}}^2}{\mu^2}+\frac{2M_{f_{xy}}^2L_{F_y}^2}{\mu^2})(1+\frac{1}{\varepsilon})c_t^2 \kappa^3(\frac{1}{T})^{T}L_y^2$.

Since $\nabla\Phi(\cdot)$ is $L_{\Phi}-$lipschitz continuous (Lemma 2.2 in \cite{ghadimi2018approximation}), we have:
    \begin{equation}
    \begin{split}
        \Phi(x_{k+1}) &\leq \Phi(x_k) + \langle \nabla \Phi(x_k), x_{k+1} - x_k \rangle + \frac{L_{\Phi}}{2} \|x_{k+1} - x_k\|^2 \\
&\leq \Phi(x_k) - \alpha\langle \nabla \Phi(x_k), \tilde{\nabla} \Phi(x_k) - \nabla \Phi(x_k) \rangle - \alpha\| \nabla \Phi(x_k)\|^2 + \alpha^2 L_{\Phi} \|\nabla \Phi(x_k)\|^2 \\
&+ \alpha^2 L_{\Phi} \|\nabla \Phi(x_k) - \tilde{\nabla} \Phi(x_k)\|^2\\
&\leq \Phi(x_k) - \left( \frac{\alpha}{2} - \alpha^2 L_{\Phi} \right) \|\nabla \Phi(x_k)\|^2 + \left( \frac{\alpha}{2} + \alpha^2 L_{\Phi} \right) \|\nabla \Phi(x_k) - \tilde{\nabla} \Phi(x_k)\|^2.
    \end{split}
    \end{equation}

Using the inequality (\ref{eqphid1}), it holds that:
    \begin{equation}\label{eqph1}
    \begin{split}
        \Phi(x_{k+1}) \leq & \Phi(x_k) - \left( \frac{\alpha}{2} - \alpha^2 L_{\Phi} \right) \|\nabla \Phi(x_{k})\|^2 + \left( \frac{\alpha}{2} + \alpha^2 L_{\Phi} \right) \|\nabla \Phi(x_k) - \tilde{\nabla} \Phi(x_{k})\|^2\\
        \leq& \Phi(x_k) - \left( \frac{\alpha}{2} - \alpha^2 L_{\Phi} \right) \|\nabla \Phi(x_k)\|^2 + \left( \frac{\alpha}{2} + \alpha^2 L_{\Phi} \right)\delta_0 \tau^k\\
        &+ \omega \alpha^2 \left( \frac{\alpha}{2} + \alpha^2 L_{\Phi} \right) \sum_{j=0}^{k-1} \tau^j \|\nabla \Phi(x_{k-1-j})\|^2 + \left( \frac{\alpha}{2} + \alpha^2 L_{\Phi} \right) \frac{6nLM_{f_{xy}}^2 C_{F_y}^2}{\mu^3 Q_k}\\
        &+ 6\omega\left( \frac{\alpha}{2} + \alpha^2 L_{\Phi} \right)\alpha^2 \frac{nLM_{f_{xy}}^2 C_{F_y}^2}{\mu^3}\sum_{j=0}^{k-1}{\tau^j \frac{1}{Q_{k-1-j}}}.
    \end{split}
\end{equation}

Finally, summing the inequality (\ref{eqph1}) from $k=0$ to $k=K-1$ yields:
\begin{equation}
    \begin{split}
        \left( \frac{\alpha}{2} - \alpha^2 L_{\Phi} \right) \sum_{k=0}^{K-1}\|\nabla \Phi(x_{k})\|^2\leq& \Phi(x_0)-\Phi(x_K)+\left( \frac{\alpha}{2} + \alpha^2 L_{\Phi} \right)\frac{\delta_0}{1-\tau}\\
        &+\omega \alpha^2 \left( \frac{\alpha}{2} + \alpha^2 L_{\Phi} \right) \sum_{k=0}^{K-1}\sum_{j=0}^{k-1} \tau^j \|\nabla \Phi(x_{k-1-j})\|^2\\
        &+6\omega \left( \frac{\alpha}{2} + \alpha^2 L_{\Phi} \right)\alpha^2 \frac{nLM_{f_{xy}}^2 C_{F_y}^2}{\mu^3}\sum_{k=0}^{K-1}\sum_{j=0}^{k-1}{\tau^j \frac{1}{Q_{k-1-j}}}\\
        &+ \sum_{k=0}^{K-1}\left( \frac{\alpha}{2} + \alpha^2 L_{\Phi} \right) \frac{6nLM_{f_{xy}}^2 C_{F_y}^2}{\mu^3 Q_k}.
    \end{split}
\end{equation}

Furthermore, from the inequality $\sum_{k=0}^{K-1}\sum_{j=0}^{k-1}a_j b_{k-1-j}\leq \sum_{k=0}^{K-1}a_k\sum_{j=0}^{K-1}b_j$, we obtain
\begin{equation*}
\begin{split}
    &\sum_{k=0}^{K-1}\sum_{j=0}^{k-1} \tau^j \|\nabla \Phi(x_{k-1-j})\|^2\leq \sum_{k=0}^{K-1} \tau^k \sum_{k=0}^{K-1}\|\nabla\Phi(x_{k})\|^2\leq \frac{1}{1-\tau}\sum_{k=0}^{K-1}\|\nabla\Phi(x_{k})\|^2,\\
     &\sum_{k=0}^{K-1}\sum_{j=0}^{k-1} \tau^j \frac{1}{Q_{k-1-j}}\leq \sum_{k=0}^{K-1} \tau^k \sum_{k=0}^{K-1}\frac{1}{Q_{k}}\leq \frac{1}{1-\tau}\sum_{k=0}^{K-1}\frac{1}{Q_{k}}.
\end{split}
\end{equation*}

Thus, we can conclude that
\begin{equation}
    \begin{split}
&\left( \frac{1}{2} - \alpha L_{\Phi}-\omega \alpha^2 \left( \frac{1}{2} + \alpha L_{\Phi} \right)\frac{1}{1-\tau} \right)\frac{1} {K}\sum_{k=0}^{K-1}\|\nabla \Phi(x_{k})\|^2\\
\leq&\frac{\Phi(x_0)-\Phi(x_K)}{\alpha K}+\left( \frac{1}{2} + \alpha L_{\Phi} \right)\frac{\delta_0}{K(1-\tau)}\\
&+ \left( \frac{1}{2} + \alpha L_{\Phi} \right) \frac{1} {K}\sum_{k=0}^{K-1}\frac{6nLM_{f_{xy}}^2 C_{F_y}^2}{\mu^3 Q_k}+\frac{6\omega}{1-\tau} \left( \frac{1}{2} + \alpha L_{\Phi} \right)\alpha^2\frac{1} {K}\sum_{k=0}^{K-1} \frac{nLM_{f_{xy}}^2 C_{F_y}^2}{\mu^3 Q_k}.
\end{split}
\end{equation}

Moreover, if  $\alpha L_{\Phi}+\omega \alpha^2 \left( \frac{1}{2} + \alpha L_{\Phi} \right)\frac{1}{1-\tau}\leq \frac{1}{4} $, then
\begin{equation}
    \frac{1} {K}\sum_{k=0}^{K-1}\|\nabla \Phi(x_{k})\|^2\leq \frac{4(\Phi(x_0)-\Phi(x^*))}{\alpha K}+\frac{3\delta_0}{K(1-\tau)}+\frac{1} {K}\sum_{k=0}^{K-1}\frac{18nLM_{f_{xy}}^2 C_{F_y}^2}{\mu^3 Q_k},
\end{equation}
where $\Phi(x^*)=\inf_x \Phi(x)$. Finally, if $Q_k={k+1}$, then (\ref{aqblpratek}) is established.

\end{proof}

Next, we consider the case with warm-up steps and provide the corresponding theorem, which proves to be similar to Theorem \ref{thm36g}, so a detailed proof is not provided.

\begin{theorem} 
 ({\rm Warm-up for quadratic $f$}) Suppose that the LL function $f$ in (\ref{blp}) takes the quadratic form:
 \begin{equation}\label{qf1}
	f(x,y)=\frac{1}{2}y^T A y-y^T  x,
\end{equation}
where $\mu I\preceq A \preceq LI$
 such that Assumption \ref{ass:f}  holds. Choose the stepsize $\beta$ and warm-start iteration steps $P$ such that $(1-\beta\mu)^P\|y_{k}-y_{k}^*\|\leq \frac{1}{300\sqrt{\mu}}$, and ensure the initial Hessian approximation matrix $H_0$ satisfies: $\|\nabla^2_{yy}f(x_k,y^*(x_k))^{-1/2}\big(H_0^{-1}-\nabla^2_{yy}f(x_k,y^*(x_k))\big)\nabla^2_{yy}f(x_k,y^*(x_k))^{-1/2}\|_F\leq\frac{1}{7}$. Choose the stepsize $\alpha>0$ and the positive constant $\varepsilon>0$ such that
 \[
 \tau<1\quad {\rm and} \quad \alpha L_{\Phi}+\omega \alpha^2 \left( \frac{1}{2} + \alpha L_{\Phi} \right)\frac{1}{1-\tau}\leq \frac{1}{4},
 \] 
 where $\tau=\kappa (\frac{1}{T})^{T}(1-\beta\mu)^P\big((1+\varepsilon)+6(1+\frac{1}{\varepsilon})L_y^2\alpha^2(L_{F_{x}}^2+\frac{L_{f_{xy}}^2C_{F_{y}}^2}{\mu^2}+\frac{2M_{f_{xy}}^2L_{F_y}^2}{\mu^2})\big)$, $\omega=6(L_{F_{x}}^2+\frac{L_{f_{xy}}^2C_{F_{y}}^2}{\mu^2}+\frac{2M_{f_{xy}}^2L_{F_y}^2}{\mu^2})(1+\frac{1}{\varepsilon})\kappa (\frac{1}{T})^{T}(1-\beta\mu)^PL_y^2$ and $\kappa=\frac{L}{\mu}$.
 Then, under Assumptions \ref{ass:F} and \ref{ass:phi}, the iterate generated by the qNBO~(BFGS) algorithm
 (Algorithm \ref{alg:foa}) has the following convergence rate:
 \begin{equation}
     \frac{1}{K}\sum_{k=0}^{K-1}\|\nabla{\Phi}(x_k)\|^2\leq  \frac{4(\Phi(x_0)-\Phi(x^*))}{\alpha K}+\frac{3\delta_0}{K(1-\tau)}+\frac{1} {K}\sum_{k=0}^{K-1}\frac{18nLM_{f_{xy}}^2 C_{F_y}^2}{\mu^3 Q_k},
 \end{equation}
 with the initial error $\delta_0=3\kappa (\frac{1}{T})^{T}(1-\beta\mu)^P(L_{F_{x}}^2+\frac{L_{f_{xy}}^2C_{F_{y}}^2}{\mu^2}+\frac{2M_{f_{xy}}^2L_{F_y}^2}{\mu^2})\|y_0^*-y_{0}\|^2$.
\end{theorem}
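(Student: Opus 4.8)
The plan is to reproduce, almost verbatim, the three-step argument that establishes Theorem \ref{agblprate1} (the quadratic case without warm-up), replacing only the lower-level contraction factor. The structure is: decompose the hypergradient error into a lower-level solution error and a $u$-error, bound each piece, convert them into a single linear recursion in the lower-level error, telescope, and close with the $L_\Phi$-descent lemma. The point of this theorem is that the warm-up steps let us use the \emph{local} superlinear rate of Lemma \ref{lemuserate} rather than the global rate of Lemma \ref{gc}, which is why the contraction factor improves from $c_t^2\kappa^3(\frac{1}{T})^T$ to the cleaner $\kappa(\frac{1}{T})^T(1-\beta\mu)^P$.

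First I would invoke the already-proved decomposition \ref{eqphi} together with the quadratic $u$-error bound \ref{quaequ}; these combine into inequality \ref{eqphi1}, namely $\|\tilde{\nabla}\Phi(x_k)-\nabla\Phi(x_k)\|^2\leq\big(3L_{F_x}^2+\frac{3L_{f_{xy}}^2C_{F_y}^2}{\mu^2}+\frac{6M_{f_{xy}}^2L_{F_y}^2}{\mu^2}\big)\|y_k^*-y_{k,T}\|^2+\frac{6nLM_{f_{xy}}^2C_{F_y}^2}{\mu^3Q_k}$. Both of these hold verbatim for the quadratic $f$ regardless of whether warm-up steps are performed, so no new work is needed for this step.

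The genuine departure is the lower-level error bound. Instead of Lemma \ref{tgc}, I would apply Lemma \ref{lemy1}: its two hypotheses are precisely the warm-up assumptions $(1-\beta\mu)^P\|y_k-y_k^*\|\leq\frac{1}{300\sqrt{\mu}}$ and the Frobenius-norm condition on $H_0$, which together verify the precondition \ref{yass} of Lemma \ref{lemuserate}. This yields \ref{eqy1} with contraction factor $\kappa(\frac{1}{T})^T(1-\beta\mu)^P$. Substituting $\|x_k-x_{k-1}\|^2=\alpha^2\|\tilde{\nabla}\Phi(x_{k-1})\|^2$, splitting via Young's inequality into a $\|\nabla\Phi(x_{k-1})\|^2$ term and a hypergradient-error term, and feeding \ref{eqphi1} into the latter produces the one-step recursion in analogy with \ref{eqyk}, with exactly the stated $\tau=\kappa(\frac{1}{T})^T(1-\beta\mu)^P\big((1+\varepsilon)+6(1+\frac{1}{\varepsilon})L_y^2\alpha^2(L_{F_x}^2+\frac{L_{f_{xy}}^2C_{F_y}^2}{\mu^2}+\frac{2M_{f_{xy}}^2L_{F_y}^2}{\mu^2})\big)$.

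Finally, telescoping this recursion and combining it once more with \ref{eqphi1} gives the accumulated hypergradient error in the form \ref{eqphid1}, with the $\delta_0$ and $\omega$ stated in the theorem. Plugging this into the $L_\Phi$-smoothness descent inequality for $\Phi$, summing over $k=0,\dots,K-1$, applying the convolution inequality $\sum_{k}\sum_{j}a_jb_{k-1-j}\leq(\sum_k a_k)(\sum_j b_j)$ to absorb the cross-terms, and then using $\tau<1$ together with the step-size condition $\alpha L_\Phi+\omega\alpha^2(\frac{1}{2}+\alpha L_\Phi)\frac{1}{1-\tau}\leq\frac{1}{4}$ yields the claimed averaged bound. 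Since every estimate transfers directly under the single substitution of the contraction factor, there is no real analytic obstacle; the only point demanding care is verifying that the warm-up condition keeps each starting iterate $y_{k,0}$ inside the local superlinear region uniformly in $k$, so that Lemma \ref{lemuserate} is legitimately applicable at every outer iteration rather than only at the first.
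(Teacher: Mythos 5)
Your proposal is correct and is essentially the proof the paper intends: the paper omits the details, noting only that the argument mirrors Theorem \ref{thm36g}, and your reconstruction supplies exactly that hybrid — the quadratic-case bounds (\ref{eqphi}), (\ref{quaequ}), (\ref{eqphi1}) for the hypergradient and $u$-errors combined with the warm-up contraction of Lemma \ref{lemy1} (in place of Lemma \ref{tgc}), followed by the identical telescoping, descent-lemma, and convolution steps, which reproduces the stated $\tau$, $\omega$, and $\delta_0$. Your closing caveat about ensuring each $y_{k,0}$ lies in the local superlinear region uniformly in $k$ is also the right point of care, and it is handled here exactly as in Theorems \ref{gblprate} and \ref{thm36g}, namely by imposing the warm-up condition $(1-\beta\mu)^P\|y_k-y_k^*\|\leq\frac{1}{300\sqrt{\mu}}$ as a hypothesis at every outer iteration.
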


\subsection{Proof of Theorem \ref{gblprate} }\label{sec:proofgrate}

\begin{proposition}{\rm (Example 4.1 of \cite{greedyqn})}
Suppose  that $\forall x$, the LL  function \( f \) is \( \mu \)-strongly convex \textit{w.r.t.} $y$ and its Hessian is \( L_{f_{yy}} \)-Lipschitz continuous \textit{w.r.t.} $y$. Then \( f \) is strongly self-concordant with constant \( M = \frac{L_{f_{yy}}}{\mu^{3/2}} \), i.e.,
 \[
\nabla_{y y} ^2f(x,y_1)-\nabla_{y y} ^2f(x,y_2)\preceq M\|y_1-y_2\|_z \nabla_{y y} ^2f(x,w),\forall y_1,y_2, z, w\in\mathbb{R}^{n},
\]	
where $\|y\|_z:=\langle \nabla_{y y} ^2f(x,z)y, y\rangle^{1/2}$.
\end{proposition}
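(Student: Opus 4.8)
The plan is to prove the claim by chaining together three elementary facts: the operator-norm form of the Hessian-Lipschitz hypothesis, the strong-convexity lower bound $\nabla_{yy}^2 f(x,\cdot)\succeq \mu I$ used twice, and the conversion between the Euclidean norm and the $z$-weighted norm $\|\cdot\|_z$. Throughout I fix $x$ and abbreviate $H(y):=\nabla_{yy}^2 f(x,y)$, which is symmetric positive definite by Assumption~\ref{ass:f}(i)--(ii).

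First I would pass from the scalar Lipschitz bound to a L\"owner (PSD) inequality. Since $H(y_1)-H(y_2)$ is symmetric, its largest eigenvalue is bounded by its operator norm, so $H(y_1)-H(y_2)\preceq \|H(y_1)-H(y_2)\|\, I$; invoking $L_{f_{yy}}$-Lipschitz continuity of the Hessian then gives $H(y_1)-H(y_2)\preceq L_{f_{yy}}\|y_1-y_2\|\, I$. Next I would control the two remaining objects on the right-hand side using $\mu$-strong convexity. On one hand $\|y_1-y_2\|_z^2=\langle H(z)(y_1-y_2),y_1-y_2\rangle\geq \mu\|y_1-y_2\|^2$, hence $\|y_1-y_2\|\leq \mu^{-1/2}\|y_1-y_2\|_z$; on the other hand $H(w)\succeq \mu I$ yields $I\preceq \mu^{-1}H(w)$.

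Finally I would substitute both bounds into the operator-norm estimate, using that multiplying a PSD inequality by the nonnegative scalar $L_{f_{yy}}\mu^{-1/2}\|y_1-y_2\|_z$ preserves the ordering:
\[
H(y_1)-H(y_2)\preceq L_{f_{yy}}\|y_1-y_2\|\,I \preceq L_{f_{yy}}\,\mu^{-1/2}\|y_1-y_2\|_z\cdot \mu^{-1}H(w)=\frac{L_{f_{yy}}}{\mu^{3/2}}\,\|y_1-y_2\|_z\,H(w),
\]
which is exactly the asserted strong self-concordance with $M=L_{f_{yy}}/\mu^{3/2}$. There is no genuine obstacle here; the only point requiring care is the very first passage, from a scalar operator-norm bound to a matrix inequality, and then the simultaneous replacement of the scalar factor $\|y_1-y_2\|$ by $\|y_1-y_2\|_z$ and of $I$ by $H(w)$. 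Both manipulations are legitimate precisely because the scalar multiplier is nonnegative, so each of the two $\succeq$/$\preceq$ upgrades can be applied independently without disturbing the PSD ordering.
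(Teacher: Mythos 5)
Your proposal is correct and follows essentially the same argument as the paper: bound the Hessian difference by $L_{f_{yy}}\|y_1-y_2\|\,I$ via Lipschitz continuity, then invoke strong convexity twice, once to replace $\|y_1-y_2\|$ with $\mu^{-1/2}\|y_1-y_2\|_z$ and once to replace $I$ with $\mu^{-1}\nabla^2_{yy}f(x,w)$. The only difference is that you make explicit the (correct) justification for passing from the operator-norm bound to the L\"owner inequality and for the scalar-multiplication steps, which the paper leaves implicit.
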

\begin{proof}
Using the Lipschitz continuity of the Hessian, we have
\begin{equation*}
    \begin{split}
        \nabla_{yy}^2 f(x,y_1) - \nabla_{yy}^2 f(x,y_2) &\preceq {L_{f_{yy}}} \|y_1-y_2\| I\\
        &\preceq \frac{L_{f_{yy}}}{\mu^{1/2}}\langle \nabla_{y y} ^2f(x,z)(y_1-y_2), y_1-y_2\rangle ^{1/2}I\\
        &=\frac{L_{f_{yy}}}{\mu^{1/2}}\|y_1-y_2\|_z I\preceq \frac{L_{f_{yy}}}{\mu^{3/2}}\|y_1-y_2\|_z \nabla_{y y} ^2f(x,w),
    \end{split}
\end{equation*}
where the second and the last inequalities follow from the fact that  $\mu I \preceq \nabla_{yy}^2 f(x,y)$.  This demonstrates that $f$ is strongly self-concordant with constant \( M = \frac{L_{f_{yy}}}{\mu^{3/2}} \).
\end{proof}

\begin{lemma}\label{lemurate1}
   If the Assumptions \ref{ass:F} and \ref{ass:f} hold, then $u_{k,i}$ generated in the step 2 of the Algorithm \ref{alg:foa} satisfies:
    \begin{equation}
    \sum_{i=1}^{Q_k}\|\nabla_{yy}^2 f(x_k,y_{k+1})^{-1}\nabla_y F(x_k, y_{k+1}) - u_{k,i}\|^2 \leq \frac{nL C^2_{F_y}}{\tilde{\xi}\mu^3},
\end{equation}
where $Q_k>1$, $\tilde{\xi}=\underset{i=1,\cdots, Q_k}{\rm min}\ \frac{1}{2 \left({\xi^2_i} + {\xi_i}  \right)}$ and $\xi_i=e^{M\sum_{j=0}^{i-1} \|\zeta_j u_{k,j}\|_{y_{k+1}}}$.
\end{lemma}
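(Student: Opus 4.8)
The plan is to mirror the quadratic argument of Lemma~\ref{lemqfurate1}, replacing the trace-potential estimate of Lemma~\ref{lemqfu} by the log-det-potential estimate of Lemma~\ref{generalju}, which is precisely the tool designed to accommodate non-constant Hessians. Write $\bar{A}:=\nabla^2_{yy}f(x_k,y_{k+1})$ and $d:=\nabla_y F(x_k,y_{k+1})$ (so $\|d\|\le C_{F_y}$ by Assumption~\ref{ass:F}(iii)), and recall that in subroutine ${\cal B}$ the secant pairs are anchored at the common point $y_{k+1}$: $\tilde{s}_i=\zeta_i u_{k,i}$ and $\tilde{g}_i=\nabla_y f(x_k,y_{k+1}+\tilde{s}_i)-\nabla_y f(x_k,y_{k+1})$. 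Introduce the integrated Hessians $J_i:=\int_0^1 \nabla^2_{yy}f(x_k,y_{k+1}+t\tilde{s}_i)\,dt$, so that $J_i\tilde{s}_i=\tilde{g}_i$ and $\mu I\preceq J_i\preceq LI$, and let $B_{k,i}=H_{k,i}^{-1}$ denote the BFGS matrices produced by ${\cal C}_{qN}$, so that $u_{k,i}=B_{k,i}^{-1}d$.

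The key computation converts the quantity $\theta_i:=\theta(J_i,B_{k,i},u_{k,i})$ from Lemma~\ref{generalju} into the solve error to be bounded. Since $B_{k,i}u_{k,i}=d$, the denominator of $\theta_i^2$ is $\langle B_{k,i}J_i^{-1}B_{k,i}u_{k,i},u_{k,i}\rangle=\langle J_i^{-1}d,d\rangle\le \tfrac1\mu\|d\|^2\le \tfrac{C_{F_y}^2}{\mu}$, while its numerator, using $(B_{k,i}-J_i)u_{k,i}=d-J_iu_{k,i}$, equals $\langle J_i^{-1}(d-J_iu_{k,i}),d-J_iu_{k,i}\rangle=\langle J_i(J_i^{-1}d-u_{k,i}),J_i^{-1}d-u_{k,i}\rangle\ge \mu\|J_i^{-1}d-u_{k,i}\|^2$. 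Hence $\|J_i^{-1}d-u_{k,i}\|^2\le \tfrac{C_{F_y}^2}{\mu^2}\theta_i^2$. Applying Lemma~\ref{generalju} together with the crude bound $n(\tfrac L\mu-1)\le \tfrac{nL}\mu$ then gives
\begin{equation*}
\sum_{i=1}^{Q_k}\|J_i^{-1}d-u_{k,i}\|^2\le \frac{C_{F_y}^2}{\mu^2}\sum_{i=1}^{Q_k}\theta_i^2\le \frac{C_{F_y}^2}{\mu^2}\cdot\frac{1}{\tilde{\xi}}\Big(\frac{nL}{\mu}+\sum_{i}\Delta_i\Big),
\end{equation*}
which reproduces the claimed constant $\tfrac{nLC_{F_y}^2}{\tilde{\xi}\mu^3}$ as soon as $\sum_i\Delta_i\le 0$.

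Two coupled points are where the real work lies, and I expect them to be the main obstacle. First, I must control the accumulated potential differences $\sum_i\Delta_i$, where $\Delta_i=\langle J_{i+1}^{-1}-J_i^{-1},B_{k,i+1}\rangle+\ln{\rm Det}(J_i^{-1}J_{i+1})$; unlike the quadratic case (where all $J_i=\bar{A}$ and $\Delta_i=0$), the probe directions $\tilde{s}_i$ here vary, so I would invoke the strong self-concordance of $f$ (with constant $M=L_{f_{yy}}/\mu^{3/2}$, established just above) to show that consecutive $J_i$ differ only mildly and that the net change $\sum_i\Delta_i$ is nonpositive under the standing requirement $\bar{\xi}=\max_i\xi_{i+1}\le 2$ built into $\tilde{\xi}$. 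Second, the reference operator in the estimate is the integrated Hessian $J_i$, whereas the target in the statement is the fixed Hessian $\bar{A}=\nabla^2_{yy}f(x_k,y_{k+1})$; because every probe is anchored at the common point $y_{k+1}$, strong self-concordance controls each $\nabla^2_{yy}f(x_k,y_{k+1}+t\tilde{s}_i)$ relative to $\bar{A}$, so that $J_i\to\bar{A}$ as the probe steps shrink and the residual discrepancy between $J_i^{-1}d$ and $\bar{A}^{-1}d$ is absorbed into the same $\xi_i$ (hence $\tilde{\xi}$) factors already present. Establishing the nonpositivity of $\sum_i\Delta_i$ is the crux, exactly as in the non-asymptotic BFGS analyses this lemma is adapted from.
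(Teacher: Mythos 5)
Your proposal follows the paper's own proof essentially step for step: it invokes Lemma~\ref{generalju} with the integrated Hessians $J_i$, uses the identity $B_{k,i}u_{k,i}=d$ to turn $\theta_i^2$ into the weighted solve residual, and applies $\mu I \preceq J_i \preceq LI$ together with $\|d\|\leq C_{F_y}$ to obtain the stated constant. The two points you flag as the remaining crux — controlling $\sum_i \Delta_i$ and passing from $J_i^{-1}d$ to $\nabla^2_{yy}f(x_k,y_{k+1})^{-1}d$ — are exactly the points the paper itself disposes of only informally (asserting $\Delta_i \approx 0$ when $\zeta_i$ is chosen appropriately, and $J_i \approx \nabla^2_{yy}f(x_k,y_{k+1})$ when $M$ or $\zeta_i u_{k,i}$ is sufficiently small), so your treatment matches the published proof in both route and level of rigor.
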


\begin{proof}
 Note that in Algorithm \ref{alg:uk}:
\begin{equation*}
    J_i := \int_0^1 \nabla_{yy}^2 f(x_k,y_{k+1} + t s_i) \, dt,\quad  J_{i+1} := \int_0^1 \nabla_{yy}^2 f(x_k,y_{k+1} + t s_{i+1}) \, dt,
\end{equation*}
with $s_i=\zeta_i H_{k,i}\nabla_y F(x_k, y_{k+1})$.

When the step size $\zeta_i$ is chosen appropriately, based on the definition of $J_i$ and the properties of $f$ as stated in Assumption \ref{ass:f}, it can be concluded that $J_i$ is nearly equal to $J_{i+1}$, i.e., $\Delta_i \approx 0$, and $\mu I \preceq J_i \preceq LI$. From Lemma \ref{generalju}, it follows that
\begin{equation}
    \sum_{i=1}^{Q_k}\frac{ (J_i^{-1}\nabla_y F(x_k, y_{k+1}) - u_{k,i})^T J_i(J_i^{-1}\nabla_y F(x_k, y_{k+1}) - u_{k,i})}{\nabla_y F(x_k, y_{k+1})^T J_i^{-1}\nabla_y F(x_k, y_{k+1})}\leq \frac{nL}{\tilde{\xi}\mu}.
\end{equation}

Since $\mu I \preceq J_i \preceq LI$, we have
\begin{equation}
    \sum_{i=1}^{Q_k}\frac{ \mu \|J_i^{-1}\nabla_y F(x_k, y_{k+1}) - u_{k,i}\|^2}{\frac{1}{\mu}\|\nabla_y F(x_k, y_{k+1})\|^2}\leq \frac{nL}{\tilde{\xi}\mu}.
\end{equation}

Moreover, it follows from Assumption \ref{ass:F}:
\begin{equation*}
    \sum_{i=1}^{Q_k}\|J_i^{-1}\nabla_y F(x_k, y_{k+1}) - u_{k,i}\|^2 \leq \frac{nL C^2_{F_y}}{\tilde{\xi}\mu^3}.
\end{equation*}

If the parameter $M$ of function $f$ or $\zeta_i u_{k,i}$ is sufficiently small, $J_i$ can be considered  as an approximation of $\nabla_{yy}^2 f(x_k, y_{k+1})$. Therefore, $\theta(J_i, B_i, u_i)$ can be used to characterize the approximation  between $H_{k,i}$ and $[\nabla_{yy}^2 f(x_k, y_{k+1})]^{-1}$ along the gradient direction $\nabla_y F(x_k, y_{k+1})$, i.e.,
\begin{equation}
    \sum_{i=1}^{Q_k}\|\nabla_{yy}^2 f(x_k,y_{k+1})^{-1}\nabla_y F(x_k, y_{k+1}) - u_{k,i}\|^2 \leq \frac{nL C^2_{F_y}}{\tilde{\xi}\mu^3},
\end{equation}
where $\tilde{\xi}=\underset{i=1,\cdots, Q_k}{\rm min}\frac{1}{2 \left({\xi_i}^2 + {\xi_i}  \right)}$ and $\xi_i=e^{M\sum_{j=0}^{i-1} \|\zeta_j u_{k,j}\|_{y_{k+1}}}$.
\end{proof}

\begin{lemma}\label{lemguerror}
     Suppose that Assumption \ref{ass:f} holds. Note that $u_{k+1}=u_{k,Q_k}$ in the step 2 of Algorithm \ref{alg:foa}. If $u_{k,Q_k}=\bar{u}_k$  with
     \begin{equation}
\bar{u}_k:=\underset{i}{\rm arg min}\|\nabla_{yy}^2 f(x_k,y_{k+1})^{-1}\nabla_y F(x_k, y_{k+1}) - u_{k,i}\|^2, 
\end{equation}
then
    \begin{equation}\label{equerror}
 \|u_{k,Q_k}-u_k^*\|^2\leq 2\frac{nL{C^2_{F_y}}}{\tilde{\xi}\mu^3 Q_k}+4\left(\frac{L_{F_y}^2}{\mu^2}+\frac{C^2_{F_y}L^2_{f_{yy}}}{\mu^4}\right)\|y_k^*-y_{k,T}\|^2, \rm{ \forall Q_k>1},
\end{equation}
 where $\tilde{\xi}=\underset{i=1,\cdots, Q_k}{\rm min}\frac{1}{2 \left({\xi_i}^2 + {\xi_i}  \right)}$ and $\xi_i=e^{M\sum_{j=0}^{i-1} \|\zeta_j u_{k,j}\|_{y_{k+1}}}$.
\end{lemma}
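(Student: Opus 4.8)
The plan is to mirror the quadratic-case argument (the lemma immediately preceding Theorem \ref{agblprate1}) but to replace the constant matrix $A$ by the state-dependent Hessian $\nabla_{yy}^2 f(x_k,\cdot)$, which forces one extra term controlling the variation of the inverse Hessian. First I would introduce the shifted target $w_k := \nabla_{yy}^2 f(x_k,y_{k+1})^{-1}\nabla_y F(x_k,y_{k+1})$ and split the error by Young's inequality,
\[
\|u_{k,Q_k}-u_k^*\|^2 \le 2\|\bar{u}_k - w_k\|^2 + 2\|w_k - u_k^*\|^2 .
\]
The first term is the quasi-Newton approximation error along the fixed direction $\nabla_y F(x_k,y_{k+1})$, and the second is the error incurred by evaluating the inverse-Hessian--vector product at $y_{k+1}=y_{k,T}$ rather than at the exact lower-level solution $y_k^*$.

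For the first term I would invoke Lemma \ref{lemurate1}, which gives $\sum_{i=1}^{Q_k}\|w_k - u_{k,i}\|^2 \le nLC_{F_y}^2/(\tilde{\xi}\mu^3)$. Since $\bar{u}_k$ is chosen as the minimizer of $\|w_k-u_{k,i}\|$ over $i\in\{1,\dots,Q_k\}$, the minimum is no larger than the average, so $\|\bar{u}_k-w_k\|^2 \le \tfrac{1}{Q_k}\sum_{i=1}^{Q_k}\|w_k-u_{k,i}\|^2 \le nLC_{F_y}^2/(\tilde{\xi}\mu^3 Q_k)$, which yields the first summand $2nLC_{F_y}^2/(\tilde{\xi}\mu^3 Q_k)$ in (\ref{equerror}).

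For the second term I would decompose
\[
w_k - u_k^* = \nabla_{yy}^2 f(x_k,y_{k+1})^{-1}\big(\nabla_y F(x_k,y_{k+1})-\nabla_y F(x_k,y_k^*)\big)
+ \big(\nabla_{yy}^2 f(x_k,y_{k+1})^{-1}-\nabla_{yy}^2 f(x_k,y_k^*)^{-1}\big)\nabla_y F(x_k,y_k^*)
\]
and bound the two pieces separately. The first piece uses $\|\nabla_{yy}^2 f(x_k,y_{k+1})^{-1}\|\le 1/\mu$ (strong convexity) together with the $L_{F_y}$-Lipschitz continuity of $\nabla_y F$ in $y$. The second piece uses the identity $H_1^{-1}-H_2^{-1}=H_1^{-1}(H_2-H_1)H_2^{-1}$ with $H_1=\nabla_{yy}^2 f(x_k,y_{k+1})$ and $H_2=\nabla_{yy}^2 f(x_k,y_k^*)$, which gives $\|\nabla_{yy}^2 f(x_k,y_{k+1})^{-1}-\nabla_{yy}^2 f(x_k,y_k^*)^{-1}\|\le \tfrac{L_{f_{yy}}}{\mu^2}\|y_{k+1}-y_k^*\|$ via the $L_{f_{yy}}$-Lipschitz continuity of $\nabla_{yy}^2 f$ and $\|\nabla_y F(x_k,y_k^*)\|\le C_{F_y}$. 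Applying $(a+b)^2\le 2a^2+2b^2$ gives $2\|w_k-u_k^*\|^2\le 4\big(L_{F_y}^2/\mu^2 + C_{F_y}^2 L_{f_{yy}}^2/\mu^4\big)\|y_{k+1}-y_k^*\|^2$, and recalling $y_{k+1}=y_{k,T}$ produces the second summand of (\ref{equerror}).

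The only genuinely new ingredient beyond the quadratic case is the control of $\|\nabla_{yy}^2 f(x_k,y_{k+1})^{-1}-\nabla_{yy}^2 f(x_k,y_k^*)^{-1}\|$; I expect this estimate---reducing the inverse-Hessian gap to the Hessian gap through the two $1/\mu$ bounds and then to $\|y_{k+1}-y_k^*\|$ through Lipschitz continuity---to be the main (though still routine) step, since in the quadratic setting the Hessian is constant and this term vanishes entirely. Everything else reduces to Young's inequality, the averaging bound, and direct substitution of Assumptions \ref{ass:F} and \ref{ass:f}.
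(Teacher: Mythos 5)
Your proposal is correct and follows essentially the same route as the paper's proof: the same Young's-inequality split around $w_k=\nabla_{yy}^2 f(x_k,y_{k+1})^{-1}\nabla_y F(x_k,y_{k+1})$, the same use of Lemma \ref{lemurate1} combined with the minimizer-is-at-most-the-average argument to get the $1/Q_k$ term, and the same decomposition of $w_k-u_k^*$ into a Lipschitz-of-$\nabla_y F$ piece and an inverse-Hessian-difference piece handled via the resolvent identity. The constants you obtain match (\ref{equerror}) exactly.
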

\begin{proof}
Combining Lemma \ref{lemurate1} and the definition of $\bar{u}_k$  yields:
    \begin{equation}
\|\nabla_{yy}^2 f(x_k,y_{k+1})^{-1}\nabla_y F(x_k, y_{k+1}) - u_{k,Q_k}\|^2 \leq \frac{nL C^2_{F_y}}{\tilde{\xi}\mu^3Q_k}.
\end{equation}

Under Assumptions \ref{ass:F} and \ref{ass:f}, since $y_{k+1}=y_{k,T}$, it holds that 
     \begin{equation}\label{yk*}
         \begin{split}
             &\|\nabla_{yy}^2 f(x_k,y_{k+1})^{-1}\nabla_y F(x_k, y_{k+1})-\nabla_{yy}^2 f(x_k,y_k^*)^{-1}\nabla_y F(x_k, y_k^*)\|^2\\
             \leq& 2\|\nabla_{yy}^2 f(x_k,y_{k+1})^{-1}\nabla_y F(x_k, y_{k+1})-\nabla_{yy}^2 f(x_k,y_{k+1})^{-1}\nabla_y F(x_k, y_k^*)\|^2\\
             &+2\|\nabla_{yy}^2 f(x_k,y_{k+1})^{-1}\nabla_y F(x_k, y_k^*)-\nabla_{yy}^2 f(x_k,y_k^*)^{-1}\nabla_y F(x_k, y_k^*)\|^2\\
             \leq & 2\frac{L_{F_y}^2}{\mu^2}\|y_k^*-y_{k,T}\|^2+2C^2_{F_y}\|\nabla_{yy}^2 f(x_k,y_{k+1})^{-1}-\nabla_{yy}^2 f(x_k,y_k^*)^{-1}\|^2\\
             \leq & 2\frac{L_{F_y}^2}{\mu^2}\|y_k^*-y_{k,T}\|^2\\
             &+2C^2_{F_y}\|\nabla_{yy}^2 f(x_k,y_{k+1})^{-1}\|^2\|\nabla_{yy}^2 f(x_k,y_{k+1})-\nabla_{yy}^2 f(x_k,y_k^*)\|^2\|\nabla_{yy}^2 f(x_k,y_k^*)^{-1}\|^2\\
             \leq & 2\frac{L_{F_y}^2}{\mu^2}\|y_k^*-y_{k,T}\|^2+2\frac{C^2_{F_y}L^2_{f_{yy}}}{\mu^4}\|y_k^*-y_{k,T}\|^2\\
             =&2\left(\frac{L_{F_y}^2}{\mu^2}+\frac{C^2_{F_y}L^2_{f_{yy}}}{\mu^4}\right)\|y_k^*-y_{k,T}\|^2.
         \end{split}
     \end{equation}

     Finally, from Assumption \ref{ass:F} and the inequality (\ref{yk*}), it is derived that
     \begin{equation}
         \begin{split}
             \|u_{k,Q_k}-u_k^*\|^2\leq &2\|u_{k,Q_k}-\nabla_{yy}^2 f(x_k,y_{k+1})^{-1}\nabla_y F(x_k, y_{k+1})\|^2\\
         &+2\|\nabla_{yy}^2 f(x_k,y_{k+1})^{-1}\nabla_y F(x_k, y_{k+1})-\nabla_{yy}^2 f(x_k,y_k^*)^{-1}\nabla_y F(x_k, y_k^*)\|^2\\
         &\leq2\frac{nL{C^2_{F_y}}}{\tilde{\xi}\mu^3 Q_k}+4\left(\frac{L_{F_y}^2}{\mu^2}+\frac{C^2_{F_y}L^2_{f_{yy}}}{\mu^4}\right)\|y_k^*-y_{k,T}\|^2.
         \end{split}
     \end{equation}
\end{proof}

\begin{theorem}\label{thm36g}
({\rm Restatement of Theorem \ref{gblprate} with full parameter specifications}) Suppose that Assumptions \ref{ass:F}, \ref{ass:f} and \ref{ass:phi} hold. Choose the stepsize $\beta$ and warm-up iteration steps $P$ such that $(1-\beta\mu)^P\|y_{k}-y_{k}^*\|\leq \frac{1}{300\sqrt{\mu}}$, and ensure the initial Hessian approximation matrix $H_0$ satisfies: $\|\nabla^2_{yy}f(x_k,y^*(x_k))^{-1/2}\big(H_0^{-1}-\nabla^2_{yy}f(x_k,y^*(x_k))\big)\nabla^2_{yy}f(x_k,y^*(x_k))^{-1/2}\|_F\leq\frac{1}{7}$. Define $\tau=\kappa (\frac{1}{T})^{T}(1-\beta\mu)^P\big((1+\varepsilon)+6(1+\frac{1}{\varepsilon})L_y^2\alpha^2(L_{F_{x}}^2+\frac{L_{f_{xy}}^2C_{F_{y}}^2}{\mu^2}+\frac{4M_{f_{xy}}^2L_{F_y}^2}{\mu^2}+\frac{4M_{f_{xy}}^2C^2_{F_y}L^2_{f_{yy}}}{\mu^4})\big)$ and $\omega=6(L_{F_{x}}^2+\frac{L_{f_{xy}}^2C_{F_{y}}^2}{\mu^2}+\frac{4M_{f_{xy}}^2L_{F_y}^2}{\mu^2}+\frac{4M_{f_{xy}}^2C^2_{F_y}L^2_{f_{yy}}}{\mu^4})(1+\frac{1}{\varepsilon})\kappa (\frac{1}{T})^{T}(1-\beta\mu)^PL_y^2$. Choose the stepsize $\alpha>0$, the positive constant $\varepsilon>0$  and iterate $T>0$ such that
 \[
 \tau<1 \quad {\rm and} \quad \alpha L_{\Phi}+\omega \alpha^2 \left( \frac{1}{2} + \alpha L_{\Phi} \right)\frac{1}{1-\tau}\leq \frac{1}{4}.
 \] 
 Then, the solution $x_k$ generated by Algorithm \ref{alg:foa} achieves the following convergence rate:
 \begin{equation}
     \frac{1}{K}\sum_{k=0}^{K-1}\|\nabla{\Phi}(x_k)\|^2\leq  \frac{4(\Phi(x_0)-{\inf_x \Phi(x)})}{\alpha K}+\frac{3\delta_0}{K(1-\tau)}+\frac{1}{K}\sum_{k=0}^{K-1}\frac{18nLM_{f_{xy}}^2 C_{F_y}^2}{\mu^3{\tilde{\xi}}  Q_k},
 \end{equation}
 where $\delta_0=3\kappa (\frac{1}{T})^{T}(1-\beta\mu)^P(L_{F_{x}}^2+\frac{L_{f_{xy}}^2C_{F_{y}}^2}{\mu^2}+\frac{4M_{f_{xy}}^2L_{F_y}^2}{\mu^2}+\frac{4M_{f_{xy}}^2C^2_{F_y}L^2_{f_{yy}}}{\mu^4})\|y_0^*-y_{0}\|^2$ is the initial error. Specifically, if $Q_k=k+1$, we have:
 \begin{equation}\label{agblpratek}
     \frac{1}{K}\sum_{k=0}^{K-1}\|\nabla{\Phi}(x_k)\|^2\leq  \frac{4(\Phi(x_0)-\Phi(x^*))}{\alpha K}+\frac{3\delta_0}{K(1-\tau)}+\frac{18nLM_{f_{xy}}^2 C_{F_y}^2{\rm ln}K}{\mu^3{\tilde{\xi}}K}.
 \end{equation}
\end{theorem}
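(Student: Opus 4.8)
The plan is to follow the three-step decomposition of the proof sketch in Appendix~\ref{sec:proofske}, reusing the overall architecture of the quadratic-case proof of Theorem~\ref{agblprate1} but replacing the constant lower-level Hessian $A$ by the genuinely varying averaged Hessians $J_i$. \textbf{Step 1 (error decomposition).} First I would expand $\tilde{\nabla}\Phi(x_k)-\nabla\Phi(x_k)$ and insert the cross term $[\nabla^2_{xy}f(x_k,y_{k,T})]^T u_k^*$, splitting the error into the $y$-variation of $\nabla_x F$, the $u$-approximation error scaled by the Jacobian bound $M_{f_{xy}}$, and the Jacobian variation scaled by $\|u_k^*\|\le C_{F_y}/\mu$. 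Invoking Assumptions~\ref{ass:F}(i) and \ref{ass:f}(iii)--(iv) yields exactly inequality~(\ref{eqphi}), so that everything reduces to bounding $\|y_{k,T}-y_k^*\|^2$ and $\|u_{k,Q_k}-u_k^*\|^2$.

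\textbf{Step 2 (the two error bounds).} For the lower-level iterate error I would use the superlinear rate of BFGS (Lemma~\ref{lemuserate}): the warm-up condition $(1-\beta\mu)^P\|y_k-y_k^*\|\le \tfrac{1}{300\sqrt{\mu}}$ together with the spectral condition on $H_0$ guarantees that the hypotheses~(\ref{yass}) hold at the start of each outer iteration, giving $\|y_{k,T}-y_k^*\|^2\le \kappa(\tfrac1T)^T(1-\beta\mu)^P\|y_k-y_k^*\|^2$; Young's inequality and the $L_y$-Lipschitzness of $y^*$ from Lemma~\ref{prelemma} then produce the one-step recursion~(\ref{eqy1}). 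The $u$-error is the substantive part: I would invoke Lemmas~\ref{lemurate1} and \ref{lemguerror}, whose engine is the potential-function estimate of Lemma~\ref{generalju}. Because $f$ is strongly self-concordant with constant $M=L_{f_{yy}}/\mu^{3/2}$, and provided the scaled inner steps $\zeta_i u_{k,i}$ are small enough that $\bar{\xi}=\max_i\xi_{i+1}\le 2$ and $\Delta_i\approx 0$, the secant-direction errors are summable and yield $\sum_{i=1}^{Q_k}\|J_i^{-1}\nabla_y F-u_{k,i}\|^2\le nLC_{F_y}^2/(\tilde{\xi}\mu^3)$; choosing the best iterate $\bar{u}_k$ and transferring the perturbation from $y_{k,T}$ to $y_k^*$ (controlling $\|\nabla^2_{yy}f(x_k,y_{k+1})^{-1}-\nabla^2_{yy}f(x_k,y_k^*)^{-1}\|$ through Assumption~\ref{ass:f}(ii)) gives~(\ref{equerror}).

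\textbf{Step 3 (assembly).} I would substitute~(\ref{equerror}) into~(\ref{eqphi}), feed the resulting hypergradient-error bound into the $y$-recursion~(\ref{eqy1}), and convert $\|x_k-x_{k-1}\|^2=\alpha^2\|\tilde{\nabla}\Phi(x_{k-1})\|^2$ into terms involving $\|\nabla\Phi(x_{k-1})\|^2$ and the error itself; this closes the recursion and fixes the definitions of $\tau$ and $\omega$. Telescoping in $k$ (valid since $\tau<1$) produces the hypergradient-error bound with a geometric-memory term $\sum_j\tau^j\|\nabla\Phi(x_{k-1-j})\|^2$ and a $1/Q_k$ term. Finally I would apply the $L_\Phi$-descent inequality from Lemma~\ref{prelemma}, insert the error bound, sum over $k=0,\dots,K-1$, and use the convolution inequality $\sum_k\sum_j a_j b_{k-1-j}\le (\sum_k a_k)(\sum_j b_j)$ to decouple the double sums; under the stated condition $\alpha L_\Phi+\omega\alpha^2(\tfrac12+\alpha L_\Phi)\tfrac{1}{1-\tau}\le\tfrac14$ the coefficient of $\tfrac1K\sum\|\nabla\Phi(x_k)\|^2$ stays at least $\tfrac14$, and rearranging gives the claimed rate; specializing $Q_k=k+1$ turns $\sum_k 1/Q_k$ into $\ln K$ and yields~(\ref{agblpratek}).

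\textbf{Main obstacle.} The hard part will be the $u$-error in Step 2. In the quadratic case $J_i\equiv A$, so Lemma~\ref{lemqfu} applies verbatim and the potential $\sigma$ telescopes exactly; here each $J_i$ is a distinct averaged Hessian and the telescoping of $\psi$ is polluted by the $\Delta_i$ terms and by the factors $\xi_{i+1}$. Keeping $\tilde{\xi}$ bounded away from zero (equivalently $\bar{\xi}\le 2$) and showing $\Delta_i$ is negligible both rely on the inner steps $\zeta_i\|u_{k,i}\|$ remaining small throughout the $Q_k$ iterations; turning the heuristic ``$\approx$'' of Lemma~\ref{lemurate1} into a rigorous bound is the most delicate ingredient and is precisely the source of the extra $1/\tilde{\xi}$ factor in the final rate.
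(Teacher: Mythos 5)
Your proposal follows essentially the same route as the paper's own proof: the identical decomposition (\ref{eqphi}), the same use of Lemma~\ref{lemuserate} under the warm-up and $H_0$ conditions to get the $y$-recursion of Lemma~\ref{lemy1}, the same potential-function machinery (Lemmas~\ref{generalju}, \ref{lemurate1}, \ref{lemguerror}) for the $u$-error, and the same assembly via the $L_\Phi$-descent inequality, telescoping, and the convolution bound, with $Q_k=k+1$ yielding the $\ln K$ term. You also correctly flag the genuinely delicate point — the $\Delta_i\approx 0$ and $\bar{\xi}\le 2$ conditions in Lemma~\ref{lemurate1}, which the paper itself handles only under an appropriate-step-size stipulation — so your plan is faithful to the argument as it stands.
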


\begin{proof}
Substituting the inequality (\ref{equerror}) into (\ref{eqphi}) yields:
\begin{equation}\label{eqphi1g}
\begin{split}
      \|\tilde{\nabla} \Phi(x_k)-\nabla\Phi(x_k)\|^2
    &\leq 3\big(L_{F_{x}}^2+\frac{L_{f_{xy}}^2C_{F_{y}}^2}{\mu^2}\big)\|y_{k,T}-y^*_k\|^2\\
    &+3M_{f_{xy}}^2\left(2\frac{nL{C^2_{F_y}}}{\tilde{\xi}\mu^3 Q_k}+4\big(\frac{L_{F_y}^2}{\mu^2}+\frac{C^2_{F_y}L^2_{f_{yy}}}{\mu^4}\big)\|y_k^*-y_{k,T}\|^2\right)\\
    &\leq \left(3L_{F_{x}}^2+\frac{3L_{f_{xy}}^2C^2_{F_{y}}}{\mu^2}+{12 M_{f_{xy}}^2}\big(\frac{L_{F_y}^2}{\mu^2}+\frac{C^2_{F_y}L^2_{f_{yy}}}{\mu^4}\big)\right)\|y_k^*-y_{k,T}\|^2\\
    &+6\frac{nLM_{f_{xy}}^2{C^2_{F_y}}}{\mu^3{\tilde{\xi}} Q_k}.\\
\end{split}
\end{equation}

 Then, based on Lemma \ref{lemy1}, substituting the above inequality into (\ref{eqy1}) yields:
\begin{equation}\label{eqyk1}
    \begin{split}
        \|y_k^*-y_{k,T}\|^2&\leq (1+\varepsilon)\kappa (\frac{1}{T})^{T}(1-\beta\mu)^P\|y_{k-1,T}-y_{k-1}^*\|^2+2(1+\frac{1}{\varepsilon})\kappa (\frac{1}{T})^{T}(1-\beta\mu)^PL_y^2\alpha^2\|\nabla \Phi(x_{k-1})\|^2\\
    &+2(1+\frac{1}{\varepsilon})\kappa (\frac{1}{T})^{T}(1-\beta\mu)^PL_y^2\alpha^2\|\tilde{\nabla} \Phi(x_{k-1})-\nabla\Phi(x_{k-1})\|^2\\
    &\leq \tau\|y_{k-1}^*-y_{k-1,T}\|^2+2(1+\frac{1}{\varepsilon})\kappa (\frac{1}{T})^{T}(1-\beta\mu)^PL_y^2\alpha^2\|\nabla \Phi(x_{k-1})\|^2\\
    &+12(1+\frac{1}{\varepsilon})\kappa (\frac{1}{T})^{T}(1-\beta\mu)^PL_y^2\alpha^2\frac{nLM_{f_{xy}}^2{C^2_{F_y}}}{\mu^3{\tilde{\xi}} Q_{k-1}},\\
    \end{split}
\end{equation}
where $\tau=\kappa (\frac{1}{T})^{T}(1-\beta\mu)^P\big((1+\varepsilon)+6(1+\frac{1}{\varepsilon})L_y^2\alpha^2(L_{F_{x}}^2+\frac{L_{f_{xy}}^2C^2_{F_{y}}}{\mu^2}+\frac{4M_{f_{xy}}^2L_{F_y}^2}{\mu^2}+\frac{4M_{f_{xy}}^2C^2_{F_y}L^2_{f_{yy}}}{\mu^4})\big)$.

Summing the inequality (\ref{eqyk1}) from 0 to $k$ results in:
\begin{equation*}
    \begin{split}
        \|y_k^*-y_{k,T}\|^2&\leq \tau^k\|y_0^*-y_{0,T}\|^2+2(1+\frac{1}{\varepsilon})\kappa (\frac{1}{T})^{T}(1-\beta\mu)^PL_y^2\alpha^2\sum_{j=0}^{k-1}{\tau^j \|\nabla \Phi(x_{k-1-j})\|^2}\\
        &+12(1+\frac{1}{\varepsilon})\kappa (\frac{1}{T})^{T}(1-\beta\mu)^PL_y^2\alpha^2\frac{nLM_{f_{xy}}^2{C^2_{F_y}}}{\mu^3{\tilde{\xi}} }\sum_{j=0}^{k-1}{\tau^j \frac{1}{Q_{k-1-j}}}.
    \end{split}
\end{equation*}

 Combining the inequality (\ref{eqphi1g}) and $\|y_0^*-y_{0,T}\|^2\leq \kappa(\frac{1}{T})^{T}(1-\beta\mu)^P\|y_0^*-y_{0}\|^2$, it follows that
 \begin{equation}\label{eqphid}
     \begin{split}
\|\tilde{\nabla} \Phi(x_k)-\nabla\Phi(x_k)\|^2&\leq \delta_0 \tau^k+\omega \alpha^2\sum_{j=0}^{k-1}{\tau^j \|\nabla \Phi(x_{k-1-j})\|^2}\\
&+6\omega\alpha^2\frac{nLM_{f_{xy}}^2{C^2_{F_y}}}{\mu^3 {\tilde{\xi}}}\sum_{j=0}^{k-1}{\tau^j \frac{1}{Q_{k-1-j}}}+6\frac{nLM_{f_{xy}}^2{C^2_{F_y}}}{\mu^3{\tilde{\xi}} Q_k},
     \end{split}
 \end{equation}
where $\delta_0=3\kappa (\frac{1}{T})^{T}(1-\beta\mu)^P(L_{F_{x}}^2+\frac{L_{f_{xy}}^2C_{F_{y}}^2}{\mu^2}+\frac{4M_{f_{xy}}^2L_{F_y}^2}{\mu^2}+\frac{4M_{f_{xy}}^2C^2_{F_y}L^2_{f_{yy}}}{\mu^4})\|y_0^*-y_{0}\|^2$ and $\omega=6(L_{F_{x}}^2+\frac{L_{f_{xy}}^2C_{F_{y}}^2}{\mu^2}+\frac{4M_{f_{xy}}^2L_{F_y}^2}{\mu^2}+\frac{4M_{f_{xy}}^2C^2_{F_y}L^2_{f_{yy}}}{\mu^4})(1+\frac{1}{\varepsilon})\kappa (\frac{1}{T})^{T}(1-\beta\mu)^PL_y^2$.

Since $\nabla\Phi(\cdot)$ is $L_{\Phi}-$Lipschitz, it can be obtained that
    \begin{equation}
    \begin{split}
        \Phi(x_{k+1}) &\leq \Phi(x_k) + \langle \nabla \Phi(x_k), x_{k+1} - x_k \rangle + \frac{L_{\Phi}}{2} \|x_{k+1} - x_k\|^2 \\
&\leq \Phi(x_k) - \alpha\langle \nabla \Phi(x_k), \tilde{\nabla} \Phi(x_k) - \nabla \Phi(x_k) \rangle - \alpha\| \nabla \Phi(x_k)\|^2 + \alpha^2 L_{\Phi} \|\nabla \Phi(x_k)\|^2 \\
&+ \alpha^2 L_{\Phi} \|\nabla \Phi(x_k) - \tilde{\nabla} \Phi(x_k)\|^2\\
&\leq \Phi(x_k) - \left( \frac{\alpha}{2} - \alpha^2 L_{\Phi} \right) \|\nabla \Phi(x_k)\|^2 + \left( \frac{\alpha}{2} + \alpha^2 L_{\Phi} \right) \|\nabla \Phi(x_k) - \tilde{\nabla} \Phi(x_k)\|^2.
    \end{split}
    \end{equation}

    Using the inequality (\ref{eqphid}) yields:
    \begin{equation}\label{eqph}
    \begin{split}
        \Phi(x_{k+1}) \leq & \Phi(x_k) - \left( \frac{\alpha}{2} - \alpha^2 L_{\Phi} \right) \|\nabla \Phi(x_{k})\|^2 + \left( \frac{\alpha}{2} + \alpha^2 L_{\Phi} \right) \|\nabla \Phi(x_k) - \tilde{\nabla} \Phi(x_{k})\|^2\\
        \leq& \Phi(x_k) - \left( \frac{\alpha}{2} - \alpha^2 L_{\Phi} \right) \|\nabla \Phi(x_k)\|^2 + \left( \frac{\alpha}{2} + \alpha^2 L_{\Phi} \right)\delta_0 \tau^k\\
        &+ \omega \alpha^2 \left( \frac{\alpha}{2} + \alpha^2 L_{\Phi} \right) \sum_{j=0}^{k-1} \tau^j \|\nabla \Phi(x_{k-1-j})\|^2 + \left( \frac{\alpha}{2} + \alpha^2 L_{\Phi} \right) \frac{6nLM_{f_{xy}}^2 C_{F_y}^2}{\mu^3{\tilde{\xi}} Q_k}\\
        &+ 6\omega\left( \frac{\alpha}{2} + \alpha^2 L_{\Phi} \right)\alpha^2 \frac{nLM_{f_{xy}}^2 C_{F_y}^2}{\mu^3{\tilde{\xi}}  }\sum_{j=0}^{k-1}{\tau^j \frac{1}{Q_{k-1-j}}}.
    \end{split}
\end{equation}

Finally, by telescoping the inequality (\ref{eqph}) from $k=0$ to $k=K-1$, it is derived that
\begin{equation}
    \begin{split}
        \left( \frac{\alpha}{2} - \alpha^2 L_{\Phi} \right) \sum_{k=0}^{K-1}\|\nabla \Phi(x_{k})\|^2\leq& \Phi(x_0)-\Phi(x_K)+\left( \frac{\alpha}{2} + \alpha^2 L_{\Phi} \right)\frac{\delta_0}{1-\tau}\\
        &+\omega \alpha^2 \left( \frac{\alpha}{2} + \alpha^2 L_{\Phi} \right) \sum_{k=0}^{K-1}\sum_{j=0}^{k-1} \tau^j \|\nabla \Phi(x_{k-1-j})\|^2\\
        &+ \sum_{k=0}^{K-1}\left( \frac{\alpha}{2} + \alpha^2 L_{\Phi} \right) \frac{6nLM_{f_{xy}}^2 C_{F_y}^2}{\mu^3 {\tilde{\xi}}Q_k}\\
        &+6\omega \left( \frac{\alpha}{2} + \alpha^2 L_{\Phi} \right)\alpha^2 \frac{nLM_{f_{xy}}^2 C_{F_y}^2}{\mu^3{\tilde{\xi}} }\sum_{k=0}^{K-1}\sum_{j=0}^{k-1}{\tau^j \frac{1}{Q_{k-1-j}}}.
    \end{split}
\end{equation}

Moreover, due to $\sum_{k=0}^{K-1}\sum_{j=0}^{k-1}a_j b_{k-1-j}\leq \sum_{k=0}^{K-1}a_k\sum_{j=0}^{K-1}b_j$, we can deduce that
\begin{equation*}
\begin{split}
    &\sum_{k=0}^{K-1}\sum_{j=0}^{k-1} \tau^j \|\nabla \Phi(x_{k-1-j})\|^2\leq \sum_{k=0}^{K-1} \tau^k \sum_{k=0}^{K-1}\|\nabla\Phi(x_{k})\|^2\leq \frac{1}{1-\tau}\sum_{k=0}^{K-1}\|\nabla\Phi(x_{k})\|^2,\\
    &\sum_{k=0}^{K-1}\sum_{j=0}^{k-1} \tau^j \frac{1}{Q_{k-1-j}}\leq \sum_{k=0}^{K-1} \tau^k \sum_{k=0}^{K-1}\frac{1}{Q_{k}}\leq \frac{1}{1-\tau}\sum_{k=0}^{K-1}\frac{1}{Q_{k}}.
    \end{split}
\end{equation*}

Then, the following inequality holds:
\begin{equation}
    \begin{split}
&\left( \frac{1}{2} - \alpha L_{\Phi}-\omega \alpha^2 \left( \frac{1}{2} + \alpha L_{\Phi} \right)\frac{1}{1-\tau} \right)\frac{1} {K}\sum_{k=0}^{K-1}\|\nabla \Phi(x_{k})\|^2\\
\leq&\frac{\Phi(x_0)-\Phi(x_K)}{\alpha K}+\left( \frac{1}{2} + \alpha L_{\Phi} \right)\frac{\delta_0}{K(1-\tau)}\\
&+ \frac{1} {K}\sum_{k=0}^{K-1}\left( \frac{1}{2} + \alpha L_{\Phi} \right)\frac{6nLM_{f_{xy}}^2 C_{F_y}^2}{\mu^3{\tilde{\xi}} Q_k}+\frac{1} {K}\sum_{k=0}^{K-1}\frac{6\omega}{1-\tau} \left( \frac{1}{2} + \alpha L_{\Phi} \right)\alpha^2 \frac{nLM_{f_{xy}}^2 C_{F_y}^2}{\mu^3{\tilde{\xi}} Q_k}.
\end{split}
\end{equation}

If $\alpha L_{\Phi}+\omega \alpha^2 \left( \frac{1}{2} + \alpha L_{\Phi} \right)\frac{1}{1-\tau}\leq \frac{1}{4} $, then
\begin{equation}\label{corocite}
    \frac{1} {K}\sum_{k=0}^{K-1}\|\nabla \Phi(x_{k})\|^2\leq \frac{4(\Phi(x_0)-{\inf_x \Phi(x)})}{\alpha K}+\frac{3\delta_0}{K(1-\tau)}+\frac{1} {K}\sum_{k=0}^{K-1}\frac{18nLM_{f_{xy}}^2 C_{F_y}^2}{\mu^3{\tilde{\xi}} Q_k}.
\end{equation}

Finally, by substituting \(Q_k = k + 1\) into  (\ref{corocite}), (\ref{agblpratek}) is derived.
\end{proof}

 In addition, we present another theorem that does not require the strict assumption on the initial Hessian matrix \( H_0 \), but does require that \( T \geq 8n \ln\left(\frac{2L}{\mu}\right) \).

 \textcolor{black}{
\begin{theorem}\label{thm36g1}
Suppose that Assumptions \ref{ass:F}, \ref{ass:f} and \ref{ass:phi}  hold. Choose the stepsize $\beta$ and warm-up iteration steps $P$ such that $(1-\beta\mu)^P\|y_{k}-y_{k}^*\|\leq K_1$, where $K_1$ is defined in (\ref{eqlca}). Set $H_0=LI$ and $T\geq 8n{\rm ln}{\frac{2L}{\mu}}$. Define $\tau=c_l^2 \kappa^3 (\frac{1}{T})^{T}(1-\beta\mu)^P\big((1+\varepsilon)+6(1+\frac{1}{\varepsilon})L_y^2\alpha^2(L_{F_{x}}^2+\frac{L_{f_{xy}}^2C_{F_{y}}^2}{\mu^2}+\frac{4M_{f_{xy}}^2L_{F_y}^2}{\mu^2}+\frac{4M_{f_{xy}}^2C^2_{F_y}L^2_{f_{yy}}}{\mu^4})\big)$ and $\omega=6(L_{F_{x}}^2+\frac{L_{f_{xy}}^2C_{F_{y}}^2}{\mu^2}+\frac{4M_{f_{xy}}^2L_{F_y}^2}{\mu^2}+\frac{4M_{f_{xy}}^2C^2_{F_y}L^2_{f_{yy}}}{\mu^4})(1+\frac{1}{\varepsilon})c_l^2 \kappa^3 (\frac{1}{T})^{T}(1-\beta\mu)^PL_y^2$. Choose the stepsize $\alpha>0$, the positive constant $\varepsilon>0$  and iterate $T>0$ such that
  \[
 \tau<1 \quad {\rm and} \quad \alpha L_{\Phi}+\omega \alpha^2 \left( \frac{1}{2} + \alpha L_{\Phi} \right)\frac{1}{1-\tau}\leq \frac{1}{4}.
 \] 
 Then, the solution $x_k$ generated by Algorithm \ref{alg:foa} achieves the following convergence rate:
 \begin{equation}
     \frac{1}{K}\sum_{k=0}^{K-1}\|\nabla{\Phi}(x_k)\|^2\leq  \frac{4(\Phi(x_0)-\Phi(x^*))}{\alpha K}+\frac{3\delta_0}{K(1-\tau)}+\frac{1}{K}\sum_{k=0}^{K-1}\frac{18nLM_{f_{xy}}^2 C_{F_y}^2}{\mu^3{\tilde{\xi}}  Q_k},
 \end{equation}
 where $\delta_0=3c_l^2 \kappa^3\kappa (\frac{1}{T})^{T}(1-\beta\mu)^P(L_{F_{x}}^2+\frac{L_{f_{xy}}^2C_{F_{y}}^2}{\mu^2}+\frac{4M_{f_{xy}}^2L_{F_y}^2}{\mu^2}+\frac{4M_{f_{xy}}^2C^2_{F_y}L^2_{f_{yy}}}{\mu^4})\|y_0^*-y_{0}\|^2$ is the initial error. Specifically, if $Q_k=k+1$, we have:
 \begin{equation}\label{agblpratekl}
      \frac{1}{K}\sum_{k=0}^{K-1}\|\nabla{\Phi}(x_k)\|^2\leq  \frac{4(\Phi(x_0)-{\inf_x \Phi(x)})}{\alpha K}+\frac{3\delta_0}{K(1-\tau)}+\frac{18nLM_{f_{xy}}^2 C_{F_y}^2{\rm ln}K}{\mu^3{\tilde{\xi}}K}.
 \end{equation}
 \end{theorem}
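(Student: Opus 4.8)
The plan is to follow the three-step scheme already used for Theorem~\ref{thm36g} (and Theorem~\ref{qfblprate}) essentially verbatim, changing only the lower-level convergence estimate: in place of the initialization-based superlinear rate of Lemma~\ref{lemy1}, which relies on the Frobenius-norm condition on $H_0$, I would invoke the \emph{local} superlinear rate of Lemma~\ref{lemylc}. This is exactly the trade-off advertised in the statement, namely dropping the $H_0$ assumption in favor of the larger iterate budget $T\geq K_0=8n\ln\frac{2L}{\mu}$ together with $H_0=LI$ and the local-region condition $(1-\beta\mu)^P\|y_k-y_k^*\|\leq K_1$. Step~1 and the $u$-error estimate are then entirely unchanged: I would decompose the hypergradient error as in (\ref{eqphi}), bound the $Q_k$-step error $\|u_{k,Q_k}-u_k^*\|^2$ by Lemma~\ref{lemguerror}, i.e. (\ref{equerror}), and substitute to recover the analog of (\ref{eqphi1g}), in which the hypergradient error is controlled by the same composite constant times $\|y_k^*-y_{k,T}\|^2$ plus $6\,nLM_{f_{xy}}^2C_{F_y}^2/(\mu^3\tilde{\xi}Q_k)$. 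None of these constants depend on $H_0$, so this part transfers without modification.

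\textbf{Step~2 (lower-level error).} This is where the substitution bites. Because $T\geq K_0$, $H_0=LI$, and $(1-\beta\mu)^P\|y_k-y_k^*\|\leq K_1$, Lemma~\ref{lc} applies to the inner BFGS loop, and Lemma~\ref{lemylc} yields the contraction $\|y_{k,T}-y_k^*\|^2\leq c_l^2\kappa^3(\tfrac1T)^T(1-\beta\mu)^P\|y_{k,0}-y_k^*\|^2$ with $c_l=2t_c^{T/2}$ and $t_c=\tfrac98K_0$. Feeding the analog of (\ref{eqphi1g}) into this estimate and applying Young's inequality produces the one-step recursion in the form of (\ref{eqyk1}), but now with the $\tau$ and $\omega$ \emph{as redefined in the statement}, i.e. with $c_l^2\kappa^3(\tfrac1T)^T(1-\beta\mu)^P$ playing the role that $\kappa(\tfrac1T)^T(1-\beta\mu)^P$ played in Theorem~\ref{thm36g}. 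Telescoping this recursion over $k$ and substituting back gives the hypergradient-error bound in the shape of (\ref{eqphid}), with the stated $\delta_0=3c_l^2\kappa^3(\tfrac1T)^T(1-\beta\mu)^P(\cdots)\|y_0^*-y_0\|^2$.

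\textbf{Step~3 (descent and telescoping).} This step is identical to Theorem~\ref{thm36g}. Using the $L_\Phi$-smoothness of $\Phi$ from Lemma~\ref{prelemma}, I would write the standard descent inequality for $x_{k+1}=x_k-\alpha\tilde{\nabla}\Phi(x_k)$, insert the hypergradient-error bound, sum from $k=0$ to $K-1$, collapse the double sums via $\sum_{k}\sum_{j}a_jb_{k-1-j}\leq(\sum_k a_k)(\sum_j b_j)$ and $\sum_k\tau^k\leq\tfrac1{1-\tau}$, and then use $\tau<1$ together with $\alpha L_\Phi+\omega\alpha^2(\tfrac12+\alpha L_\Phi)\tfrac1{1-\tau}\leq\tfrac14$ to move the $\|\nabla\Phi\|^2$ terms to the left-hand side. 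Setting $Q_k=k+1$ and using $\sum_{k=0}^{K-1}\tfrac1{k+1}\leq 1+\ln K$ then yields (\ref{agblpratekl}).

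\textbf{Main obstacle.} The delicate point is the feasibility of $\tau<1$. The local rate carries the prefactor $c_l^2=4t_c^T$ with $t_c=\tfrac98K_0=9n\ln\frac{2L}{\mu}$, so the contraction factor is $4(t_c/T)^T\kappa^3(1-\beta\mu)^P$; unlike the $\kappa(\tfrac1T)^T$ factor in Theorem~\ref{thm36g}, this is small only once $T$ exceeds $t_c$ by enough that $(t_c/T)^T$ dominates $4\kappa^3$, so the admissible parameter window of Appendix~\ref{sec:complexity} must be re-derived with $c_l^2$ in place of $c_t^2$. A secondary subtlety is that Lemma~\ref{lc} requires the local-region condition at \emph{every} outer step; strictly this should be propagated inductively through $\|y_{k-1}^*-y_k^*\|\leq L_y\alpha\|\tilde{\nabla}\Phi(x_{k-1})\|$, but, as in Theorem~\ref{thm36g}, it is imposed here as a standing hypothesis on $\beta$ and $P$.
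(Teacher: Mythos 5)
Your proposal matches the paper's own proof of Theorem~\ref{thm36g1} essentially line for line: the paper likewise keeps the hypergradient decomposition (\ref{eqphi}) and the $u$-error bound (\ref{equerror}) from Lemma~\ref{lemguerror} unchanged, swaps Lemma~\ref{lemy1} for the local-rate Lemma~\ref{lemylc} (yielding the recursion (\ref{eqyk1l}) with the redefined $\tau$, $\omega$, $\delta_0$ carrying the $c_l^2\kappa^3$ prefactor), and then repeats the descent-and-telescoping argument of Theorem~\ref{thm36g} verbatim, imposing $\tau<1$ and the local-region condition as standing hypotheses exactly as you do. Your closing remarks on the feasibility of $\tau<1$ and the inductive propagation of the locality condition are reasonable caveats, but they do not constitute a deviation from the paper's argument.
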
}
{\color{black}
\begin{proof}
Substituting the inequality (\ref{equerror}) into (\ref{eqphi}) yields:
\begin{equation}\label{eqphi1g1}
\begin{split}
      \|\tilde{\nabla} \Phi(x_k)-\nabla\Phi(x_k)\|^2
    &\leq 3\big(L_{F_{x}}^2+\frac{L_{f_{xy}}^2C_{F_{y}}^2}{\mu^2}\big)\|y_{k,T}-y^*_k\|^2\\
    &+3M_{f_{xy}}^2\left(2\frac{nL{C^2_{F_y}}}{\tilde{\xi}\mu^3 Q_k}+4\big(\frac{L_{F_y}^2}{\mu^2}+\frac{C^2_{F_y}L^2_{f_{yy}}}{\mu^4}\big)\|y_k^*-y_{k,T}\|^2\right)\\
    &\leq \left(3L_{F_{x}}^2+\frac{3L_{f_{xy}}^2C^2_{F_{y}}}{\mu^2}+{12 M_{f_{xy}}^2}\big(\frac{L_{F_y}^2}{\mu^2}+\frac{C^2_{F_y}L^2_{f_{yy}}}{\mu^4}\big)\right)\|y_k^*-y_{k,T}\|^2\\
    &+6\frac{nLM_{f_{xy}}^2{C^2_{F_y}}}{\mu^3{\tilde{\xi}} Q_k}.\\
\end{split}
\end{equation}
 Then, based on Lemma \ref{lemylc}, substituting the above inequality into (\ref{eqylc}) yields:
\begin{equation}\label{eqyk1l}
    \begin{split}
        \|y_k^*-y_{k,T}\|^2&\leq (1+\varepsilon)c_l^2 \kappa^3 (\frac{1}{T})^{T}(1-\beta\mu)^P\|y_{k-1,T}-y_{k-1}^*\|^2+2(1+\frac{1}{\varepsilon})c_l^2 \kappa^3 (\frac{1}{T})^{T}(1-\beta\mu)^PL_y^2\alpha^2\|\nabla \Phi(x_{k-1})\|^2\\
    &+2(1+\frac{1}{\varepsilon})c_l^2 \kappa^3 (\frac{1}{T})^{T}(1-\beta\mu)^PL_y^2\alpha^2\|\tilde{\nabla} \Phi(x_{k-1})-\nabla\Phi(x_{k-1})\|^2\\
    &\leq \tau\|y_{k-1}^*-y_{k-1,T}\|^2+2(1+\frac{1}{\varepsilon})c_l^2 \kappa^3(\frac{1}{T})^{T}(1-\beta\mu)^PL_y^2\alpha^2\|\nabla \Phi(x_{k-1})\|^2\\
    &+12(1+\frac{1}{\varepsilon})c_l^2 \kappa^3 (\frac{1}{T})^{T}(1-\beta\mu)^PL_y^2\alpha^2\frac{nLM_{f_{xy}}^2{C^2_{F_y}}}{\mu^3{\tilde{\xi}} Q_{k-1}},\\
    \end{split}
\end{equation}
where $\tau=c_l^2 \kappa^3 (\frac{1}{T})^{T}(1-\beta\mu)^P\big((1+\varepsilon)+6(1+\frac{1}{\varepsilon})L_y^2\alpha^2(L_{F_{x}}^2+\frac{L_{f_{xy}}^2C^2_{F_{y}}}{\mu^2}+\frac{4M_{f_{xy}}^2L_{F_y}^2}{\mu^2}+\frac{4M_{f_{xy}}^2C^2_{F_y}L^2_{f_{yy}}}{\mu^4})\big)$.

Summing the inequality (\ref{eqyk1l}) from 0 to $k$ results in:
\begin{equation*}
    \begin{split}
        \|y_k^*-y_{k,T}\|^2&\leq \tau^k\|y_0^*-y_{0,T}\|^2+2(1+\frac{1}{\varepsilon})c_l^2 \kappa^3 (\frac{1}{T})^{T}(1-\beta\mu)^PL_y^2\alpha^2\sum_{j=0}^{k-1}{\tau^j \|\nabla \Phi(x_{k-1-j})\|^2}\\
        &+12(1+\frac{1}{\varepsilon})c_l^2 \kappa^3 (\frac{1}{T})^{T}(1-\beta\mu)^PL_y^2\alpha^2\frac{nLM_{f_{xy}}^2{C^2_{F_y}}}{\mu^3{\tilde{\xi}} }\sum_{j=0}^{k-1}{\tau^j \frac{1}{Q_{k-1-j}}}.
    \end{split}
\end{equation*}

 Combining the inequality (\ref{eqphi1g1}) and $\|y_0^*-y_{0,T}\|^2\leq c_l^2 \kappa^3(\frac{1}{T})^{T}(1-\beta\mu)^P\|y_0^*-y_{0}\|^2$, it follows that
 \begin{equation}\label{eqphidl}
     \begin{split}
\|\tilde{\nabla} \Phi(x_k)-\nabla\Phi(x_k)\|^2&\leq \delta_0 \tau^k+\omega \alpha^2\sum_{j=0}^{k-1}{\tau^j \|\nabla \Phi(x_{k-1-j})\|^2}\\
&+6\omega\alpha^2\frac{nLM_{f_{xy}}^2{C^2_{F_y}}}{\mu^3 {\tilde{\xi}}}\sum_{j=0}^{k-1}{\tau^j \frac{1}{Q_{k-1-j}}}+6\frac{nLM_{f_{xy}}^2{C^2_{F_y}}}{\mu^3{\tilde{\xi}} Q_k},
     \end{split}
 \end{equation}
where $\delta_0=3c_l^2 \kappa^3 (\frac{1}{T})^{T}(1-\beta\mu)^P(L_{F_{x}}^2+\frac{L_{f_{xy}}^2C_{F_{y}}^2}{\mu^2}+\frac{4M_{f_{xy}}^2L_{F_y}^2}{\mu^2}+\frac{4M_{f_{xy}}^2C^2_{F_y}L^2_{f_{yy}}}{\mu^4})\|y_0^*-y_{0}\|^2$ and $\omega=6(L_{F_{x}}^2+\frac{L_{f_{xy}}^2C_{F_{y}}^2}{\mu^2}+\frac{4M_{f_{xy}}^2L_{F_y}^2}{\mu^2}+\frac{4M_{f_{xy}}^2C^2_{F_y}L^2_{f_{yy}}}{\mu^4})(1+\frac{1}{\varepsilon})c_l^2 \kappa^3(\frac{1}{T})^{T}(1-\beta\mu)^PL_y^2$.

Since $\nabla\Phi(\cdot)$ is $L_{\Phi}-$Lipschitz, it can be obtained that
    \begin{equation}
    \begin{split}
        \Phi(x_{k+1}) &\leq \Phi(x_k) + \langle \nabla \Phi(x_k), x_{k+1} - x_k \rangle + \frac{L_{\Phi}}{2} \|x_{k+1} - x_k\|^2 \\
&\leq \Phi(x_k) - \alpha\langle \nabla \Phi(x_k), \tilde{\nabla} \Phi(x_k) - \nabla \Phi(x_k) \rangle - \alpha\| \nabla \Phi(x_k)\|^2 + \alpha^2 L_{\Phi} \|\nabla \Phi(x_k)\|^2 \\
&+ \alpha^2 L_{\Phi} \|\nabla \Phi(x_k) - \tilde{\nabla} \Phi(x_k)\|^2\\
&\leq \Phi(x_k) - \left( \frac{\alpha}{2} - \alpha^2 L_{\Phi} \right) \|\nabla \Phi(x_k)\|^2 + \left( \frac{\alpha}{2} + \alpha^2 L_{\Phi} \right) \|\nabla \Phi(x_k) - \tilde{\nabla} \Phi(x_k)\|^2.
    \end{split}
    \end{equation}

    Using the inequality (\ref{eqphidl}) yields:
    \begin{equation}\label{eqphl}
    \begin{split}
        \Phi(x_{k+1}) \leq & \Phi(x_k) - \left( \frac{\alpha}{2} - \alpha^2 L_{\Phi} \right) \|\nabla \Phi(x_{k})\|^2 + \left( \frac{\alpha}{2} + \alpha^2 L_{\Phi} \right) \|\nabla \Phi(x_k) - \tilde{\nabla} \Phi(x_{k})\|^2\\
        \leq& \Phi(x_k) - \left( \frac{\alpha}{2} - \alpha^2 L_{\Phi} \right) \|\nabla \Phi(x_k)\|^2 + \left( \frac{\alpha}{2} + \alpha^2 L_{\Phi} \right)\delta_0 \tau^k\\
        &+ \omega \alpha^2 \left( \frac{\alpha}{2} + \alpha^2 L_{\Phi} \right) \sum_{j=0}^{k-1} \tau^j \|\nabla \Phi(x_{k-1-j})\|^2 + \left( \frac{\alpha}{2} + \alpha^2 L_{\Phi} \right) \frac{6nLM_{f_{xy}}^2 C_{F_y}^2}{\mu^3{\tilde{\xi}} Q_k}\\
        &+ 6\omega\left( \frac{\alpha}{2} + \alpha^2 L_{\Phi} \right)\alpha^2 \frac{nLM_{f_{xy}}^2 C_{F_y}^2}{\mu^3{\tilde{\xi}}  }\sum_{j=0}^{k-1}{\tau^j \frac{1}{Q_{k-1-j}}}.
    \end{split}
\end{equation}

Finally, by telescoping the inequality (\ref{eqphl}) from $k=0$ to $k=K-1$, it is derived that
\begin{equation}
    \begin{split}
        \left( \frac{\alpha}{2} - \alpha^2 L_{\Phi} \right) \sum_{k=0}^{K-1}\|\nabla \Phi(x_{k})\|^2\leq& \Phi(x_0)-\Phi(x_K)+\left( \frac{\alpha}{2} + \alpha^2 L_{\Phi} \right)\frac{\delta_0}{1-\tau}\\
        &+\omega \alpha^2 \left( \frac{\alpha}{2} + \alpha^2 L_{\Phi} \right) \sum_{k=0}^{K-1}\sum_{j=0}^{k-1} \tau^j \|\nabla \Phi(x_{k-1-j})\|^2\\
        &+ \sum_{k=0}^{K-1}\left( \frac{\alpha}{2} + \alpha^2 L_{\Phi} \right) \frac{6nLM_{f_{xy}}^2 C_{F_y}^2}{\mu^3 {\tilde{\xi}}Q_k}\\
        &+6\omega \left( \frac{\alpha}{2} + \alpha^2 L_{\Phi} \right)\alpha^2 \frac{nLM_{f_{xy}}^2 C_{F_y}^2}{\mu^3{\tilde{\xi}} }\sum_{k=0}^{K-1}\sum_{j=0}^{k-1}{\tau^j \frac{1}{Q_{k-1-j}}}.
    \end{split}
\end{equation}

Moreover, due to $\sum_{k=0}^{K-1}\sum_{j=0}^{k-1}a_j b_{k-1-j}\leq \sum_{k=0}^{K-1}a_k\sum_{j=0}^{K-1}b_j$, we can deduce that
\begin{equation*}
\begin{split}
    &\sum_{k=0}^{K-1}\sum_{j=0}^{k-1} \tau^j \|\nabla \Phi(x_{k-1-j})\|^2\leq \sum_{k=0}^{K-1} \tau^k \sum_{k=0}^{K-1}\|\nabla\Phi(x_{k})\|^2\leq \frac{1}{1-\tau}\sum_{k=0}^{K-1}\|\nabla\Phi(x_{k})\|^2,\\
    &\sum_{k=0}^{K-1}\sum_{j=0}^{k-1} \tau^j \frac{1}{Q_{k-1-j}}\leq \sum_{k=0}^{K-1} \tau^k \sum_{k=0}^{K-1}\frac{1}{Q_{k}}\leq \frac{1}{1-\tau}\sum_{k=0}^{K-1}\frac{1}{Q_{k}}.
    \end{split}
\end{equation*}

Then, the following inequality holds:
\begin{equation}
    \begin{split}
&\left( \frac{1}{2} - \alpha L_{\Phi}-\omega \alpha^2 \left( \frac{1}{2} + \alpha L_{\Phi} \right)\frac{1}{1-\tau} \right)\frac{1} {K}\sum_{k=0}^{K-1}\|\nabla \Phi(x_{k})\|^2\\
\leq&\frac{\Phi(x_0)-\Phi(x_K)}{\alpha K}+\left( \frac{1}{2} + \alpha L_{\Phi} \right)\frac{\delta_0}{K(1-\tau)}\\
&+ \frac{1} {K}\sum_{k=0}^{K-1}\left( \frac{1}{2} + \alpha L_{\Phi} \right)\frac{6nLM_{f_{xy}}^2 C_{F_y}^2}{\mu^3{\tilde{\xi}} Q_k}+\frac{1} {K}\sum_{k=0}^{K-1}\frac{6\omega}{1-\tau} \left( \frac{1}{2} + \alpha L_{\Phi} \right)\alpha^2 \frac{nLM_{f_{xy}}^2 C_{F_y}^2}{\mu^3{\tilde{\xi}} Q_k}.
\end{split}
\end{equation}

If $\alpha L_{\Phi}+\omega \alpha^2 \left( \frac{1}{2} + \alpha L_{\Phi} \right)\frac{1}{1-\tau}\leq \frac{1}{4} $, then
\begin{equation}\label{corocitel}
    \frac{1} {K}\sum_{k=0}^{K-1}\|\nabla \Phi(x_{k})\|^2\leq \frac{4(\Phi(x_0)-\Phi(x^*))}{\alpha K}+\frac{3\delta_0}{K(1-\tau)}+\frac{1} {K}\sum_{k=0}^{K-1}\frac{18nLM_{f_{xy}}^2 C_{F_y}^2}{\mu^3{\tilde{\xi}} Q_k},
\end{equation}
where $\Phi(x^*)=\inf_x \Phi(x)$.

Finally, by substituting \(Q_k = k + 1\) into  (\ref{corocitel}), (\ref{agblpratekl}) is derived.
\end{proof}}

\section{Complexity and Theoretical discussion}\label{sec:complexity}
\begin{corollary}
   Consider $T = \Theta({\rm ln}\kappa)$ and $\alpha= \Theta (\kappa^{-3})$ such that $\tau<1$ and $\alpha L_{\Phi}+\omega \alpha^2 \left( \frac{1}{2} + \alpha L_{\Phi} \right)\frac{1}{1-\tau}\leq \frac{1}{4}$. Under the same setting of Theorem \ref{gblprate}, we have $\frac{1}{K}\sum_{k=0}^{K-1}\Vert\nabla \Phi(x_k)\Vert^2={\cal O}(\frac{\kappa^3}{K}+\frac{\kappa^3{\rm ln}K}{K})$. To achieve an $\epsilon$-stationary point, we require $K = \tilde{{\cal O}}(\kappa^3 \epsilon^{-1})$, resulting in the gradient complexity of  $Gc(f, \epsilon) = \tilde{{\cal O}}(\kappa^6 \epsilon^{-2}), Gc(F, \epsilon)=\tilde{{\cal O}}(\kappa^3 \epsilon^{-1})$ and a Jacobian-vector product complexity $JV(\epsilon) = \tilde{{\cal O}}(\kappa^3 \epsilon^{-1})$.
\end{corollary}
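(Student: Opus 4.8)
The plan is to instantiate the bound of Theorem~\ref{gblprate} (equivalently its full-parameter restatement Theorem~\ref{thm36g}) at the prescribed schedule $T=\Theta(\ln\kappa)$, $\alpha=\Theta(\kappa^{-3})$, and then convert the resulting rate into oracle counts by inspecting Algorithm~\ref{alg:foa}. Throughout I treat every problem constant except $L$ as $\Theta(1)$, so that $\kappa=L/\mu$ is the only growing quantity; consequently $L_\Phi$, $L_y=M_{f_{xy}}/\mu$, the bracketed Lipschitz aggregates entering $\tau,\omega,\delta_0$, and $\tilde\xi\ge 1/12$ are all $\Theta(1)$, while $L=\Theta(\kappa)$.

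\textbf{Step 1 (parameter feasibility).} Fix $\varepsilon=\Theta(1)$. The key fact is that $(\tfrac1T)^T=T^{-T}=e^{-T\ln T}$ decays super-exponentially, so with $T=\Theta(\ln\kappa)$ one has $T\ln T=\Theta(\ln\kappa\,\ln\ln\kappa)\gg\ln\kappa$ and hence $\kappa(\tfrac1T)^T\to 0$; I would fix the hidden constant in $T$ so that $\kappa(\tfrac1T)^T$ lies below a small absolute constant. Since $\alpha=\Theta(\kappa^{-3})$ makes the term $\alpha^2 c_3=\Theta(\kappa^{-6})$ negligible and $(1-\beta\mu)^P\le 1$, this gives $\tau\le\kappa(\tfrac1T)^T\bigl((1+\varepsilon)+o(1)\bigr)<1$ with $1-\tau=\Theta(1)$. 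For the second constraint, $\alpha L_\Phi=\Theta(\kappa^{-3})$ and $\omega=\Theta(1)$ yield $\omega\alpha^2(\tfrac12+\alpha L_\Phi)\tfrac1{1-\tau}=\Theta(\kappa^{-6})$, so $\alpha L_\Phi+\omega\alpha^2(\tfrac12+\alpha L_\Phi)\tfrac1{1-\tau}\le\tfrac14$ for all sufficiently large $\kappa$.

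\textbf{Step 2--3 (rate and iteration complexity).} Substituting into Theorem~\ref{gblprate} with $Q_k=k+1$, the first term is $\tfrac{4(\Phi(x_0)-\inf_x\Phi)}{\alpha K}=\Theta(\tfrac{\kappa^3}{K})$, the second is $\tfrac{3\delta_0}{K(1-\tau)}=O(\tfrac1K)$ since $\delta_0=O(1)$, and the last is $\tfrac{18nLM_{f_{xy}}^2 C_{F_y}^2\ln K}{\mu^3\tilde\xi K}=\Theta(\tfrac{\kappa\ln K}{K})\le O(\tfrac{\kappa^3\ln K}{K})$, where I loosen $\kappa$ to $\kappa^3$ only to match the stated form. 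Hence $\frac1K\sum_k\|\nabla\Phi(x_k)\|^2=O\!\bigl(\tfrac{\kappa^3}{K}+\tfrac{\kappa^3\ln K}{K}\bigr)$. Requiring the right-hand side to be at most $\epsilon$ and absorbing the logarithm into $\tilde O$ gives $K=\tilde O(\kappa^3\epsilon^{-1})$.

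\textbf{Step 4 (oracle counts).} Per outer iteration of Algorithm~\ref{alg:foa}: subroutine $\mathcal A$ (Algorithm~\ref{alg:yk}) performs $P$ gradient steps plus $T$ quasi-Newton steps, each costing one $\nabla_y f$ evaluation; subroutine $\mathcal B$ (Algorithm~\ref{alg:uk}), active when $Q_k>1$, costs $O(Q_k)$ evaluations of $\nabla_y f$ to form the pairs $\tilde g_i$; Part~3 uses $O(1)$ evaluations of $\nabla F$ and exactly one Jacobian-vector product $[\nabla^2_{xy}f]^T u_{k+1}$. The warm-start condition $(1-\beta\mu)^P\|y_k-y_k^*\|\le\tfrac1{300\sqrt\mu}$ with $\beta=\Theta(1/L)$ forces $P=\tilde O(\kappa)$. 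Summing over $k$ and using $\sum_{k=0}^{K-1}Q_k=\sum_k(k+1)=O(K^2)$ yields
\[ Gc(f,\epsilon)=O\!\bigl(K(P+T)+\textstyle\sum_k Q_k\bigr)=\tilde O(\kappa^4\epsilon^{-1}+\kappa^6\epsilon^{-2})=\tilde O(\kappa^6\epsilon^{-2}), \]
while $Gc(F,\epsilon)=O(K)=\tilde O(\kappa^3\epsilon^{-1})$ and $JV(\epsilon)=O(K)=\tilde O(\kappa^3\epsilon^{-1})$.

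The main obstacle is Step~1: satisfying the two coupled constraints while pinning down the exact $\kappa$-dependence of each constant ($L_\Phi$, $\omega$, $\delta_0$, $\tilde\xi$) and, crucially, verifying that $T=\Theta(\ln\kappa)$ is simultaneously large enough to drive $\kappa(\tfrac1T)^T$ below a constant (securing $\tau<1$ and $1-\tau=\Theta(1)$) yet small enough that the $K(P+T)$ contribution to $Gc(f,\epsilon)$ remains dominated by $\sum_k Q_k=O(K^2)$. Once this feasibility bookkeeping is settled, Steps~2--4 are routine substitution and counting.
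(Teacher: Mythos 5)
Your overall architecture matches the paper's proof: instantiate Theorem~\ref{gblprate} (via Theorem~\ref{thm36g}), verify the two step-size constraints, then count oracles using $\sum_{k=0}^{K-1}Q_k=\Theta(K^2)$; your Step~4 is correct and even slightly more careful than the paper about bounding $P$. The superexponential-decay observation ($T\ln T\geq \ln\kappa+\mathrm{const}$ suffices, since $\ln T\geq 1$) is exactly the paper's key ``Details'' step. However, Step~1 contains a genuine gap: your bookkeeping convention (``every problem constant except $L$ is $\Theta(1)$, so $L_\Phi$, $c_3$, $\omega$, $\delta_0$ are all $\Theta(1)$'') contradicts the scaling the corollary is built on, in which powers of $1/\mu$ count as powers of $\kappa$. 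The paper explicitly computes $c_3=\Theta(\kappa^6)$ (because $L_y=M_{f_{xy}}/\mu=\Theta(\kappa)$ and the bracketed aggregate contains $\mu^{-4}$ terms), $L_\Phi=\Theta(\kappa^3)$, and $\omega=\Theta\bigl(\kappa^7(1/T)^T\bigr)$. Under that convention your intermediate claims are false: $\alpha L_\Phi=\Theta(1)$, not $\Theta(\kappa^{-3})$ — it does \emph{not} vanish as $\kappa\to\infty$, and controlling it below $1/8$ is precisely why $\alpha=\Theta(\kappa^{-3})$ is prescribed; $\alpha^2 c_3=\Theta(1)$, not $\Theta(\kappa^{-6})$; and $\omega\alpha^2=\Theta\bigl(\kappa(1/T)^T\bigr)$, not $\Theta(\kappa^{-6})$, so the second constraint needs the same $T\ln T\gtrsim\ln\kappa$ argument you applied only to $\tau$, rather than a ``both terms vanish for large $\kappa$'' argument. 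Likewise the third term of the rate, $18nLM_{f_{xy}}^2C_{F_y}^2\ln K/(\mu^3\tilde\xi K)$, is genuinely $\Theta(\kappa^3\ln K/K)$ because $L/\mu^3=\Theta(\kappa^3)$; it is not a $\Theta(\kappa\ln K/K)$ term that you loosen to match the statement.

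A quick self-consistency check exposes the problem: under your convention the corollary trivializes — a constant step size $\alpha=\Theta(1)$ would satisfy both constraints and yield a better rate $O(1/K+\kappa\ln K/K)$ — which contradicts the design of the statement and signals that the convention is not the intended one. The repair is mechanical but necessary: redo Step~1 with $1/\mu=\Theta(\kappa)$ (all other constants fixed), conclude $\tau=\Theta\bigl(\kappa(1/T)^T\bigr)$ and $\omega\alpha^2=\Theta\bigl(\kappa(1/T)^T\bigr)$, drive \emph{both} below absolute constants ($\tau\leq 1/2$, $\omega\alpha^2\leq 1/10$) via $T\geq\ln\kappa+\ln(10C_0)+\epsilon$, and choose the constant in $\alpha=\Theta(\kappa^{-3})$ so that $\alpha L_\Phi<1/8$; then $\alpha L_\Phi+\omega\alpha^2\bigl(\tfrac12+\alpha L_\Phi\bigr)\tfrac{1}{1-\tau}\leq \tfrac18+\tfrac1{10}\bigl(\tfrac12+\tfrac18\bigr)\cdot 2\leq\tfrac14$, after which your Steps~2--4 go through verbatim.
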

\begin{proof}
 For Theorem \ref{gblprate}, by Theorem \ref{thm36g}, we have 
 \begin{equation*}
  c_3=6L_y^2(L_{F_{x}}^2+\frac{L_{f_{xy}}^2C_{F_{y}}^2}{\mu^2}+\frac{4M_{f_{xy}}^2L_{F_y}^2}{\mu^2}+\frac{4M_{f_{xy}}^2C^2_{F_y}L^2_{f_{yy}}}{\mu^4}) = \Theta(\kappa^6).
 \end{equation*}
  Since $0<(1-\beta\mu)^P\leq 1$ and $\alpha= \Theta (\kappa^{-3}) $, it is derived that
  \begin{equation*}
 \begin{split}
     \tau&=\kappa (\frac{1}{T})^{T}(1-\beta\mu)^P\big((1+\varepsilon)+(1+\frac{1}{\varepsilon})\alpha^2c_3\big)=\Theta(\kappa(1/T)^T),\\
     \omega &=c_3(1+\frac{1}{\varepsilon})\kappa (\frac{1}{T})^{T}(1-\beta\mu)^P=\Theta(\kappa^7(1/T)^T).\\  
 \end{split}
 \end{equation*}
   Based on Lemma \ref{prelemma}, $\nabla \Phi$ is $L_\Phi$-Lipschitz with $L_\Phi= \Theta(\kappa^3)$. For a suitable choice of
$\alpha= \Theta (\kappa^{-3}) $, it follows that $\alpha L_\Phi<\frac{1}{8}$. Additionally,  with $T= \Theta(\ln \kappa)$, the conditions $0 < \tau \leq \frac{1}{2}$ and $\omega\alpha^2 = \Theta(\kappa(1/T)^T) \leq \frac{1}{10}$ are satisfied. 
 Consequently,
\[
\alpha L_{\Phi} + \omega\alpha^2 \left( \frac{1}{2} + \alpha L_{\Phi} \right) \frac{1}{1-\tau} \leq \frac{1}{8} + \frac{1}{10} \left( \frac{1}{2} + \frac{1}{8} \right) \frac{1}{1 - \frac{1}{2}} \leq \frac{1}{4}.
\]

Since $\alpha = \Theta(\kappa^{-3})$, it can be obtained from (\ref{agblpratek}) that $\frac{1}{K}\sum_{k=0}^{K-1}\Vert\nabla \Phi(x_k)\Vert^2={\cal O}(\frac{\kappa^3}{K}+\frac{{\kappa^3}{\rm ln}K}{K})$. Furthermore, in order to achieve an $\epsilon$-stationary point, we have $K ={\cal O}(\kappa^3 \epsilon^{-1}{\rm ln}\frac{\kappa^3} {\epsilon})=\tilde{{\cal O}} (\kappa^3 \epsilon^{-1})$. Therefore, the following complexity results are derived:
\begin{itemize}
    \item $Gc(f, \epsilon) =K(T+P)+\sum_{k=0}^{K-1} {Q_k}=K(T+P)+\frac{K(K+1)}{2}=\tilde{{\cal O}} (\kappa^6 \epsilon^{-2})$;
    \item $Gc(F, \epsilon)=2K= \tilde{{\cal O}}(\kappa^3 \epsilon^{-1})$;
    \item $JV(\epsilon) = K=\tilde{{\cal O}}(\kappa^3 \epsilon^{-1})$.
\end{itemize}

\textbf{Details for obtaining $\tau \leq 1/2$ and $\omega\alpha^2 \leq 1/10$:} Since $\tau = \Theta\left(\kappa(1/T)^T\right)$, $\omega\alpha^2 = \Theta\left(\kappa(1/T)^T\right)$ and $\kappa \geq 1$, it is enough to show that $C_0\kappa(1/T)^T \leq 1/10$ by choosing $T= \Theta(\ln \kappa)$. Here, $C_0 \geq 1$ is a positive constant in $\tau$ and $\omega\alpha^2$, depending explicitly on the Lipschitz constants in the assumptions.

By taking the logarithm on both sides of $C_0\kappa(1/T)^T \leq 1/10$, we get:
\[
\ln \kappa - T \ln T \leq -\ln(10C_0).
\]
This is equivalent to $T \ln T \geq \ln \kappa + \ln(10C_0)$. Therefore, choosing $T \geq \ln \kappa + \ln(10C_0) + \epsilon$ is sufficient, since $\ln T \geq 1$. Similarly, we can prove the result for Theorem \ref{qfblprate}.

\end{proof}

\end{document}